\documentclass[mnsc]{informs3}
\usepackage[final]{showkeys}
\OneAndAHalfSpacedXI % current default line spacing
\usepackage{endnotes}
\let\footnote=\endnote

\usepackage{enumitem}
\usepackage{amsmath,amssymb}

\usepackage{bm} 
\usepackage{framed}
\usepackage[outdir=./]{epstopdf}
\usepackage{mathtools}
\usepackage{enumitem}
\usepackage{bbm}
\usepackage[pageanchor]{hyperref}
\usepackage{url, xurl}
\usepackage[linesnumbered,ruled,vlined]{algorithm2e}
\usepackage{algpseudocode}
\usepackage[normalem]{ulem}

\usepackage{graphicx}
\usepackage[font=small]{caption}
\usepackage{subcaption}
\usepackage{xcolor,cancel}
\usepackage{tabularx}
\usepackage{multirow}
\usepackage{threeparttable}

\newcommand{\E}{\mathbb{E}}
\newcommand{\Pp}{\mathbb{P}}
\newcommand{\N}{\mathbb{N}}
\newcommand{\Z}{\mathbb{Z}}
\newcommand{\R}{\mathbb{R}}
\newcommand{\ind}{\mathbf{1}}
\newcommand{\Bern}{\operatorname{Bernoulli}}
\newcommand{\Var}{\operatorname{Var}}
\newcommand{\klbin}{\operatorname{kl}}
\newcommand{\pos}[1]{\left[#1\right]_+}
\newcommand{\cdata}{c_{\mathrm{data}}}
\newcommand{\cAI}{c_{\mathrm{AI}}}
\newcommand{\cH}{c_{\mathrm H}}
\newcommand{\calA}{\mathcal A}
\newcommand{\calR}{\mathcal R}
\newcommand{\calI}{\mathcal I}
\newcommand{\calH}{\mathcal H}
\newcommand{\calF}{\mathcal F}
\newcommand{\calT}{\mathcal T}
\newcommand{\calE}{\mathcal E}
\newcommand{\AI}{\mathrm{AI}}
\newcommand{\Hum}{\mathrm H}
\newcommand{\stime}{t^\pi_{\text{stop}}}
\newcommand{\ltime}{T^\pi_{\text{stop}}}

\newcommand{\main}{\mathrm{main}}
\newcommand{\dir}{\mathrm{dir}}

\newcommand{\esc}{\mathrm{esc}}
\newcommand{\LB}{\mathrm{LB}}

\newcommand{\tot}{\mathrm{tot}}

\newcommand{\fixed}{\mathrm{fixed}}

\providecommand{\argmin}{\operatorname*{arg\,min}}
\providecommand{\calJ}{\mathcal J}
\providecommand{\fb}{\mathrm{fb}}
\providecommand{\sense}{\mathrm{sense}}

\providecommand{\pilot}{\mathrm{pilot}}
\providecommand{\eff}{\mathrm{eff}}
\providecommand{\calD}{\mathcal D}

\newcolumntype{C}[1]{>{\centering\arraybackslash}p{#1}}

\usepackage{natbib}
 \bibpunct[, ]{(}{)}{,}{a}{}{,}%
 \def\bibfont{\small}%
 \def\bibsep{\smallskipamount}%
 \def\bibhang{24pt}%
\TheoremsNumberedThrough     % Preferred (Theorem 1, Lemma 1, Theorem 2)
\ECRepeatTheorems

\EquationsNumberedThrough    % Default: (1), (2), ...
\let\INFORMSendproof\endproof
\newif\ifHalmosDone

\def\endproof{%
  \ifHalmosDone\else\Halmos\fi
  \global\HalmosDonefalse
  \INFORMSendproof}

\usepackage{diagbox}
\begin{document}
%%%%%%%%%%%%%%%%

% Outcomment only when entries are known. Otherwise leave as is and
%   default values will be used.
%\setcounter{page}{1}
%\VOLUME{00}%
%\NO{0}%
%\MONTH{Xxxxx}% (month or a similar seasonal id)
%\YEAR{0000}% e.g., 2005
%\FIRSTPAGE{000}%
%\LASTPAGE{000}%
%\SHORTYEAR{00}% shortened year (two-digit)
%\ISSUE{0000} %
%\LONGFIRSTPAGE{0001} %
%\DOI{10.1287/xxxx.0000.0000}%

% Author's names for the running heads
% Sample depending on the number of authors;
% \RUNAUTHOR{Jones}
% \RUNAUTHOR{Jones and Wilson}
% \RUNAUTHOR{Jones, Miller, and Wilson}
% \RUNAUTHOR{Jones et al.} % for four or more authors
\RUNAUTHOR{Ham et al.}
% Enter authors following the given pattern:
%\RUNAUTHOR{Chen and Smichi-Levi and Wang}

% Title or shortened title suitable for running heads. Sample:
% \RUNTITLE{Bundling Information Goods of Decreasing Value}
% Enter the (shortened) title:
% \RUNTITLE{}

% Full title. Sample:
% \TITLE{Bundling Information Goods of Decreasing Value}
% Enter the full title:
\TITLE{Human–AI-Powered Hypothesis Testing: \\ Cost-Aware Selective AI Scoring and Sequential Human Escalation}

% Block of authors and their affiliations starts here:
% NOTE: Authors with same affiliation, if the order of authors allows,
%   should be entered in ONE field, separated by a comma.
%   \EMAIL field can be repeated if more than one author

\ARTICLEAUTHORS{
\AUTHOR{Dae Woong Ham}
\AFF{University of Michigan, Ann Arbor, MI 48109, USA,
\\ \EMAIL{daewoong@umich.edu}}

\AUTHOR{Xuejun Zhao}
\AFF{University of North Carolina at Charlotte, Charlotte, NC 28223, USA,
\\ \EMAIL{xzhao19@charlotte.edu}}

\AUTHOR{Stefanus Jasin, Fenghua Yang}
\AFF{University of Michigan, Ann Arbor, MI 48109, USA,
\\ \EMAIL{sjasin@umich.edu, yfenghua@umich.edu}}
}
%\ARTICLEAUTHORS{ (Authors’ names blinded for peer review) }
% \ARTICLEAUTHORS{
% \AUTHOR{}
% \AFF{}
% \AUTHOR{}
% \AFF{}
% \AUTHOR{}
% \AFF{}
% }% end of the block

\ABSTRACT{Large language models are increasingly used as inexpensive judges to evaluate outputs, label data, and assess whether a system meets a desired quality standard. Yet using AI judgments for formal statistical inference is fundamentally different from simply treating them as ground-truth labels: AI evaluations can be biased or noisy, and rigorous hypothesis testing requires explicit control of type-I and type-II errors. We study how to use AI judgments, together with selective human verification, to conduct a valid hypothesis test at minimum cost. We consider a population of items with hidden binary labels. After choosing a fixed pool of items, the decision maker can selectively query AI, send an item directly to a human, escalate an AI-scored item to a human after observing the AI report, or stop once sufficient evidence has accumulated. We derive an information-theoretic lower bound that captures the minimum cost of achieving prescribed testing errors and characterizes the value of AI information and human verification through a report-dependent information frontier. Motivated by this characterization, we develop SCALE, a sequential cost-aware policy that combines selective AI scoring with adaptive human escalation. SCALE is valid at finite sample sizes and matches the lower bound to first order as the target error probabilities vanish. We further extend the framework to an unknown AI-output model using paired AI--human pilot data. Numerically, SCALE approaches Human-only or AI-only testing when one source clearly dominates, while achieving its largest savings when inexpensive AI judgments and selective human verification are both valuable.

}

\KEYWORDS{}

%\HISTORY{Submitted November, 2019.}

\date{}

\maketitle

\section{Introduction} \label{sec:intro}

Large language models are increasingly being used not only to generate
content, but also to evaluate it. This is particularly attractive in
settings where the ultimate goal is to make a population-level quality
statement from many individual evaluations. Three application domains
illustrate this opportunity particularly clearly.

First, consider software reliability. A company deploying an AI coding
assistant may want to certify that the fraction of generated programs that
are semantically correct exceeds a prescribed reliability threshold. When
exhaustive test cases are unavailable, expert human review provides a
natural gold-standard assessment, but conducting such review at scale can be
costly. A growing literature therefore studies LLMs themselves as code
judges. \citet{tong2024codejudge} develop CodeJudge, which uses LLMs to
assess the semantic correctness of generated code without requiring test
cases. \citet{zhao2025codejudgeeval} introduce CodeJudge-Eval, which asks
LLMs to determine whether submitted code solutions are correct across
different error types and compilation issues. More recently,
\citet{jiang2026codejudgebench} benchmark 26 LLM judges across code
generation, code repair, and unit-test generation. Their results also
illustrate why a statistical treatment is needed: LLM code judges can be
sensitive to seemingly irrelevant changes such as response ordering,
variable names, and misleading comments. Thus, an LLM judge can provide an
inexpensive and scalable signal of code correctness, but its judgment cannot
automatically be treated as ground truth.

Second, similar ideas are already being used to screen large collections of
clinical documents. A health system may, for example, want to assess whether
the prevalence of biased or stigmatizing language in its clinical notes
exceeds an acceptable level. Manual review by clinical experts provides the
natural reference standard but is difficult to perform at the scale of a
large electronic-health-record system. \citet{apakama2025bias} apply GPT-4
to 50,000 emergency-department medical and nursing notes to identify several
categories of biased language and use human reviewers to verify the model's
detections. \citet{zhang2025equitable} compare ChatGPT-4 with human expert
annotations for identifying stigmatizing language in electronic health
records and find that performance varies across categories and is sensitive
to prompt design. \citet{sethi2026stigma} study LLM-based detection of
stigmatizing language using more than 77,000 ICU notes and externally
validate the approach on a substantially larger health-system data set.
These studies demonstrate both sides of the opportunity: AI can make
large-scale screening feasible, while expert human labels remain important
for establishing what is actually present in the underlying records.

Third, automated judging has become central to AI safety and content
moderation. An AI provider may want to certify that the fraction of model
responses violating a safety policy remains below a prescribed threshold.
Reviewing every generated response by hand is prohibitively expensive, so
large-scale safety evaluations increasingly rely on automated judges or
moderation classifiers. \citet{mazeika2024harmbench} develop HarmBench, a
standardized framework for large-scale evaluation of harmful model behavior
that uses automated evaluation classifiers calibrated against human-labeled
examples. \citet{movva2024annotation} directly compare GPT-4 safety
annotations with human judgments of conversational safety.
\citet{chen2025safer} evaluate 11 LLM judges used to assess the safety of
generated content and show that seemingly superficial artifacts, including
apologetic and verbose phrasing, can substantially change the resulting
verdicts. Here again, automated evaluation is valuable precisely because it
is scalable, but its errors become consequential when the objective is to
make a formal statement about an underlying safety rate.

Across these applications, the same basic structure emerges: there is a
population of items with latent ground-truth labels, an AI system provides
inexpensive but imperfect assessments of those labels, and human review
provides costly ground truth. Much of the emerging literature focuses on
improving the quality of the AI judge itself. Researchers develop more
effective judging procedures, richer evaluation rubrics, better prompting
and reasoning mechanisms, specialized evaluation models, robustness
benchmarks, and aggregation schemes intended to bring automated judgments
closer to human ground truth. Related work in clinical applications similarly
evaluates and refines AI screening procedures by comparing their outputs with
expert human annotations. These efforts are important: a better AI judge can
clearly reduce the amount of expensive human review that is needed. However,
improving the judge does not eliminate the underlying statistical problem.
Even a highly accurate and carefully calibrated AI evaluator remains an
imperfect measurement of the ground truth, and high average agreement with
humans does not by itself guarantee valid type-I and type-II error control
for a population-level hypothesis test. If the ultimate goal is rigorous population-level statistical inference, then building a better AI judge is not enough: its residual errors must be incorporated into a formal inferential procedure with explicit statistical guarantees, so that conclusions about the underlying population are scientifically defensible rather than merely reflections of the judge's average agreement with humans.

One natural approach is to combine inexpensive AI judgments with selective
human verification. Rather than treating AI as a complete substitute for
human evaluation, a decision maker can use AI as a cheap source of
preliminary information and purchase a human ground-truth label only when
the additional information is worth its cost. This idea is related to
prediction-powered inference (PPI), which combines machine predictions with
a smaller number of gold-standard labels to obtain valid statistical
inference \citep{angelopoulos2023prediction}. Our setting adds an explicitly operational dimension: rather than taking AI predictions and human labels as given, we jointly decide how each acquired item should be evaluated and when sufficient evidence has been collected to stop the test. This leads to the central question of the paper:
\emph{How should AI and human evaluations be combined to reach a
statistically valid conclusion at minimum cost?}

We study this question in a simple hypothesis-testing model for a population
proportion. Each acquired item has an unobserved binary ground-truth label
$X_i\in\{0,1\}$, and the objective is to distinguish
$$
    H_0:p=p_0
    \qquad\text{from}\qquad
    H_1:p=p_1,
    \qquad 0<p_0<p_1<1,
$$
where $p$ is the population fraction of positive labels. A human query
reveals the true label, whereas an AI query returns a potentially noisy
finite-valued report that may contain a predicted label, confidence score,
or other metadata. Acquiring an item, querying the AI, and querying a human
all carry potentially different costs. The policy chooses a fixed pool of
items in advance, but information acquisition within that pool is adaptive:
it may query AI, query a human directly, query a human after observing an AI
report on the same item, or stop and decide. We minimize worst-case expected
total cost subject to prescribed type-I and type-II error guarantees.

Two features are central. First, \emph{AI scoring itself is selective}: the
decision maker need not run AI on every acquired item. When direct human
information is sufficiently valuable, it may be preferable to skip the AI
and query a human immediately. Conversely, when AI information is
inexpensive and sufficiently informative, an AI report may by itself provide
useful statistical evidence. Second, human review after an AI report is a \emph{nested information
action}. Once report $r$ has been observed, a subsequent human label
provides only the residual information in the ground truth conditional on
that report, and this residual value can vary substantially across reports.
Some AI reports may already be highly informative, whereas others may leave
considerable uncertainty and therefore be particularly valuable to verify.
Moreover, because the test is sequential, the desirable mix of AI and human
information also depends on the likelihood evidence accumulated across
previous items. The policy and our paper therefore study the joint decision of \emph{when AI is
worth querying, when an AI judgment is worth verifying, and when the test
should stop}.

\vspace{1mm}
\textbf{Contributions and main results.}
Our first contribution is a lower bound on the minimum cost that any valid
human--AI testing policy must incur. The basic idea is simple: to control
type-I and type-II errors, every policy must collect enough statistical
evidence to distinguish the two hypotheses. We use KL-divergence arguments
to quantify how much evidence can be obtained from an AI judgment, from a
human label, and from a human label obtained after an AI judgment has already
been observed. We also account for the fact that the number of items must be
chosen in advance. Combining these requirements gives a computable lower
bound on the best possible cost of any policy; this lower bound is developed
in Section~\ref{sec:lower-bound}.

Our second contribution is SCALE, the \emph{Sequential Cost-Aware
Likelihood-Guided Escalation} policy. SCALE builds on the basic logic of the
classical sequential probability ratio test (SPRT) \citep{waldSPRT}: it
continually tracks the likelihood ratio between the two hypotheses and stops
once the accumulated evidence is sufficiently strong. The key difference is
that, in a classical SPRT, each new observation is drawn from a fixed
information source. In our setting, the policy must also decide \emph{how}
to obtain the next piece of evidence. SCALE therefore chooses whether the
next item should be evaluated by AI only, by a human directly, or by AI
first followed by selective human verification based on the observed AI
report. In this sense, SCALE extends the SPRT from a stopping rule with a
fixed observation channel to a joint sensing-and-stopping rule for a
human--AI system. Because the number of items is acquired in advance, SCALE
also includes a pre-specified fallback test in case the pool is exhausted
before either likelihood-ratio boundary is reached, which guarantees the
desired type-I and type-II errors at every finite sample size. Our main
theoretical result shows that, as the target error probabilities become
small, SCALE achieves the lower bound to first order. In other words, no
other valid adaptive policy can have a meaningfully lower leading-order
cost. Section~\ref{sec:upper-bound} develops the policy and proves this
result.

Our third contribution considers the practically important case in which the
AI judge's error behavior is not known in advance. In practice, one typically
has to learn how the AI's judgments relate to ground truth from a calibration
sample. We therefore collect a paired pilot sample in which each item is
evaluated by both the AI and a human, use it to estimate the AI-output model,
and then run a guarded plug-in version of SCALE. The guardrails account for
the fact that the estimated AI model is itself uncertain, so that this
estimation error does not undermine the statistical guarantees of the test.
We show that, when the pilot is sufficiently informative, the resulting
procedure achieves the same first-order cost as if the AI model were known
from the outset. We also characterize when the additional cost of collecting
a fresh pilot is small enough not to change this first-order benchmark.

Finally, our numerical experiments show when combining AI and human
judgments is most valuable. When human review is inexpensive, SCALE behaves
almost like Human-only testing; when human review is very expensive, it
approaches AI-only testing. The largest gains arise in the intermediate
regime, where AI provides useful low-cost information but selective human
verification is still worth purchasing. This is precisely the regime in
which committing in advance to either a Human-only or an AI-only architecture
is most costly. In our benchmark instance, SCALE reduces cost by as much as
$36.37\%$ relative to the cheaper single-source baseline as the human-review
cost varies, and its savings reach $39.16\%$ as the testing requirements
become more stringent. These results illustrate the main operational value
of the hybrid design: AI is useful not only because it can replace some human
review, but because it helps determine \emph{which} items still warrant
costly human verification. Section~\ref{sec:scale-simulations} reports the
full numerical results.

\vspace{1mm}
\textbf{Relation to existing approaches.}
Our paper is most closely related to prediction-powered inference (PPI),
two-phase sampling with selective gold-standard verification, and active
hypothesis testing; Section~\ref{sec:lit_rev} provides a detailed review. The connection to prediction-powered inference is especially close: both
settings combine inexpensive machine-generated information with a smaller
amount of costly ground truth to obtain valid statistical inference
\citep{angelopoulos2023prediction}. Our main distinction is operational.
AI and human information are not taken as given; the policy decides how each
item should be evaluated, whether an AI-scored item should be escalated to a
human, and when to stop. Our nested AI-then-human structure is also related to classical two-phase
sampling, where a cheap first-stage measurement is followed by selective
gold-standard verification
\citep{tenenbein1970double,begg1983assessment}. In our setting, however, the
first-stage AI measurement is itself optional and the verification decision
depends on both the observed AI report and the accumulated evidence. Finally, SCALE is related to active hypothesis testing and controlled
sensing, which adaptively choose among information sources as evidence
accumulates
\citep{chernoff1959sequential,naghshvar2013active,
nitinawarat2013controlled}. The distinctive feature here is that querying AI
creates a report-dependent option to reveal the same item's ground-truth
label through human verification. This nested action structure is central to
both our lower bound and our policy.

\vspace{1mm}
\textbf{Organization of the paper.} The remainder of the paper is organized as follows.
Section~\ref{sec:lit_rev} reviews the related literature in greater detail.
Section~\ref{sec:model} introduces the model, policy class, cost objective,
and information quantities. Section~\ref{sec:lower-bound} derives the lower
bound on the optimal cost. Section~\ref{sec:upper-bound} develops SCALE,
establishes its finite-sample feasibility and first-order optimality, and
contains the numerical experiments in
Section~\ref{sec:scale-simulations}. Section~\ref{sec:unknown-ai} develops
the pilot-calibrated procedure for an unknown AI-output model.
Section~\ref{sec:conclusion} concludes the paper.

\section{Literature Review}\label{sec:lit_rev}
Our paper lies at the intersection of three statistical literature. The first
uses machine predictions to economize on expensive gold-standard labels while
preserving valid inference. The second studies two-phase designs that combine
broadly available but fallible measurements with selectively acquired exact
outcomes. The third studies active hypothesis testing, in which observation
channels and stopping decisions are chosen as evidence accumulates. At an
architectural level, selective prediction and learning to defer also motivate
endogenous human escalation, but those models primarily optimize item-level
prediction or delegation loss \citep{madras2018predict,mozannar2020defer}; our
terminal decision concerns a population hypothesis. We therefore organize the
review around the three statistical streams that map most directly to our
inferential target, nested observation structure, and adaptive policy.

\subsection{Prediction-Powered and Active Statistical Inference}
\label{sec:lit_ppi}

A growing statistical literature uses machine predictions to reduce the need
for expensive labels while retaining inferential validity.
\citet{angelopoulos2023prediction} introduce prediction-powered inference
(PPI), which combines abundant machine predictions with a smaller labeled
sample to correct prediction error. \citet{kossen2021active} select which test
examples to label for sample-efficient model evaluation, while
\citet{zrnic2024active} assign observation-dependent labeling probabilities
and construct valid confidence intervals and hypothesis tests. Recent work learns data-adaptive labeling policies or develops anytime-valid
and sequential prediction-assisted tests
\citep{ma2026opal,csillag2025prediction,tenzer2026betting}. Most closely related operationally,
\citet{angelopoulos2025costoptimal} derive cost-aware allocations between a
cheap weak rater and an expensive strong rater to estimate the mean strong
rating accurately under an annotation budget.

The main distinction is our objective and action space. Most work in this
stream evaluates estimator variance, confidence-interval width, or testing
power under a labeling or annotation budget. We instead study a simple
population hypothesis test in which AI scoring is itself optional and costly,
direct human review is an alternative sensing action, and the acquired pool is
chosen jointly with the within-pool policy. A non-escalated AI report remains
direct likelihood evidence, whereas a human label obtained after that report
contributes only report-conditional residual information. Furthermore, in the PPI literature there is no action to further escalate a machine prediction to a ``human'' label as we explicitly model.
We minimize
worst-case expected data, AI, and human cost subject to separate type-I and
type-II error constraints. Consequently, labeling decisions depend on both
the global likelihood-ratio direction and direction-specific residual KL
information, rather than only on predictive uncertainty, influence, or
conditional squared error.

\subsection{Two-Phase Sampling and Selective Gold-Standard Verification}
\label{sec:lit_two_phase}

A longstanding literature studies two-phase or double-sampling designs in
which an inexpensive but fallible measurement is collected broadly and an
expensive gold-standard outcome is obtained for a validation subsample.
\citet{tenenbein1970double} estimate a binomial proportion using a fallible
classifier on a first-stage sample and an exact classifier on a subsample,
explicitly optimizing the two sample sizes against cost and precision.
\citet{begg1983assessment} show that selective verification based on the
initial screen must be incorporated into inference to avoid verification bias.
Subsequent work develops efficient prevalence-survey designs and inference or
tests under partial validation
\citep{shrout1989design,mcnamee2003efficiency,pepe1992inference,
      alonzo2003estimating,tang2012test}.
Design-oriented contributions choose second-phase sampling fractions using
relative costs, pilot information, or semiparametric efficiency criteria
\citep{reilly1996optimal,tao2020optimal}; recent work similarly targets
expensive manual chart review using noisy electronic-health-record phenotypes
and covariates \citep{marksanglin2025optimal}.

This literature provides the closest static analogue to our nested
AI-then-human observation structure. However, conventional two-phase designs
generally collect the phase-one measurement for the full cohort and choose a
validation sample through prespecified strata or observation-dependent
sampling probabilities, with the goal of correcting bias or improving
estimation precision. In our model, even the phase-one AI measurement is
optional: an acquired item may receive direct human review, AI scoring alone,
AI scoring followed by human verification, or no query. Moreover, the
follow-up rule changes with the accumulated likelihood ratio and values a
human label by its directional residual KL information. Thus the validation
design is not merely tailored to an estimand; it is embedded in an adaptive
hypothesis test. This additional history dependence leads to the active-testing
literature discussed next.

\subsection{Active Hypothesis Testing and Controlled Sensing}
\label{sec:lit_active_ht}

Active hypothesis testing and controlled sensing study how a decision maker
should choose among observation channels while learning which hypothesis is
true. Wald's sequential probability ratio test endogenizes stopping under a
fixed observation channel \citep{waldSPRT}, whereas
\citet{chernoff1959sequential} allows the experiment itself to be selected
adaptively from past observations. \citet{naghshvar2013active} derive
information-acquisition bounds and asymptotically optimal sensing policies
under sampling costs and wrong-decision penalties; companion work separates
the gains from sequential stopping and adaptive experiment selection
\citep{naghshvar2013sequentiality}. \citet{nitinawarat2013controlled} analyze
controlled sensing in both fixed-sample and sequential multihypothesis testing,
deriving error-exponent bounds and Chernoff-type policies under decision-risk
constraints. \citet{kartik2022fixed} study fixed-horizon active tests with
adaptive experiment selection and the option of an inconclusive decision.
Extensions allow controlled Markovian observations and nonuniform control
costs \citep{nitinawarat2015controlled}; more recently,
\citet{vershinin2026active} study heterogeneous action costs and show that the
relevant efficiency criterion is expected information gain divided by expected
cost.

These studies generally model each action as selecting an experiment-level
observation law. In the standard binary fixed-sample model, a stationary
open-loop control can already attain the optimal error exponent
\citep{nitinawarat2013controlled}; the value of adaptivity in our setting
instead comes from sequential stopping, direction-dependent cost efficiency,
and report-contingent follow-up. Our model has a hybrid fixed-pool and
sequential structure: the number of items is chosen and paid for in advance, but sensing
and stopping within that pool remain adaptive. The sensing technologies are
also nested within an item. An AI query first generates a noisy report, after
which the policy may purchase an exact human label on the same item. The
current likelihood ratio selects a direction-specific mixture of direct-human
and AI-first sensing, while the realized AI report determines human follow-up
through a direction-specific residual-information frontier. We minimize
worst-case expected acquisition, AI, and human cost subject to separate type-I
and type-II error constraints. This structure creates a full-label pool-size
floor and a capacity-aware transcript-KL lower bound, and it supports a
finite-sample-valid policy that matches the lower bound to first order.

\section{Model Setup}\label{sec:model}

We study a hypothesis testing problem in which each
data item $i$ carries a binary label $X_i\in\{0,1\}$
with population mean $p$. Although the binary label
assumption may appear restrictive, it covers a wide
range of practically important settings. A binary
label naturally captures any pass/fail, accept/reject,
or present/absent decision, which are among the most
common output types in AI-assisted workflows. Examples
include automated correctness checking of documents,
code, and mathematical proofs \citep{kryscinski2020evaluating, tong2024codejudge, dekoninck2026open},
AI-assisted legal adjudication of decisions such as
``guilty'' or ``innocent''
\citep{imai_judge}, and medical
screening of patient records for a binary health
outcome.

Formally, we test the simple hypotheses
\begin{equation}\label{eq:hypotheses}
    H_0:p=p_0,
    \qquad
    H_1:p=p_1,
    \qquad
    0<p_0<p_1<1.
\end{equation}
The ordering $p_0<p_1$ is adopted for concreteness;
the case $p_1<p_0$ is symmetric and can be analyzed
analogously. To fix ideas, consider a company that wants to assess
whether an AI coding assistant produces correct code
more than $p_0$ fraction of the time. The company
can collect a batch of code submissions but even collecting each
submission carries a cost (e.g., compute time,
storage, or API fees), and the correct label
$X_i\in\{0,1\}$ of each submission is not immediately
known. To obtain a label, the company can either ask
an AI checker to review the submission (fast and
cheap, but potentially inaccurate) or hire a human
expert to verify it (slow and expensive, but exact). 
The central question is: can we design a policy that
intelligently mixes AI and human queries (e.g.,
escalating to a human only when the AI evidence is
insufficient) and still retain the same statistical
guarantees as full human labeling, but at a fraction
of the cost?

To make this precise, for any positive integer $m$
we write $[m]:=\{1,2,\ldots,m\}$. A policy $\pi$
first acquires $N^\pi$ i.i.d.\ items. Each item
$i\in[N^\pi]$ has a hidden binary label
$X_i\in\{0,1\}$; under $H_h$ the labels are
distributed as $X_i\sim\Bern(p_h)$, but they are
not observed unless the policy explicitly queries
a human on item $i$. A cost $\cdata>0$ is incurred
for each acquired item regardless of whether it is
subsequently queried. Importantly, we focus on a
\emph{fixed-sample} design: the number of acquired items
$N^\pi$ is chosen in advance and does not depend
on the data, in contrast to sequential testing
where the sample size is itself an adaptive
stopping decision. Given the fixed pool of $N^\pi$
items, the policy may then query the AI, query a
human, both, or neither on each item. The central
decision is therefore how to jointly choose $N^\pi$
and how to allocate AI and human queries across
the acquired items, so as to minimize total cost
while meeting the target type-I and type-II error
constraints (to be specified below). We formalize
the AI output model next.

\subsection{AI Outputs}

If the policy queries the AI on item $i$, it receives
a report $R_i$, which may encode a predicted label, a
confidence score, or any other finite metadata. For
example, in the code correctness setting, a typical
AI response might be $R_i = (1, 0.99)$, indicating
that the AI predicts $X_i = 1$ (correct) with $99\%$
confidence. We allow $R_i$ to be as flexible as the
practitioner requires; the only constraint is that
its range is finite. Formally, let $\calR$ be the finite set of all
possible AI outputs. Conditional on $X_i = x$, the
report takes value $r\in\calR$ with probability
\[
    f_x(r) := \Pp(R_i = r \mid X_i = x),
    \qquad r\in\calR,\quad x\in\{0,1\}.
\]
We assume throughout the main analysis (relaxed later in Section~\ref{sec:unknown-ai}) that these
conditional distributions are known:

\vspace{1mm}
\begin{assumption}[Known strictly positive finite-alphabet
AI-output model]\label{ass:score-model}
The conditional probability mass functions $f_0$ and
$f_1$ are known. They satisfy 
$
    f_x(r)>0, \, \forall r\in\calR, \, x\in\{0,1\},
$
and
$
    \sum_{r\in\calR}f_x(r)=1,
    \, \forall x\in\{0,1\}.
$
\end{assumption}
\vspace{1mm}

The strict positivity condition $f_x(r)>0$ rules out
report values that are structurally impossible under
one of the labels, which simplifies the analysis
without materially restricting the model. In the simplest case where $R_i$ is just a predicted
label $\hat{X}_i\in\{0,1\}$, Assumption~\ref{ass:score-model}
reduces to knowing the AI's confusion matrix: the
probability of predicting $1$ when the true label is
$1$, and the probability of predicting $0$ when the
true label is $0$. More generally, knowing $f_x(r)$
means knowing the full conditional distribution of
the AI output given the true label, which
characterizes how informative and reliable the AI is.

The known-$f$ assumption provides a useful benchmark
for understanding how AI and human information should
be combined when the AI's accuracy profile is known.
In practice, however, $f_0$ and $f_1$ generally must
be estimated from data for which both the AI report
and the human-verified label are observed (e.g., using pilot data). For completeness, we return
to this issue in
Section~\ref{sec:unknown-ai}, where we relax the
known-$f$ assumption, estimate $f_0$ and $f_1$ from
an independent paired pilot sample, and study how
estimation uncertainty affects both statistical
validity and cost.

\iffalse
In practice, $f_0$ and $f_1$ can be estimated from
a small labeled pilot sample or from prior domain
knowledge about the AI system's performance. There
is a growing literature on LLM calibration that
develops principled methods for estimating exactly
these conditional output distributions from data
{\color{red} ST: citation}. We treat the
known-$f$ case as a natural and tractable starting
point.
\fi

In addition to Assumption \ref{ass:score-model}, we also impose the standard assumption that items
are independent and identically distributed (i.i.d.).

\vspace{1mm}
\begin{assumption}[Independent items]\label{ass:iid}
Under each hypothesis $H_h$, the pairs
$(X_1,R_1),\ldots, (X_N,$ $R_N)$ are independent and
identically distributed.
\end{assumption}
\vspace{1mm}

Let $\Pp_h$ denote the joint distribution of
$(X_i,R_i)$ under $H_h$. The marginal probability
that the AI report takes value $r$ under $H_h$ is
\begin{equation}\label{eq:gh-def}
    g_h(r)
    :=\Pp_h(R_i=r)
    =p_h f_1(r)+(1-p_h)f_0(r),
    \qquad r\in\calR,
\end{equation}
where the second equality follows by the law of
total probability, conditioning on $X_i\in\{0,1\}$.
By Assumption~\ref{ass:score-model} and $p_h\in(0,1)$,
we have $g_h(r)>0$ for every $r\in\calR$ and both
$h\in\{0,1\}$.

After observing the AI report $R_i=r$, the posterior
probability that the hidden label equals one under
$H_h$ follows from Bayes' rule:
\begin{equation}\label{eq:q-h-def}
    q_h(r):=\Pp_h(X_i=1\mid R_i=r)
    =\frac{p_hf_1(r)}{p_hf_1(r)+(1-p_h)f_0(r)}
    =\frac{p_hf_1(r)}{g_h(r)}.
\end{equation}
Conditional on the AI report $R_i=r$, a subsequently
revealed human label $X_i$ is therefore Bernoulli
with success probability $q_h(r)$, with mass function
\begin{equation}\label{eq:rho-def}
    \rho_h(x\mid r)
    :=q_h(r)^x(1-q_h(r))^{1-x},
    \qquad x\in\{0,1\},\quad r\in\calR.
\end{equation}
Throughout the paper, all quantities carrying a
subscript $h$ depend on whether the true proportion
is $p_0$ or $p_1$, and our analysis proceeds under
each hypothesis separately.

\subsection{Feasible Policy}

We consider a general setting where a policy can be a dynamic, data-dependent, and possibly
randomized decision rule. At each time step
$t = 1, 2, \dots$, the policy observes the history
of past actions and outcomes, and chooses the next
action, stopping at some time $t \leq 2N^\pi + 1$.
The upper bound $2N^\pi + 1$ arises as follows. Each
of the $N^\pi$ acquired items can be queried at most
once by the AI and at most once by a human, giving
at most $2N^\pi$ paid queries in total. The
additional $+1$ accounts for the final stopping
decision, at which the policy rejects or accepts
$H_0$. The policy is free to stop at any time and
act on the items in any order: for instance, it may
query the AI on item $i=3$ at $t=1$, then query a
human on item $i=1$ at $t=2$, and stop and decide
at $t=3$ without ever querying the remaining items.
We now formalize this.

Given $N^\pi$ acquired items, at each time step $t$
the policy chooses one of three actions:
\vspace{1mm}
\begin{enumerate}[label=(\roman*)]
    \item query the AI on item $i\in[N^\pi]$,
    denoted $A_t = \AI(i)$, which returns the AI
    report $O_t = R_i\in\calR$;
    \item query a human evaluator on item
    $i\in[N^\pi]$, denoted $A_t = \Hum(i)$, which
    reveals the exact label $O_t = X_i\in\{0,1\}$;
    \item stop, denoted $A_t = \text{STOP}$, and
    emit a final decision $\delta^\pi\in\{0,1\}$,
    where $\delta^\pi = 1$ means reject $H_0$.
\end{enumerate}

\vspace{1mm}
Each item may be AI-queried at most once and
human-queried at most once. The policy may query
the AI on item $i$ before or after querying a human
on the same item. However, once the human reveals
the exact label $X_i$, any subsequent AI query on
item $i$ yields no additional information about the
hypothesis and wastes cost $\cAI$. As we show
formally in Section~\ref{sec:lower-bound}, such
queries contribute zero statistical evidence and are therefore never used by an
optimal policy.

To formalize the information available to the policy
at each time step, let
\[
    \mathcal{H}_t
    := \sigma(A_1, O_1,\ldots, A_{t}, O_{t})
\]
be the filtration generated by the actions and
observations up to the end of epoch $t$, with
$\mathcal{H}_0 = \emptyset$. This filtration is
defined for all $t$ such that $A_{t'}\neq\text{STOP}$
for all $t'\leq t$. To allow for randomized policies, let
$U = (U_t)_{t=1}^{2N^\pi+1}$ be a vector of i.i.d.\
random seeds, independent of $(X_i,R_i)_{i=1}^{N^\pi}$
under both hypotheses, with common law $\mu$ that
does not depend on the hypothesis. The seed $U_t$
is used at epoch $t$ to randomize the policy's
action. 

At each epoch $t$, the set of actions
available to the policy is
\begin{align*}
    \calA_t(\mathcal{H}_{t-1})
    :=\{&\AI(i): i\in [N^\pi],
    i\text{ not yet AI-queried}\}\\
    & \cup \, \{\Hum(i): i\in [N^\pi],
    i\text{ not yet human-queried}\}\\
    & \cup \, \{\text{STOP}\},
\end{align*}
which encodes the constraint that each item may be
AI-queried at most once and human-queried at most
once. We now formally define an admissible policy.

\vspace{1mm}
\begin{definition}[Admissible policy]
\label{def:admissible_policy}
An admissible policy is a sequence of measurable
functions $\pi = (\pi_1, \pi_2, \ldots)$ such that
$A_t = \text{STOP}$ for some $t\in[2N^\pi+1]$, and
\begin{equation*}
\begin{split}
    \pi_t: \mathcal{H}_{t-1}\times U_t
    \to A_t\in\calA_t(\mathcal{H}_{t-1}),
    &\quad\text{if }A_{t'}\neq\text{STOP}
    \text{ for all }t'<t,\\
    \pi_t: \mathcal{H}_{t-1}\times U_t
    \to \delta^\pi\in\{0,1\},
    &\quad\text{if }A_t=\text{STOP}.
\end{split}
\end{equation*}
\end{definition}
\vspace{1mm}

Definition~\ref{def:admissible_policy} captures the
key requirements of a valid policy: actions are
chosen based only on the observed history and the
current random seed, the policy is guaranteed to
stop within $2N^\pi+1$ steps, and upon stopping it
emits a binary decision to reject or accept $H_0$.

Not all admissible policies have good statistical
properties. We restrict attention to policies that
simultaneously control both type-I and type-II
errors at prescribed levels. Let
$\ltime := \min\{t: A_t = \text{STOP}\}$ be the
stopping time of the policy, and let $\Pp^\pi_h$
and $\E^\pi_h$ denote probability and expectation
under hypothesis $H_h$ and policy $\pi$.

\vspace{1mm}
\begin{definition}[Feasible policy]
\label{def:feasible}
A policy $\pi$ is feasible for target errors
$(\alpha,\beta)$ with $0<\alpha<1$, $0<\beta<1$,
and $\alpha+\beta<1$, if it is admissible and
satisfies
\begin{equation}\label{eq:error-constraints}
    \underbrace{\Pp^\pi_0(\delta^\pi=1)}_{\text{type-I error}}\le\alpha,
    \qquad
    \underbrace{\Pp^\pi_1(\delta^\pi=0)}_{\text{type-II error}}\le\beta.
\end{equation}
We let $\mathcal{F}(\alpha,\beta)$ denote the class
of all feasible policies for target errors
$(\alpha,\beta)$.
\end{definition}
\vspace{1mm}

By restricting to
$\mathcal{F}(\alpha,\beta)$, we ensure that the
policies we consider provide meaningful statistical
guarantees, and our goal is to find the one among
them that minimizes cost.

\subsection{Optimal Cost}

There are three cost components: a data acquisition
cost $\cdata>0$ per acquired item, an AI query cost
$\cAI>0$ per AI query, and a human query cost
$\cH>0$ per human query. While in most practical
settings we have $\cH\gg\cAI$, we do not impose this
ordering in our analysis; our results hold for any positive cost
parameters. Let
\[
    N^{\mathrm{tot}}_{\AI}
    := \sum_{t=1}^{2N^\pi+1}
    \ind\{A_t=\AI(i)\text{ for some }i\in[N^\pi]\}
\]
be the total number of AI queries, and
\[
    N^{\mathrm{tot}}_{\Hum}
    := \sum_{t=1}^{2N^\pi+1}
    \ind\{A_t=\Hum(i)\text{ for some }i\in[N^\pi]\}
\]
be the total number of human queries. The total
cost incurred by policy $\pi$ is
\begin{equation*}\label{eq:cost-def}
    C^\pi:=\cdata N^\pi+\cAI N^{\mathrm{tot}}_{\AI}+\cH N^{\mathrm{tot}}_{\Hum}.
\end{equation*}

Since the true hypothesis is unknown, we evaluate
a policy by its worst-case expected cost over the
two hypotheses as typically done in the hypothesis testing literature \citep{waldSPRT, baraud2002nonasymptotic}. Specifically, the optimal cost is
\begin{equation}\label{eq:objective}
    C^*(\alpha,\beta)
    :=\inf_{\pi\in\mathcal{F}(\alpha,\beta)}
      \max\bigl\{\E^\pi_0[C^\pi],\E^\pi_1[C^\pi]
      \bigr\}.
\end{equation}

An exact closed-form solution of \eqref{eq:objective}
is generally intractable. Thus, we will study the
asymptotic regime in which the target errors
$\alpha$ and $\beta$ vanish. This regime is
practically relevant because in many high-stakes
applications (such as medical diagnosis, legal
adjudication, or quality control) decision makers often
require stringent error guarantees, and it is
precisely in this low-error regime that the
structure of the optimal policy becomes clear and
analytically tractable. In this regime, we seek
a policy that is asymptotically optimal, i.e., a
feasible policy $\pi\in\calF(\alpha,\beta)$
satisfying
\[
    \frac{\max\{\E^\pi_0[C^\pi],\E^\pi_1[C^\pi]\}}
         {C^*(\alpha,\beta)}
    \to 1
    \qquad\text{as }\alpha,\beta\to 0.
\]
In other words, the policy achieves the same
leading-order cost as the best possible feasible
policy, with only lower-order terms left unmatched.

We construct such a policy in
Section~\ref{sec:upper-bound}. To guide its
construction and to certify its optimality, we
first derive in Section~\ref{sec:lower-bound} a
lower bound on $C^*(\alpha,\beta)$ that any feasible
policy must satisfy.

\section{Lower Bound on the Optimal Cost}
\label{sec:lower-bound}

In this section, we derive a lower bound on the
optimal cost $C^*(\alpha,\beta)$ defined in
\eqref{eq:objective}. The analysis of the lower bound has two
components. First, in
Section~\ref{sec:lower-bound-samples}, we show
that any feasible policy must acquire at least
$N_{\fixed,\Hum}(\alpha,\beta)$ items, the minimum number
of items needed to test \eqref{eq:hypotheses} even
with full access to all labels. Second, in
Sections~\ref{sec:information-quantity}
and~\ref{sec:lower-bound-optimization}, we use
information-theoretic arguments to show that any
feasible policy must spend enough on AI and human
queries to accumulate sufficient statistical
evidence to distinguish $H_0$ from $H_1$. Each
item can contribute evidence in one of three
informative ways: through an AI query alone,
through a human query alone, or through both an
AI query and a human query on the same item. Each
of these contributes a quantifiable amount of
statistical evidence at a certain cost, and the
lower bound captures the minimum cost of assembling
enough evidence to meet the error constraints.
Together, these two components yield the lower
bound $\LB(\alpha,\beta)$, which we show in
Section~\ref{sec:upper-bound} is achievable to
first order by our proposed policy. 

\subsection{Lower Bound on Number of Acquired
Samples}\label{sec:lower-bound-samples}

We begin by establishing a fundamental lower bound
on the number of items any feasible policy must
acquire. The benchmark is $N_{\fixed,\Hum}(\alpha,\beta)$,
the minimum number of items needed to test
\eqref{eq:hypotheses} even in the idealized setting
where all labels are observed directly, with no AI
or human query costs. Any feasible policy in our
setting, which has access to strictly less
information per item than the full label, cannot
possibly require fewer items.

\vspace{1mm}
\begin{definition}[Full-label fixed-sample-size
benchmark]\label{def:Nfull}
For $0<\alpha< 1$ and $0<\beta<1$, let $N_{\fixed,\Hum}$ $(\alpha,\beta)$
be the smallest integer $N\in\N$ for which there
exists a randomized test
$
    \phi_N:\{0,1\}^N\to[0,1]
$
satisfying
\[
    \E_0[\phi_N(X_1,\ldots,X_N)]\le\alpha,
    \qquad
    \E_1[1-\phi_N(X_1,\ldots,X_N)]\le\beta,
\]
where $\phi_N(x)\in[0,1]$ is the probability
of rejecting $H_0$ upon observing the full label
vector $x\in\{0,1\}^N$, and $\E_h$ denotes
expectation under $H_h$.
\end{definition}
\vspace{1mm}

We first show that any feasible policy
$\pi\in\calF(\alpha,\beta)$ must acquire at least
$N_{\fixed,\Hum}(\alpha,\beta)$ items.

\vspace{1mm}
\begin{lemma}[Full-label data-pool lower
bound]\label{lem:data-pool}
Suppose Assumptions~\ref{ass:score-model}
and~\ref{ass:iid} hold. If a feasible policy
$\pi\in\calF(\alpha,\beta)$ acquires $N$ items,
then
$
    N\ge N_{\fixed,\Hum}(\alpha,\beta).
$
\end{lemma}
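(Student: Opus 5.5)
The plan is a simulation (data-processing) argument: from any feasible policy $\pi$ acquiring $N$ items, build a randomized full-label test $\phi_N:\{0,1\}^N\to[0,1]$ with the same type-I and type-II errors. Definition~\ref{def:Nfull} then forces $N\ge N_{\fixed,\Hum}(\alpha,\beta)$.

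\textbf{Construction.} Given a label vector $x=(x_1,\ldots,x_N)$, generate auxiliary randomness that does not depend on the hypothesis. First draw independent reports $\tilde R_i\sim f_{x_i}(\cdot)$ for $i\in[N]$; this uses only the known kernels of Assumption~\ref{ass:score-model}. Then draw seeds $U=(U_t)$ from $\mu$. Run the policy $\pi$ on this simulated environment:
\begin{itemize}
\item whenever $\pi$ chooses $\AI(i)$, return $\tilde R_i$;
\item whenever $\pi$ chooses $\Hum(i)$, return $x_i$;
\item when $\pi$ stops, record its decision $\delta^\pi$.
\end{itemize}
Define $\phi_N(x)$ as the probability, over $(\tilde R,U)$, that the simulated run outputs $\delta^\pi=1$. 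This is a measurable function of $x$ with values in $[0,1]$. It is well defined because the policy stops within $2N+1$ steps by Definition~\ref{def:admissible_policy}.

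\textbf{Key step: equality in law.} Under $H_h$ with $X_i$ i.i.d.\ $\Bern(p_h)$, the vector $(X_i,\tilde R_i)_{i\le N}$ has exactly the law of $(X_i,R_i)_{i\le N}$ under $\Pp_h$. This follows from Assumption~\ref{ass:iid} together with the factorization $\Pp_h(X_i=x,R_i=r)=p_h^x(1-p_h)^{1-x}f_x(r)$. Also, $U$ is independent of these pairs, with a law not depending on $h$. The transcript $(A_t,O_t)_t$ and the decision are the same measurable function of $((X_i,R_i)_i,U)$ in both the real and the simulated environment. Hence the transcript and the decision have identical laws. Therefore $\E_0[\phi_N(X)]=\Pp^\pi_0(\delta^\pi=1)\le\alpha$ and $\E_1[1-\phi_N(X)]=\Pp^\pi_1(\delta^\pi=0)\le\beta$.

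\textbf{Main obstacle.} The only delicate point is making "same measurable function" rigorous, since actions are adaptive. The plan is to do this by induction on $t$. Conditional on $\calH_{t-1}$, the next action is $\pi_t(\calH_{t-1},U_t)$ in both worlds, and the next observation is a fixed coordinate of $(X_i,R_i)_i$ selected by that action. So the map from $((X_i,R_i)_i,U)$ to $\calH_t$ is identical in the two worlds, and the joint law of the transcript matches.

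A subtlety is that the definition of $N^\pi$ fixes the pool size deterministically, so $N$ is a constant here. If the pool size were randomized independently of the data, the same argument would apply conditionally on each value; but the lemma concerns a fixed $N$, so this is not needed.
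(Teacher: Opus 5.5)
Your proposal is correct and follows essentially the same simulation argument as the paper. The paper also runs the policy on the full label vector $x$, with reports drawn from the hypothesis-free conditional law $\prod_i f_{x_i}(r_i)$ and a hypothesis-free seed, and averages the resulting decision to get a randomized full-label test $\bar\phi(x)=\E[\tilde\phi(x,R,U)\mid X=x]$ with the same errors; the only cosmetic differences are that the paper argues by contradiction with $N<N_{\fixed,\Hum}(\alpha,\beta)$ and uses the tower property instead of your equality-in-law induction on the history.
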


\vspace{1mm}
The intuition behind Lemma~\ref{lem:data-pool} is
straightforward. Even if a policy could somehow
observe all $N$ true labels at no cost, it would
still need at least $N_{\fixed,\Hum}(\alpha,\beta)$ items
to meet the error constraints. A policy in
our setting, which must pay for each label it
observes and receives only noisy AI reports on
some items, has access to no more information than
the full-label benchmark. It therefore cannot
satisfy the same error constraints with fewer
items. 

We now describe how to compute $N_{\fixed,\Hum}(\alpha,\beta)$
explicitly. This construction is also needed for the
fallback test in Section~\ref{sec:upper-bound}. By the Neyman--Pearson lemma~\citep{neyman1933ix},
for any fixed sample size $N$, the most powerful
test at type-I level $\alpha$ is the
likelihood-ratio test. Since the likelihood ratio
is strictly increasing in the label sum $L_N :=
\sum_{i=1}^N X_i$ (as we show below), this reduces
to a simple threshold test on $L_N$. We can
therefore compute $N_{\fixed,\Hum}(\alpha,\beta)$ by
increasing $N$ from $1$ upward and checking whether
the threshold test meets the type-II target $\beta$.

Fix $N\in\N$ and suppose the full label vector
$X=(X_1,\ldots,X_N)$ is observed. Under $H_h$,
$L_N$ has a binomial distribution with parameters
$N$ and $p_h$. The likelihood ratio under $H_1$
versus $H_0$ is
\begin{align}\label{eq:full-label-lr}
\notag
    \Lambda_N(X)
    =\prod_{i=1}^N
      \frac{p_1^{X_i}(1-p_1)^{1-X_i}}
           {p_0^{X_i}(1-p_0)^{1-X_i}}
    =\left(\frac{p_1}{p_0}\right)^{L_N}
      \left(\frac{1-p_1}{1-p_0}\right)^{N-L_N}.
\end{align}
Since $p_1>p_0$, the log-likelihood ratio
\begin{equation}\label{eq:full-label-llr}
\notag
    \log\Lambda_N(X)
    =L_N\log\frac{p_1}{p_0}
    +(N-L_N)\log\frac{1-p_1}{1-p_0}
\end{equation}
has a strictly positive coefficient on $L_N$, so
$\Lambda_N(X)$ is strictly increasing in $L_N$.
The optimal test therefore rejects $H_0$ when
$L_N$ is large.

For a target type-I level $\alpha$, define $c_N$
to be the smallest integer in $\{0,1,\ldots,N\}$
such that $\Pp_0(L_N>c_N)\le\alpha$, and set
\begin{equation}\label{eq:gammaN-def}
\notag
    \gamma_N:=\frac{\alpha-\Pp_0(L_N>c_N)}
                   {\Pp_0(L_N=c_N)}.
\end{equation}
The denominator is positive because $0<p_0<1$,
and $\gamma_N\in[0,1]$ by the definition of $c_N$.
This gives rise to the following test.

\vspace{1mm}
\begin{definition}[Neyman--Pearson
test~\citep{casella2024statistical}]
The randomized threshold test
\begin{equation*}\label{eq:phi-star}
    \phi_N^*(x)=
    \begin{cases}
        1, & \sum^N_{i=1} x_i>c_N,\\
        \gamma_N, & \sum^N_{i=1} x_i=c_N,\\
        0, & \sum^N_{i=1} x_i<c_N,
    \end{cases}
\end{equation*}
where $x=(x_1,\ldots,x_N)\in\{0,1\}^N$, is called
the Neyman--Pearson test.
\end{definition}
\vspace{1mm}

By construction, $\phi_N^*$ meets the type-I
constraint exactly: $\E_0[\phi_N^*(X)]=\alpha$.
Its type-II error is
\begin{equation}\label{eq:typeII-full}
    \E_1[1-\phi_N^*(X)]
    =\Pp_1(L_N<c_N)+(1-\gamma_N)\Pp_1(L_N=c_N).
\end{equation}
The Neyman--Pearson lemma guarantees that this
test is optimal, which we state formally below.

\vspace{1mm}
\begin{lemma}[Neyman--Pearson
lemma~\citep{neyman1933ix}]
The test $\phi_N^*(X)$ has the smallest type-II
error among all tests $\phi(X)$ satisfying
$\E_0[\phi(X)]\leq\alpha$. That is, $\phi_N^*$
is the most powerful test at significance level
$\alpha$.
\end{lemma}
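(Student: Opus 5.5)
The statement is the classical Neyman--Pearson lemma specialized to $N$ i.i.d.\ Bernoulli labels. I will prove it with the standard pointwise integrand argument on the finite sample space $\{0,1\}^N$. Write $P_h(x)=p_h^{\sum x_i}(1-p_h)^{N-\sum x_i}$ for $x\in\{0,1\}^N$, and recall from \eqref{eq:full-label-lr} that $\Lambda_N(x)=P_1(x)/P_0(x)$ is a strictly increasing function of $\ell:=\sum_i x_i$. Set
\[
k:=\Lambda_N\text{ evaluated at }\ell=c_N .
\]
Then $\phi_N^*$ can be rewritten as: $\phi_N^*(x)=1$ when $\Lambda_N(x)>k$, $\phi_N^*(x)=\gamma_N$ when $\Lambda_N(x)=k$, and $\phi_N^*(x)=0$ when $\Lambda_N(x)<k$. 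This rewriting uses strict monotonicity, so the events $\{\Lambda_N>k\}$ and $\{L_N>c_N\}$ coincide, and likewise for equality.

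\textbf{Key inequality.} Let $\phi$ be any test with $\E_0[\phi(X)]\le\alpha$. For every $x$,
\[
\bigl(\phi_N^*(x)-\phi(x)\bigr)\bigl(P_1(x)-kP_0(x)\bigr)\ge 0 .
\]
To check this pointwise, split by the sign of $P_1-kP_0$. Where it is positive, $\phi_N^*=1\ge\phi$. Where it is negative, $\phi_N^*=0\le\phi$. Where it is zero, the product vanishes. Summing over $x\in\{0,1\}^N$ gives
\[
\E_1[\phi_N^*]-\E_1[\phi]\ \ge\ k\bigl(\E_0[\phi_N^*]-\E_0[\phi]\bigr)=k\bigl(\alpha-\E_0[\phi]\bigr)\ \ge\ 0 .
\]
Here I use $k>0$ and the exact-size property $\E_0[\phi_N^*]=\alpha$.

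\textbf{Exact size.} The only step needing care is verifying $\E_0[\phi_N^*]=\alpha$, which is what the choice of $\gamma_N$ is designed to deliver:
\[
\E_0[\phi_N^*]=\Pp_0(L_N>c_N)+\gamma_N\Pp_0(L_N=c_N)=\alpha .
\]
We also need $\gamma_N\in[0,1]$. Minimality of $c_N$ gives $\Pp_0(L_N>c_N-1)>\alpha$ when $c_N\ge1$, so $\alpha-\Pp_0(L_N>c_N)<\Pp_0(L_N=c_N)$. The boundary case $c_N=0$ requires separate attention. Since $\Pp_0(L_N\ge 0)=1>\alpha$, the same inequality holds, so $\gamma_N\in[0,1]$ in every case.

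\textbf{Conclusion.} From $\E_1[\phi_N^*]\ge\E_1[\phi]$ we get $\E_1[1-\phi_N^*]\le\E_1[1-\phi]$, i.e.\ $\phi_N^*$ has the smallest type-II error among all level-$\alpha$ tests. No real obstacle is expected beyond this bookkeeping of the randomization constant and the degenerate threshold.
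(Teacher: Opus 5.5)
Your proof is correct: the pointwise inequality $\bigl(\phi_N^*(x)-\phi(x)\bigr)\bigl(P_1(x)-kP_0(x)\bigr)\ge 0$, the exact-size identity $\E_0[\phi_N^*]=\alpha$, and your check that $\gamma_N\in[0,1]$ (including the case $c_N=0$) together give the claim. The paper does not prove this lemma but simply cites the classical Neyman--Pearson result, and your argument is the standard textbook proof of it, specialized here by using strict monotonicity of $\Lambda_N$ in $L_N$ to turn the likelihood-ratio threshold $k$ into the count threshold $c_N$.
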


\vspace{1mm}
It follows that $N_{\fixed,\Hum}(\alpha,\beta)$ can be
computed exactly by increasing $N$ from $1$ upward
and checking whether \eqref{eq:typeII-full} is at
most $\beta$. The logic follows from the Neyman--Pearson lemma: at each $N$,
$\phi_N^*$ has the smallest type-II error of any
test with type-I error at most $\alpha$. So if
$\phi_N^*$ fails to meet the type-II target $\beta$,
no test at that sample size can. Conversely, if
$\phi_N^*$ does meet $\beta$, it is itself a valid
test meeting both error targets. The first $N$ at
which \eqref{eq:typeII-full} is at most $\beta$ is
therefore $N_{\fixed,\Hum}(\alpha,\beta)$.

\subsection{Information Quantities}
\label{sec:information-quantity}

At its core, any feasible policy must accumulate
enough statistical evidence to reliably distinguish
$H_0$ from $H_1$. The fundamental currency of this
evidence is KL divergence: it quantifies how
distinguishable the distribution of observations
is under $H_1$ versus $H_0$, and therefore measures
how much each query contributes toward meeting the
error constraints. We now define the KL-divergence
quantities associated with each possible query type.

We use two standard KL-divergence quantities
throughout. For $a,b\in(0,1)$, the Bernoulli KL
divergence is given by
\[
    \klbin(a\Vert b)
    :=a\log\frac{a}{b}
    +(1-a)\log\frac{1-a}{1-b}.
\]
For probability mass functions $P$ and $Q$ on a
common finite set $S$, the KL divergence is
\[
    D(P\Vert Q)
    :=\sum_{s\in S}P(s)\log\frac{P(s)}{Q(s)},
\]
with conventions $0\log(0/q)=0$ and
$p\log(p/0)=+\infty$ for $p>0$.

\vspace{1mm}
\paragraph{AI query only.}
When the policy queries only the AI on item $i$,
it observes the report $R_i$ with marginal
distribution $G_h$ (with probability mass function
$g_h$) under $H_h$. Since the two error constraints
\eqref{eq:error-constraints} are asymmetric
(one applies when $H_0$ is true and the other when
$H_1$ is true) we need to track the
discriminating power of an AI query under each
hypothesis separately. Specifically, $I_R^{(1)}$
measures how informative the AI report is for
distinguishing $H_1$ from $H_0$ when $H_1$ is the
true hypothesis, and $I_R^{(0)}$ measures the same
when $H_0$ is true:
\begin{equation}\label{eq:IR1}
    I_R^{(1)}:=D(G_1\Vert G_0)
    =\sum_{r\in\calR}g_1(r)\log\frac{g_1(r)}{g_0(r)},
\end{equation}
\begin{equation}\label{eq:IR0}
    I_R^{(0)}:=D(G_0\Vert G_1)
    =\sum_{r\in\calR}g_0(r)\log\frac{g_0(r)}{g_1(r)}.
\end{equation}
In general $I_R^{(1)}\ne I_R^{(0)}$, reflecting
the fact that the AI report may be more informative
under one hypothesis than the other.

\vspace{1mm}
\paragraph{Human query only.}
If the policy queries a human on item $i$ with
no prior AI query, the revealed label $X_i$ is
distributed as $\Bern(p_h)$ under $H_h$. By the
same reasoning as above, we track the
discriminating power of a human query under each
hypothesis separately. Specifically, $J_X^{(1)}$
measures how informative a directly revealed label
is for distinguishing $H_1$ from $H_0$ when $H_1$
is true, and $J_X^{(0)}$ measures the same when
$H_0$ is true:
\begin{equation}\label{eq:JX-def}
    J_X^{(1)}:=\klbin(p_1\Vert p_0),
    \qquad
    J_X^{(0)}:=\klbin(p_0\Vert p_1),
\end{equation}
both of which are strictly positive because
$0<p_0<p_1<1$. Note that $J_X^{(1)}\ne J_X^{(0)}$
in general, for the same reason as above.

\vspace{1mm}
\paragraph{Human query followed by AI query.}
If item $i$ is first queried by a human, revealing
the exact label $X_i$, and is then queried by the
AI, the AI report $R_i$ carries no additional
information: the true label $X_i$ is already known,
so the AI output is redundant. This query ordering
therefore contributes zero additional KL divergence
and is never used by an optimal policy.

\vspace{1mm}
\paragraph{AI query followed by human query.}
If the policy first queries the AI on item $i$,
receiving report $R_i = r$, and then queries a
human on the same item, the human reveals the
exact label $X_i$. Conditional on $R_i = r$, the
label $X_i$ is distributed as $\Bern(q_h(r))$
under $H_h$ by \eqref{eq:rho-def}. The information
this human label carries for discriminating $H_1$
from $H_0$, beyond what the AI report $r$ already
provided, depends on which hypothesis is true.
When $H_1$ is true, the additional information is
\begin{equation}\label{eq:d1-def}
    d^{(1)}(r)
    :=D(\rho_1(\cdot\mid r)\Vert\rho_0(\cdot\mid r))
    =\klbin(q_1(r)\Vert q_0(r)),
\end{equation}
and when $H_0$ is true it is
\begin{equation}\label{eq:d0-def}
    d^{(0)}(r)
    :=D(\rho_0(\cdot\mid r)\Vert\rho_1(\cdot\mid r))
    =\klbin(q_0(r)\Vert q_1(r)).
\end{equation}
Note that $d^{(h)}(r)$ depends on the AI report
$r$: a more informative AI report leaves less
residual uncertainty about $X_i$, and hence
contributes less additional information when the
human is subsequently queried.

\subsection{Supporting Results for the Lower Bound}
\label{sec:preliminary-results}

%{\color{red} ST: i added this new subsection. The lemmas and theorem here are important to understand the logic behind the info theoretic approach to lower bound so they need to be presented earlier. }

To derive a lower bound on $C^*(\alpha,\beta)$, we
need to understand how much statistical evidence
any admissible policy can accumulate, and at what
cost. The key insight is that any feasible policy
must accumulate enough statistical evidence to
distinguish $H_0$ from $H_1$, and this evidence
can only be obtained by paying for AI queries,
human queries, or both. We formalize this by
measuring the statistical evidence in terms of KL
divergence between the distributions of the
policy's actions and observations under the two
hypotheses.

For a policy $\pi$ with stopping time $\ltime$,
we call the realized sequence
\[
    (A_1, O_1, \ldots, A_{\ltime-1},
    O_{\ltime-1}, A_{\ltime}, \delta^\pi)
\]
the \emph{transcript} of the policy.
Let $P^\pi_h$ denote the distribution of the
transcript under $H_h$. The KL divergence $D(P_1^\pi\Vert P_0^\pi)$
is the expected log-likelihood ratio of the
transcript distribution under $H_1$, measuring
how much $P_1^\pi$ favors $H_1$ over $H_0$.
Similarly, $D(P_0^\pi\Vert P_1^\pi)$ is the
expected log-likelihood ratio under $H_0$,
measuring how much $P_0^\pi$ favors $H_0$ over
$H_1$. Both must be large enough for the policy
to reliably distinguish the two hypotheses and
meet the error constraints
\eqref{eq:error-constraints}.

\vspace{1mm}
\begin{lemma}[Testing errors imply transcript KL
requirements]\label{lem:error-to-kl}
Every feasible policy $\pi\in\calF(\alpha,\beta)$
satisfies
\[
    D(P_1^\pi\Vert P_0^\pi)\ge A,
    \qquad
    D(P_0^\pi\Vert P_1^\pi)\ge B,
\]
where
\begin{equation*}\label{eq:A-B-def}
    A:=\klbin(1-\beta\Vert\alpha),
    \qquad
    B:=\klbin(1-\alpha\Vert\beta).
\end{equation*}
\end{lemma}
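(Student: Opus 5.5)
Our approach is the standard data-processing argument. The terminal decision $\delta^\pi$ is a (randomized) function of the transcript, so pushing both $P_1^\pi$ and $P_0^\pi$ forward through the map transcript $\mapsto\delta^\pi$ can only decrease KL divergence. Under this map $P_h^\pi$ becomes a Bernoulli law on $\{0,1\}$: under $H_1$ it is $\Bern(\Pp_1^\pi(\delta^\pi=1))$ and under $H_0$ it is $\Bern(\Pp_0^\pi(\delta^\pi=1))$. Since $\delta^\pi$ is recorded as the last entry of the transcript, the projection is in fact deterministic. Any internal randomization through the seeds $U$ is already absorbed into $P_h^\pi$, because the seed law $\mu$ does not depend on $h$. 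The data-processing inequality for KL divergence then gives
\[
D(P_1^\pi\Vert P_0^\pi)\ \ge\ \klbin\bigl(\Pp_1^\pi(\delta^\pi=1)\,\Vert\,\Pp_0^\pi(\delta^\pi=1)\bigr).
\]
Write $a:=\Pp_1^\pi(\delta^\pi=1)\ge 1-\beta$ and $b:=\Pp_0^\pi(\delta^\pi=1)\le\alpha$.

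The second step is a monotonicity argument. Feasibility together with $\alpha+\beta<1$ gives $b\le\alpha<1-\beta\le a$. The function $(x,y)\mapsto\klbin(x\Vert y)$ is nondecreasing in $x$ for $x\ge y$ and nonincreasing in $y$ for $y\le x$. To see this, note that $\partial_x\klbin(x\Vert y)=\log\frac{x(1-y)}{y(1-x)}\ge0$ when $x\ge y$, and $\partial_y\klbin(x\Vert y)=\frac{y-x}{y(1-y)}\le0$ when $y\le x$. Moving $a$ down to $1-\beta$ and then $b$ up to $\alpha$ keeps the ordering $y\le x$ throughout. Hence $\klbin(a\Vert b)\ge\klbin(1-\beta\Vert\alpha)=A$. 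The symmetric argument applies to $D(P_0^\pi\Vert P_1^\pi)$: project onto the event $\{\delta^\pi=0\}$, with $\Pp_0^\pi(\delta^\pi=0)\ge1-\alpha$ and $\Pp_1^\pi(\delta^\pi=0)\le\beta$, which yields $\klbin(1-\alpha\Vert\beta)=B$.

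The main obstacle is the boundary cases, not the analysis itself. If $b=0$ or $a=1$, the Bernoulli KL may be $+\infty$ or must be handled through the conventions $0\log(0/q)=0$ and $p\log(p/0)=+\infty$. In that situation the inequality holds trivially: the left side is at least the right side, possibly infinite, and $A$ is finite because $\alpha\in(0,1)$. The monotonicity argument should therefore be phrased on the closed interval, appealing to continuity or the extended-real conventions. It also needs a remark that the transcript lives on a countable (indeed finite) space, since $N^\pi$ is fixed, actions are finite, and $\calR$ is finite; this makes the data-processing inequality apply directly in its discrete form.
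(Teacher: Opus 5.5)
Your proposal is correct and follows essentially the same route as the paper. The paper coarsens the transcript to the decision event $E=\{\tilde\delta=1\}$; its binary-coarsening lemma, proved via the KL chain rule, is exactly the data-processing step you invoke. It then applies the same two-variable monotonicity of $\klbin$, with the same derivatives, and the reverse direction uses $F=\{\tilde\delta=0\}$ just as you do.

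The only difference is how the boundary cases are handled. Where you use continuity or extended-real conventions, the paper uses the common support of $P_0^\pi$ and $P_1^\pi$, together with $P_1^\pi(E)\ge 1-\beta>0$ and $P_0^\pi(E)\le\alpha<1$, to conclude $0<P_0^\pi(E)<P_1^\pi(E)<1$. As a result, no degenerate cases arise in its argument.
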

\vspace{1mm}

Lemma~\ref{lem:error-to-kl} says that meeting
the error constraints forces the transcript
distributions $P_1^\pi$ and $P_0^\pi$ to be
sufficiently separated: $P_1^\pi$ must be at
least $A$ units of KL divergence away from
$P_0^\pi$, and $P_0^\pi$ must be at least $B$
units away from $P_1^\pi$. We next characterize
exactly how this total KL information accumulates
across the policy's query decisions.

For a policy $\pi$, let $\calI_{\AI}^\pi$ be the
set of items whose first query is an AI query
(which may or may not be followed by a human
query), $\calI_{\Hum}^\pi$ the set of items whose
first query is a human query with no preceding AI
query, % {\color{red} ST: shall we change all ``dir'' with ``H'' and ``scored'' with ``AI''?}\xz{Updated.}, 
and $\calI_{\AI\Hum}^\pi$ the set of items
that are human-queried after their AI report has
been observed. Let $N^{\dir, \pi}_{\AI}:=|\calI_{\AI}^\pi|$,
$N^{\dir,\pi}_{\Hum}:=|\calI_{\Hum}^\pi|$, and
$N^{\pi}_{\AI\Hum}:=|\calI_{\AI\Hum}^\pi|$ be the
corresponding random counts. Note that these counts
are random and their expectations under $H_0$ and
$H_1$ differ in general. An AI query on an item
that has already received a human query contributes
zero KL information, since the report law $f_x$
does not depend on the hypothesis once the label
$x$ is known.

\vspace{1mm}
\begin{theorem}[Adaptive KL
decomposition]\label{thm:kl-decomposition}
Suppose Assumptions~\ref{ass:score-model}
and~\ref{ass:iid} hold.
For every admissible policy $\pi$ using $N^\pi$
items,
\begin{align}
    D(P_1^\pi\Vert P_0^\pi)
    &=\E^\pi_1\!\left[
        N^{\dir, \pi}_{\AI} I_R^{(1)}
        +N^{\dir,\pi}_{\Hum} J_X^{(1)}
        +\sum_{i\in\calI_{\AI\Hum}^\pi}d^{(1)}(R_i)
      \right],
      \label{eq:forward-kl-decomp}\\
    D(P_0^\pi\Vert P_1^\pi)
    &=\E^\pi_0\!\left[
        N^{\dir, \pi}_{\AI} I_R^{(0)}
        +N^{\dir,\pi}_{\Hum} J_X^{(0)}
        +\sum_{i\in\calI_{\AI\Hum}^\pi}d^{(0)}(R_i)
      \right].
      \label{eq:reverse-kl-decomp}
\end{align}
\end{theorem}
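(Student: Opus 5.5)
Our plan is the standard chain-rule argument for adaptive experiments: factor the transcript likelihood step by step, observe that the policy's own randomization cancels in the likelihood ratio, and identify each observation's conditional log-likelihood ratio with one of the three per-item quantities. We treat $H_1$ versus $H_0$; the reverse direction is identical with the roles of $h=0,1$ swapped.

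\textbf{Step 1: factor the transcript.} Write the density of the transcript under $H_h$ as a product over epochs $t\le \ltime$. Each epoch contributes two factors. The first is the policy kernel, $\Pp(A_t\mid\calH_{t-1})$ (or the decision kernel at STOP). It is obtained by integrating $\pi_t$ against the seed law $\mu$, so it does not depend on $h$. The second is the observation kernel $\Pp_h(O_t\mid\calH_{t-1},A_t)$. The policy kernels therefore cancel in $\log(dP_1^\pi/dP_0^\pi)$, and we get
\[
D(P_1^\pi\Vert P_0^\pi)=\E_1^\pi\Big[\sum_{t<\ltime}\log\frac{\Pp_1(O_t\mid\calH_{t-1},A_t)}{\Pp_0(O_t\mid\calH_{t-1},A_t)}\Big].
\]

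\textbf{Step 2: compute the observation kernels.} By Assumption~\ref{ass:iid}, the pairs $(X_i,R_i)$ are independent across items, and the actions depend only on past observations and seeds. Hence the conditional law of $O_t$ given the history depends only on what has already been revealed about the queried item $i$. This gives four cases:
\begin{itemize}
\item first query $\AI(i)$: the kernel is $g_h(R_i)$;
\item first query $\Hum(i)$: the kernel is $p_h^{X_i}(1-p_h)^{1-X_i}$;
\item $\Hum(i)$ after $R_i$ is known: the kernel is $\rho_h(X_i\mid R_i)$;
\item $\AI(i)$ after $X_i$ is known: the kernel is $f_{X_i}(R_i)$, which is free of $h$, so its log-ratio is $0$.
\end{itemize}
Justifying these formally is the main obstacle. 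One has to show that conditioning on the adaptively generated history does not distort the per-item law, even though which items were queried, and when, is itself random. We would argue by induction on $t$: conditional on $\calH_{t-1}$, the unrevealed coordinates of $(X_j,R_j)_j$ keep their prior conditional laws given the revealed coordinates. This holds because $\calH_{t-1}$ is generated by revealed coordinates together with seeds that are independent of the data.

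\textbf{Step 3: take conditional expectations.} Apply the tower property to each summand, conditioning on $(\calH_{t-1},A_t)$. The conditional expectation of the log-ratio under $H_1$ is:
\begin{itemize}
\item $D(G_1\Vert G_0)=I_R^{(1)}$ for a first AI query;
\item $\klbin(p_1\Vert p_0)=J_X^{(1)}$ for a direct human query;
\item $\klbin(q_1(R_i)\Vert q_0(R_i))=d^{(1)}(R_i)$ for a human query following an AI report, since $R_i$ is $\calH_{t-1}$-measurable.
\end{itemize}
Summing over epochs, with the sum bounded by $2N^\pi$ so that Fubini applies, and grouping epochs by query type yields exactly $N^{\dir,\pi}_{\AI}I_R^{(1)}+N^{\dir,\pi}_{\Hum}J_X^{(1)}+\sum_{i\in\calI_{\AI\Hum}^\pi}d^{(1)}(R_i)$ inside $\E_1^\pi$. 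To justify replacing log-ratios by their conditional expectations inside a sum with a random number of terms, write the sum as $\sum_{t\le 2N^\pi}\ind\{t<\ltime\}(\cdot)$. The event $\{t<\ltime\}$ is determined by $(\calH_{t-1},A_t)$, so the replacement is legitimate. All log-ratios are bounded by Assumption~\ref{ass:score-model} and $p_h\in(0,1)$, which makes every expectation finite.
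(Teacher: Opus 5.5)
Your proposal is correct and follows essentially the same route as the paper. The paper factors the transcript PMF into a hypothesis-free policy kernel times observation kernels $\varphi_h(o_t\mid a_t,\tau_{t-1})$ with the same four cases. It then writes the KL as $\sum_{t\le 2N^\pi}\E^\pi_1[\ind\{t<\ltime\}\,\E^\pi_1[L_t\mid\sigma(\calH_{t-1},A_t)]]$ and identifies the conditional means as $I_R^{(1)}$, $J_X^{(1)}$, $d^{(1)}(R_i)$, or $0$. Your inductive argument that unrevealed coordinates keep their prior conditional law given the adaptive history is a step the paper only asserts when it writes down $\varphi_h$, so your version is, if anything, slightly more complete.
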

\vspace{1mm}

Theorem~\ref{thm:kl-decomposition} shows that the
transcript KL divergence decomposes exactly into
three per-item contributions: each AI-first item
contributes $I_R^{(h)}$, each human-first item
contributes $J_X^{(h)}$, and each AI-first item
that is subsequently human-queried contributes an
additional $d^{(h)}(R_i)$ depending on its
realized report $R_i$. The escalation term
$\sum_{i\in\calI_{\AI\Hum}^\pi}d^{(h)}(R_i)$
depends on the realized reports, making it
difficult to work with directly. We therefore define, for $s\in[0,1]$,
\begin{equation}\label{eq:Psi-frontier-def}
    \Psi_h(s)
    :=\inf_{\lambda\ge0}
    \left\{
        \lambda s+
        \sum_{r\in\calR}g_h(r)
        \pos{d^{(h)}(r)-\lambda}
    \right\},
\end{equation}
which provides a deterministic upper bound on the
expected escalation term, as shown in
Lemma~\ref{lem:selected-info-bound} below. The
following lemma establishes the key properties
of $\Psi_h$, including a dual representation
that gives $\Psi_h(s)$ a natural interpretation.

\vspace{1mm}
\begin{lemma}[Properties of the follow-up
frontier]\label{lem:capacity-frontier-properties}
For each $h\in\{0,1\}$, $\Psi_h$ is nondecreasing,
concave, and continuous on $[0,1]$. By linear
programming duality, it admits the equivalent
dual representation
\begin{equation}\label{eq:Psi-dual}
    \Psi_h(s)
    =\max_{\eta:\calR\to[0,1]}
    \left\{
        \sum_{r\in\calR}g_h(r)\eta(r)d^{(h)}(r)
        :
        \sum_{r\in\calR}g_h(r)\eta(r)\le s
    \right\}.
\end{equation}

\vspace{1mm}
\noindent
Moreover,
\begin{equation}\label{eq:Psi-one-chain-rule}
    \Psi_h(1)=J_X^{(h)}-I_R^{(h)}.
\end{equation}
\end{lemma}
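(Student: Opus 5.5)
I would establish the dual representation \eqref{eq:Psi-dual} first, because the monotonicity, concavity and continuity of $\Psi_h$ follow easily from it. Fix $h$ and $s\in[0,1]$, and write $w_r:=g_h(r)>0$ and $d_r:=d^{(h)}(r)\ge0$. Consider the primal linear program
\[
\max_{\eta\in[0,1]^{\calR}}\ \sum_r w_r\eta(r)d_r\quad\text{s.t.}\quad\sum_r w_r\eta(r)\le s .
\]
This is a fractional knapsack problem. The feasible set is nonempty, since $\eta\equiv0$ is feasible, and it is compact, so a maximum exists.

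To derive the dual, I introduce a multiplier $\lambda\ge0$ for the knapsack constraint and multipliers $\mu_r\ge0$ for the upper bounds $\eta(r)\le1$. The dual problem is
\[
\min\ \lambda s+\sum_r\mu_r\quad\text{s.t.}\quad \mu_r\ge w_r(d_r-\lambda).
\]
For fixed $\lambda$, the optimal choice is $\mu_r=w_r[d_r-\lambda]_+$, which reproduces exactly the objective in \eqref{eq:Psi-frontier-def}. Strong LP duality then gives equality, together with attainment of the infimum over $\lambda$.

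To keep the argument self-contained, I would also verify the equality directly. Weak duality is immediate: for any feasible $\eta$ and any $\lambda\ge0$,
\[
\sum_r w_r\eta(r)d_r\le\lambda\sum_r w_r\eta(r)+\sum_r w_r\eta(r)[d_r-\lambda]_+\le\lambda s+\sum_r w_r[d_r-\lambda]_+ .
\]
For equality, I would use the greedy solution: sort the reports by $d_r$ in decreasing order, fill $\eta$ up to mass $s$, and set $\lambda$ equal to the $d$-value of the marginal report. If the total mass $\sum_r w_r=1$ is at most $s$, so that the constraint is slack, take $\lambda=0$ instead. Complementary slackness then holds in both cases.

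With \eqref{eq:Psi-dual} in hand, the structural properties follow.
\begin{itemize}
\item \emph{Monotonicity.} The feasible set in \eqref{eq:Psi-dual} grows with $s$, so $\Psi_h$ is nondecreasing.
\item \emph{Concavity.} By \eqref{eq:Psi-frontier-def}, $\Psi_h$ is an infimum of affine functions of $s$, hence concave. Alternatively, if $\eta_1$ and $\eta_2$ are optimal at $s_1$ and $s_2$, their convex combination is feasible at the corresponding convex combination of $s_1,s_2$.
\item \emph{Continuity.} A concave function is continuous on the open interval $(0,1)$. At $s=0$ the only feasible $\eta$ are those with $\eta=0$, since each $w_r>0$, so $\Psi_h(0)=0$. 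Moreover, $0\le\Psi_h(s)\le s\max_r d_r$, by taking $\lambda=\max_r d_r$ in \eqref{eq:Psi-frontier-def}, which gives continuity at $0$. At $s=1$, the dual objective with $\lambda=0$ gives $\Psi_h(s)\le\sum_r w_rd_r$ for all $s$. Concavity together with monotonicity gives $\Psi_h(s)\ge s\Psi_h(1)$, so $\Psi_h$ is bounded between $s\Psi_h(1)$ and $\Psi_h(1)$ and hence continuous at $1$. Equivalently, one can argue that an increasing concave function on a closed interval is left-continuous at its right endpoint.
\end{itemize}

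For \eqref{eq:Psi-one-chain-rule}, at $s=1$ the constraint $\sum_r w_r\eta(r)\le1$ is automatically satisfied, so $\eta\equiv1$ is optimal and $\Psi_h(1)=\sum_r g_h(r)d^{(h)}(r)$. The identity is then the KL chain rule for the joint law of $(R,X)$. Take $h=1$; the case $h=0$ is symmetric. Write $Q_h$ for the joint law of $(R_i,X_i)$ under $H_h$, so that $Q_h(r,x)=g_h(r)\rho_h(x\mid r)=p_h^x(1-p_h)^{1-x}f_x(r)$. Since $f_x$ does not depend on $h$, the log-likelihood ratio of $Q_1$ against $Q_0$ equals $x\log\frac{p_1}{p_0}+(1-x)\log\frac{1-p_1}{1-p_0}$, and therefore $D(Q_1\Vert Q_0)=\klbin(p_1\Vert p_0)=J_X^{(1)}$. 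Factoring $Q_h$ through the report instead, the chain rule gives
\[
D(Q_1\Vert Q_0)=D(G_1\Vert G_0)+\sum_r g_1(r)\,\klbin(q_1(r)\Vert q_0(r))=I_R^{(1)}+\Psi_1(1),
\]
which rearranges to \eqref{eq:Psi-one-chain-rule}.

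None of these steps is a serious obstacle. The only point needing care is the attainment and equality in the duality: it should be stated via finite-dimensional LP strong duality, using feasibility of $\eta=0$ and boundedness of the primal, or via the explicit greedy construction of $\lambda$ described above.
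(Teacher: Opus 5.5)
Your proposal is correct and follows essentially the same route as the paper. Both arguments derive the LP dual of \eqref{eq:Psi-dual}, eliminate $\mu$ via $\mu(r)=g_h(r)\pos{d^{(h)}(r)-\lambda}$ to recover \eqref{eq:Psi-frontier-def}, invoke strong duality, get monotonicity from nested feasible sets, and prove \eqref{eq:Psi-one-chain-rule} by computing $D(P_1^{X,R}\Vert P_0^{X,R})$ once directly, where $f_x$ cancels, and once via the chain rule through $R$.

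Your departures are minor refinements: you read concavity off directly as an infimum of affine functions of $s$ and give explicit endpoint continuity arguments, where the paper cites piecewise linearity of parametric LPs. One small correction: in the bound $\Psi_h(s)\ge s\Psi_h(1)$, the ingredient that matters alongside concavity is $\Psi_h(0)=0$, not monotonicity.
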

\vspace{1mm}

The dual representation \eqref{eq:Psi-dual}
reveals the interpretation of $\Psi_h(s)$. The
variable $\eta(r)\in[0,1]$ can be interpreted as the probability of
querying a human on an AI-first item with report
$r$. Under $H_h$, the report equals $r$ with
probability $g_h(r)$, so the expected KL
information gained from human follow-up queries
under rule $\eta$ is
$\sum_{r\in\calR}g_h(r)\eta(r)d^{(h)}(r)$, and
the expected number of follow-up queries per
AI-first item is $\sum_{r\in\calR}g_h(r)\eta(r)$.
Thus $\Psi_h(s)$ is the maximum expected KL
information extractable per AI-first item, over
all report-dependent follow-up rules, when the
expected follow-up rate is at most $s$. Setting
$s=0$ forbids any human follow-up and gives
$\Psi_h(0)=0$; setting $s=1$ permits following
up every AI-first item. 

Identity
\eqref{eq:Psi-one-chain-rule} then has a clean
interpretation: when every AI-first item is also
followed up by a human ($s=1$), the policy
observes both the AI report and the true label.
The total KL information from observing both
equals $J_X^{(h)}$ by the KL chain rule, of which
$I_R^{(h)}$ was already contributed by the AI
report alone. The remaining $J_X^{(h)}-I_R^{(h)}$
is the additional information the human follow-up
provides, which is exactly $\Psi_h(1)$.

It is worth noting here that the optimal $\eta(r)$ achieving
$\Psi_h(s)$ provides an implementable report-dependent
escalation rule that we actually use to construct the
matching policy in Section~\ref{sec:upper-bound}. Specifically, to construct the frontier optimizer,
without loss of generality we can order the reports as $r^{(h)}_1,\ldots,r^{(h)}_{|\calR|}$ so that
\[
    d^{(h)}(r^{(h)}_1)\ge d^{(h)}(r^{(h)}_2)\ge\cdots\ge
    d^{(h)}(r^{(h)}_{|\calR|}),
\]
with deterministic tie-breaking. 
Let
\[
    W^{(h)}_0:=0,
    \qquad
    W^{(h)}_j:=\sum_{\ell=1}^j g_h(r^{(h)}_\ell),
    \qquad j=1,\ldots,|\calR|.
\]
Then the optimal solution to \eqref{eq:Psi-dual} is to
escalate reports in decreasing order of conditional information and then randomize
only at the marginal report value, given by Proposition~\ref{prop:escalation-solution} below.

\vspace{1mm}
\begin{proposition}\label{prop:escalation-solution} 
    For every $s\in[0,1]$, let 
    \begin{equation}\label{eq:eta-frontier-closed-form}
    \eta^*_h(r^{(h)}_j;s):=
    \begin{cases}
        1, & W^{(h)}_j\le s,\\[1ex]
        \displaystyle\frac{s-W^{(h)}_{j-1}}{g_h(r^{(h)}_j)},
            & W^{(h)}_{j-1}<s<W^{(h)}_j,\\[2ex]
        0, & W^{(h)}_{j-1}\ge s.
    \end{cases}
\end{equation}
Then $\{\eta^*_h(r; s)\}_{r\in\calR}$ is an optimal solution to \eqref{eq:Psi-dual} under direction $h$.
\end{proposition}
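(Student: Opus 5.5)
This is a fractional knapsack argument. For fixed $h$ and $s$, the dual representation \eqref{eq:Psi-dual} is a linear program over $\eta\in[0,1]^{\calR}$. Each report $r$ is an item with weight $g_h(r)>0$ and value density $d^{(h)}(r)\ge 0$, and the capacity is $s$. The plan has two parts: show that $\eta^*_h(\cdot;s)$ is feasible, then show it is optimal, either by an exchange argument or by exhibiting a matching multiplier in the primal \eqref{eq:Psi-frontier-def}. I will use the certificate route, because Lemma~\ref{lem:capacity-frontier-properties} already gives strong duality. It then suffices to find a $\lambda^*\ge0$ whose primal value equals the objective value of $\eta^*_h$.

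\emph{Feasibility.} Each $\eta^*_h(r_j^{(h)};s)$ lies in $[0,1]$; in the middle case this holds because $W_{j-1}<s<W_j$. Since the $W_j$ are strictly increasing, at most one index $k$ falls in the fractional case. Let $k$ be the first index with $W_k\ge s$; such a $k$ exists because $W_{|\calR|}=1\ge s$, and if $s=0$ all $\eta^*$ vanish. Then
\[
\sum_j g_h(r_j)\eta^*(r_j)=W_{k-1}+\bigl(\min\{s,W_k\}-W_{k-1}\bigr)=s .
\]
Here indices $j<k$ contribute fully, index $k$ contributes either fully (if $W_k=s$) or the fraction $s-W_{k-1}$, and indices $j>k$ contribute nothing because $W_{j-1}\ge W_k\ge s$. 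So the budget binds exactly.

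\emph{Optimality.} Set $\lambda^*:=d^{(h)}(r_k)$ when $s>0$. By the ordering, $d^{(h)}(r_j)\ge\lambda^*$ for $j\le k$ and $d^{(h)}(r_j)\le\lambda^*$ for $j> k$. For any feasible $\eta$, weak duality gives
\[
\sum_r g_h\eta d^{(h)}\le \lambda^* s+\sum_r g_h\eta\,(d^{(h)}-\lambda^*)\le \lambda^* s+\sum_r g_h\,[d^{(h)}-\lambda^*]_+ ,
\]
which holds because $\sum g_h\eta\le s$, $\lambda^*\ge0$, and $\eta\in[0,1]$. For $\eta^*$ both inequalities are equalities:
\begin{itemize}
\item the first, because the budget is exactly $s$;
\item the second, because $\eta^*=1$ wherever $d^{(h)}>\lambda^*$ (such reports have index $j<k$), $\eta^*=0$ wherever $d^{(h)}<\lambda^*$ (index $j>k$), and the fractional entry sits where $d^{(h)}=\lambda^*$, so it contributes zero.
\end{itemize}
Hence $\eta^*$ attains the upper bound and is optimal, and the same computation shows that $\lambda^*$ attains the infimum in \eqref{eq:Psi-frontier-def}. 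The case $s=0$ is trivial, since the only feasible $\eta$ is zero almost surely under $g_h>0$.

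The main obstacle is bookkeeping rather than ideas. One must check that ties and the boundary cases $W_k=s$ and $s=1$ are handled by the three-case definition. A tied report placed after $k$ has $\eta^*=0$ but $d^{(h)}=\lambda^*$, so it contributes zero to both sides of the duality gap, and the argument goes through regardless of the tie-breaking.
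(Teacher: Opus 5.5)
Your proof is correct and is essentially the paper's argument: the paper also pivots on $d^{(h)}(r^{(h)}_q)$ for the first index $q$ with $W^{(h)}_q\ge s$. It writes $\sum_j g_j d_j(\eta_j-\eta^*_j)=\sum_j g_j(d_j-d_q)(\eta_j-\eta^*_j)+d_q\sum_j g_j(\eta_j-\eta^*_j)\le 0$, which is your Lagrangian inequality with $\lambda^*=d_q$ rearranged. Your certificate framing adds the small bonus of exhibiting $\lambda^*=d_q$ as a minimizer in \eqref{eq:Psi-frontier-def}; it uses only the weak-duality inequality you derive directly, so you do not actually need the strong duality of Lemma~\ref{lem:capacity-frontier-properties}.
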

\vspace{1mm}

We can now bound the expected
escalation term in Theorem~\ref{thm:kl-decomposition}
by a deterministic quantity.

\vspace{1mm}
\begin{lemma}[Selective escalation information
bound]\label{lem:selected-info-bound}
Suppose Assumption~\ref{ass:iid} holds.
Fix $h\in\{0,1\}$. For any admissible policy $\pi$, we have:
\begin{equation}\label{eq:selected-info-bound}
    \E^\pi_h\!\left[
        \sum_{i\in\calI_{\AI\Hum}^\pi}d^{(h)}(R_i)
    \right]
    \le
    \E^\pi_h[N^{\dir, \pi}_{\AI}]\,
    \Psi_h\!\left(
        \frac{\E^\pi_h[N^{\pi}_{\AI\Hum}]}
             {\E^\pi_h[N^{\dir, \pi}_{\AI}]}
    \right).
\end{equation}

\vspace{1mm}
\noindent
If $\E^\pi_h[N^{\dir, \pi}_{\AI}]=0$, then
$\E^\pi_h[N^{\pi}_{\AI\Hum}]=0$ and the right side is
defined as $0$. 
\end{lemma}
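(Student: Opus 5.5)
The plan is to convert the random escalation sum into an average of a per-report follow-up rate, and then apply the dual representation \eqref{eq:Psi-dual} from Lemma~\ref{lem:capacity-frontier-properties}. The key point is that the AI report of an AI-first item has law $g_h$ under $H_h$ at the moment it is revealed, whatever the history. So the policy's escalation choices act like a report-dependent rule applied to fresh draws from $g_h$.

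\textbf{Step 1 (indexing AI-first items).} Index AI-first items by the order $k=1,2,\ldots$ in which they receive their first (AI) query. Let $\tilde R_k$ be the report of the $k$-th AI-first item, and $E_k$ the indicator that this item is later human-queried. Write $M:=N^{\dir,\pi}_{\AI}$. Then the escalation term equals $\sum_{k\ge1}\ind\{k\le M\}E_k d^{(h)}(\tilde R_k)$ and $N^\pi_{\AI\Hum}=\sum_k \ind\{k\le M\}E_k$.

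The event $\{k\le M\}$ is determined by the history before the $k$-th AI-first query and the seeds. The item chosen is a fresh item whose pair $(X_i,R_i)$ is independent of everything observed so far, by Assumption~\ref{ass:iid} and the independence of $U$. Hence, conditional on $\{k\le M\}$, $\tilde R_k\sim g_h$ under $\Pp^\pi_h$.

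\textbf{Step 2 (effective escalation rule).} Define
\[
\eta(r):=\frac{\sum_k \Pp^\pi_h(k\le M,\tilde R_k=r,E_k=1)}{\sum_k \Pp^\pi_h(k\le M,\tilde R_k=r)}\in[0,1].
\]
By Step 1, the denominator equals $g_h(r)\sum_k\Pp^\pi_h(k\le M)=g_h(r)\E^\pi_h[M]$. This gives
\[
\E^\pi_h\Bigl[\sum_{i\in\calI^\pi_{\AI\Hum}}d^{(h)}(R_i)\Bigr]=\E^\pi_h[M]\sum_r g_h(r)\eta(r)d^{(h)}(r),
\qquad
\E^\pi_h[N^\pi_{\AI\Hum}]=\E^\pi_h[M]\sum_r g_h(r)\eta(r).
\]
Interchanging sums and expectations is justified because $M\le N^\pi$ is bounded and $d^{(h)}$ is bounded on the finite set $\calR$.

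\textbf{Step 3 (apply the dual).} Set $s:=\E^\pi_h[N^\pi_{\AI\Hum}]/\E^\pi_h[M]\in[0,1]$. The rule $\eta$ is feasible in \eqref{eq:Psi-dual} at this $s$, so $\sum_r g_h(r)\eta(r)d^{(h)}(r)\le\Psi_h(s)$. Multiplying by $\E^\pi_h[M]$ gives \eqref{eq:selected-info-bound}.

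\textbf{Degenerate case.} If $\E^\pi_h[M]=0$, then $M=0$ almost surely. Since $\calI^\pi_{\AI\Hum}\subseteq\calI^\pi_{\AI}$, both sides are $0$.

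\textbf{Main obstacle.} The delicate step is Step 1: showing that the revealed report has law $g_h$ even though the item and the time of its query are chosen adaptively. I would handle this by conditioning on $\mathcal H_{t-1}$ and $U_t$ at the epoch $t$ of the $k$-th AI-first query, and summing over $t$ and over the item identity. The item has had no prior query, so its pair is independent of the conditioning information. The later escalation decision $E_k$ may depend on arbitrary future information, but that causes no difficulty: only the joint law of $(\tilde R_k,E_k)$ enters, through $\eta$.
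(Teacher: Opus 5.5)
Your proof is correct, but it takes the opposite direction through the duality from the paper.

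\textbf{The paper's route.} The paper never uses the max-representation \eqref{eq:Psi-dual}. It works directly with the definition \eqref{eq:Psi-frontier-def} and indexes by the fixed item label $i$ rather than by query order. With $B_i$ the indicator that item $i$'s first query is an AI query and $E_i$ the escalation indicator, it uses the pathwise Lagrangian bound $E_i d^{(h)}(R_i)\le \lambda E_i + B_i\pos{d^{(h)}(R_i)-\lambda}$ for each fixed $\lambda\ge0$. It then uses the independence $B_i\perp R_i$: whether item $i$ is AI-first is decided before $R_i$ is revealed, so $B_i$ is a function of the seeds and the other items' data. This gives $\E^\pi_h[B_i\pos{d^{(h)}(R_i)-\lambda}]=\E^\pi_h[B_i]\sum_r g_h(r)\pos{d^{(h)}(r)-\lambda}$. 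Summing over $i$ and taking the infimum over $\lambda$ finishes the proof. No LP duality and no explicit escalation rule are needed.

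\textbf{Your route.} Yours is the primal, occupation-measure argument. You build the averaged rule $\eta$ and check its feasibility in \eqref{eq:Psi-dual}. This requires the equality of the two representations from Lemma~\ref{lem:capacity-frontier-properties}, which is available since it precedes this lemma. What it buys is an interpretation: any adaptive escalation behavior is, in expectation, a stationary report-dependent rule applied to fresh $g_h$-draws. That explains directly why the fractional-knapsack rule of Proposition~\ref{prop:escalation-solution} loses nothing.

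\textbf{Smaller remarks.}
\begin{itemize}
\item Step~1 becomes simpler if you index by fixed item label as the paper does. The identity you need, $\sum_i \Pp^\pi_h(B_i=1,R_i=r)=g_h(r)\E^\pi_h[N^{\dir,\pi}_{\AI}]$, then follows from $B_i\perp R_i$ with no adaptively chosen indices.
\item If you keep query-order indexing, your conditional-law claim is exactly the transcript kernel \eqref{appendix-equ:transcript-mass}. That kernel gives $\varphi_h(\cdot\mid \AI(i),\tau_{t-1})=g_h$ when $i$ has not been human-queried. Apply it on the $\sigma(\calH_{t-1},A_t)$-measurable event that $A_t$ is the $k$-th AI-first query.
\item Your $\eta(r)$ has denominator $g_h(r)\E^\pi_h[M]$, which is positive only because of Assumption~\ref{ass:score-model}. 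That assumption is not among the lemma's stated hypotheses. Alternatively, define $\eta(r)$ arbitrarily wherever the denominator vanishes.
\end{itemize}
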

\vspace{1mm}

With all the above results in hand, we are ready to
derive the lower bound. Lemma~\ref{lem:error-to-kl}
tells us that any feasible policy must achieve KL
divergences of at least $A$ and $B$ in the two
directions; Theorem~\ref{thm:kl-decomposition}
decomposes these KL divergences exactly into
per-item contributions from AI queries, human
queries, and human follow-up queries;
Lemma~\ref{lem:selected-info-bound} then bounds
the human follow-up contribution by a deterministic
quantity involving $\Psi_h$. Combining Lemma~\ref{lem:error-to-kl},
Theorem~\ref{thm:kl-decomposition}, and
Lemma~\ref{lem:selected-info-bound}, any feasible
policy $\pi\in\calF(\alpha,\beta)$ satisfies, for
each $h\in\{0,1\}$,
\begin{equation}\label{eq:combined-bound}
    T_h
    \le D(P_h^\pi\Vert P_{1-h}^\pi)
    \le \E_h^\pi[N^{\dir,\pi}_{\Hum}]\,J_X^{(h)}
      +\E_h^\pi[N^{\dir, \pi}_{\AI}]\,I_R^{(h)}
      +\E_h^\pi[N^{\dir, \pi}_{\AI}]\,
        \Psi_h\!\left(
          \frac{\E_h^\pi[N^{\pi}_{\AI\Hum}]}
               {\E_h^\pi[N^{\dir, \pi}_{\AI}]}
        \right),
\end{equation}

\vspace{1mm}
\noindent
where $T_1:=A$ and $T_0:=B$.  Crucially, the right-hand side depends only on
the expected query counts,
reducing a stochastic constraint on the policy to
a deterministic feasibility condition, which we
exploit in the next subsection to derive the lower bound
$\LB(\alpha,\beta)$.

\subsection{Information-Theoretic Lower Bound}
\label{sec:lower-bound-optimization}

We are now ready to derive the lower bound on
$C^*(\alpha,\beta)$. The three results in
Section~\ref{sec:preliminary-results} together
imply that any feasible policy must incur a
minimum cost to generate enough KL information
to distinguish $H_0$ from $H_1$. We formalize
this as a cost minimization program.

For $h\in\{0,1\}$, $T\ge0$, and integer $N\ge1$,
define
\begin{equation}\label{eq:Gamma-def}
\begin{aligned}
    \Gamma_h(T,N):=
    \min_{n_{\Hum},\,n_{\AI},\,n_{\esc}\ge0}\quad
        &\cH n_{\Hum}+\cAI n_{\AI}+\cH n_{\esc}\\
    \text{subject to}\quad
        &n_{\Hum}+n_{\AI}\le N,\\
        &0\le n_{\esc}\le n_{\AI},\\
        &n_{\Hum}J_X^{(h)}+n_{\AI}I_R^{(h)}
          +n_{\AI}\Psi_h(n_{\esc}/n_{\AI})\ge T.
\end{aligned}
\end{equation}

\vspace{1mm}
\noindent
Here $n_{\Hum}$, $n_{\AI}$, and $n_{\esc}$
are continuous optimization variables representing,
respectively, the expected number of human-first items,
AI-first items, and human follow-up (escalation) queries in
direction $h$. The objective is the total expected
cost: $\cH$ per human-first item, $\cAI$ per
AI-first item, and $\cH$ per human follow-up
query. The first constraint reflects that
human-first and AI-first items are drawn from the
same pool of $N$ acquired items. The second
constraint ensures that human follow-up queries
can only be applied to AI-first items. The information constraint, justified by \eqref{eq:combined-bound}, requires
the total KL information from all three query
types to meet the target $T$, where $T$ will be
set to $A$ when $h=1$ and to $B$ when $h=0$ in
the lower bound \eqref{eq:LB-def} below.

Problem~\eqref{eq:Gamma-def} can be interpreted
as follows: given a pool of $N$ items and a KL
information target $T$, what is the cheapest mix
of human-first items, AI-first items, and human
follow-up queries that meets the target? The
answer depends on the relative costs $\cH$,
$\cAI$, and the information yields $J_X^{(h)}$,
$I_R^{(h)}$, $\Psi_h$. For example, if AI queries
are cheap and informative, the optimizer will
favor AI-first items over human-first items. The minimum in \eqref{eq:Gamma-def} is attained
whenever the feasible set is nonempty, since all
variables are bounded to $[0,N]$ and the objective
is continuous and bounded below by zero. If no
feasible triple satisfies the information
constraint, we set $\Gamma_h(T,N)=+\infty$.

By Theorem~\ref{thm:kl-decomposition},
Lemma~\ref{lem:selected-info-bound}, and
Lemma~\ref{lem:error-to-kl}, the expected query
counts of any feasible policy $\pi$ under $H_h$
satisfy all constraints of \eqref{eq:Gamma-def}
with $T=A$ when $h=1$ and $T=B$ when $h=0$.
Since $\Gamma_h(T,N^\pi)$ is the minimum cost
over all such feasible triples, the expected cost
of $\pi$ under $H_h$ satisfies
\[
    \E^\pi_h[C^\pi]
    \ge N^\pi\cdata+\Gamma_h(T,N^\pi).
\]
Since we minimize the worst-case expected cost
over both hypotheses, we take the maximum of the
two direction-wise lower bounds. Combining with
the sample size floor from Lemma~\ref{lem:data-pool},
we get
\begin{equation}\label{eq:LB-def}
    \LB(\alpha,\beta)
    :=\min_{N\ge N_{\fixed,\Hum}(\alpha,\beta)}
      \left\{
        N\cdata+
        \max\{\Gamma_1(A,N),\Gamma_0(B,N)\}
      \right\}.
\end{equation}
The outer minimum searches over all integers $N\ge N_{\fixed,\Hum}(\alpha,\beta)$. The minimum in
\eqref{eq:LB-def} is attained because only
finitely many sample sizes need to be checked:
once a finite value $V_0$ is achieved at some
$N_0$, every $N>V_0/\cdata$ has objective value
at least $N\cdata>V_0$ and cannot be optimal.
Moreover, $\LB(\alpha,\beta)$ is computationally
tractable, since each inner program $\Gamma_h(T,N)$
is convex.

\vspace{1mm}
\begin{proposition}\label{prop:lower-bound-convexity}
For each $h\in\{0,1\}$, $T\ge0$, and integer
$N\ge1$, the optimization
\eqref{eq:Gamma-def} is convex in
$(n_{\Hum},n_{\AI},n_{\esc})$. Thus,
$\LB(\alpha,\beta)$ can be computed by solving a
finite sequence of convex programs.
\end{proposition}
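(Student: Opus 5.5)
The plan is to show that the feasible set of \eqref{eq:Gamma-def} is convex and the objective is linear. The only nonlinear piece is the information constraint, specifically the term $n_{\AI}\Psi_h(n_{\esc}/n_{\AI})$. The first two constraints ($n_{\Hum}+n_{\AI}\le N$, $0\le n_{\esc}\le n_{\AI}$, nonnegativity) are linear, and the objective $\cH n_{\Hum}+\cAI n_{\AI}+\cH n_{\esc}$ is linear. So it suffices to show that
\[
\phi(n_{\Hum},n_{\AI},n_{\esc}) := n_{\Hum}J_X^{(h)}+n_{\AI}I_R^{(h)}+n_{\AI}\Psi_h(n_{\esc}/n_{\AI})
\]
is concave on the domain $\{n_{\AI}\ge0,\,0\le n_{\esc}\le n_{\AI}\}$. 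Then the superlevel set $\{\phi\ge T\}$ is convex, and intersecting it with a polyhedron gives a convex set.

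\textbf{Concavity via the perspective.} By Lemma~\ref{lem:capacity-frontier-properties}, $\Psi_h$ is concave and continuous on $[0,1]$. The perspective $(n_{\AI},n_{\esc})\mapsto n_{\AI}\Psi_h(n_{\esc}/n_{\AI})$ of a concave function is jointly concave on $n_{\AI}>0$. I would verify this directly. Take two points $(a_1,e_1),(a_2,e_2)$ with $a_k>0$ and $\theta\in[0,1]$, and set $a=\theta a_1+(1-\theta)a_2$. Then
\[
\frac{\theta e_1+(1-\theta)e_2}{a}=\frac{\theta a_1}{a}\frac{e_1}{a_1}+\frac{(1-\theta)a_2}{a}\frac{e_2}{a_2},
\]
which is a convex combination with weights $\theta a_1/a$ and $(1-\theta)a_2/a$. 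Applying concavity of $\Psi_h$ and multiplying through by $a$ gives the inequality. Adding the linear terms $n_{\Hum}J_X^{(h)}+n_{\AI}I_R^{(h)}$ preserves concavity.

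\textbf{Boundary $n_{\AI}=0$.} I expect this to be the main (though mild) obstacle. The second constraint forces $n_{\esc}=0$ there, and the term is defined as $0$, consistent with Lemma~\ref{lem:selected-info-bound}. The natural extension is $\lim_{t\downarrow0}t\Psi_h(0)=0$, since $\Psi_h(0)=0$ (take $\lambda$ large, or note from \eqref{eq:Psi-dual} that $\eta\equiv0$ is forced). Moreover, $\Psi_h\ge0$ and $\Psi_h$ is bounded by $\Psi_h(1)$ on $[0,1]$, so $0\le n_{\AI}\Psi_h(n_{\esc}/n_{\AI})\le n_{\AI}\Psi_h(1)\to0$. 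The extended function is therefore continuous (upper semicontinuous suffices) on the closed cone. Concavity then extends to the boundary by taking limits in the inequality, or by checking directly: if one point has $a_1=0$ (so $e_1=0$), the left side is $a\Psi_h(\theta' e_2/a)$ with $\theta'=1-\theta$, and it dominates $(1-\theta)a_2\Psi_h(e_2/a_2)$ because $\Psi_h$ is concave with $\Psi_h(0)=0$, which makes $x\mapsto\Psi_h(x)/x$ nonincreasing.

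\textbf{Conclusion.} Problem \eqref{eq:Gamma-def} thus minimizes a linear objective over a convex compact set, so it is a convex program. For $\LB(\alpha,\beta)$, the argument after \eqref{eq:LB-def} shows only finitely many integers $N$ (from $N_{\fixed,\Hum}(\alpha,\beta)$ up to $V_0/\cdata$) need checking. Each requires two convex programs $\Gamma_1(A,N)$ and $\Gamma_0(B,N)$. $\Psi_h$ itself is computable as a one-dimensional convex minimization, or in closed form via Proposition~\ref{prop:escalation-solution}, as a piecewise-linear concave function.
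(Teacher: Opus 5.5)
Your proposal is correct and follows essentially the same route as the paper. Both arguments rest on linear objective and constraints, joint concavity of the perspective $n_{\AI}\Psi_h(n_{\esc}/n_{\AI})$ of the concave frontier $\Psi_h$, and convexity of the resulting superlevel set; where the paper simply cites the standard perspective result, you verify the perspective inequality by hand and treat the boundary $n_{\AI}=0$ explicitly (in that boundary case the inequality is in fact an equality, since $a_1=e_1=0$ gives $a=(1-\theta)a_2$ and $e/a=e_2/a_2$, so the monotonicity of $\Psi_h(x)/x$ is not needed).
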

\vspace{1mm}

The following theorem formalizes the validity of
$\LB(\alpha,\beta)$ as a lower bound on the
optimal cost.

\vspace{1mm}
\begin{theorem}[Selective-scoring cost lower
bound]\label{thm:lower-bound}
Under Assumptions~\ref{ass:score-model}
and~\ref{ass:iid}, we have
$
    C^*(\alpha,\beta)\ge \LB(\alpha,\beta).
$
\end{theorem}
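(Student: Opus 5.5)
The plan is to fix an arbitrary feasible policy $\pi\in\calF(\alpha,\beta)$ and show $\max\{\E_0^\pi[C^\pi],\E_1^\pi[C^\pi]\}\ge \LB(\alpha,\beta)$. Taking the infimum over $\pi$ in \eqref{eq:objective} then gives the theorem. Since $N^\pi$ is deterministic, Lemma~\ref{lem:data-pool} gives $N^\pi\ge N_{\fixed,\Hum}(\alpha,\beta)$. So $N^\pi$ is an admissible index in the outer minimum of \eqref{eq:LB-def}, and it suffices to prove, for each $h\in\{0,1\}$,
\[
\E_h^\pi[C^\pi]\ge N^\pi\cdata+\Gamma_h(T_h,N^\pi), \qquad T_1=A,\ T_0=B.
\]
Taking the maximum over $h$ then gives $\max_h\E_h^\pi[C^\pi]\ge N^\pi\cdata+\max\{\Gamma_1(A,N^\pi),\Gamma_0(B,N^\pi)\}\ge\LB(\alpha,\beta)$.

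For fixed $h$, I would set $n_{\Hum}:=\E_h^\pi[N^{\dir,\pi}_{\Hum}]$, $n_{\AI}:=\E_h^\pi[N^{\dir,\pi}_{\AI}]$ and $n_{\esc}:=\E_h^\pi[N^\pi_{\AI\Hum}]$, and check that this triple is feasible for \eqref{eq:Gamma-def} with $N=N^\pi$.

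The first constraint holds because the sets $\calI^\pi_{\Hum}$ and $\calI^\pi_{\AI}$ are disjoint subsets of $[N^\pi]$; an item's first query is either human or AI. Hence $N^{\dir,\pi}_{\Hum}+N^{\dir,\pi}_{\AI}\le N^\pi$ pathwise, and this survives taking expectations.

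The second constraint holds because $\calI^\pi_{\AI\Hum}\subseteq\calI^\pi_{\AI}$: an item human-queried after its AI report has AI as its first query. So $0\le n_{\esc}\le n_{\AI}$.

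The information constraint is exactly \eqref{eq:combined-bound}, which combines Lemma~\ref{lem:error-to-kl}, Theorem~\ref{thm:kl-decomposition} and Lemma~\ref{lem:selected-info-bound}. When $n_{\AI}=0$, Lemma~\ref{lem:selected-info-bound} forces $n_{\esc}=0$. In that case the term $n_{\AI}\Psi_h(n_{\esc}/n_{\AI})$ must be read as $0$, and I would state this convention explicitly in \eqref{eq:Gamma-def}; it is the natural perspective-function closure.

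It remains to bound the cost. Every AI query is either on an item in $\calI^\pi_{\AI}$, counted by $N^{\dir,\pi}_{\AI}$, or an AI query after a human query, which only adds cost. Every human query is either human-first, counted by $N^{\dir,\pi}_{\Hum}$, or a follow-up, counted by $N^\pi_{\AI\Hum}$. Each item is queried at most once per type, so these categories do not double count. Hence pathwise
\[
C^\pi\ge \cdata N^\pi+\cH N^{\dir,\pi}_{\Hum}+\cAI N^{\dir,\pi}_{\AI}+\cH N^\pi_{\AI\Hum}.
\]
Taking $\E_h^\pi$ gives $\E_h^\pi[C^\pi]\ge N^\pi\cdata+\cH n_{\Hum}+\cAI n_{\AI}+\cH n_{\esc}\ge N^\pi\cdata+\Gamma_h(T_h,N^\pi)$, since the triple is feasible and $\Gamma_h$ is the minimum over feasible triples. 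Nonemptiness of the feasible set also shows $\Gamma_h(T_h,N^\pi)<\infty$.

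The main obstacle is making the pathwise bookkeeping airtight, i.e., that the item classes partition the relevant queries exactly as in Theorem~\ref{thm:kl-decomposition}. In particular, an item may be AI-queried, then human-queried, and both queries are accounted for once each. The degenerate case $n_{\AI}=0$ also needs care. The KL-theoretic content itself is already supplied by the earlier results.
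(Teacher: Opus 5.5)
Your proposal is correct and follows essentially the same argument as the paper. You show that the expected counts $(\E_h^\pi[N^{\dir,\pi}_{\Hum}],\E_h^\pi[N^{\dir,\pi}_{\AI}],\E_h^\pi[N^\pi_{\AI\Hum}])$ form a feasible triple for $\Gamma_h(T_h,N^\pi)$ via Lemma~\ref{lem:error-to-kl}, Theorem~\ref{thm:kl-decomposition}, and Lemma~\ref{lem:selected-info-bound}, bound the cost pathwise from below by the matching linear cost, and combine the two directions with the pool floor from Lemma~\ref{lem:data-pool}, including the same $n_{\AI}=0$ convention the paper invokes.
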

\vspace{1mm}

The lower bound $\LB(\alpha,\beta)$ serves as the
benchmark for our proposed policy. We show in
Section~\ref{sec:upper-bound} that our policy
achieves $\LB(\alpha,\beta)$ to first order as
$\alpha,\beta\to0$, establishing its asymptotic
optimality. The gap between $\LB(\alpha,\beta)$
and $C^*(\alpha,\beta)$ at finite $(\alpha,\beta)$
arises from the information-theoretic relaxation
in Lemma~\ref{lem:error-to-kl}, and vanishes
asymptotically.

\section{A Sequential Cost-Aware Policy} \label{sec:upper-bound}

Section~\ref{sec:lower-bound} established that any
feasible policy must incur cost at least
$\LB(\alpha,\beta)$. We now construct a policy, which we call SCALE (Sequential Cost-Aware Likelihood-Guided Escalation) policy,
that matches this lower bound asymptotically. Specifically, we consider a sequence of
problem instances indexed by $k=1,2,\ldots$, with
type-I and type-II error levels $\alpha_k\downarrow0$
and $\beta_k\downarrow0$ as $k\to\infty$ (we refer to this as first-order asymptotics). For each
$k$, our proposed policy, which we refer to as $\pi_k$, is constructed with respect to $(\alpha_k, \beta_k)$. We describe $\pi_k$ at a high level in
Section~\ref{sec:policy-description}. In Section~\ref{sec:policy-construction}, we give its
formal construction without specifying the values for the tuning parameters  and show that the policy is feasible regardless of those values. In Section~\ref{sec:main-theorem}, we discuss the choice of tuning parameters and
establish the asymptotics of $\pi_k$ with those choices. For interested readers, we lay out the main technical arguments in Appendix~\ref{sec-appendix:LB-growth}.

% , and satisfies
% \[
%     \frac{\max\{\E^{\pi_k}_0[C^{\pi_k}],
%                 \E^{\pi_k}_1[C^{\pi_k}]\}}
%          {\LB(\alpha_k,\beta_k)}
%     \to 1
%     \qquad\text{as }k\to\infty,
% \]
% and consequently
% \[
%     \frac{\max\{\E^{\pi_k}_0[C^{\pi_k}],
%                 \E^{\pi_k}_1[C^{\pi_k}]\}}
%          {C^*(\alpha_k,\beta_k)}
%     \to 1
%     \qquad\text{as }k\to\infty.
% \]
% In particular, define:
% \begin{equation}\label{eq:Lk-def}
%     L_k:=\max\!\left\{\log\frac{1}{\alpha_k},\,
%                       \log\frac{1}{\beta_k}\right\}.
% \end{equation}
% As shown in Lemma~\ref{lem:LB-perturb}, the
% lower bound $\LB_k:=\LB(\alpha_k,\beta_k)$ scales
% as $\LB_k=\Theta(L_k)$, so first-order optimality
% is equivalent to matching $\LB_k$ up to an
% $o(L_k)$ additive term. 

% We describe the policy at a high level in
% Section~\ref{sec:policy-description}, give its
% formal construction and show that the policy is feasible in
% Section~\ref{sec:policy-construction}. In Section~\ref{sec:main-theorem}, we discuss the choice of tuning parameters and
% establish the asymptotics of the algorithm with the tuning parameter.

\subsection{High Level Description of SCALE}\label{sec:policy-description}

The policy proceeds in two stages. The first is the sequential main stage. In this stage, the policy maintains a running log-likelihood-ratio
statistic and uses a Wald-style sequential probability ratio test
\citep{waldSPRT} whose increments are generated by the three types of queries
introduced in Section~\ref{sec:lower-bound}. At the beginning of each epoch
of the main stage, the policy uses the running statistic to determine a
sensing rule, which in turn specifies the probability of querying a human
versus an AI, and the probability of escalating an AI-scored item to human
review. The main stage stops as soon as the statistic crosses $+a_k$
(decide $H_1$) or $-b_k$ (decide $H_0$). If the main stage ends without a
crossing, the policy enters the second stage, a fallback stage that employs
a fixed-sample-size test using either full human labels or full AI labels,
depending on a comparison of the two tests' costs.

\subsection{Policy Construction}\label{sec:policy-construction}

We now formalize the two-stage procedure described
in Section~\ref{sec:policy-description}. To start, we split the type-I and type-II budgets into two parts: one for the sequential main stage, and one for the fallback. Let $f_{\fb, k}\in(0,1)$ be the fraction of type-I and type-II budgets allocated to the fallback. As classically done through the union bound, the type-I and type-II budgets allocated to sequential main stage and
fallback are:
\begin{equation}\label{eq:budget-split-alpha}
    \alpha_{1,k}:=(1-f_{\fb,k})\alpha_k,
    \qquad
    \alpha_{2,k}:=f_{\fb,k}\alpha_k,
\end{equation}
\begin{equation}\label{eq:budget-split-beta}
    \beta_{1,k}:=(1-f_{\fb,k})\beta_k,
    \qquad
    \beta_{2,k}:=f_{\fb,k}\beta_k.
\end{equation}
Then $\alpha_{1,k}+\alpha_{2,k}=\alpha_k$ and
$\beta_{1,k}+\beta_{2,k}=\beta_k$, and the likelihood-ratio boundaries for the main stage sequential test are
\begin{equation}\label{eq:boundaries}
    a_k:=\log\frac1{\alpha_{1,k}},
    \qquad
    b_k:=\log\frac1{\beta_{1,k}}.
\end{equation}

Additionally, we also acquire $N_{\main,k}$ data items. As the fallback test revisits a fixed-sample-size test with type-I and type-II guarantees $\alpha_{2,k}$ and $\beta_{2,k}$, we require $N_{\main,k} \geq N_{\fixed,\Hum}(\alpha_{2,k}, \beta_{2,k})$ to make sure we have enough data items for finite-sample feasibility. In other words, the condition $N_{\main,k} \geq N_{\fixed,\Hum}(\alpha_{2,k}, \beta_{2,k})$ is only required so that our fallback stage has the proper type-I and type-II error guarantees while the main sequential stage immediately guarantees the right $\alpha_{1,k}, \beta_{1,k}$ errors through the sequential boundaries $a_k, b_k$.

\subsubsection*{Sequential main stage.}

This main stage processes the fixed pool in order 
$1,2,\ldots,N_{\main,k}$.  Items are sensed one at a time, in this order,
until a boundary is crossed or the pool is exhausted. In other words, the number of sensed items is a stopping time determined by the likelihood-ratio path. The running statistic starts at $S_0=0$. At the beginning of each epoch during the sequential main stage, we compare the running statistic with the boundary $z_k$ and $-z_k$ to determine the sensing rule for that epoch. We refer to $z_k$ as the ``hypothesis boundary'' as these determine whether we focus on $H_0$ or $H_1$. Specifically, for item $i$, let $S_{i-1}$ be the
statistic before item $i$ is sensed and set
\begin{equation}\label{eq:rule-by-sign}
    J_i:=
    \begin{cases}
        1,& S_{i-1}>z_k,\\
        0,& S_{i-1}<-z_k,\\
        *,& |S_{i-1}|\le z_k.
    \end{cases}
\end{equation}
Then we end up with three types of sensing rules for each item $i$: one favoring $H_0$ (i.e. $J_i = 0$), one favoring $H_1$ (i.e. $J_i = 1$), and another undecided (i.e. $J_i = *$). We note that $J_i$ is only used to determine which hypothesis direction $h = 0,1$ to focus on. As shown in Sections~\ref{sec:preliminary-results}-\ref{sec:lower-bound-optimization}, many important quantities relevant for our policy (e.g., $\Psi_h(s), \eta^*_h(r^{(h)}_j;s)$) differ according to the null $h=0$ or the alternative $h = 1$. Hence $J_i$ aims to determine which $h$-direction to focus on sequentially.

Based on the sensing rule $J_i = j\in\{0,1,*\}$, we choose to query human, query AI, or escalate the item to human after querying AI. With $J_i=j$, the policy directly queries a human with probability
$\eta^{\Hum}_{j,k}\in[0,1]$; otherwise it queries the AI with probability $1-\eta^{\Hum}_{j,k}$, observes the report $R_i$, and escalates
with probability $\eta^{\esc}_{j,k}(R_i)\in[0,1]$. At this point, we emphasize that $z_k$, $\eta^\Hum_{h,k}$ and $\eta^\esc_{h,k}$ are all tuning parameters that do not affect the feasibility and can be tuned to improve the finite sample and asymptotic performance of the algorithm. We will discuss the choice of these tuning parameters in Section~\ref{sec:main-theorem}.

After each AI or human query, we update our running statistic. If a human is queried directly, before any AI query on the same item, we increment the running statistic by the single-observation log-likelihood ratio for the observed label $X=x$ under $H_1$ versus $H_0$:
\begin{equation}\label{eq:direct-human-increment}
    \ell_X(x):=\log\frac{p_1^x(1-p_1)^{1-x}}
                       {p_0^x(1-p_0)^{1-x}},
    \qquad x\in\{0,1\}.
\end{equation}
After a direct AI query with $R = r$, we increment the running statistic by the single-observation log-likelihood ratio of $R = r$ under $H_1$ versus $H_0$:
\begin{equation}\label{eq:AI-report-increment}
    \ell_R(r):=\log\frac{g_1(r)}{g_0(r)},
    \qquad r\in\calR,
\end{equation}
If the query is a human query preceded by an AI query on the same item, we increment the running statistic by the conditional log-likelihood ratio of \(X=x\) given the previously observed AI report \(R=r\) under \(H_1\) versus \(H_0\):
\begin{equation}\label{eq:escalation-increment}
    \ell_H(x,r):=\log\frac{\rho_1(x\mid r)}{\rho_0(x\mid r)},
    \qquad x\in\{0,1\},\ r\in\calR.
\end{equation}
Thus $\ell_X(x)$ and $\ell_R(r)$ capture the evidence provided by a human label and AI label respectively, and $\ell_H(x,r)$ captures the additional evidence provided by the human label beyond the evidence already contained in the AI report. After each increment, the policy stops and rejects $H_0$ if $S_i\ge a_k$, and
stops and accepts $H_0$ if $S_i\le -b_k$.

\subsubsection*{Fallback benchmarks.}
The main sequential stage does not guarantee that we reach a terminal decision to reject or accept the null hypothesis (i.e., we may never cross either $a_k$ or $-b_k$). The fallback stage guarantees, with potentially more queries, that we conclude our hypothesis test with a rejection/acceptance decision.
The human fallback uses the exact full-label fixed-sample-size benchmark $N_{\fixed,\Hum}$ from
Definition~\ref{def:Nfull}.  The AI fallback uses the analogous exact benchmark $N_{\fixed, \AI}$
for i.i.d. AI reports. 

\vspace{1mm}
\begin{definition}[Exact AI-report sample-size benchmark]\label{def:NAI}
For $0<\alpha< 1$ and $0<\beta<1$, we define $N_{\fixed, \AI}(\alpha,\beta)$ as the smallest
$N\in\N$ for which there exists a randomized test
$\psi_N:\calR^N\to[0,1]$ satisfying
\[
    \E_0[\psi_N(R_1,\ldots,R_N)]\le\alpha,
    \qquad
    \E_1[1-\psi_N(R_1,\ldots,R_N)]\le\beta,
\]
where, under $H_h$, the reports $R_1,\ldots,R_N$ are i.i.d. with mass
function $g_h$.  If no finite $N$ exists, set
$N_{\fixed, \AI}(\alpha,\beta)=+\infty$.
\end{definition}

\vspace{1mm}
If $G_0\ne G_1$, the Neyman--Pearson lemma applied to the report likelihood
ratio gives a finite $N_{\fixed, \AI}(\alpha,\beta)$.  If $G_0=G_1$ and
$\alpha+\beta<1$, report-only observations have the same law under the two
hypotheses, so no report-only test can satisfy both error constraints and
$N_{\fixed, \AI}(\alpha,\beta)=+\infty$. As the computation of $N_{\fixed,\AI}(\alpha, \beta)$ follows similarly as that of $N_{\fixed,\Hum}(\alpha, \beta)$  according to Neyman-Pearson lemma, we refer the readers to Appendix~\ref{appendix-sec:computation-NAI} for more details on computation of $N_{\fixed,\AI}(\alpha, \beta)$.

\subsubsection*{Pre-committed fallback.}

If no boundary is crossed by the time the pool is exhausted, the policy uses a
single fallback completion chosen before any data are observed. The choice is
made by computing a non-data dependent deterministic cost of running only AI-based or only human-based Neyman Pearson tests. Formally, the human-only completion cost
upper bound is
\begin{equation*}\label{eq:precommit-human-cost}
    \overline C_{\Hum,k}:=\cH N_{\fixed,\Hum}(\alpha_{2,k}, \beta_{2,k}).
\end{equation*}
The AI-only completion is available only if $N_{\fixed, \AI}(\alpha_{2,k},\beta_{2,k})<+\infty$ and
$N_{\fixed, \AI}(\alpha_{2,k},\beta_{2,k})\le N_{\main,k}$, because this section keeps the fixed-pool convention.  If
it is available, its primitive cost upper bound is
\begin{equation*}\label{eq:precommit-AI-cost}
    \overline C_{\AI,k}:=\cAI N_{\fixed, \AI}(\alpha_{2,k},\beta_{2,k}).
\end{equation*}
If $N_{\fixed, \AI}(\alpha_{2,k},\beta_{2,k})=+\infty$ or $N_{\fixed, \AI}(\alpha_{2,k},\beta_{2,k})>N_{\main,k}$, set
$\overline C_{\AI,k}:=+\infty$.  The pre-committed fallback mode is
\begin{equation}\label{eq:precommitted-mode}
    \mathsf F_k:=
    \begin{cases}
        \Hum,& \overline C_{\Hum,k}\le \overline C_{\AI,k},\\
        \AI,& \overline C_{\AI,k}< \overline C_{\Hum,k}.
    \end{cases}
\end{equation}
The value of $\mathsf F_k$ is a deterministic function of the primitives,
error targets, costs, and precomputed sample sizes.  It is not a function of
any realized label, AI report, randomization seed, stopping event, or terminal
statistic, thus can be computed before collecting any data. 

If $\mathsf F_k=\Hum$, the fallback uses the fixed index set
\begin{equation*}\label{eq:fixed-human-set}
    \calJ_{\Hum,k}:=\{1,2,\ldots,N_{\fixed,\Hum}(\alpha_{2,k}, \beta_{2,k})\}.
\end{equation*}
It queries a human on every item in $\calJ_{\Hum,k}$ whose label has not already
been revealed by human, and then applies a Neyman--Pearson full-label test
$\phi^*_{N_{\fixed,\Hum}(\alpha_{2,k}, \beta_{2,k})}$ satisfying
\begin{equation}\label{eq:human-fallback-level}
    \E^{\pi_k}_0[\phi^*_{N_{\fixed,\Hum}(\alpha_{2,k}, \beta_{2,k})}]\le \alpha_{2,k},
    \qquad
    \E^{\pi_k}_1[1-\phi^*_{N_{\fixed,\Hum}(\alpha_{2,k}, \beta_{2,k})}]\le \beta_{2,k}.
\end{equation}
The test is applied to $(X_i)_{i\in\calJ_{\Hum,k}}$.  The set is fixed in
advance, so these labels are i.i.d. Bernoulli under each hypothesis. If $\mathsf F_k=\AI$, by design, we must have $N_{\fixed, \AI}(\alpha_{2,k},\beta_{2,k})\le N_{\main,k}$. The fallback uses the fixed
report set $\{R_1,\ldots,R_{N_{\fixed, \AI}(\alpha_{2,k},\beta_{2,k})}\}$, querying any missing reports in that
set, and applies a Neyman--Pearson report test $\psi^*_{N_{\fixed, \AI}(\alpha_{2,k},\beta_{2,k})}$ satisfying
\begin{equation}\label{eq:AI-fallback-level}
    \E^{\pi_k}_0[\psi^*_{N_{\fixed, \AI}(\alpha_{2,k},\beta_{2,k})}]\le \alpha_{2,k},
    \qquad
    \E^{\pi_k}_1[1-\psi^*_{N_{\fixed, \AI}(\alpha_{2,k},\beta_{2,k})}]\le \beta_{2,k}.
\end{equation}
Again the set is fixed in advance, so the reports are i.i.d. with mass
$G_h$ under $H_h$. The full description of SCALE is summarized in Algorithm~\ref{algo:seq-policy}.

\subsubsection*{Feasibility.} We show in Theorem~\ref{thm:feasibility} that $\pi_k$ remains feasible regardless of the choice of the tuning parameters. We give the proof of Theorem~\ref{thm:feasibility} in Appendix~\ref{appendix-sec:feasibility-proof}.

\vspace{1mm}
\begin{theorem}[Feasibility of SCALE]\label{thm:feasibility}
Suppose Assumptions~\ref{ass:score-model}
and~\ref{ass:iid} hold.
SCALE (formally presented as policy $\pi_k$ in Algorithm~\ref{algo:seq-policy}) is feasible for
every $k$:
\[
    \Pp^{\pi_k}_0(\delta^{\pi_k}=1)\le\alpha_k,
    \qquad
    \Pp^{\pi_k}_1(\delta^{\pi_k}=0)\le\beta_k.
\]
\end{theorem}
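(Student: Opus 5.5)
The proof splits the error events by how the policy stops. Consider type-I error. We write
\[
\Pp^{\pi_k}_0(\delta^{\pi_k}=1)\le \Pp^{\pi_k}_0(\text{main stage stops with } S\ge a_k)+\Pp^{\pi_k}_0(\text{fallback is run and rejects}),
\]
and bound the first term by $\alpha_{1,k}$ and the second by $\alpha_{2,k}$. The budget split \eqref{eq:budget-split-alpha} then gives $\alpha_k$. The type-II error is handled the same way with $b_k$ and $\beta_{1,k},\beta_{2,k}$.

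\emph{Main stage.} The key claim is that $\exp(S_t)$, indexed by query epochs, is exactly the likelihood ratio $dP_1/dP_0$ of the transcript observed so far. This includes the policy's randomization, whose law $\mu$ does not depend on the hypothesis and therefore cancels. We verify this by induction over epochs.
\begin{itemize}
\item The choice of action ($J_i$, the draw with probability $\eta^{\Hum}_{j,k}$, the escalation draw with probability $\eta^{\esc}_{j,k}(R_i)$) depends only on past observations and on seeds, so its conditional probability is the same under $H_0$ and $H_1$.
\item Items are processed in index order and are i.i.d. (Assumption~\ref{ass:iid}). So the conditional law of a new observation given the past is the following: $\Bern(p_h)$ for a direct human query; $g_h$ for an AI query; and $\rho_h(\cdot\mid r)$ for an escalation, since $(X_i,R_i)$ are independent of earlier items.
\item The increments $\ell_X$, $\ell_R$, $\ell_H$ are exactly the log-ratios of these conditional laws. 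Assumption~\ref{ass:score-model} makes all of them finite.
\end{itemize}
Given this, let $\tau$ be the (bounded) stopping epoch at which $S$ first reaches $a_k$. By a change of measure,
\[
\Pp_0(S_\tau\ge a_k)=\E_1\bigl[e^{-S_\tau}\ind\{S_\tau\ge a_k\}\bigr]\le e^{-a_k}=\alpha_{1,k}.
\]
Symmetrically, $\Pp_1(S_\tau\le -b_k)\le e^{-b_k}=\beta_{1,k}$. Equivalently, these are Wald's SPRT bounds. They hold without any overshoot correction because the inequalities run in the favourable direction. Because $\tau\le 2N_{\main,k}$ is bounded, no limiting argument is needed.

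\emph{Fallback stage.} Here we use the crude bound $\Pp_0(\text{fallback runs and rejects})\le \Pp_0(\text{fallback test rejects})$, where the fallback test is evaluated on the fixed index set $\calJ_{\Hum,k}$, or on the fixed AI report set. The point to check is that the fallback statistic, as a function of the data, is $\phi^*$ applied to $(X_i)_{i\in\calJ_{\Hum,k}}$. This holds regardless of which labels were already revealed in the main stage, since revealing a label does not change its value. Also, $\mathsf F_k$ and the index set are deterministic, by the pre-commitment in \eqref{eq:precommitted-mode}.

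Define the event $E=\{\phi^*\text{ rejects}\}$ on the full underlying i.i.d. vector, using its own independent randomization seed for the $\gamma_N$ coin. Then $\Pp_0(E)\le\alpha_{2,k}$ by \eqref{eq:human-fallback-level}, because the labels $(X_i)_{i\in\calJ_{\Hum,k}}$ are i.i.d. $\Bern(p_0)$ under $H_0$. Two facts are needed for the fallback test to be well defined and valid:
\begin{itemize}
\item $N_{\main,k}\ge N_{\fixed,\Hum}(\alpha_{2,k},\beta_{2,k})$, which guarantees the index set lies in the pool.
\item In the AI case, $N_{\fixed,\AI}\le N_{\main,k}$, where the reports are i.i.d. $g_h$, and \eqref{eq:AI-fallback-level} applies.
\end{itemize}

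\emph{Main obstacle.} The delicate step is the conditioning issue in the fallback: the event that the fallback is reached depends on the same labels and reports. We avoid conditioning altogether by bounding the joint event $\{\text{fallback runs}\}\cap E$ by $E$, which is an unconditional event on a fixed i.i.d. sample. This costs only the union bound already built into the budget split. The only remaining care is to define the transcript likelihood ratio rigorously with randomization, which the hypothesis-free seed law handles.
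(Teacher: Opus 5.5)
Your proposal is correct and follows the paper's proof. It uses the same three-route decomposition into upper-boundary, lower-boundary and fallback stops, a Wald-type bound on the boundary routes, and the fallback bound obtained by dropping the intersection with the fallback event and using the unconditional level of a pre-committed Neyman--Pearson test on a fixed index set. The only difference is cosmetic: you obtain the boundary bound from the likelihood-ratio (change-of-measure) identity on the stopped transcript (with $\tau$ the main-stage stopping epoch), whereas the paper shows $e^{S_t}$ is a mean-one nonnegative martingale under $H_0$ and applies bounded optional stopping, which is an equivalent formulation.
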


\vspace{1mm}
Feasibility follows from a two-step argument.
The policy terminates either by crossing a
boundary during the sequential main stage or by invoking the
fallback test. Let $E_+$
and $E_-$ denote the events of
stopping at the upper and lower boundaries
respectively, and $E_{\mathrm{fb}}$ the event
of reaching fallback stage. Analogous to Wald's sequential hypothesis testing, we show that
$
    \Pp^{\pi_k}_0(E_+)\le
    \alpha_{1,k}$ and 
    $
    \Pp^{\pi_k}_1(E_-)\le
    \beta_{1,k},$
while the fallback test either satisfies \eqref{eq:human-fallback-level} or satisfies \eqref{eq:AI-fallback-level},
so
\[
    \Pp^{\pi_k}_0\bigl(E_{\mathrm{fb}}\cap
    \{\text{fallback rejects}\}\bigr)\le
    \alpha_{2,k},
    \qquad
    \Pp^{\pi_k}_1\bigl(E_{\mathrm{fb}}\cap
    \{\text{fallback accepts}\}\bigr)\le
    \beta_{2,k}.
\]
Since these routes are mutually exclusive and
exhaustive, $\Pp^{\pi_k}_0(\delta^{\pi_k}=1)\le\alpha_k$ and $\Pp^{\pi_k}_1(\delta^{\pi_k}=0)\le\beta_k$. 
    
Theorem~\ref{thm:feasibility} implies that we can choose the values of the tuning parameters to improve its finite sample and asymptotic performance of $\pi_k$ without affecting its feasibility. In the next section, we discuss one specific choice of the tuning parameters which leads to asymptotic optimality of $\pi_k$.

\begin{algorithm}[t]
\caption{SCALE — Sequential Cost-Aware Likelihood-Guided Escalation ($\pi_k$) \label{algo:seq-policy}}
\DontPrintSemicolon
\KwIn{Primitives $(p_0,p_1,f_0,f_1,\cdata,\cAI,\cH)$, error targets $(\alpha_k,\beta_k)$, and tuning parameters
$(f_{\fb,k},z_k, N_{\main, k} \, (\geq N_{\fixed,\Hum}(\alpha_{2,k}, \beta_{2,k})), \{\eta^{\Hum}_{h,k}\}_{h\in\{0, 1, *\}}, \{\eta^\esc_{h,k}\}_{h\in\{0, 1, *\}})$.}
Compute the split \eqref{eq:budget-split-alpha}--\eqref{eq:budget-split-beta} and 
boundaries \eqref{eq:boundaries}. \\
Pre-commit to $\mathsf F_k\in\{\Hum,\AI\},$ by \eqref{eq:precommitted-mode}.\;
Acquire $N_{\main,k}$ items and set $S\leftarrow0$.\;
\For{$i=1$ \KwTo $N_{\main,k}$}{
    Set $J\leftarrow1$ if $S>z_k$, $J\leftarrow0$ if $S<-z_k$, and
    $J\leftarrow *$ if $|S|\le z_k$.\;
    Draw $U_i\sim\mathrm{Uniform}[0,1]$.\;
    \If{$U_i\le \eta^{\Hum}_{J,k}$}{
        Query a human on item $i$, observe $X_i$, and set
        $S\leftarrow S+\ell_X(X_i)$.\;
        \If{$S\ge a_k$}{reject $H_0$ and stop.}
        \If{$S\le -b_k$}{accept $H_0$ and stop.}
    }
    \Else{
        Query the AI on item $i$, observe $R_i$, and set
        $S\leftarrow S+\ell_R(R_i)$.\;
        \If{$S\ge a_k$}{reject $H_0$ and stop.}
        \If{$S\le -b_k$}{accept $H_0$ and stop.}
        Draw $V_i\sim\mathrm{Uniform}[0,1]$.\;
        \If{$V_i\le\eta^{\esc}_{J,k}(R_i)$}{
            Query a human on item $i$, observe $X_i$, and set
            $S\leftarrow S+\ell_H(X_i,R_i)$.\;
            \If{$S\ge a_k$}{reject $H_0$ and stop.}
            \If{$S\le -b_k$}{accept $H_0$ and stop.}
        }
    }
}
If no boundary has been crossed, execute the pre-committed fallback mode
$\mathsf F_k$ and output the corresponding fallback test decision.\;
\end{algorithm}

\subsection{Discussion on Tuning Parameters with Asymptotics Analysis}\label{sec:main-theorem}
In the previous subsection we construct a feasible algorithm without giving details about the specific choice of the parameters. In this subsection, we suggest a particular set of parameter choices that guarantees the first order asymptotics of $\pi_k$. Recall our policy $\pi_k$ is indexed by $k=1,2,\ldots$, with
type-I and type-II error levels $\alpha_k\downarrow0$
and $\beta_k\downarrow0$ as $k\to\infty$. We tune the parameters to guarantee
\[
    \frac{\max\{\E^{\pi_k}_0[C^{\pi_k}],
                \E^{\pi_k}_1[C^{\pi_k}]\}}
         {\LB(\alpha_k,\beta_k)}
    \to 1
    \qquad\text{as }k\to\infty,
\]
and consequently
\[
    \frac{\max\{\E^{\pi_k}_0[C^{\pi_k}],
                \E^{\pi_k}_1[C^{\pi_k}]\}}
         {C^*(\alpha_k,\beta_k)}
    \to 1
    \qquad\text{as }k\to\infty.
\]
In particular, define:
\begin{equation*}\label{eq:Lk-def}
    L_k:=\max\!\left\{\log\frac{1}{\alpha_k},\,
                      \log\frac{1}{\beta_k}\right\}.
\end{equation*}

\vspace{1mm}
\noindent
We will show in Theorem~\ref{thm:first-order} that the
lower bound $\LB_k:=\LB(\alpha_k,\beta_k)$ scales
as $\LB_k=\Theta(L_k)$, so first-order optimality
is equivalent to matching $\LB_k$ up to an
$o(L_k)$ additive term. Throughout the asymptotic analysis that follows, we
impose the balanced small-error regime given by the following assumption.

\vspace{1mm}
\begin{assumption}[Balanced small-error regime]\label{ass:balanced-error-regime}
%    \begin{equation*}\label{eq:balanced-regime}
   $ \log\left(1/\alpha_k\right) = \Theta(L_k)$ and 
   $ \log\left(1/\beta_k\right)=\Theta(L_k)$.
%\end{equation*}
\end{assumption}
\vspace{1mm}

Assumption 3 requires the logarithmic type-I and type-II error levels, \(\log(1/\alpha_k)\) and \(\log(1/\beta_k)\), to be of the same order. In particular, it does not require \(\alpha_k\) and \(\beta_k\) to be of the same order. For example, \(\alpha_k=e^{-k}\) and \(\beta_k=e^{-2k}\) satisfy the assumption, even though \(\beta_k/\alpha_k\to0\). More generally, the assumption allows \(\beta_k=\alpha_k^q\) for any fixed \(q>0\). Thus, it accommodates a broad range of asymmetric sequences of vanishing type-I and type-II error levels.

\vspace{1mm}
Recall that the policy is equipped with the following set of tunable parameter sequences:
(i) a fixed pool size $N_{\main, k}$ as the total number of items to acquire, (ii) direction-specific sensing rules $\eta^\Hum_{h, k}\in[0,1]$, the probability to query human, and $\eta^\esc_{h,k}\in[0,1]$, the probability to escalate the item to the human given that the item has already been AI-queried before under direction $h$, 
(iii) a fallback-budget fraction $f_{\fb,k}$ that is bounded away from zero and one, and (iv) the hypothesis boundary $z_k$. In what follows, we pick specific values for these parameters.

\subsubsection*{Fixed pool size.} We choose $N_{\main, k}$ to be the minimum number of samples required to meet some information targets $(T_{1, k}, T_{0,k})$, and $N_{\main, k}$ is inspired by \eqref{eq:LB-def} with $(T_{1, k}, T_{0,k})$. Specifically, let %$T_{1,k}$ and $T_{0,k}$ be the buffered direction targets
%\begin{equation}\label{eq:buffered-targets}
    $T_{1,k}:=a_k+\Delta_k$ and 
    $T_{0,k}:=b_k+\Delta_k$ be the buffered direction targets
%\end{equation}
where $\Delta_k$ is the buffer with which we inflate the boundary on the sequential main stage to calculate the pool size. By inflating the boundary by $\Delta_k$, we ensure that, failure to cross the boundary
is a large-deviation event that occurs with
vanishing probability. The fallback test therefore contributes only $o(L_k)$ to
the expected cost and does not affect first-order
optimality.

For integer $N$, define the buffered design value
\begin{equation}\label{eq:F-Delta-def}
    F_k^\Delta(N):=N\cdata+
    \max\{\Gamma_1(T_{1,k},N),\Gamma_0(T_{0,k},N)\}.
\end{equation}

Choose
\begin{equation}\label{eq:Nbar-main-choice}
    \overline N_{\main,k}\in
    \argmin_{N\in\mathbb{Z}_+, N\ge N_{\fixed,\Hum}(\alpha_{2,k},\beta_{2,k})}F_k^\Delta(N).
\end{equation}
We then take the fixed pool size as
\begin{equation}\label{eq:Nmain-choice}
    N_{\main,k}:=\overline N_{\main,k}+1.
\end{equation}
In the above, we add an additive one-item cushion. We explain the reason of doing so subsequently.

\subsubsection*{Direction-specific sensing rules.}
Given $N_{\main, k}$, we now choose the human-query and escalation probabilities $(\eta^{\Hum}_{h,k}, \eta^{\esc}_{h,k})$ to match the minimal cost plan identified by the cost minimization problem \eqref{eq:Gamma-def}. For each direction $h\in\{0,1\}$, choose an optimizer
\begin{equation}\label{eq:Gamma-optimizer}
    \begin{split}(n^*_{\Hum,h,k},n^*_{\AI,h,k},n^*_{\esc,h,k})\in\argmin_{n_{\Hum},\,n_{\AI},\,n_{\esc}\ge0}\quad
        &\cH n_{\Hum}+\cAI n_{\AI}+\cH n_{\esc}\\
    \text{subject to}\quad
        &n_{\Hum}+n_{\AI}\le \overline N_{\main,k},\\
        &0\le n_{\esc}\le n_{\AI},\\
        &n_{\Hum}J_X^{(h)}+n_{\AI}I_R^{(h)}
          +n_{\AI}\Psi_h(n_{\esc}/n_{\AI})\ge T_{h,k}.
    \end{split}
\end{equation}

\vspace{1mm}
\noindent
The variables are named to match the interpretation of the program
$\Gamma_h$: $n^*_{\Hum,h,k}$ is the direct-human count, $n^*_{\AI,h,k}$ is the
AI-scored count, and $n^*_{\esc,h,k}$ is the escalation count among AI-scored
items.  
Because $n^*_{\Hum,h,k}+n^*_{\AI,h,k}\le \overline N_{\main,k}$ in the program
$\Gamma_h$, and since $\lceil x\rceil+\lceil y\rceil\le\lceil x+y\rceil+1$ for
any reals $x,y\ge0$, the cushion in \eqref{eq:Nmain-choice} gives
\begin{equation}\label{eq:B-fits-pool}
    \lceil n^*_{\Hum,h,k}\rceil+\lceil n^*_{\AI,h,k}\rceil
    \le\lceil n^*_{\Hum,h,k}+n^*_{\AI,h,k}\rceil+1
    \le \lceil \overline N_{\main,k}\rceil+1=N_{\main,k},
\end{equation}

\vspace{1mm}
\noindent
which ensures we have acquired enough items to make $ \lceil n^*_{\Hum,h,k}\rceil$ human queries and $\lceil n^*_{\AI,h,k}\rceil$ AI queries for the sequential main stage. 

The direct-human fraction of the direction-$h$ rule is
\begin{equation}\label{eq:phi-def}
    \eta^{\Hum}_{h,k}:=
    \begin{cases}
        \lceil n^*_{\Hum,h,k}\rceil/(\lceil n^*_{\Hum,h,k}\rceil+\lceil n^*_{\AI,h,k}\rceil),&\lceil n^*_{\Hum,h,k}\rceil+\lceil n^*_{\AI,h,k}\rceil>0,\\
        0,&\lceil n^*_{\Hum,h,k}\rceil+\lceil n^*_{\AI,h,k}\rceil=0.
    \end{cases}
\end{equation}
For all large $k$, $\lceil n^*_{\Hum,h,k}\rceil+\lceil n^*_{\AI,h,k}\rceil>0$ because $T_{h,k}>0$ and no zero-item rule can
supply positive information.

If $n^*_{\AI,h,k}>0$, define the planned escalation rate $s_{h,k}:=n^*_{\esc,h,k}/n^*_{\AI,h,k}\in[0,1]$. If $n^*_{\AI,h,k}=0$, set $s_{h,k}:=0$. To construct the frontier optimizer, following Proposition~\ref{prop:escalation-solution} we define 
$\eta^{\esc}_{h,k}:\calR\to[0,1]$ by
\begin{equation}\label{eq:eta-closed-form}
    \eta^{\esc}_{h,k}(r^{(h)}_j):=
    \begin{cases}
        1, & W^{(h)}_j\le s_{h,k},\\[1ex]
        \displaystyle\frac{s_{h,k}-W^{(h)}_{j-1}}{g_h(r^{(h)}_j)},
            & W^{(h)}_{j-1}<s_{h,k}<W^{(h)}_j,\\[2ex]
        0, & W^{(h)}_{j-1}\ge s_{h,k}.
    \end{cases}
\end{equation}
Then $\eta^{\esc}_{h,k}$ is the optimal solution of \eqref{eq:Psi-dual} with $s = s_{h,k}$. 

In the dead-zone $[-z_k,z_k]$ the sign of the likelihood-ratio statistic is
ambiguous. The policy therefore uses the averaged dead-zone rule
\begin{equation}\label{eq:deadzone-rule}
    \eta^{\Hum}_{*,k}:=\frac{\eta^{\Hum}_{0,k}+\eta^{\Hum}_{1,k}}2,
    \qquad
    \eta^{\esc}_{*,k}(r):=\frac{\eta^{\esc}_{0,k}(r)+\eta^{\esc}_{1,k}(r)}2,
    \quad r\in\calR.
\end{equation}
The average of two numbers in $[0,1]$ again lies in $[0,1]$, so the rule is a valid randomized sensing rule. The specific values of $\eta^{\Hum}_{*,k}$ and $\eta^{\esc}_{*,k}(r), r\in \calR$ will not affect the asymptotics of $\pi_k$; therefore, we choose the averaged rule here for simplicity. 

\subsubsection*{Fallback-budget fraction $f_{\fb, k}$.}
According to Theorem~\ref{thm:first-order} below, $f_{\fb, k}$ only affects the cost of $\pi_k$ up to some constant, so we can choose $f_{\fb, k}$ to be any constant that is bounded away from $0$ and $1$; for example, we can let $f_{\fb, k} = 1/2$. 

\subsubsection*{Buffer $\Delta_{k}$ and hypothesis boundary $z_k$.}
Additionally, to ensure that policy $\pi_k$ has asymptotically converging cost to $\LB_k$, we require that the remaining parameters $(z_k, \Delta_k)$ to satisfy the following conditions: 
\begin{enumerate}[leftmargin=2em,label=(\roman*)]
    \item 
    \begin{equation}\label{eq:delta-admissible}
        \Delta_k\to\infty, \, \Delta_k=o(L_k), \text{ and }\frac{\Delta_k^2}{L_k}\to\infty;
    \end{equation}
    \item 
    \begin{equation}\label{eq:z-admissible}
       z_k\ge0\text{ and } \frac{z_k+1}{\Delta_k}\to0.
    \end{equation}
\end{enumerate}

\vspace{1mm}
\noindent
Intuitively, the accumulated likelihood-ratio
statistic has variance of order $L_k$, and hence
fluctuations of order $\sqrt{L_k}$. Thus,
$\Delta_k^2/L_k\to\infty$ ensures that the buffer
dominates these fluctuations and makes the
probability of reaching fallback vanish, while
$\Delta_k=o(L_k)$ keeps the cost of this buffer
lower order. The condition $(z_k+1)/\Delta_k\to0$
similarly ensures that the additional cost incurred
while the statistic lies in the ambiguous region
$[-z_k,z_k]$ is negligible. We summarize the tuning parameters and their choices in Table~\ref{tab:tuning-parameters}.

\begin{table}[t]
\centering\small
\caption{Tuning parameters of the policy $\pi_k$ in Algorithm~\ref{algo:seq-policy}.}
\label{tab:tuning-parameters}
\begin{tabularx}{\textwidth}{@{}l >{\raggedright\arraybackslash}p{0.27\textwidth} >{\raggedright\arraybackslash}X@{}}
\hline\hline
Tuning parameter & Meaning & Choice\\
\hline
$f_{\fb,k}$
  & Fallback-budget fraction
  & Any constant in $(0,1)$ bounded away from $0$ and $1$\\[4pt]
$z_k$
  & Hypothesis boundary
  & Any $z_k\ge0$ with $(z_k+1)/\Delta_k\to0$ (see \eqref{eq:z-admissible}) where $\Delta_k$ satisfies $\Delta_k\to\infty$, $\Delta_k=o(L_k)$, and $\Delta_k^2/L_k\to\infty$ (see \eqref{eq:delta-admissible})\\[4pt]
$N_{\main,k}$
  & Fixed pool size
  & $N_{\main,k}=\overline N_{\main,k}+1$, where $\overline N_{\main,k}\in\argmin\bigl\{F^\Delta_k(N):N\in\Z_+,\,N\ge N_{\fixed,\Hum}(\alpha_{2,k},\beta_{2,k})\bigr\}$ and $F^\Delta_k(N)=N\cdata+\max\{\Gamma_1(T_{1,k},N),\Gamma_0(T_{0,k},N)\}$ (see \eqref{eq:F-Delta-def}--\eqref{eq:Nmain-choice})\\[4pt]
$\eta^{\Hum}_{h,k}$, $h\in\{0,1\}$
  & The probability to directly query human
  & $\eta^{\Hum}_{h,k}=\lceil n^*_{\Hum,h,k}\rceil/(\lceil n^*_{\Hum,h,k}\rceil+\lceil n^*_{\AI,h,k}\rceil)$, and $0$ if the denominator vanishes\\[4pt]
$\eta^{\esc}_{h,k}(\cdot)$, $h\in\{0,1\}$
  & The probability to escalate
to human given that the item has already been AI-queried before
  & Optimal solution of \eqref{eq:Psi-dual} with $s = s_{h,k}$ (see \eqref{eq:eta-closed-form})\\[4pt]
$\eta^{\Hum}_{*,k}$, $\eta^{\esc}_{*,k}(\cdot)$
  & Dead-zone sensing rule
  & $\eta^{\Hum}_{*,k}=(\eta^{\Hum}_{0,k}+\eta^{\Hum}_{1,k})/2$ and $\eta^{\esc}_{*,k}(r)=(\eta^{\esc}_{0,k}(r)+\eta^{\esc}_{1,k}(r))/2$, $r\in\calR$ (see \eqref{eq:deadzone-rule})\\
\hline\hline
\end{tabularx}
\end{table}

%\vspace{2mm}
Following the above choices of $N_{\main, k}$, 
$(\eta^{\Hum}_{h,k}, \eta^{\esc}_{h,k})$, $f_{\fb, k}, z_k$ and $\Delta_k$,
we now establish first-order
asymptotic optimality of $\pi_k$ in Theorem~\ref{thm:first-order}. Before stating the theorem, we remark that all choices of tuning parameters introduced in this section are neither unique nor finite-sample optimal. They should be viewed as reasonable sufficient conditions to ensure theoretical first order optimality as shown by the subsequent theorem. For practical implementation, we recommend practitioners to computationally tune these parameters for their own applications. 

\vspace{1mm}
\begin{theorem}[Parameterized ratio upper bound]\label{thm:first-order}
Suppose Assumptions~\ref{ass:score-model}-\ref{ass:balanced-error-regime} hold. The following hold:
\begin{enumerate}
\item[(i)] $\LB_k = \Theta(L_k)$;

\item[(ii)]
Suppose $N_{\main,k}$ is given by \eqref{eq:Nmain-choice}, $(\eta^{\Hum}_{h,k}, \eta^{\esc}_{h,k})$ is given by \eqref{eq:phi-def} and \eqref{eq:eta-closed-form}, and  $(\eta^{\Hum}_{*,k}, \eta^{\esc}_{*,k})$ is given by \eqref{eq:deadzone-rule}. Suppose $\Delta_k, z_k$ satisfy \eqref{eq:delta-admissible} and \eqref{eq:z-admissible} and $f_{\fb, k}$ is any constant bounded away from 0 and 1. 
Then for all large $k$,
\begin{equation}\label{eq:param-ratio-bound-1}
\begin{split}
    \frac{\max\{\E^{\pi_k}_0[C^{\pi_k}],\E^{\pi_k}_1[C^{\pi_k}]\}}{\LB_k} 
    = 1 +o(1).
\end{split}
\end{equation}
\end{enumerate}
\end{theorem}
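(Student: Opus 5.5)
For part (i), I will bound $\LB_k$ from both sides by linear functions of $L_k$. For the lower side, $N\cdata\ge0$ and the information constraint in \eqref{eq:Gamma-def} give a direct bound. Every unit of cost buys at most $\kappa_h:=\max\{J_X^{(h)}/\cH,\ I_R^{(h)}/\cAI,\ \max_r d^{(h)}(r)/\cH\}$ units of KL information, using concavity of $\Psi_h$ and $\Psi_h(s)\le s\max_r d^{(h)}(r)$. Hence $\Gamma_1(A,N)\ge A/\kappa_1$. Since $A=\klbin(1-\beta\Vert\alpha)\ge(1-\beta)\log(1/\alpha)-\log 2$, Assumption~\ref{ass:balanced-error-regime} yields $\LB_k\ge cL_k$. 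For the upper side, take $N=\max\{N_{\fixed,\Hum}(\alpha_k,\beta_k),\lceil \max(A/J_X^{(1)},B/J_X^{(0)})\rceil\}$ with all human-first items. A Chernoff bound for the binomial threshold test gives $N_{\fixed,\Hum}=O(L_k)$, and $A,B\le \log(1/\alpha)+\log(1/\beta)+O(1)=O(L_k)$. So $\LB_k=O(L_k)$.

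For part (ii), I will sum three cost contributions and bound each: the pool cost, the main-stage query cost, and the fallback cost. I will then compare the total with $F_k^\Delta(\overline N_{\main,k})$, and compare that with $\LB_k$.

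\textbf{Step 1 (design value vs.\ lower bound).} The buffered targets satisfy $T_{1,k}=a_k+\Delta_k=\log(1/\alpha_k)+O(1)+\Delta_k\le A(1+o(1))+o(L_k)$, and similarly for $T_{0,k}$. Scaling an optimizer of $\Gamma_h(A,N)$ by a factor $1+\epsilon_k$ with $\epsilon_k=o(1)$ meets the larger target. This uses the homogeneity of the constraint in $(n_\Hum,n_\AI,n_\esc)$ and a pool of size $\lceil(1+\epsilon_k)N\rceil$, so $\Gamma_h(T_{h,k},\lceil(1+\epsilon_k)N\rceil)\le(1+\epsilon_k)\Gamma_h(A\text{ or }B,N)$. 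The floor also changes: $N_{\fixed,\Hum}(\alpha_{2,k},\beta_{2,k})$ exceeds $N_{\fixed,\Hum}(\alpha_k,\beta_k)$ by at most $O(1)+o(L_k)$, because $f_{\fb}$ is a constant and the binomial error exponents give $N_{\fixed,\Hum}\sim$ const$\cdot L$ up to $o(L)$. I will verify this via Chernoff/Sanov-type two-sided bounds. Together these give $F_k^\Delta(\overline N_{\main,k})\le \LB_k(1+o(1))+o(L_k)$, and the one-item cushion costs $O(1)$.

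\textbf{Step 2 (main-stage cost).} Fix the true hypothesis $H_h$. By Wald-type drift arguments, the statistic $S_i$ has positive drift toward the correct boundary under every sensing rule $j\in\{0,1,*\}$. Each per-item increment has conditional mean equal to the per-item KL under the rule used, and this mean is positive and bounded below. Hence the time spent with $J_i\ne h$ is $O(z_k+1)$ in expectation, via a ladder-height/overshoot argument. Once $J_i=h$, each item yields expected cost $\bar c_h:=\Gamma_h/(\lceil n^*_\Hum\rceil+\lceil n^*_\AI\rceil)$ (up to rounding) and expected information $T_{h,k}/(n^*_\Hum+n^*_\AI)$. This follows from Proposition~\ref{prop:escalation-solution}, which makes the escalation rule attain $\Psi_h(s_{h,k})$. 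Wald's identity applied to the stopped sum then gives
\[
\E_h[\text{main cost}]\le \frac{a_k\text{ or }b_k + O(z_k+1)+O(1)}{T_{h,k}}\,\Gamma_h(T_{h,k},\overline N_{\main,k})(1+o(1)),
\]
where the $O(1)$ term covers bounded overshoot, since the increments are bounded. Because $T_{h,k}\ge a_k$ or $b_k$, this is at most $\Gamma_h(T_{h,k},\overline N_{\main,k})(1+o(1))$.

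\textbf{Step 3 (fallback).} Reaching the fallback requires that the sum of at least roughly $N_{\main,k}$ bounded increments, whose mean exceeds the boundary by about $\Delta_k$, still fails to cross it. Azuma--Hoeffding gives probability at most $\exp(-c\Delta_k^2/L_k)\to0$. The fallback cost is $O(L_k)$ (the human completion is $\cH N_{\fixed,\Hum}=O(L_k)$), so its expected contribution is $o(L_k)$. Combining Steps 1--3 with part (i), dividing by $\LB_k=\Theta(L_k)$ gives \eqref{eq:param-ratio-bound-1}.

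The main obstacle is Step 2. I need to control, under a state-dependent rule switching, the expected number of items spent in the wrong or dead-zone direction. The drift in the wrong direction $J=1-h$ is still KL-positive under $H_h$, but I will need a uniform positive lower bound on it for large $k$, even though the optimizers $n^*$ vary with $k$. I would first show that the normalized optimizers, divided by $L_k$, lie in a compact set with strictly positive per-item information. I would then apply an optional-stopping bound to the excursions outside $\{J=h\}$.
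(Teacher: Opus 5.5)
Your architecture is the paper's: compare the buffered design value with $\LB_k$, bound the wrong/dead-zone occupation by $O(z_k+1)$, bound the correct-direction cost by an optional-stopping drift identity, and make the fallback negligible with a martingale tail. Several sub-arguments differ but are valid. In (i), your cost-per-nat bound via $\kappa_h$ (using $\Psi_h(s)\le s\max_r d^{(h)}(r)$) replaces the paper's argument $N_{\fixed,\Hum}\ge T'_{1,k}/J_X^{(1)}$. In Step 1, multiplicative rescaling by $1+\epsilon_k$ (homogeneity of the information constraint) replaces the paper's additive padding with $R_k$ direct-human items. Your compactness argument, based on the one-item KL vanishing under one hypothesis iff under the other, replaces the explicit comparison constant $c_\times$.

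The genuine gap is Step 3. The claim that the increments have ``mean exceeding the boundary by about $\Delta_k$'' holds only if every item of the planned block $N^{\mathrm{plan}}_{h,k}$ uses the direction-$h$ rule. Under $H_1$, items started with $S_{i-1}\le z_k$ use the wrong or dead-zone rule. Their drift is only bounded below by a constant $\mu_0$, which can be far below the planned per-item drift $T_{1,k}/N^{\mathrm{plan}}_{1,k}$, so each such item can consume up to $B_{\mathrm{inc}}$ of the margin. Their number $W_{1,k}$ is random and path dependent. The predictable compensator is therefore only $\ge a_k+\Delta_k-B_{\mathrm{inc}}W_{1,k}$, and Azuma--Hoeffding on the martingale part does not by itself give $\exp\{-c\Delta_k^2/L_k\}$. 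The missing step is to split on $\{W_{1,k}\le \Delta_k/(4B_{\mathrm{inc}})\}$. On that event the compensator is at least $a_k+\tfrac34\Delta_k$ and your tail bound applies. On the complement, Markov's inequality with your Step-2 occupation bound $\E^{\pi_k}_1[W_{1,k}]\le C(z_k+1)$ adds a term $O((z_k+1)/\Delta_k)$; this is Lemma~\ref{lem:fallback-low}. This is exactly where \eqref{eq:z-admissible} enters. As written, your argument never uses $(z_k+1)/\Delta_k\to0$, only the weaker consequence $z_k=o(L_k)$.

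Smaller points:
\begin{itemize}
\item For the occupation bound, the increments are not i.i.d. because the rule depends on $S_{i-1}$, so ladder-height or renewal arguments do not apply directly. A plain exponential supermartingale $e^{-\theta S_i}$ only gives $O(e^{\theta z_k})$, not $O(z_k+1)$. You need either a two-phase argument or the paper's smooth potential. The two-phase version uses optional stopping of $S_i-\mu_0 i$ up to the first passage above $z_k$, then an exponential supermartingale started at that passage time for the returns.
\item In Step 1, the floor comparison needs a genuine first-order lower bound $N_{\fixed,\Hum}(e^{-u},e^{-v})\ge V(u,v)-o(L)$, with $V$ as in \eqref{eq:V-def-identity}, obtained by a type-class argument (Lemma~\ref{lem:Nfull-V-identity}). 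Since $u_k/v_k$ need not converge under Assumption~\ref{ass:balanced-error-regime}, ``const$\cdot L$'' should be replaced by $V(u_k,v_k)$.
\item To get the equality $=1+o(1)$ rather than only $\le 1+o(1)$, you also need $\max_h\E^{\pi_k}_h[C^{\pi_k}]\ge C^*(\alpha_k,\beta_k)\ge\LB_k$, which follows from Theorems~\ref{thm:feasibility} and~\ref{thm:lower-bound}.
\end{itemize}
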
 

\vspace{1mm}
%For full details of the proof, we refer the readers to Appendix~\ref{appendix-sec:first-order-proof}.

As specified in \eqref{eq:delta-admissible}--\eqref{eq:z-admissible},
we impose only rate conditions on these tuning parameters.
To state explicit convergence rates, let $\nu_h^*$ denote the least
active-item consumption among the cost-minimizing unit-information
plans in direction $h$:
\begin{equation}\label{eq:nu-star-def}
    \nu_h^*:=\min_{(n_{\Hum},n_{\AI},n_{\esc})\in\mathcal M_h}(n_{\Hum}+n_{\AI}).
\end{equation}
where $\mathcal M_h$ denotes the set of minimizers of $\min_{n_{\Hum},n_{\AI},n_{\esc}\geq 0}\{\cH n_{\Hum}+\cAI n_{\AI}+\cH n_{\esc}:n_{\Hum}J_X^{(h)}+n_{\AI}I_R^{(h)}+n_{\AI}\Psi_h(n_{\esc}/n_{\AI})\ge1, 0\le n_{\esc}\le n_{\AI}\}$, which computes the minimum sensing cost required to generate one unit of information in direction \(h\). Define 
\begin{equation}\label{eq:floor-gap}
    G_k
:=
N_{\fixed,\Hum}(\alpha_k,\beta_k)
-\max\{\nu_1^*a_k,\nu_0^*b_k\},
\end{equation}
and write $G^+_k := [G_k]_+$. 
Thus, $G_k^+$ measures the unused item reserve supplied by the
mandatory full-label fallback floor. Corollary~\ref{cor:rates-1} provides explicit convergence rates with exact parameter choices.
\begin{corollary}[Parameter choices and rates]\label{cor:rates-1}
Suppose Assumptions~\ref{ass:score-model}-\ref{ass:balanced-error-regime} hold.
Let $f_{\fb,k}=1/2$ and $z_k=1$, let the optimizer in
\eqref{eq:Gamma-optimizer} be selected with least active-item consumption
$n^*_{\Hum,h,k}+n^*_{\AI,h,k}$ among the cost minimizers (cf.\
\eqref{eq:nu-star-def}). Then the following hold for all large $k$:

% and let $\Delta_k\to\infty$ with $\Delta_k=o(L_k)$.
% Then there are primitive
% constants $C_0<\infty$, $c_0>0$, and $c_G>0$ such that,  for all large $k$, the fallback
% probability satisfies
% \begin{equation}\label{eq:reserve-fallback-prob}
%     \Pp^{\pi_k}_h(E_{\fb})
%     \le C_0\exp\left\{-c_0\,\frac{(\Delta_k+c_GG_k^+)^2}{L_k}\right\},
% \end{equation}
% and consequently
% \begin{equation}\label{eq:reserve-refined-rate}
%         \frac{\max_h\E^{\pi_k}_h[C^{\pi_k}]}{\LB_k}
%         \le 1+O\left(
%         \frac{\Delta_k}{L_k}
%         +\exp\left\{-c_0\frac{(\Delta_k+c_GG_k^+)^2}{L_k}\right\}
%         +\frac{\log L_k}{L_k}
%         \right).
%     \end{equation}
%     In particular,
\begin{enumerate}[leftmargin=2em,label=(\roman*)]
    \item \textbf{(Baseline, no condition on the reserve.)} Letting $\Delta_k=\kappa\sqrt{L_k\log L_k}$ for a sufficiently large constant $\kappa$ depending only on the primitives, then regardless of the value of $G_k$, 
    \[
        \frac{\max_h\E^{\pi_k}_h[C^{\pi_k}]}{\LB_k}
        \le 1+\widetilde O(L_k^{-1/2}).
    \]
    \item \textbf{(Improvement when the reserve is generous.)} If
    $G_k^+\ge C\sqrt{L_k\log L_k}$ for a sufficiently large primitive constant
    $C$, 
    taking $\Delta_k$ polylogarithmic sharpens part (i) to
    \[
        \frac{\max_h\E^{\pi_k}_h[C^{\pi_k}]}{\LB_k}
        \le 1+\widetilde O(1/L_k).
    \]
\end{enumerate}
\end{corollary}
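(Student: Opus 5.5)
The plan is to make the $o(L_k)$ terms in the proof of Theorem~\ref{thm:first-order} explicit. I would decompose the expected cost under $H_h$ as
\[
\E^{\pi_k}_h[C^{\pi_k}] \le N_{\main,k}\cdata + \E^{\pi_k}_h[C_{\mathrm{seq}}] + \Pp^{\pi_k}_h(E_{\fb})\,\overline C_{\mathsf F_k,k},
\]
where $C_{\mathrm{seq}}$ is the query cost of the main stage. Each term is then compared with $\LB_k$, using $\LB_k=\Theta(L_k)$ from Theorem~\ref{thm:first-order}(i).

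\emph{Deterministic terms.} First I would show a perturbation bound for the design value: $F_k^\Delta(\overline N_{\main,k}) \le \LB_k + O(\Delta_k + |\log f_{\fb,k}| + \log L_k)$. Two facts drive this.
\begin{itemize}
\item $\Gamma_h(T,N)$ is Lipschitz in $T$ once $N$ exceeds a constant multiple of $T$, because of the per-unit-information cost of the linearized program.
\item The floor $N_{\fixed,\Hum}(\alpha_{2,k},\beta_{2,k})$ differs from $N_{\fixed,\Hum}(\alpha_k,\beta_k)$ by $O(1)+O(\log L_k)$. I would get this from Chernoff/Bahadur–Rao asymptotics of the binomial NP test, whose sample size is $c\,L_k + O(\log L_k)$.
\end{itemize}
The $+1$ cushion and the ceilings in \eqref{eq:phi-def} cost $O(1)$.

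\emph{Sequential stage.} Under $H_h$, once $S$ is beyond $\pm z_k$ in the correct direction, the increments have mean $\mu_h=(\text{planned info})/(\text{planned items})$ per item and per-item cost $\bar c_h$. The ratio $\bar c_h/\mu_h$ is exactly the unit cost of \eqref{eq:Gamma-optimizer}. Wald's identity plus bounded overshoot (bounded increments by Assumption~\ref{ass:score-model}) bounds the expected cost until crossing by $(\bar c_h/\mu_h)(a_k\text{ or } b_k) + O(z_k+1)$. For the dead-zone and wrong-direction excursions I would use a drift argument: the averaged rule still has positive drift toward the true side, so time spent in $[-z_k,z_k]$ or beyond $+z_k$ under $H_0$ has expectation $O(z_k+1)=O(1)$ with $z_k=1$.

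\emph{Fallback probability, the main obstacle.} Reaching fallback requires the centered martingale $S_n - n\mu_h$ to deviate by roughly $\Delta_k + c_G G_k^+$. This holds because the pool contains $\nu_h^* a_k$ active items plus the buffer $\Delta_k$ worth of information, plus the unused reserve $G_k^+$ coming from the floor. The least-consumption tie-breaking is what makes the reserve count. Its horizon is $N_{\main,k}=O(L_k)$ with bounded increments. Azuma–Hoeffding gives
\[
\Pp^{\pi_k}_h(E_{\fb}) \le C_0\exp\{-c_0(\Delta_k+c_GG_k^+)^2/L_k\}.
\]
The delicate point is that the direction-dependent rule makes the drift state dependent. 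I would handle this by a stopping-time decomposition at the last entry into the correct region, which costs only an additive $O(z_k+1)$ in information.

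Since $\overline C_{\mathsf F_k,k}=O(L_k)$, the relative fallback contribution is $O(\exp\{-c_0(\Delta_k+c_GG_k^+)^2/L_k\})$. Dividing by $\LB_k$ gives
\[
1 + O\bigl(\Delta_k/L_k + e^{-c_0(\Delta_k+c_GG_k^+)^2/L_k} + \log L_k/L_k\bigr).
\]
For (i), take $\Delta_k=\kappa\sqrt{L_k\log L_k}$ with $c_0\kappa^2\ge 1/2$. The exponential is then at most $L_k^{-1/2}$, so the total is $\widetilde O(L_k^{-1/2})$. For (ii), $G_k^+\ge C\sqrt{L_k\log L_k}$ makes the exponential at most $L_k^{-1}$ even with polylogarithmic $\Delta_k$, leaving $\widetilde O(1/L_k)$.
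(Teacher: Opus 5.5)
Your decomposition, the perturbation bound for the deterministic terms, the predictable-drift/Wald bound for the main stage, the reserve argument based on least-consumption tie-breaking, and the final plug-ins for (i) and (ii) all match the paper. The gap is at the step you flag as delicate.

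The information lost to dead-zone and wrong-direction items is not an additive $O(z_k+1)$ pathwise. It can be as large as $B_{\mathrm{inc}}W_{1,k}$, and $W_{1,k}$ is $O(z_k+1)$ only in expectation (Lemma~\ref{lem:direction-tracking}). A ``stopping-time decomposition at the last entry into the correct region'' is also unavailable, because a last-entry time is not a stopping time.

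If your only control on the occupation is its mean, you are back to the Markov step of Lemma~\ref{lem:fallback-low}. That step adds a polynomial term to $\Pp^{\pi_k}_h(E_{\fb})$, of order $(z_k+1)/\Delta_k$, or $(z_k+1)/(\Delta_k+\mu_0(N_{\main,k}-N^{\mathrm{plan}}_{1,k}))$ with a reserve-enlarged threshold. Multiply by the $O(L_k)$ fallback cost and divide by $\LB_k$:
\begin{itemize}
\item For (i) this is harmless: the term contributes $O((L_k\log L_k)^{-1/2})$.
\item For (ii) it breaks the claim. When $G_k^+$ has the minimal allowed size $C\sqrt{L_k\log L_k}$, the term is of order $(L_k\log L_k)^{-1/2}$, not $\widetilde O(1/L_k)$.
\end{itemize}
So the bound $C_0\exp\{-c_0(\Delta_k+c_GG_k^+)^2/L_k\}$ you state does not follow from your argument.

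The missing ingredient is an exponential tail for $W_{1,k}$. The paper gets it from a second, maximal Freedman bound. Let $n$ be the index of the $\lceil t\rceil$-th item started with $S_{i-1}\le z_k$. Then $S_{n-1}\le z_k$, while the predictable drift through $n-1$ is at least $\mu_0(\lceil t\rceil-1)$, so $-M_{n-1}\ge\mu_0t/2$. Freedman over the horizon $N_{\main,k}=O(L_k)$ gives $\Pp^{\pi_k}_1(W_{1,k}>t)\le\exp\{-ct^2/L_k\}$. Taking $t\asymp\Delta_k+\mu_0(N_{\main,k}-N^{\mathrm{plan}}_{1,k})$ matches the exponent of the main tail. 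A last-exit variant, which may be what you intended, also works: bound $\Pp(\max\{n:S_n\le z_k\}\ge t)$ by $\sum_{n\ge t}\Pp(S_n\le z_k)$ and apply Azuma to each term.

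Both routes need two things you do not supply:
\begin{itemize}
\item A uniform drift $\mu_0>0$ for all three rules, including the wrong-direction rule. You justify it only for the averaged rule. For the wrong-direction rule it comes from the reverse KL comparison in Lemma~\ref{lem:variance-drift}(ii)--(iii).
\item Centering by the predictable drift $\sum_i\E^{\pi_k}_h[Y_i\mid\calH_{i-1}]$ rather than $n\mu_h$.
\end{itemize}

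A smaller point: your Lipschitz-in-$T$ claim for $\Gamma_h$ needs $N\gtrsim T$. This can fail at the $\LB$ optimizer when the capacity constraint binds. The pool must be enlarged together with $T$, as in the direct-human padding in the proof of Lemma~\ref{lem:LB-perturb}.
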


\subsection{Numerical Simulations}
\label{sec:scale-simulations}

We evaluate the finite-sample cost performance of SCALE under the known AI-output model.
The experiments examine how the benefit of combining AI queries with
selective human verification changes with the human query cost and the
required testing accuracy. We compare SCALE with implementable Human-only
and AI-only policies under the same error constraints and cost accounting. We consider independent labels $X_i\sim\Bern(p_h)$ under $H_h$, with
$p_0=0.10$ and $p_1=0.20$. Human queries reveal $X_i$ exactly, whereas
the AI returns a binary report $R_i\in\{0,1\}$ with known sensitivity
and specificity
$
    \Pp(R_i=1\mid X_i=1)
    =\Pp(R_i=0\mid X_i=0)=0.80.
$
We set $\cdata=\cAI=1$. Each policy pays for its entire sample pool
upfront, including items that remain unqueried when the sequential test
stops. Reported costs include data acquisition and all AI and human
queries, including those required by fallback.

Our preliminary numerical exploration suggested that total cost was
relatively insensitive to the parameters $z$ and $f_{\fb}$ over the ranges considered. We therefore use a coarse tuning scheme for these two parameters: at each setting, we fix $z=1$ and search over $f_{\fb}\in\{0.2,0.5,0.8\}$.
 We tune the integer pool size
$N_{\main}$ in unit increments over
\[
N_{\fixed,\Hum}(f_{\fb}\alpha,f_{\fb}\beta)
\le N_{\main}\le N_{\mathrm{max}},
\]
where $N_{\mathrm{max}}$ is the upper bound derived
from the single-source policy. We also tune the three
direct-human probabilities $\eta^{\Hum}_j\in[0,1]$ and the six
escalation probabilities $\eta^{\esc}_j(r)\in[0,1]$,
$j\in\{0,1,*\}$ and $r\in\{0,1\}$, using a grid search
with steps of $0.1$. We retain SCALE's pre-committed fallback rule.
No separate buffer $\Delta$ is introduced, since
$N_{\main}$ and the escalation probabilities are tuned directly.
Parameters are selected using a simulation-based cost
criterion, without a claim of global optimality.

The Human-only and AI-only baselines use a truncated sequential
probability ratio test (SPRT), followed, if no boundary is crossed, by
a same-source Neyman--Pearson (NP) test. Each baseline uses its designated
source in both stages. For a fixed $f_{\fb}$, its pool size is set to
$
    N_s=N_{\fixed,s}(f_{\fb}\alpha,f_{\fb}\beta)$,
    $s\in\{\Hum,\AI\},
$
computed using the exact randomized binomial NP test. This pool size is
analytically optimal within the baseline class for the given
$f_{\fb}$: a larger pool increases the upfront acquisition cost without
reducing the expected number of sequential queries. If fallback is
reached, all observations required by the terminal NP test have already
been queried, so the terminal decision incurs no additional acquisition
or query cost. For each baseline, $f_{\fb}$ is selected over the same
three-point grid by exact expected-cost evaluation.

After all parameters are fixed, we evaluate each policy using
$M=10^6$ independent Monte Carlo trajectories under each hypothesis,
independently of the tuning simulations. The plotted absolute cost is
the estimated worst-case expected cost
\[
    \widehat{\mathcal C}_{\pi}
    :=\max_{h\in\{0,1\}}
      \frac{1}{M}\sum_{m=1}^{M}C^{\pi}_{h,m},
\]
where $C^{\pi}_{h,m}$ is the total cost in replication $m$ under $H_h$.
To quantify relative performance, we report
\begin{equation}
\label{eq:scale-simulation-gap}
    \operatorname{Gap}(\%)
    :=100\left[
      \frac{\widehat{\mathcal C}_{\mathrm{SCALE}}}
      {\min\{\widehat{\mathcal C}_{\Hum},
             \widehat{\mathcal C}_{\AI}\}}-1
    \right].
\end{equation}
A negative gap indicates a cost saving relative to the cheaper of the
two evaluated single-source baselines.

We conduct two parameter sweeps. The first varies
$\cH\in\{2,5,10,20,50\}$ with $\alpha=\beta=0.01$.
The second varies $\alpha=\beta\in\{0.05,0.01,0.0001\}$ with
$\cH=10$. Figures~\ref{fig:scale-absolute-cost}
and~\ref{fig:scale-percentage-gap} report the absolute costs and
percentage gaps, respectively. Both horizontal axes use logarithmic
spacing, and the error targets decrease from left to right.

\begin{figure}[tbp]
    \centering
    \renewcommand{\thesubfigure}{\Alph{subfigure}}
    \begin{subfigure}[t]{0.5\textwidth}
        \centering
        \includegraphics[width=\linewidth]{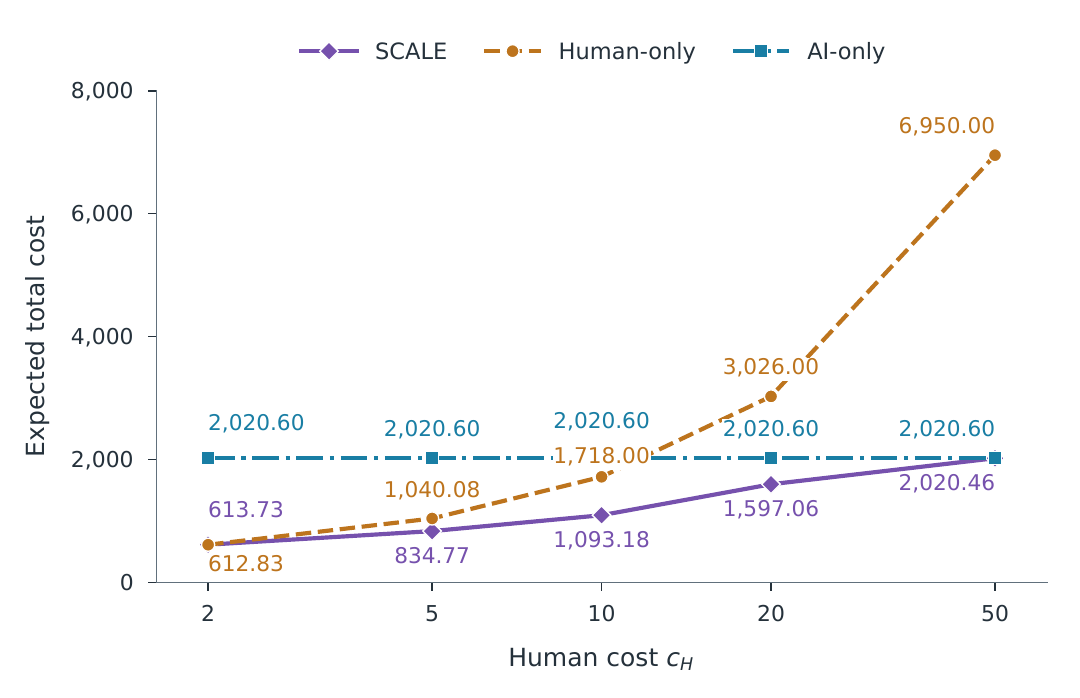}
        \caption{Varying $\cH$; $\alpha=\beta=0.01$.}
        \label{fig:scale-absolute-cost-human}
    \end{subfigure}\hfill
    \begin{subfigure}[t]{0.5\textwidth}
        \centering
        \includegraphics[width=\linewidth]{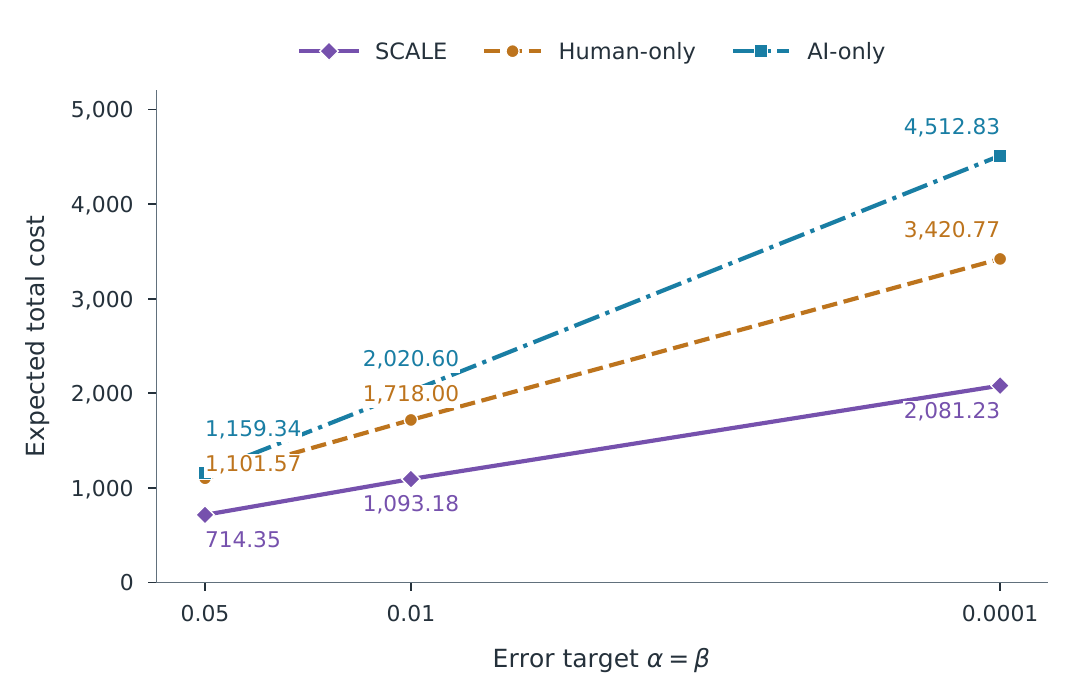}
        \caption{Varying $\alpha=\beta$; $\cH=10$.}
        \label{fig:scale-absolute-cost-error}
    \end{subfigure}
    \caption{Worst-case expected total costs of SCALE, Human-only,
    and AI-only. Each point is the maximum of the two estimated
    hypothesis-specific mean costs, based on $10^6$ Monte Carlo
    trajectories per hypothesis. Costs include the full upfront
    acquisition cost and all subsequent queries.}
    \label{fig:scale-absolute-cost}
\end{figure}

As the human query cost increases, both Human-only and SCALE become
more expensive, while AI-only remains constant at $2020.60$
(Figure~\ref{fig:scale-absolute-cost-human}). The percentage gap exhibits
a U-shaped pattern across the evaluated human costs
(Figure~\ref{fig:scale-percentage-gap-human}). At $\cH=2$, SCALE costs
$613.73$, compared with $612.83$ for Human-only, giving a small positive
gap of $0.15\%$. At $\cH=5$, $10$, and $20$, SCALE reduces cost relative
to the cheaper baseline by $19.74\%$, $36.37\%$, and $20.96\%$,
respectively. The largest observed saving occurs at $\cH=10$:
SCALE costs $1,093.18$, compared with $1,718.00$ for Human-only and
$2,020.60$ for AI-only, saving $624.82$ cost units relative to the
cheaper baseline. At $\cH=50$, SCALE costs $2,020.46$ and essentially
matches AI-only; the estimated gap of $-0.01\%$ is smaller than the
Monte Carlo uncertainty.

This pattern is consistent with the selected routing policies.
SCALE relies predominantly on direct human queries when human queries
are inexpensive, uses AI queries with selective human escalation at
intermediate costs, and becomes nearly AI-only when human queries are
expensive. At $\cH=50$, the selected pool contains $1324$ items and
supports an AI-only terminal NP test. Thus, the largest observed gains
arise at intermediate human costs, where selective verification offers
the greatest advantage over committing to a single source.

\begin{figure}[tbp]
    \centering
    \renewcommand{\thesubfigure}{\Alph{subfigure}}
    \begin{subfigure}[t]{0.5\textwidth}
        \centering
        \includegraphics[width=\linewidth]{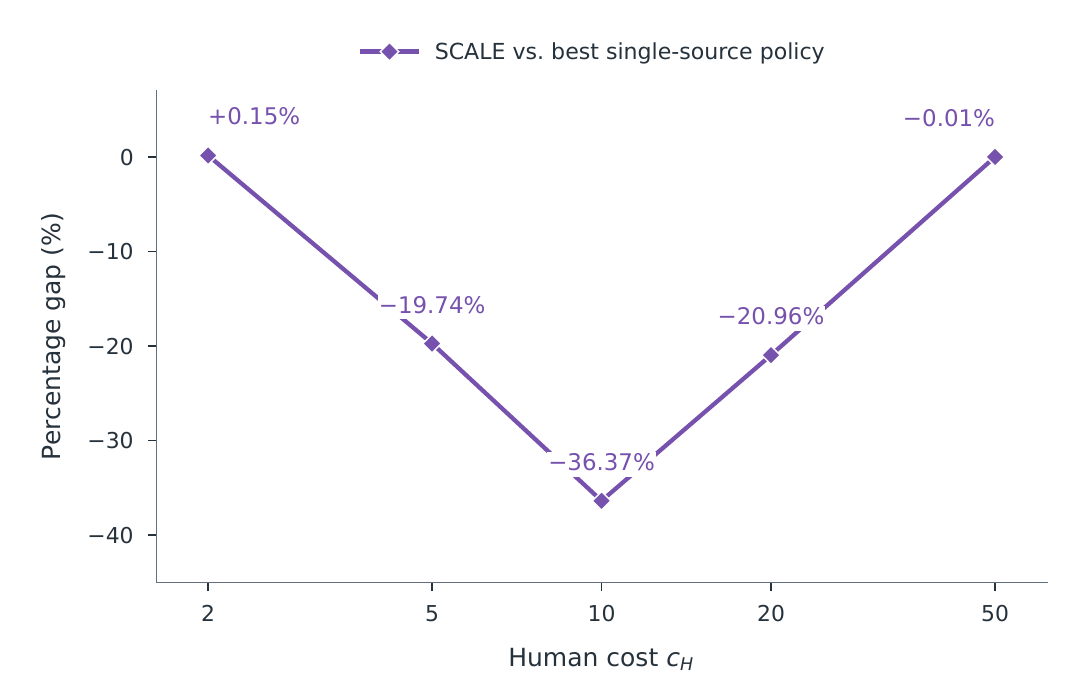}
        \caption{Varying $\cH$; $\alpha=\beta=0.01$.}
        \label{fig:scale-percentage-gap-human}
    \end{subfigure}\hfill
    \begin{subfigure}[t]{0.5\textwidth}
        \centering
        \includegraphics[width=\linewidth]{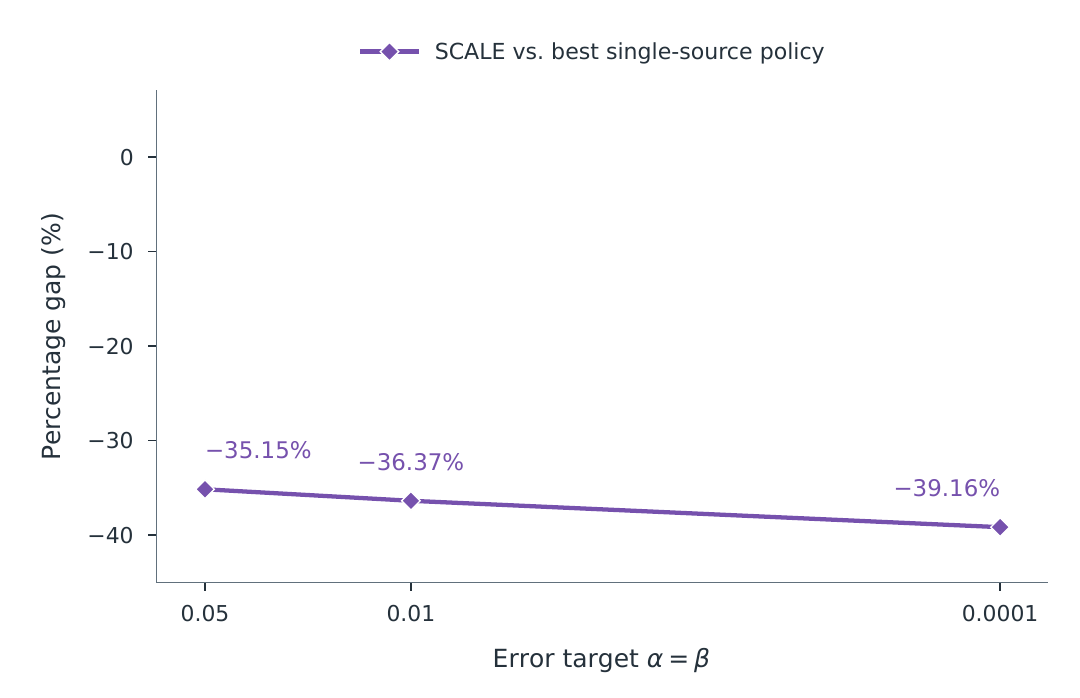}
        \caption{Varying $\alpha=\beta$; $\cH=10$.}
        \label{fig:scale-percentage-gap-error}
    \end{subfigure}
    \caption{SCALE's percentage cost gap relative to the cheaper
    evaluated Human-only or AI-only policy, as defined in
    \eqref{eq:scale-simulation-gap}. Negative values indicate cost
    savings. Panel~A shows a U-shaped pattern across human query
    costs; Panel~B shows increasing relative savings as the error
    targets become more stringent.}
    \label{fig:scale-percentage-gap}
\end{figure}

Tighter error targets increase the absolute costs of all three policies
(Figure~\ref{fig:scale-absolute-cost-error}). As $\alpha=\beta$
decreases from $0.05$ to $0.01$ and then to $0.0001$, SCALE's cost
increases from $714.35$ to $1,093.18$ and $2,081.23$, while its selected
pool size increases from $269$ to $410$ and $835$.
Human-only remains the cheaper single-source baseline throughout this
sweep, with corresponding costs of $1,101.57$, $1,718.00$, and $3,420.77$.
SCALE's relative savings increase from $35.15\%$ to $36.37\%$ and
$39.16\%$ (Figure~\ref{fig:scale-percentage-gap-error}). At the most
stringent target, SCALE saves $1,339.54$ cost units relative to
Human-only. Across these three error targets, stronger error control
therefore requires greater expenditure while increasing the relative
benefit of combining AI queries with selective human verification.
%We give proof of Corollary~\ref{cor:rates-1} in Appendix~\ref{appendix-sec:corollary-proof}.

%==============================================================================
% Section_6_v7.tex
%
% APPEND INSTRUCTIONS
% -------------------
% Insert this file immediately after Section 5
% ("A Sequential Selector-Free Cost-Aware Policy") and immediately before
% the current Conclusion in paper_draftv2.
%
% Version 7 keeps the complete algorithm and both headline guarantees, and
% adds two measurability/transfer clarifications requested in the second
% rigor review: predictability of the pilot-conditioned action rule and the
% abstract basis for transferring the Section 5 main-cost argument.
%==============================================================================

% New notation used only in Section 6.

\section{Pilot-Calibrated Design with Unknown AI Accuracy}
\label{sec:unknown-ai}

So far, our results have assumed that the conditional
AI-output laws $f_0$ and $f_1$ are known, as stated in
Assumption~\ref{ass:score-model}. This assumption can
be reasonable in settings where the same AI system
has been repeatedly evaluated on a stable population
and its performance has been estimated from a large
historical labeled data set. In such cases, the
AI-output model may be treated as a known
characteristic of the deployed system. Moreover,
when $f_0$ and $f_1$ are known, an analyst can in
principle test $H_0$ versus $H_1$ using AI reports
alone, because the systematic error in the AI output
can be accounted for statistically, although doing
so need not be cost optimal.

In other settings, however, the AI system may be new,
the target population may differ from the population
on which it was previously evaluated, or sufficiently
reliable labeled calibration data may simply be
unavailable. In these cases, $f_0$ and $f_1$ must be
estimated from observations containing both the AI
report and the human-verified label. Accordingly, in
this section we estimate $f_0$ and $f_1$ from an
independent paired pilot sample and construct a
plug-in policy based on $\hat f_0$ and $\hat f_1$.
We add guardrail terms that account for estimation
error and preserve finite-sample type-I and type-II
error control. We then show that, when the pilot
sample is sufficiently large, the resulting policy
retains the first-order asymptotic optimality
established in Theorem~\ref{thm:first-order}.

% The proof is deliberately modular. A single perturbation lemma controls the
% running likelihood ratio, the information frontier, and the sensing program.
% After these controls are established, the occupation, fallback, and cost
% arguments of Section~\ref{sec:upper-bound} apply with the original buffer
% $\Delta_k$ replaced by a slightly smaller effective buffer.

\subsection{Pilot Sample and Smoothed Plug-In Model}
\label{sec:unknown-ai-pilot}

In order to estimate $f_0, f_1$, we first need to obtain both pilot
samples with $X^{\pilot}_i = 0$ labels and $X^{\pilot}_i = 1$ labels.
With a slight buase of notation, let $m$ be a positive integer, the minimal number of samples for both
$X^{\pilot}_i = 0$ and $X^{\pilot}_i = 1$ labels we target to obtain.
A sufficiently large $m$ ensures we have accurate estimates for $f_0$
and $f_1$. In order to obtain enough samples for both labels, we
consider the following pilot sampling procedure: we sample each pilot
item and reveal its true label using a human query sequentially; we
stop sampling after we have collected $m$ samples for both labels.
Writing $C_x(n):=\sum^{n}_{i=1}\ind\{X^{\pilot}_i = x\}$ for the number
of label-$x$ samples among the first $n$, the total number of random
samples we get is the stopping time
$
    M_{\tot}:= \min\{n\geq 1:
       \min_{x\in\{0,1\}}C_x(n)\geq m\}.
$
The random numbers of $X^{\pilot}_i = 0$ and
$X^{\pilot}_i = 1$ samples are respectively $M_0:=C_0(M_{\tot})$ and
$M_1:=C_1(M_{\tot})$, and we have $M_0+M_1=M_{\tot}\geq 2m$ and
$\min\{M_0, M_1\} = m$. 

For each pilot sample $X^{\pilot}_i$, we also obtain its AI report
$R^{\pilot}_i$, generated by the same AI system on the same population
as in the main stage; this is what makes the pilot informative about
$f_0$ and $f_1$. We assume that, conditional on
the pilot labels, the AI reports are independent across samples and are
drawn from $f_x$ within the label-$x$ stratum. We summarize this
formally below:

\vspace{1mm}
\begin{assumption}[Independent pilot]
\label{ass:pilot-sample}
Under $H_h$, the pilot pairs
$(X_i^{\pilot},R_i^{\pilot})_{i\ge1}$ are i.i.d. with
$
\Pp_h(X_i^{\pilot}=x) = p_h^x(1-p_h)^{1-x}$  and 
$\Pp_h(R_i^{\pilot}=r|X_i^{\pilot}=x)
=f_x(r), \text{ for } x\in\{0,1\},r\in\calR.
$
The entire pilot stream is independent of the main-stage items and
policy randomization.
\end{assumption}
\vspace{1mm}

We denote the full pilot sample as:
\begin{equation*}\label{eq:pilot-data}
    \calD_m
    :=\{(0,R^{\pilot}_{0,j}):1\le j\le M_0\}
      \cup\{(1,R^{\pilot}_{1,j}):1\le j\le M_1\}.
\end{equation*}
Next, we use a smoothed empirical distribution to estimate $f_0, f_1$.
For $x\in\{0,1\}$ and $r\in\calR$, define
\begin{equation}\label{eq:pilot-estimator}
    M_x(r):=\sum_{j=1}^{M_x}\ind\{R^{\pilot}_{x,j}=r\},
    \qquad
    \hat f_x(r):=\frac{M_x(r)+\lambda_m}{M_x+|\calR|\lambda_m},
    \qquad \lambda_m:=m^{-2}.
\end{equation}
Here $M_x(r)$ counts how many pilot reports in the label-$x$ stratum
equal $r$, so the natural estimate of $f_x(r)$ is the empirical
frequency $M_x(r)/M_x$, which is well defined because $M_x\ge m\ge1$.
We add the smoothing term $\lambda_m$ to avoid the edge case
$M_x(r)=0$, which
would make the plug-in increments (see \eqref{eq:pilot-hat-increments})
infinite. The resulting estimate satisfies
$
    \hat f_x(r) > 0$ 
    and
    $\sum_{r\in\calR}\hat f_x(r)=1.
$
The choice
$\lambda_m=m^{-2}$ makes the smoothing displace the empirical frequency
by at most $|\calR|\lambda_m/M_x\le|\calR|m^{-3}$, which is negligible
relative to the sampling error of order $m^{-1/2}$ recorded in $r_m$
below; the correction therefore does not affect first-order asymptotic
optimality.

Next, we aim to control the error of our estimates $\hat f_x(r)$.
Let $\delta_m:=m^{-2}$ be a target failure probability, and set
\[
    t_m:=\sqrt{\frac{1}{2m}
       \log\!\left(\frac{4|\calR|}{\delta_m}\right)},
    \qquad
    r_m:=\min\left\{1,\;
       t_m+\frac{|\calR|\lambda_m}{m}\right\}.
\] 

\vspace{1mm}
\noindent
We then define the ``good pilot event''
\begin{equation}\label{eq:pilot-good-event}
\calE_{\pilot,m}
:=\left\{\max_{x\in\{0,1\},\,r\in\calR}
|\hat f_x(r)-f_x(r)|\le r_m\right\},
\end{equation}

\vspace{1mm}
\noindent
on which every plug-in report probability is within $r_m$ of its true
value, uniformly in $x$ and $r$. In
Appendix~\ref{appendix-sec:unknown-ai-pilot} we show that
$\Pp_h(\calE_{\pilot,m})\ge1-\delta_m$ for $h\in\{0,1\}$. 

Once we have $\hat f_0,\hat f_1$, we form the plug-in analogue of every
object in Sections~\ref{sec:model}--\ref{sec:upper-bound} that depends
on the report channel, by substituting $\hat f_x$ for $f_x$ wherever
$f_x$ appears and leaving all quantities that depend only on the known
primitives $(p_0,p_1,\cdata,\cAI,\cH)$ unchanged:
\begin{equation}\label{eq:pilot-hat-g-q}
\begin{split}
    \hat g_h(r):=p_h\hat f_1(r)+(1-p_h)\hat f_0(r),\qquad
    \hat q_h(r):=\frac{p_h\hat f_1(r)}{\hat g_h(r)},\qquad
    \hat\rho_h(x\mid r):=\hat q_h(r)^x(1-\hat q_h(r))^{1-x},\\
    \hat d^{(h)}(r):=
      \klbin(\hat q_h(r)\Vert\hat q_{1-h}(r)),\qquad
    % \label{eq:pilot-hat-g-q}\\
    \hat I_R^{(h)}:=\sum_{r\in\calR}\hat g_h(r)
       \log\frac{\hat g_h(r)}{\hat g_{1-h}(r)}.
\end{split}
\end{equation}
The full-label information $J_X^{(h)}$ is unchanged because $p_0,p_1$ remain
known. For $s\in[0,1]$, let
\begin{equation}\label{eq:pilot-hat-Psi}
\begin{aligned}
    \hat\Psi_h(s):=\max_{\eta:\calR\to[0,1]}\quad
       &\sum_{r\in\calR}\hat g_h(r)\eta(r)\hat d^{(h)}(r)\\
    \text{s.t.}\quad
       &\sum_{r\in\calR}\hat g_h(r)\eta(r)\le s,
\end{aligned}
\end{equation}
and define the plug-in sensing program
\begin{equation}\label{eq:pilot-hat-Gamma}
\begin{aligned}
    \hat\Gamma_h(T,N):=\min_{n_{\Hum},n_{\AI},n_{\esc}\ge0}\quad
       &\cH n_{\Hum}+\cAI n_{\AI}+\cH n_{\esc}\\
    \text{s.t.}\quad
       &n_{\Hum}+n_{\AI}\le N,\qquad 0\le n_{\esc}\le n_{\AI},\\
       &n_{\Hum}J_X^{(h)}+n_{\AI}\hat I_R^{(h)}
        +n_{\AI}\hat\Psi_h(n_{\esc}/n_{\AI})\ge T,
\end{aligned}
\end{equation}
with the same the convention as in \eqref{eq:Gamma-def} (i.e. $\hat\Psi_h(n_{\esc}/n_{\AI}) = 0$ whenever $n_{\AI} = n_{\esc} = 0$).
The plug-in increments are
\begin{equation}\label{eq:pilot-hat-increments}
    \hat\ell_R(r):=\log\frac{\hat g_1(r)}{\hat g_0(r)},
    \qquad
    \hat\ell_H(x,r):=\log
        \frac{\hat\rho_1(x\mid r)}{\hat\rho_0(x\mid r)}.
\end{equation}

\subsection{The Guarded Plug-In Policy}
\label{sec:unknown-ai-guarded-policy}

Section~\ref{sec:unknown-ai-pilot} estimates $f_0, f_1$ and replaces all
relevant known-$f$ objects with their plug-in versions, e.g.,
$\hat \Psi_h(s), \hat \Gamma_h(T, N)$, etc. Running
Algorithm~\ref{algo:seq-policy} on these objects alone, however, would not be
valid, since the statistic built from the plug-in increments
\eqref{eq:pilot-hat-increments} is not the exact log-likelihood ratio,
and the plug-in sensing program may overstate the information an
allocation delivers. To finish defining our policy, we therefore add
two guardrail terms, calibrated to the estimation error $r_m$ and hence
valid on $\calE_{\pilot,m}$: one inflates the information targets, the
other widens the stopping boundaries.

The first guard is an information-target guardrail, where instead of solving $\Gamma_h(T, N)$ at $T = T_{h, k}$ we add an extra buffer because the plug-in expected drift of the log-likelihood statistic may be overstated by the randomness in our estimates. Specifically, we use the following guarded information-target:
\begin{equation}\label{eq:pilot-guarded-targets}
    \widehat T^{\pilot}_{h,m,k}(N)
    :=T_{h,k}+(N+1)\varepsilon_m^{\mathrm{dr}},
\end{equation}
where $\varepsilon_m^{\mathrm{dr}}:=3|\calR| B_{\ell}r_m$, with  $B_{\ell}:=\max\{\log(p_1/p_0),\;
\log((1-p_0)/(1-p_1))\}$, is the upper bound on the deviation between the true-channel and plug-in-channel expected one-item drift of the plug-in log-likelihood statistic conditional on the pilot good event $\calE_{\pilot,m}$ (see details in Appendix~\ref{appendix-sec:uniform-pilot-perturbation}).
       
The guarded analogue of the buffered design value \eqref{eq:F-Delta-def}
is then
\begin{equation}\label{eq:pilot-guarded-design-value}
    \widehat F^{\pilot}_{m,k}(N)
    :=N\cdata+
      \max_{h\in\{0,1\}}
      \hat\Gamma_h(\widehat T^{\pilot}_{h,m,k}(N),N).
\end{equation}

\vspace{1mm}
\noindent
Call an integer
$N\ge N_{\fixed,\Hum}(\alpha_{2,k},\beta_{2,k})$ admissible if
$\hat\Gamma_h(\widehat T^{\pilot}_{h,m,k}(N),N)<\infty$ for both
$h\in\{0,1\}$, that is, if some allocation within a pool of size $N$
meets both guarded targets. If at least one admissible $N$ exists,
choose
\begin{equation}\label{eq:pilot-Nbar-choice}
    \widehat{\overline N}_{\main,m,k}
    \in\argmin_{N\in\mathbb Z_+, N\ge N_{\fixed,\Hum}(\alpha_{2,k},\beta_{2,k})}
       \widehat F^{\pilot}_{m,k}(N).
\end{equation}
and let $\widehat N_{\main,m,k}
    := \widehat{\overline N}_{\main,m,k} +1$. 
The remainder of the procedure is structurally identical to
Algorithm~\ref{algo:seq-policy}, with plug-in quantities and the
guardrails. Formally, for each $h$, choose an optimizer $(\hat n^*_{\Hum,h,m,k},\hat n^*_{\AI,h,m,k},
      \hat n^*_{\esc,h,m,k})$ of problem~\eqref{eq:pilot-hat-Gamma} at $N = \widehat{\overline N}_{\main,m,k}$ and $T = \widehat T^{\pilot}_{h,m,k}
          (\widehat{\overline N}_{\main,m,k})$. 
% \begin{equation}\label{eq:pilot-Gamma-optimizer}
% \begin{split}
%     (\hat n^*_{\Hum,h,m,k},\hat n^*_{\AI,h,m,k},
%       \hat n^*_{\esc,h,m,k})
% \in\argmin_{n_{\Hum},n_{\AI},n_{\esc}\ge0}\quad
%        &\cH n_{\Hum}+\cAI n_{\AI}+\cH n_{\esc}\\
%     \text{s.t.}\quad
%        &n_{\Hum}+n_{\AI}\le \widehat{\overline N}_{\main,m,k},\qquad 0\le n_{\esc}\le n_{\AI},\\
%        &n_{\Hum}J_X^{(h)}+n_{\AI}\hat I_R^{(h)}
%         +n_{\AI}\hat\Psi_h(n_{\esc}/n_{\AI})\ge \widehat T^{\pilot}_{h,m,k}
%           (\widehat{\overline N}_{\main,m,k}).
% \end{split}
% \end{equation}
Set 
\begin{equation}\label{eq:pilot-eta-human}
    \hat\eta^{\Hum}_{h,m,k}
    :=\begin{cases}
    \lceil\hat n^*_{\Hum,h,m,k}\rceil/(\lceil\hat n^*_{\Hum,h,m,k}\rceil
      +\lceil\hat n^*_{\AI,h,m,k}\rceil),
       &\lceil\hat n^*_{\Hum,h,m,k}\rceil
      +\lceil\hat n^*_{\AI,h,m,k}\rceil>0,\\
    0,&\lceil\hat n^*_{\Hum,h,m,k}\rceil
      +\lceil\hat n^*_{\AI,h,m,k}\rceil=0,
    \end{cases}
\end{equation}
and
\begin{equation*}\label{eq:pilot-shk}
    \hat s_{h,m,k}:=
    \begin{cases}
      \hat n^*_{\esc,h,m,k}/\hat n^*_{\AI,h,m,k},
         &\hat n^*_{\AI,h,m,k}>0,\\
      0,&\hat n^*_{\AI,h,m,k}=0.
    \end{cases}
\end{equation*}

\vspace{1mm}
\noindent
Choose $\hat\eta^{\esc}_{h,m,k}$ to be any optimizer of
\eqref{eq:pilot-hat-Psi} at $s=\hat s_{h,m,k}$; equivalently, sort the reports
by $\hat d^{(h)}(r)$ and use the fractional-knapsack rule of
Proposition~\ref{prop:escalation-solution}. In the dead zone, use
\begin{equation*}\label{eq:pilot-deadzone-rule}
    \hat\eta^{\Hum}_{*,m,k}
      :=\frac{\hat\eta^{\Hum}_{0,m,k}
              +\hat\eta^{\Hum}_{1,m,k}}2,
    \qquad
    \hat\eta^{\esc}_{*,m,k}(r)
      :=\frac{\hat\eta^{\esc}_{0,m,k}(r)
              +\hat\eta^{\esc}_{1,m,k}(r)}2.
\end{equation*}

If no admissible $N$ exists, so that the guarded outer problem
\eqref{eq:pilot-Nbar-choice} is infeasible, we fall back on the safe
default $\widehat N_{\main,m,k}:=
       N_{\fixed,\Hum}(\alpha_{2,k},\beta_{2,k})$
% \begin{equation}\label{eq:pilot-safe-default-N}
%     \widehat N_{\main,m,k}:=
%        N_{\fixed,\Hum}(\alpha_{2,k},\beta_{2,k}),
% \end{equation}
to query a human on every item.

Next, we define the second guardrail, on the crossing boundaries $a_k$
and $-b_k$. The policy now accumulates $\hat S$, built from the
plug-in increments \eqref{eq:pilot-hat-increments}, rather than the
exact log-likelihood ratio $S_t$, so crossing $a_k$ no longer certifies
the level $\alpha_{1,k}$; we therefore widen the boundaries by enough to
cover the gap $|\hat S-S|$.  To that end, let $\varepsilon_m^{\mathrm{LR}}$ defined in Appendix~\ref{sec-appendix:likelihood-ratio-bound} \eqref{Appendix-eq:pilot-eps-LR} be an upper bound on $|\hat\ell_R(r)-\ell_R(r)|$ on $\calE_{\pilot,m}$ (see details in Appendix~\ref{sec-appendix:likelihood-ratio-bound}). 
% \begin{equation}\label{eq:pilot-phi}
%     \phi_{\mathrm{rep}}(w):=
%     \log\frac{(1-p_1)(1-w)+p_1w}
%                    {(1-p_0)(1-w)+p_0w},\qquad 0\le w\le1,
% \end{equation}
% and
% \begin{equation}\label{eq:pilot-weight-intervals}
%     \underline w_m(r):=
%        \frac{\underline f_{1,m}(r)}
%             {\underline f_{1,m}(r)+\overline f_{0,m}(r)},
%     \qquad
%     \overline w_m(r):=
%        \frac{\overline f_{1,m}(r)}
%             {\overline f_{1,m}(r)+\underline f_{0,m}(r)},
% \end{equation}
% \vspace{1mm}
% \noindent
% where $\underline f_{x,m}(r):=[\hat f_x(r)-r_m]_+$ and
% $\overline f_{x,m}(r):=\min\{1,\hat f_x(r)+r_m\}$ are the lower and upper
% bounds on $f_x(r)$ on the pilot good event $\calE_{\pilot,m}$. We show in Appendix~\ref{appendix-sec:pilot-feasibility-proof} that $\varepsilon_m^{\mathrm{LR}}$ is the upper bound on $|\hat\ell_R(r)-\ell_R(r)|$ on $\calE_{\pilot,m}$, where 
% \begin{equation}\label{eq:pilot-eps-LR}
%     \varepsilon_m^{\mathrm{LR}}
%     :=\max_{r\in\calR}\max\left\{
%        \hat\ell_R(r)-\phi_{\mathrm{rep}}(\underline w_m(r)),
%        \phi_{\mathrm{rep}}(\overline w_m(r))-\hat\ell_R(r)\right\}.
% \end{equation}
If the guarded plug-in programs are feasible then we set the guardrail $\omega_{m,k}:=
      \widehat N_{\main,m,k}\varepsilon_m^{\mathrm{LR}}$, 
and the two stopping
boundaries are given by 
$$
a_k+\omega_{m,k} \text{ and } -(b_k+\omega_{m,k}).
$$
If the guarded plug-in programs are infeasible we set the guardrail
$\omega_{m,k}=0$, since the human-only safe default accumulates the exact
increments $\ell_X$, which depend only on the known $p_0,p_1$.
If neither boundary is reached, we use a fallback stage and reveal
every still-unknown label in the fixed set 
$$
\hat \calJ_{\Hum,k}:= \{1,\ldots,N_{\fixed,\Hum}(\alpha_{2,k},\beta_{2,k})\}
$$
and apply the exact randomized full-label Neyman--Pearson test at levels
$(\alpha_{2,k},\beta_{2,k})$. Denote the resulting policy by
$\hat\pi_{m,k}$ corresponding to $m$. 
\begin{table}[!t]
\centering\small
\caption{Parameters of the guarded plug-in policy $\hat\pi_{m,k}$ in Algorithm~\ref{algo:pilot-policy}.}
\label{tab:pilot-tuning-parameters}
\begin{tabularx}{\textwidth}{@{}l >{\raggedright\arraybackslash}p{0.23\textwidth} >{\raggedright\arraybackslash}X@{}}
\hline\hline
Parameter & Meaning & Value\\
\hline
$f_{\fb,k}$, $z_k$
  & Fallback-budget fraction and hypothesis boundary
  & Same as in Table~\ref{tab:tuning-parameters}\\[4pt]
% $\lambda_m$
%   & $\lambda_m=m^{-2}$, the smoothing mass added to the empirical report frequencies in $\hat f_x$ \eqref{eq:pilot-estimator}\\[4pt]
% $r_m$
%   & $r_m=\min\{1,\,t_m+|\calR|\lambda_m/m\}$ with $t_m=\sqrt{(2m)^{-1}\log(4|\calR|m^{2})}$, the estimation radius the two guardrails are calibrated to\\[4pt]
% $\varepsilon_m^{\mathrm{dr}}$
%   & $\varepsilon_m^{\mathrm{dr}}=3|\calR|B_{\ell}r_m$ with $B_{\ell}=\max\{\log(p_1/p_0),\log((1-p_0)/(1-p_1))\}$, the drift guardrail\\[4pt]
% $\varepsilon_m^{\mathrm{LR}}$
%   & Any upper bound on $|\hat\ell_R(r)-\ell_R(r)|$ valid on $\calE_{\pilot,m}$; we use \eqref{Appendix-eq:pilot-eps-LR}\\[4pt]
% $\widehat T^{\pilot}_{h,m,k}(N)$
%   & $\widehat T^{\pilot}_{h,m,k}(N)=T_{h,k}+(N+1)\varepsilon_m^{\mathrm{dr}}$ (see \eqref{eq:pilot-guarded-targets})\\[4pt]
$\widehat N_{\main,m,k}$
  & Fixed pool size
  & $\widehat N_{\main,m,k}=\widehat{\overline N}_{\main,m,k}+1$, where $\widehat{\overline N}_{\main,m,k}\in\argmin\bigl\{\widehat F^{\pilot}_{m,k}(N):N\in\Z_+,\,N\ge N_{\fixed,\Hum}(\alpha_{2,k},\beta_{2,k})\bigr\}$ and $\widehat F^{\pilot}_{m,k}(N)=N\cdata+\max_h\hat\Gamma_h(\widehat T^{\pilot}_{h,m,k}(N),N)$ (see \eqref{eq:pilot-guarded-targets}--\eqref{eq:pilot-Nbar-choice}); if problem~\eqref{eq:pilot-Nbar-choice} is not feasible, let $\widehat N_{\main,m,k}=N_{\fixed,\Hum}(\alpha_{2,k},\beta_{2,k})$\\[4pt]
$a_k + \omega_{m,k}, b_k + \omega_{m,k}$
  & Guarded stopping boundaries
  & See $a_k, b_k$ in \eqref{eq:boundaries}; 
  $\omega_{m,k}=\widehat N_{\main,m,k}\varepsilon_m^{\mathrm{LR}}$ when problem~\eqref{eq:pilot-Nbar-choice} is feasible, and $\omega_{m,k}=0$ otherwise\\[4pt]
$\hat\eta^{\Hum}_{h,m,k}, h\in\{0,1\}$
  & The probability to directly query human
  & See \eqref{eq:pilot-eta-human} if problem~\eqref{eq:pilot-Nbar-choice} is feasible; otherwise let $\hat\eta^{\Hum}_{h,m,k} = 1$\\[4pt]
$\hat\eta^{\esc}_{h,m,k}(\cdot), h\in\{0,1\}$
  & The probability to escalate
to human given that the item has already been AI-queried before
  & Optimal solution of \eqref{eq:pilot-hat-Psi} with $\hat s_{h,m,k}=\hat n^*_{\esc,h,m,k}/\hat n^*_{\AI,h,m,k}$ if problem~\eqref{eq:pilot-Nbar-choice} is feasible; otherwise let $\hat\eta^{\esc}_{h,m,k}(\cdot) = 0$\\[4pt]
$\hat\eta^{\Hum}_{*,m,k}$, $\hat\eta^{\esc}_{*,m,k}(\cdot)$
  & Dead-zone rule
  & Averages of the $h=0$ and $h=1$ rules, as in \eqref{eq:deadzone-rule}\\
\hline\hline
\end{tabularx}
\end{table}

We summarize the calculations of parameters needed for policy $\hat\pi_{m,k}$ in Table~\ref{tab:pilot-tuning-parameters}, and give the policy for estimated $f_0, f_1$ in
Algorithm~\ref{algo:pilot-policy}. We then show the feasibility and first-order optimality, both conditional on
the good pilot event $\calE_{\pilot,m}$.

\begin{algorithm}[h]
\small
\caption{Pilot-Calibrated Guarded Plug-In Policy\label{algo:pilot-policy} ($\hat\pi_{m,k}$)}
\DontPrintSemicolon
\KwIn{Primitives $(p_0,p_1,\cdata,\cAI,\cH)$, error targets
$(\alpha_k,\beta_k)$, admissible parameters
$(\Delta_k,f_{\fb,k},z_k)$, and pilot data $\calD_m$.}
Compute the error-budget split \eqref{eq:budget-split-alpha}–\eqref{eq:budget-split-beta}, boundaries $(a_k,b_k)$ in \eqref{eq:boundaries}, and
buffered targets $(T_{1,k},T_{0,k}) = (a_k + \Delta_k, b_k + \Delta_k)$. \; % in \eqref{eq:buffered-targets}.\;
Compute $\hat f_0,\hat f_1$, all plug-in quantities in
\eqref{eq:pilot-hat-g-q}--\eqref{eq:pilot-hat-increments}, and
$r_m,\varepsilon_m^{\mathrm{LR}},
\varepsilon_m^{\mathrm{dr}}$.\;
Form the guarded targets and outer objective in
\eqref{eq:pilot-guarded-targets}--\eqref{eq:pilot-guarded-design-value}.\;
\eIf{the guarded outer problem \eqref{eq:pilot-Nbar-choice} is feasible}{
  Solve \eqref{eq:pilot-Nbar-choice} for $\widehat{\overline N}_{\main,m,k}$ and set $\widehat N_{\main,m,k}
    \leftarrow \widehat{\overline N}_{\main,m,k} +1$; compute
  $\hat\eta^{\Hum}_{h,m,k}$, $\hat\eta^{\esc}_{h,m,k}$, and their dead-zone
  averages; set
  $\omega_{m,k}\leftarrow
     \widehat N_{\main,m,k}\varepsilon_m^{\mathrm{LR}}$.\;
}{
  Set $\widehat N_{\main,m,k}\leftarrow
       N_{\fixed,\Hum}(\alpha_{2,k},\beta_{2,k})$ and $\omega_{m,k}\leftarrow0$.\;
  For every $j\in\{0,1,*\}$ and $r\in\calR$, set
  $\hat\eta^{\Hum}_{j,m,k}\leftarrow1$ and
  $\hat\eta^{\esc}_{j,m,k}(r)\leftarrow0$.\;
}
Set the guarded boundaries to $a_k+\omega_{m,k}$ and
$-(b_k+\omega_{m,k})$.\;
Acquire $\widehat N_{\main,m,k}$ items and initialize $\hat S\leftarrow0$.\;
\For{$i=1$ \KwTo $\widehat N_{\main,m,k}$}{
  Set $J\leftarrow1$ if $\hat S>z_k$, $J\leftarrow0$ if
  $\hat S<-z_k$, and $J\leftarrow *$ otherwise.\;
  With probability $\hat\eta^{\Hum}_{J,m,k}$, query a human and add
  $\ell_X(X_i)$ to $\hat S$.\;
  Otherwise query the AI and add $\hat\ell_R(R_i)$; then, with probability
  $\hat\eta^{\esc}_{J,m,k}(R_i)$, query a human and add
  $\hat\ell_H(X_i,R_i)$.\;
  After every update, reject if $\hat S\ge a_k+\omega_{m,k}$ and accept if
  $\hat S\le-(b_k+\omega_{m,k})$.\;
}
If neither boundary is crossed, execute the fixed-set full-label fallback on $\hat \calJ_{\Hum,k}$.\;
\end{algorithm}

\vspace{1mm}
\begin{theorem}[Pilot-conditional exact validity]
\label{thm:pilot-validity}
Suppose Assumptions~\ref{ass:iid} and \ref{ass:pilot-sample} hold. For every $m\ge2$, every $k\ge1$, and every choice of the admissible parameters
$(\Delta_k,f_{\fb,k},z_k)$, the following hold almost surely on
$\calE_{\pilot,m}$:
\begin{equation*}\label{eq:pilot-conditional-validity}
    \Pp^{\hat\pi_{m,k}}_0
       (\delta^{\hat\pi_{m,k}}=1\mid\calD_m)
       \le\alpha_k,
    \qquad
    \Pp^{\hat\pi_{m,k}}_1
       (\delta^{\hat\pi_{m,k}}=0\mid\calD_m)
       \le\beta_k.
\end{equation*}
\end{theorem}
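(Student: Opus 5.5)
The plan is to condition on the pilot data $\calD_m$ and treat $\hat\pi_{m,k}$ as a fixed, pilot-measurable policy acting on the main-stage items. By Assumption~\ref{ass:pilot-sample}, the main-stage pairs $(X_i,R_i)$ and the policy seeds are independent of $\calD_m$. So conditionally on $\calD_m$ they keep their $H_h$ law from Assumption~\ref{ass:iid}. Every pilot-dependent quantity is then a deterministic constant: $\hat\eta$, $\widehat N_{\main,m,k}$, $\omega_{m,k}$, the increments $\hat\ell_R,\hat\ell_H$, and the choice between the feasible and the safe-default branch. The argument then copies the three-route decomposition used for Theorem~\ref{thm:feasibility}: upper crossing $E_+$, lower crossing $E_-$, and fallback $E_{\fb}$. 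I would show
\[
\Pp_0(E_+\mid\calD_m)\le\alpha_{1,k},\qquad \Pp_1(E_-\mid\calD_m)\le\beta_{1,k},
\]
and show that the fallback contributes at most $\alpha_{2,k}$ and $\beta_{2,k}$ respectively. Summing gives $\alpha_k$ and $\beta_k$.

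\textbf{Step 1 (exact statistic versus plug-in statistic).} Let $S$ be the statistic built from the exact increments $\ell_X,\ell_R,\ell_H$ along the same realized transcript. The key algebraic observation is that $\hat g_h(r)\hat\rho_h(x\mid r)=p_h^x(1-p_h)^{1-x}\hat f_x(r)$, and likewise with the true $f_x$. Hence
\[
\hat\ell_R(r)+\hat\ell_H(x,r)=\ell_X(x)=\ell_R(r)+\ell_H(x,r).
\]
So an escalated item contributes no error once both of its increments have been added. The only discrepancy on an item comes from its AI increment, which is either non-escalated or still pending escalation, and this is at most one increment per item. On $\calE_{\pilot,m}$, $|\hat\ell_R-\ell_R|\le\varepsilon_m^{\mathrm{LR}}$ by the definition of $\varepsilon_m^{\mathrm{LR}}$. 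Since at most $\widehat N_{\main,m,k}$ items are touched, this gives $|\hat S_t-S_t|\le\omega_{m,k}$ at every intermediate update time $t$. In the safe-default branch only $\ell_X$ is used, so $\hat S=S$ and $\omega_{m,k}=0$ is consistent. Consequently $\hat S\ge a_k+\omega_{m,k}$ implies $S\ge a_k$, and $\hat S\le-(b_k+\omega_{m,k})$ implies $S\le-b_k$.

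\textbf{Step 2 (Wald change of measure).} I would verify that $e^{S_t}$ equals the likelihood ratio $dP_1/dP_0$ of the transcript up to update $t$, conditionally on $\calD_m$. The action probabilities depend only on the past plug-in statistic, the pilot, and seeds whose law does not depend on $h$, so they cancel in the ratio. Each observation's conditional law is, respectively, $\Bern(p_h)$, $g_h$, or $\rho_h(\cdot\mid r)$, which produces the increments $\ell_X,\ell_R,\ell_H$. Let $\tau$ be the stopping update. Then
\[
\Pp_0(E_+\mid\calD_m)=\E_1\!\left[e^{-S_\tau}\ind_{E_+}\mid\calD_m\right]\le e^{-a_k}=\alpha_{1,k},
\]
and symmetrically $\Pp_1(E_-\mid\calD_m)\le e^{-b_k}=\beta_{1,k}$. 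The main technical care point is making this identity rigorous for a stopping time that can occur mid-item, between the AI query and the human query on the same item. I would handle it by indexing the filtration by query epochs rather than by items, exactly as in Definition~\ref{def:admissible_policy}.

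\textbf{Step 3 (fallback).} First, $\widehat N_{\main,m,k}\ge N_{\fixed,\Hum}(\alpha_{2,k},\beta_{2,k})$ holds in both branches, so $\hat\calJ_{\Hum,k}$ lies inside the pool. The labels $(X_i)_{i\in\hat\calJ_{\Hum,k}}$ form a fixed-index i.i.d.\ $\Bern(p_h)$ vector, whether or not they were revealed earlier. The Neyman--Pearson test uses an independent randomizer. Therefore
\[
\Pp_0(E_{\fb}\cap\{\text{reject}\}\mid\calD_m)\le\E_0[\phi^*]\le\alpha_{2,k},
\]
and similarly $\Pp_1(E_{\fb}\cap\{\text{accept}\}\mid\calD_m)\le\beta_{2,k}$. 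Adding the mutually exclusive routes yields the claim almost surely on $\calE_{\pilot,m}$. I expect Step 1 to be the main obstacle, specifically the telescoping identity and the accounting for a pending AI increment at stopping. Step 2 is the classical Wald argument once the transcript likelihood is correctly identified.
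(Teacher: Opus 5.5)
Your proposal is correct and follows essentially the same route as the paper. You condition on $\calD_m$, use the cancellation identity $\hat\ell_R+\hat\ell_H=\ell_X=\ell_R+\ell_H$ to get $|\hat S_t-S_t|\le\widehat N_{\main,m,k}\varepsilon_m^{\mathrm{LR}}=\omega_{m,k}$, apply Wald's likelihood-ratio bound to the exact statistic, and treat the fixed-set Neyman--Pearson fallback and the route decomposition as in Theorem~\ref{thm:feasibility}; your change-of-measure identity $\Pp_0(E_+\mid\calD_m)=\E_1[e^{-S_\tau}\ind_{E_+}\mid\calD_m]$ is the same bound the paper obtains by optional stopping for the martingale $e^{S_t}$ under $H_0$.
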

\vspace{1mm}

Theorem~\ref{thm:pilot-validity} is the plug-in counterpart of
Theorem~\ref{thm:feasibility}: the error targets are met exactly, at every
pilot size and without any rate condition, provided the pilot realization is
good. Averaging over pilot realizations removes the conditioning at the cost
of the failure probability of $\calE_{\pilot,m}$.

\vspace{1mm}
\begin{corollary}[Unconditional validity]
\label{cor:pilot-unconditional-validity}
Suppose Assumptions~\ref{ass:iid} and \ref{ass:pilot-sample} hold. For every $m\ge2$ and $k\ge1$, and every choice of the admissible parameters
$(\Delta_k,f_{\fb,k},z_k)$,
\begin{equation*}\label{eq:pilot-unconditional-validity}
    \Pp^{\hat\pi_{m,k}}_0(\delta^{\hat\pi_{m,k}}=1)
       \le\alpha_k+\delta_m,
    \qquad
    \Pp^{\hat\pi_{m,k}}_1(\delta^{\hat\pi_{m,k}}=0)
       \le\beta_k+\delta_m.
\end{equation*}
\end{corollary}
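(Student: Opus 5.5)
The plan is to average the pilot-conditional guarantee of Theorem~\ref{thm:pilot-validity} over the pilot realization, splitting on the good pilot event $\calE_{\pilot,m}$. For type-I error, write
\[
\Pp^{\hat\pi_{m,k}}_0(\delta^{\hat\pi_{m,k}}=1)
=\E_0\!\left[\Pp^{\hat\pi_{m,k}}_0(\delta^{\hat\pi_{m,k}}=1\mid\calD_m)\right],
\]
using the tower property. Since $\calE_{\pilot,m}$ is a function of $(\hat f_0,\hat f_1)$ and hence $\sigma(\calD_m)$-measurable, we can decompose the expectation as $\E_0[\,\cdot\,\ind_{\calE_{\pilot,m}}]+\E_0[\,\cdot\,\ind_{\calE_{\pilot,m}^c}]$.

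On $\calE_{\pilot,m}$, Theorem~\ref{thm:pilot-validity} bounds the conditional probability by $\alpha_k$ almost surely. On the complement, we bound it trivially by $1$. This gives
\[
\Pp^{\hat\pi_{m,k}}_0(\delta^{\hat\pi_{m,k}}=1)\le\alpha_k\,\Pp_0(\calE_{\pilot,m})+\Pp_0(\calE_{\pilot,m}^c)\le\alpha_k+\Pp_0(\calE_{\pilot,m}^c).
\]
The type-II bound under $H_1$ follows by the identical argument.

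It remains to show $\Pp_h(\calE_{\pilot,m}^c)\le\delta_m$ for each $h$. The paper states this is proved in Appendix~\ref{appendix-sec:unknown-ai-pilot}, and I would invoke it as a prior result. If it must be argued here, the sketch runs as follows.

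First, condition on the label sequence of the pilot stream. By Assumption~\ref{ass:pilot-sample}, the first $m$ reports in each label-$x$ stratum are i.i.d.\ $f_x$, independent of the stopping rule $M_{\tot}$, which depends only on the labels. There is one subtlety: $M_x$ may exceed $m$. I would handle this by noting that, conditional on $M_x=n\ge m$, the stratum reports are i.i.d.\ with $n\ge m$ samples.

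Next, apply Hoeffding's inequality to the empirical frequency $M_x(r)/M_x$. With the $t_m$ defined in the paper, this gives a tail probability of at most $2\exp(-2mt_m^2)=\delta_m/(2|\calR|)$ for each $(x,r)$.

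Then bound the smoothing displacement. We have $|\hat f_x(r)-M_x(r)/M_x|\le|\calR|\lambda_m/M_x\le|\calR|\lambda_m/m$, so the estimation radius $r_m=\min\{1,\,t_m+|\calR|\lambda_m/m\}$ covers the deviation. The cap at $1$ is harmless because every deviation between probabilities is at most $1$.

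Finally, a union bound over the $2|\calR|$ pairs $(x,r)$ yields $\Pp_h(\calE_{\pilot,m}^c)\le\delta_m$.

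The only delicate point is measurability. The conditional probability given $\calD_m$ must be well defined as a function of the pilot data alone, with the main-stage items and seeds independent of the pilot. This is guaranteed by the independence clause of Assumption~\ref{ass:pilot-sample}, so the tower-property step above is justified. The rest is routine.
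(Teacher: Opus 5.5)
Your proposal is correct and follows the paper's proof essentially verbatim. Both use the tower property over $\calD_m$, apply Theorem~\ref{thm:pilot-validity} on $\calE_{\pilot,m}$, bound the conditional error by $1$ on the complement, and invoke Lemma~\ref{lem:pilot-concentration} for $\Pp_h(\calE_{\pilot,m}^c)\le\delta_m$. Your sketch of that lemma also matches the appendix argument: Hoeffding conditional on the labels with $M_x\ge m$, the smoothing displacement $|\calR|\lambda_m/m$, and a union bound over the $2|\calR|$ pairs.
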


For first-order cost optimality of the policy, we focus on the balanced logarithmic regime under Assumption~\ref{ass:balanced-error-regime}. Under this regime, $\min\{T_{0,k},T_{1,k}\}=\Theta(L_k)$, so factors of
$\min\{T_{0,k},T_{1,k}\}/L_k$ are absorbed into primitive constants below. Theorem~\ref{thm:pilot-first-order} then states the convergence rate of $\hat\pi_{m,k}$. 
\vspace{1mm}
\begin{theorem}[First-order optimality with pilot-estimated AI accuracy]
\label{thm:pilot-first-order}
Suppose Assumptions~\ref{ass:iid}-\ref{ass:pilot-sample} hold, $f_{\fb,k}$ is bounded away
from zero and one, and $(\Delta_k,z_k)$ satisfy
\eqref{eq:delta-admissible} and \eqref{eq:z-admissible}. Let
$m=m_{\pilot,k}\to\infty$ satisfy
\begin{equation}\label{eq:pilot-main-rate-condition}
    L_kr_{m_{\pilot,k}}=o(\Delta_k).
\end{equation}
Then, on $\calE_{\pilot,m_{\pilot,k}}$ and for all sufficiently large $k$, 
\begin{align*}
&\frac{\max_h
    \E_h^{\hat\pi_{m_{\pilot,k},k}}
       [C^{\hat\pi_{m_{\pilot,k},k}}\mid\calD_{m_{\pilot,k}}]}
   {\LB_k}
   = 1+o(1).                       
\end{align*}
\end{theorem}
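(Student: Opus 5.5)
The plan is to reduce Theorem~\ref{thm:pilot-first-order} to the known-$f$ analysis behind Theorem~\ref{thm:first-order}(ii). We treat the pilot data $\calD_m$ as fixed on $\calE_{\pilot,m}$. Conditionally on $\calD_m$, the policy $\hat\pi_{m,k}$ is a SCALE-type policy with deterministic parameters. The main items are independent of the pilot by Assumption~\ref{ass:pilot-sample}, so they still satisfy Assumption~\ref{ass:iid} under the true channel $f$. The upper bound then rests on three comparisons. (a) The guarded design value $\widehat F^{\pilot}_{m,k}$ is at most $\LB_k+o(L_k)$. (b) Under the true channel, the plug-in statistic $\hat S$ drifts toward the correct guarded boundary at a rate that still meets the buffered target, so that reaching the fallback is a large-deviation event. (c) The extra cost from the widened boundaries $\omega_{m,k}$ and the target inflation $(N+1)\varepsilon^{\mathrm{dr}}_m$ is $o(L_k)$.

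\textbf{Step 1: perturbation bounds on $\calE_{\pilot,m}$.} On the good event, every plug-in quantity is within $O(r_m)$ of its true counterpart, with constants depending only on the primitives. This covers $\hat g_h$, $\hat q_h$, $\hat d^{(h)}$, $\hat I_R^{(h)}$, $\hat\Psi_h$ uniformly in $s$, and the increments $\hat\ell_R$, $\hat\ell_H$. All true probabilities are bounded away from zero by Assumption~\ref{ass:score-model} and $p_h\in(0,1)$, so logarithms are Lipschitz there. Since $\hat\Psi_h$ is an LP value whose coefficients move by $O(r_m)$, it moves by $O(r_m)$ as well. Two consequences follow. First, $\varepsilon^{\mathrm{LR}}_m=O(r_m)$. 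Second, the design value satisfies $\widehat F^{\pilot}_{m,k}(N)\le F^\Delta_k(N)+O(N r_m)$, because plug-in infeasibility and target inflation can be absorbed by shifting $O(Nr_m)$ of information to direct human queries. Direct human queries use the exact $J_X^{(h)}$, which forces the pool size to $O(L_k)$. Combining this with Lemma~\ref{lem:LB-perturb}-type continuity of $\LB$ in the targets, which the first-order proof already uses to pass from $T_{h,k}=a_k+\Delta_k$ back to $\LB_k$, gives $\widehat F^{\pilot}_{m,k}(\widehat{\overline N}_{\main,m,k})\le\LB_k+O(\Delta_k)+O(L_kr_m)=\LB_k+o(L_k)$. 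Here condition \eqref{eq:pilot-main-rate-condition} is what makes $L_kr_m=o(\Delta_k)=o(L_k)$. The same estimate shows the outer problem is feasible for large $k$, so the safe default branch is never used.

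\textbf{Step 2: drift under the true channel.} This is the main obstacle. Under $H_h$ with the direction-$h$ rule, the true conditional expected one-item increment of $\hat S$ differs from the plug-in drift by at most $\varepsilon^{\mathrm{dr}}_m$; this is exactly the appendix bound that defines $\varepsilon^{\mathrm{dr}}_m$. The plug-in allocation meets $\widehat T^{\pilot}_{h,m,k}=T_{h,k}+(N+1)\varepsilon^{\mathrm{dr}}_m$ over at most $N+1$ items. Hence the true cumulative drift over the planned items is at least $T_{h,k}$, which equals $a_k+\Delta_k$ (or $b_k+\Delta_k$). This exceeds the guarded boundary $a_k+\omega_{m,k}$ by $\Delta_k-\omega_{m,k}=\Delta_k-O(L_kr_m)=(1-o(1))\Delta_k$, again by \eqref{eq:pilot-main-rate-condition}. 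I would then rerun the occupation, dead-zone and fallback arguments of Section~\ref{sec:upper-bound} (Appendix~\ref{sec-appendix:LB-growth}) with the effective buffer $\Delta'_k:=\Delta_k-\omega_{m,k}$. Those arguments need only the following: bounded increments, a drift lower bound in the correct direction once $|\hat S|>z_k$, and a buffer with $\Delta_k'^2/L_k\to\infty$ and $(z_k+1)/\Delta'_k\to0$. All of these persist. Increments remain bounded since $\hat f$ is bounded below on the good event for large $m$. Azuma/Hoeffding then gives a fallback probability of order $\exp(-c\Delta_k'^2/L_k)\to0$. The delicate point, and where I expect the real work, is checking that the wrong-direction excursions (time spent with $J_i\ne h$) and the dead zone still cost $O(z_k+1)$ plus fluctuations under the perturbed increments.

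\textbf{Step 3: assemble the cost.} The expected cost decomposes into three parts:
\begin{align*}
\E_h[C\mid\calD_m]\le{}& \widehat N_{\main,m,k}\cdata+\Gamma\text{-cost of sensing up to crossing}+\Pp_h(E_{\fb}\mid\calD_m)\,\cH N_{\fixed,\Hum}(\alpha_{2,k},\beta_{2,k}).
\end{align*}
The sensing cost up to crossing is the plug-in optimizer cost $\hat\Gamma_h$ plus $O(z_k+1+\Delta_k)$, by Wald-type identities for the stopping time. The fallback term is $O(L_k)\cdot o(1)$. Together with Step 1 this gives $\max_h\E_h[C\mid\calD_m]\le\LB_k+o(L_k)$. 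Dividing by $\LB_k=\Theta(L_k)$, from Theorem~\ref{thm:first-order}(i), yields the ratio $1+o(1)$; the matching bound $\ge 1-o(1)$ is not needed beyond noting that the ratio cannot fall below $1$ asymptotically relative to $C^*$.
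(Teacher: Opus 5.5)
Your architecture matches the paper's. You condition on $\calD_m$ and repair the plug-in plan with $O(L_kr_m+1)$ extra direct-human items (Lemma~\ref{lem:pilot-outer-stability}). You use the $\varepsilon^{\mathrm{dr}}_m$ drift perturbation so the rounded plug-in block still has true drift at least $T_{h,k}$. You then rerun the known-$f$ occupation, fallback and cost arguments with the effective buffer $\Delta_k-\omega_{m,k}$.

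\textbf{The gap.} It sits exactly at the step you flag as delicate, and your list of what those arguments ``need only'' is too weak. The occupation bound (Lemma~\ref{lem:direction-tracking}) needs a \emph{uniform} positive drift toward the true boundary for \emph{every} deployed rule, not only the correct one. So does the Wald-type cost step (Lemma~\ref{lem:main-cost}), which discards wrong and dead-zone items because their drift is nonnegative. This includes the wrong-direction rule $\hat\eta_{1-h,m,k}$, used when $\hat S$ is on the wrong side, and the dead-zone rule $\hat\eta_{*,m,k}$. If the wrong rule had near-zero drift under $H_h$, the statistic could linger below $z_k$ for order $L_k$ items in expectation. That destroys the $O(z_k+1)$ occupation bound and with it first-order optimality. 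Your Steps 1--2 do not supply this. The plan does not control the wrong rule's plug-in drift under $H_h$, and under the true channel that drift is known only up to $\pm\varepsilon^{\mathrm{dr}}_m$, so it could be slightly negative.

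\textbf{The missing idea: a reverse comparison.} Work on $\calE_{\pilot,m}$ with $r_m\le f_\star/4$.
\begin{itemize}
\item All plug-in laws $\hat g_h$ and $\hat\rho_h(\cdot\mid r)$, and the label laws, have coordinates bounded below by a primitive constant $q_\star>0$.
\item Pinsker's inequality together with the $\chi^2$ upper bound then gives $D(P\Vert Q)\ge (q_\star/2)\,D(Q\Vert P)$ for each component pair. Also $\hat g_h(r)/\hat g_{1-h}(r)\ge q_\star$.
\item Hence the plug-in drift of $\hat\eta_{1-h,m,k}$ under $H_h$ is at least a primitive constant times its plug-in drift under $H_{1-h}$. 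The latter is at least $T_{1-h,k}/\widehat N_{\main,m,k}\ge c_{\mathrm{plan}}$.
\item The averaged dead-zone rule has at least $1/4$ of the correct rule's drift.
\item Subtracting $\varepsilon^{\mathrm{dr}}_m\to0$ gives a uniform true-channel drift $c_\mu>0$ for all three rules (Lemma~\ref{lem:pilot-increment-conditions}).
\end{itemize}
With this, the potential-function argument gives $\E[W\mid\calD_m]\le C(z_k+1)$, and the rest of your plan goes through.

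\textbf{Two smaller omissions.}
\begin{itemize}
\item Step 3: the true-channel expected per-item cost differs from the plug-in cost entering $\hat\Gamma_h$, because escalation frequencies are taken under $g_h$ rather than $\hat g_h$. This adds $\varepsilon^{\mathrm{cost}}_m\widehat N_{\main,m,k}=O(L_kr_m)$. It is harmless under \eqref{eq:pilot-main-rate-condition}, but it must be accounted for.
\item Step 1: the repair must enlarge the pool to $N+R$ with $R=O(Nr_m+1)$. A bound at the same $N$ can fail when the capacity constraint binds.
\end{itemize}
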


\vspace{1mm}
%We refer readers to Appendix~\ref{appendix-sec:pilot-first-order-proof} for theproof of Theorem~\ref{thm:pilot-first-order}. 
In
words, not knowing $f_0,f_1$ does not impact our first-order asymptotic optimality once the pilot is
large enough relative to $L_k$, though the cost of collecting that pilot is not
yet charged; we return to it in Section~\ref{sec:unknown-ai-rates}.

% {\color{red} ST: Is it necessary to define $\Delta^{\eff}_{m,k}$ in the main paper? Similarly, we probably don't need to show the long summation in (\ref{eq:pilot-ratio-bound}) in the main paper and just jump directly to 1 + o(1)?}

% {\color{red} ST: this subsection is short. Maybe we can combine with the validity result in the previous subsection? Alternatively, we move the validity result here to this subsection? So 6.2 covers the policy and 6.3 covers both the validity and asympt optimality.}

% {\color{red} ST: as much as possible, notations that do not receive much attention/discussions in the main paper can perhaps be deferred to appendix}

\subsection{Pilot-Size Tradeoffs and Calibration Cost}
\label{sec:unknown-ai-rates}
In this subsection, we further unpack Theorem~\ref{thm:pilot-first-order} and condition~\eqref{eq:pilot-main-rate-condition}. By construction, $r_m=O(\sqrt{\log{(m)}/m})$, and thus condition
\eqref{eq:pilot-main-rate-condition} is implied by
\begin{equation}\label{eq:pilot-rate-equivalent}
    \frac{m_{\pilot,k}}{\log m_{\pilot,k}}
       \left(\frac{\Delta_k}{L_k}\right)^2\longrightarrow\infty.
\end{equation}
Therefore, there is a tradeoff between pilot size and a larger statistical buffer. We summarize this and give specific conditions on tuning parameters to maintain the first-order optimality in Theorem~\ref{thm:pilot-first-order} in the following corollary:

\vspace{1mm}
\begin{corollary}[Concrete pilot and buffer rates]
\label{cor:pilot-rates}
Suppose Assumptions~\ref{ass:iid}-\ref{ass:pilot-sample} hold. Assume $f_{\fb,k}=1/2$ and $z_k=1$. Let $\Delta_k=L_k^{2/3}$ and 
$m_{\pilot,k}=\lceil L_k^{2/3}(\log L_k)^2\rceil$. Then we have that $m_{\pilot,k}/\log m_{\pilot,k}\gg L_k^{2/3}$, and on \(\calE_{\pilot,m_{\pilot,k}}\) and for all sufficiently large \(k\),
$$
\frac{\max_h
    \E_h^{\hat\pi_{m_{\pilot,k},k}}
       [C^{\hat\pi_{m_{\pilot,k},k}}\mid\calD_{m_{\pilot,k}}]}
   {\LB_k} \leq 1 + O(L^{-1/3}_k).
$$

\end{corollary}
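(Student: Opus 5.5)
The plan is to treat Corollary~\ref{cor:pilot-rates} as a direct specialization of Theorem~\ref{thm:pilot-first-order}, and to make its proof quantitative enough to read off the rate. The work splits into three steps.

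\textbf{Step 1: the choices are admissible.} First I verify that $\Delta_k=L_k^{2/3}$ satisfies \eqref{eq:delta-admissible}:
\[
\Delta_k\to\infty,\qquad \Delta_k/L_k=L_k^{-1/3}\to0,\qquad \Delta_k^2/L_k=L_k^{1/3}\to\infty.
\]
Next, $z_k=1$ gives $(z_k+1)/\Delta_k=2L_k^{-2/3}\to0$, which is \eqref{eq:z-admissible}.

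\textbf{Step 2: the pilot rate condition.} With $m=\lceil L_k^{2/3}(\log L_k)^2\rceil$ we have $\log m=\Theta(\log L_k)$. Hence
\[
m/\log m=\Theta\bigl(L_k^{2/3}\log L_k\bigr)\gg L_k^{2/3},
\]
which is the first claim of the corollary. Since $r_m=O(\sqrt{\log m/m})$,
\[
L_k r_m=O\bigl(L_k\cdot L_k^{-1/3}(\log L_k)^{-1/2}\bigr)=O\bigl(L_k^{2/3}(\log L_k)^{-1/2}\bigr)=o(\Delta_k).
\]
This is \eqref{eq:pilot-main-rate-condition}, equivalently \eqref{eq:pilot-rate-equivalent}. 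Theorem~\ref{thm:pilot-first-order} therefore applies and yields $1+o(1)$.

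\textbf{Step 3: the explicit rate.} This is the main obstacle, because the theorem as stated only gives $o(1)$. I would reopen its proof, transferring the Section~\ref{sec:upper-bound} cost argument, and track each error term separately.

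(a) The buffered design value exceeds $\LB_k$ by $O(\Delta_k)$ plus the guardrail inflation. The targets grow by $(N+1)\varepsilon^{\mathrm{dr}}_m=O(L_k r_m)$, since $N=\Theta(L_k)$. On $\calE_{\pilot,m}$, the plug-in frontier and sensing program $\hat\Gamma_h$ differ from $\Gamma_h$ by a relative $O(r_m)$, by the uniform perturbation lemma. Both contributions are $O(L_k r_m)=o(\Delta_k)$.

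(b) The widened boundaries add $\omega_{m,k}=\widehat N_{\main,m,k}\varepsilon^{\mathrm{LR}}_m=O(L_k r_m)$ in expected sensing cost, because the drift is bounded below.

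(c) The dead zone with $z_k=1$ contributes $O(1)$.

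(d) The fallback occurs with probability at most $\exp(-c\,\Delta_{\eff}^2/L_k)$, where the effective buffer is $\Delta_{\eff}=\Delta_k-O(L_k r_m)=\Theta(L_k^{2/3})$. This probability is $\exp(-cL_k^{1/3})$. Multiplying by the $O(L_k)$ fallback cost gives a negligible term.

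Summing (a)--(d), the excess cost is $O(\Delta_k+L_k r_m+1)=O(L_k^{2/3})$. Dividing by $\LB_k=\Theta(L_k)$, from Theorem~\ref{thm:first-order}(i) under Assumption~\ref{ass:balanced-error-regime}, gives $1+O(L_k^{-1/3})$.

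The delicate point is (d). There I must check that the stated concentration bound uses the effective buffer rather than $\Delta_k$, and that $\Delta_{\eff}^2/L_k\to\infty$ survives the subtraction. It does, since $L_k r_m=o(\Delta_k)$.
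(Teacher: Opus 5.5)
Your proposal is correct and follows the paper's route: the paper obtains the corollary by substituting $\Delta_k=L_k^{2/3}$ and $m_{\pilot,k}=\lceil L_k^{2/3}(\log L_k)^2\rceil$ into the explicit bound \eqref{eq:pilot-ratio-bound} established in the proof of Theorem~\ref{thm:pilot-first-order}, which is the quantitative bound you rebuild in Step~3. One minor correction: Lemma~\ref{lem:pilot-fallback-prob} also has a Markov term $C(z_k+1)/\Delta^{\eff}_{m,k}$, which after multiplication by the $O(L_k)$ fallback cost contributes $O(L_k^{1/3})$, and Lemma~\ref{lem:LB-perturb} contributes an $O(\log L_k)$ floor term; both are $o(L_k^{2/3})$, so your final rate $1+O(L_k^{-1/3})$ still holds.
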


\vspace{1mm}
%Proof of Corollary~\ref{cor:pilot-rates} follows directly by substituting $\Delta_k$ and $m_{\pilot,k}$ into \eqref{eq:pilot-rate-equivalent} and the rate of $\max_h \E_h^{\hat\pi_{m_{\pilot,k},k}}[C^{\hat\pi_{m_{\pilot,k},k}}\mid\calD_{m_{\pilot,k}}]$ given in Appendix~\ref{appendix-sec:pilot-first-order-1}, \eqref{eq:pilot-ratio-bound}; we therefore omit the details. 

Corollary~\ref{cor:pilot-rates} does not account for the cost for the pilot items.  If, instead, a fresh paired pilot is
collected solely for the $k$th test, its acquisition cost must also be charged
to that test.  Since each pilot item is acquired and labeled by both the AI and
the human, it has a total cost of $\cdata+\cAI+\cH$. To obtain a minimum of $m$ pilot samples for both labels, recall we need $M_{\tot}$ total pilot samples. Thus the total cost for pilot samples is $C_{m}^{\pilot}:=M_{\tot}(\cdata+\cAI+\cH)$. 
The maximum expected cost including the pilot data collection cost over two hypotheses is 
$\max_h \E_h^{\hat\pi_{m,k}}
  [C_m^{\pilot} + C^{\hat\pi_{m,k}}]$. Corollary~\ref{cor:pilot-charged} verifies this total cost under the choice of $\Delta_k$ given by Corollary~\ref{cor:pilot-rates}. 

  \vspace{1mm}
\begin{corollary}[Charging the one-time pilot]
\label{cor:pilot-charged}
Suppose Assumptions~\ref{ass:iid}-\ref{ass:pilot-sample} hold, and let $f_{\fb,k}=1/2$ and $z_k=1$. Then the choice in
Corollary~\ref{cor:pilot-rates} satisfies $L_kr_{m_{\pilot,k}}=o(\Delta_k)$ and $m_{\pilot,k}=o(L_k)$, and gives that for all sufficiently large \(k\),
\[
 \frac{\max_h \E_h^{\hat\pi_{m_{\pilot, k},k}}
  [C_{m_{\pilot, k}}^{\pilot} + C^{\hat\pi_{m_{\pilot, k},k}}]}{\LB_k}
 \leq 1+\widetilde O(L_k^{-1/3}).
\]

\end{corollary}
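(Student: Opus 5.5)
Our approach is to split the charged total cost into the main-procedure cost, which Corollary~\ref{cor:pilot-rates} already controls, and the pilot cost, which we bound directly. First we check the two rate conditions under $\Delta_k=L_k^{2/3}$ and $m_{\pilot,k}=\lceil L_k^{2/3}(\log L_k)^2\rceil$.

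\textbf{Rate conditions.} Since $r_m=O(\sqrt{\log m/m})$ and $\log m_{\pilot,k}=\Theta(\log L_k)$, we get
\[
L_k r_{m_{\pilot,k}}=O\bigl(L_k\sqrt{\log L_k}\,L_k^{-1/3}(\log L_k)^{-1}\bigr)=O\bigl(L_k^{2/3}(\log L_k)^{-1/2}\bigr)=o(\Delta_k).
\]
In addition, $m_{\pilot,k}=O(L_k^{2/3}(\log L_k)^2)=o(L_k)$. So condition \eqref{eq:pilot-main-rate-condition} holds, and Theorem~\ref{thm:pilot-first-order} and Corollary~\ref{cor:pilot-rates} apply.

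\textbf{Pilot cost.} The total is
\[
\E_h[C^\pilot_{m}]=(\cdata+\cAI+\cH)\,\E_h[M_{\tot}].
\]
$M_{\tot}$ is the first time both label counts reach $m$, so $M_{\tot}\le \tau_0+\tau_1$, where $\tau_x$ is the waiting time for the $m$-th label equal to $x$. The variable $\tau_x$ is negative binomial with mean $m/p_h$ or $m/(1-p_h)$. Hence
\[
\E_h[M_{\tot}]\le m\Bigl(\tfrac{1}{p_h}+\tfrac{1}{1-p_h}\Bigr)=O(m).
\]
The pilot cost is therefore $O(L_k^{2/3}(\log L_k)^2)$. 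By Theorem~\ref{thm:first-order}(i), $\LB_k=\Theta(L_k)$, so the pilot cost divided by $\LB_k$ is $O(L_k^{-1/3}(\log L_k)^2)=\widetilde O(L_k^{-1/3})$.

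\textbf{Main-procedure cost.} The main policy is independent of the pilot given $\calD_m$. We split its expectation according to the good pilot event:
\[
\E_h[C^{\hat\pi}]=\E_h\bigl[\E_h[C^{\hat\pi}\mid\calD_m]\,\ind_{\calE_{\pilot,m}}\bigr]+\E_h\bigl[\E_h[C^{\hat\pi}\mid\calD_m]\,\ind_{\calE^c_{\pilot,m}}\bigr].
\]
On $\calE_{\pilot,m}$, Corollary~\ref{cor:pilot-rates} bounds the conditional cost by $(1+O(L_k^{-1/3}))\LB_k$.

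\textbf{Main obstacle: the bad pilot event.} The hard step is $\calE^c_{\pilot,m}$, which has probability at most $\delta_m=m^{-2}$, but on which the conditional cost is not controlled by the theorem. We would bound it deterministically. When the guarded program is infeasible, the policy uses the safe default pool $N_{\fixed,\Hum}(\alpha_{2,k},\beta_{2,k})$ with all-human queries, costing $O(L_k)$. When the program is feasible, the pool $\widehat N_{\main,m,k}$ minimizes $\widehat F^\pilot_{m,k}$. Its cost is at most $(\cdata+\cAI+\cH)\,\widehat N_{\main,m,k}$ plus the fallback cost $O(L_k)$.

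To bound $\widehat N_{\main,m,k}$ we argue as follows. At $N$ equal to a constant multiple of $N_{\fixed,\Hum}$, the all-human allocation is feasible for $\hat\Gamma_h$. This needs $NJ_X^{(h)}\ge T_{h,k}+(N+1)\varepsilon^{\mathrm{dr}}_m$, which holds because $\varepsilon^{\mathrm{dr}}_m\le 3|\calR|B_\ell r_m$ is small for large $m$. If needed, we would instead use $\hat I_R\ge0$ and human-only allocations. This gives a feasible value $O(L_k)$. Since $\widehat F^\pilot_{m,k}(N)\ge N\cdata$, the minimizer satisfies $\widehat N_{\main,m,k}=O(L_k)$, uniformly over pilot realizations.

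If uniformity fails for small $r_m$, note that $r_m\to0$ deterministically, so for large $k$ the bound still holds. The bad-event contribution is then $O(L_k)\,m^{-2}=o(L_k^{-1/3}\LB_k)$.

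Summing the three pieces and taking the maximum over $h$ gives the claimed ratio $1+\widetilde O(L_k^{-1/3})$.
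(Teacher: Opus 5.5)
Your proposal is correct and follows essentially the same route as the paper. You split off the pilot charge, apply the conditional first-order bound on the good pilot event $\calE_{\pilot,m_{\pilot,k}}$, and control the bad event by showing $\widehat N_{\main,m,k}=O(L_k)$ uniformly over pilot realizations, using a pilot-independent all-human candidate, with that cost weighted by $\delta_m=m_{\pilot,k}^{-2}$. The only difference is the pilot-duration bound: your $M_{\tot}\le\tau_0+\tau_1$ negative-binomial argument is simpler than the paper's exact binomial computation of $\E_h[M_{\tot}]$, and it suffices because only an $O(m_{\pilot,k})$ upper bound is needed.
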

%We leave the proof of Corollary~\ref{cor:pilot-charged} to Appendix~\ref{appendix-sec:unknown-ai-rates}.

\vspace{1mm}
Corollary~\ref{cor:pilot-charged} closes the gap between the statistical
and operational costs of learning the AI-output model.  It tells us that even when the paired pilot must be collected specifically for
the current test and its full data, AI, and human costs are charged to the
procedure, the resulting total cost remains first-order optimal.  Thus the
known-$f$ benchmark is asymptotically attainable without assuming that
calibration data are available for free.

\section{Conclusion}
\label{sec:conclusion}

This paper studies how to combine inexpensive but imperfect AI information
with costly human verification when the goal is to conduct a statistically
valid hypothesis test at minimum cost. The key operational feature is
selectivity: after acquiring a fixed pool of items, the decision maker can
choose whether to query the AI, query a human directly, escalate an
AI-scored item to a human, or stop once sufficient evidence has accumulated.
This creates a joint statistical and operational design problem in which the
value of a query depends not only on its cost and information content, but
also on the information already collected.

We derive an information-theoretic lower bound that captures the minimum
cost required to satisfy the testing errors while accounting for data
acquisition, AI scoring, direct human review, and selective escalation. We
then develop SCALE, a sequential cost-aware policy that dynamically combines
these actions as evidence accumulates. SCALE is finite-sample valid and
matches the lower bound to first order as the target errors vanish. We also extend the analysis to the practically important case in which the
AI-output model is unknown and must be estimated from paired AI--human pilot
data. A guarded plug-in version of SCALE remains first-order optimal when
the pilot is sufficiently accurate. 

Several directions remain open. We have focused on a binary label, two simple
hypotheses, a single AI source, and a finite AI-report alphabet. Extending the
framework to composite hypotheses, multiple AI systems with heterogeneous
costs and accuracies, and richer or continuous report spaces would broaden
its applicability. Another natural direction is to learn the AI-output model
during the main experiment rather than through a separate pilot, thereby
jointly deciding when information should be used for calibration and when it
should be used for the hypothesis test itself. More broadly, the analysis
suggests that the relevant question in human--AI inference is not simply
whether AI should replace human judgment. Rather, the operational value of
AI comes from deciding when inexpensive machine information is sufficient
and when the remaining uncertainty is valuable enough to justify human
verification.

\label{sec:conclusion}

\bibliography{ref}

\begin{thebibliography}{46}
\expandafter\ifx\csname natexlab\endcsname\relax\def\natexlab#1{#1}\fi
\expandafter\ifx\csname url\endcsname\relax
  \def\url#1{{\tt #1}}\fi
\expandafter\ifx\csname urlprefix\endcsname\relax\def\urlprefix{URL }\fi
\expandafter\ifx\csname urlstyle\endcsname\relax
  \expandafter\ifx\csname doi\endcsname\relax
  \def\doi#1{doi:\discretionary{}{}{}#1}\fi \else
  \expandafter\ifx\csname doi\endcsname\relax
  \def\doi{doi:\discretionary{}{}{}\begingroup \urlstyle{rm}\Url}\fi \fi

\bibitem[{Alonzo et~al.(2003)Alonzo, Pepe, and Lumley}]{alonzo2003estimating}
Alonzo, Todd~A., Margaret~Sullivan Pepe, Thomas Lumley. 2003.
\newblock Estimating disease prevalence in two-phase studies.
\newblock {\it Biostatistics\/} {\bf 4}(2) 313--326.
\newblock \doi{10.1093/biostatistics/4.2.313}.

\bibitem[{Angelopoulos et~al.(2023)Angelopoulos, Bates, Fannjiang, Jordan, and
  Zrnic}]{angelopoulos2023prediction}
Angelopoulos, Anastasios~N., Stephen Bates, Clara Fannjiang, Michael~I. Jordan,
  Tijana Zrnic. 2023.
\newblock Prediction-powered inference.
\newblock {\it Science\/} {\bf 382}(6671) 669--674.
\newblock \doi{10.1126/science.adi6000}.

\bibitem[{Angelopoulos et~al.(2025)Angelopoulos, Eisenstein, Berant, Agarwal,
  and Fisch}]{angelopoulos2025costoptimal}
Angelopoulos, Anastasios~N., Jacob Eisenstein, Jonathan Berant, Alekh Agarwal,
  Adam Fisch. 2025.
\newblock Cost-optimal active {AI} model evaluation.
\newblock \doi{10.48550/arXiv.2506.07949}.

\bibitem[{Apakama et~al.(2025)Apakama, Nguyen, Hyppolite, Soffer, Mudrik, Ling,
  Moses, Temnycky, Glasser, Anderson, Parchure, Woullard, Edalati, Chan, Kronk,
  Freeman, Kia, Timsina, Levin, Khera, Kovatch, Charney, Carr, Richardson,
  Horowitz, Klang, and Nadkarni}]{apakama2025bias}
Apakama, Donald~U., Kim-Anh-Nhi Nguyen, Daphnee Hyppolite, Shelly Soffer, Aya
  Mudrik, Emilia Ling, Akini Moses, Ivanka Temnycky, Allison Glasser, Rebecca
  Anderson, Prathamesh Parchure, Evajoyce Woullard, Masoud Edalati, Lili Chan,
  Clair Kronk, Robert Freeman, Arash Kia, Prem Timsina, Matthew~A. Levin, Rohan
  Khera, Patricia Kovatch, Alexander~W. Charney, Brendan~G. Carr, Lynne~D.
  Richardson, Carol~R. Horowitz, Eyal Klang, Girish~N. Nadkarni. 2025.
\newblock Identifying bias at scale in clinical notes using large language
  models.
\newblock {\it Mayo Clinic Proceedings: Digital Health\/} {\bf 3}(4) 100296.
\newblock \doi{10.1016/j.mcpdig.2025.100296}.

\bibitem[{Baraud(2002)}]{baraud2002nonasymptotic}
Baraud, Yannick. 2002.
\newblock Non-asymptotic minimax rates of testing in signal detection.
\newblock {\it Bernoulli\/} {\bf 8}(5) 577--606.

\bibitem[{Begg and Greenes(1983)}]{begg1983assessment}
Begg, Colin~B., Robert~A. Greenes. 1983.
\newblock Assessment of diagnostic tests when disease verification is subject
  to selection bias.
\newblock {\it Biometrics\/} {\bf 39}(1) 207--215.
\newblock \doi{10.2307/2530820}.

\bibitem[{Bertsimas and Tsitsiklis(1997)}]{bertsimas1997introduction}
Bertsimas, Dimitris, John~N Tsitsiklis. 1997.
\newblock {\it Introduction to linear optimization\/}, vol.~6.
\newblock Athena scientific Belmont, MA.

\bibitem[{Boyd and Vandenberghe(2004)}]{boyd2004convex}
Boyd, Stephen, Lieven Vandenberghe. 2004.
\newblock {\it Convex optimization\/}.
\newblock Cambridge university press.

\bibitem[{Casella and Berger(2024)}]{casella2024statistical}
Casella, George, Roger Berger. 2024.
\newblock {\it Statistical inference\/}.
\newblock Chapman and Hall/CRC.

\bibitem[{Chen and Goldfarb-Tarrant(2025)}]{chen2025safer}
Chen, Hongyu, Seraphina Goldfarb-Tarrant. 2025.
\newblock Safer or luckier? {LLM}s as safety evaluators are not robust to
  artifacts.
\newblock {\it Proceedings of the 63rd Annual Meeting of the Association for
  Computational Linguistics (Volume 1: Long Papers)\/}. Association for
  Computational Linguistics, Vienna, Austria, 19750--19766.
\newblock \doi{10.18653/v1/2025.acl-long.970}.

\bibitem[{Chernoff(1959)}]{chernoff1959sequential}
Chernoff, Herman. 1959.
\newblock Sequential design of experiments.
\newblock {\it The Annals of Mathematical Statistics\/} {\bf 30}(3) 755--770.
\newblock \doi{10.1214/aoms/1177706205}.

\bibitem[{Cover and Thomas(2006)}]{cover2006elements}
Cover, Thomas~M., Joy~A. Thomas. 2006.
\newblock {\it Elements of Information Theory\/}.
\newblock 2nd ed. John Wiley \& Sons, Hoboken, NJ.
\newblock \doi{10.1002/047174882X}.

\bibitem[{Csillag et~al.(2025)Csillag, Struchiner, and
  Goedert}]{csillag2025prediction}
Csillag, Daniel, Claudio~Jose Struchiner, Guilherme~Tegoni Goedert. 2025.
\newblock Prediction-powered e-values.
\newblock {\it Proceedings of the 42nd International Conference on Machine
  Learning\/}, {\it Proceedings of Machine Learning Research\/}, vol. 267.
  PMLR, 11493--11514.

\bibitem[{Dekoninck et~al.(2026)Dekoninck, Petrov, Minchev, Marinov, Drencheva,
  Konova, Shumanov, Tsvetkov, Drenchev, Todorov et~al.}]{dekoninck2026open}
Dekoninck, Jasper, Ivo Petrov, Kristian Minchev, Miroslav Marinov, Maria
  Drencheva, Lyuba Konova, Milen Shumanov, Kaloyan Tsvetkov, Nikolay Drenchev,
  Lazar Todorov, et~al. 2026.
\newblock The open proof corpus: A large-scale study of llm-generated
  mathematical proofs.
\newblock {\it International Conference on Learning Representations\/}, vol.
  2026. 22214--22244.

\bibitem[{Freedman(1975)}]{freedman1975}
Freedman, David~A. 1975.
\newblock On tail probabilities for martingales.
\newblock {\it The Annals of Probability\/} {\bf 3}(1) 100--118.

\bibitem[{Imai et~al.(2023)Imai, Jiang, Greiner, Halen, and Shin}]{imai_judge}
Imai, Kosuke, Zhichao Jiang, D~James Greiner, Ryan Halen, Sooahn Shin. 2023.
\newblock Experimental evaluation of algorithm-assisted human decision-making:
  application to pretrial public safety assessment*.
\newblock {\it Journal of the Royal Statistical Society Series A: Statistics in
  Society\/} {\bf 186}(2) 167--189.
\newblock \doi{10.1093/jrsssa/qnad010}.
\newblock \urlprefix\url{https://doi.org/10.1093/jrsssa/qnad010}.

\bibitem[{Jiang et~al.(2026)Jiang, Chen, Cao, Lee, and
  Tan}]{jiang2026codejudgebench}
Jiang, Hongchao, Yiming Chen, Yushi Cao, Hung-yi Lee, Robby~T. Tan. 2026.
\newblock {CodeJudgeBench}: Benchmarking {LLM}-as-a-judge for coding tasks.
\newblock {\it Proceedings of the 64th Annual Meeting of the Association for
  Computational Linguistics (Volume 1: Long Papers)\/}. Association for
  Computational Linguistics, San Diego, California, United States,
  19416--19448.
\newblock \doi{10.18653/v1/2026.acl-long.888}.

\bibitem[{Kartik et~al.(2022)Kartik, Nayyar, and Mitra}]{kartik2022fixed}
Kartik, Dhruva, Ashutosh Nayyar, Urbashi Mitra. 2022.
\newblock Fixed-horizon active hypothesis testing.
\newblock {\it IEEE Transactions on Automatic Control\/} {\bf 67}(4)
  1882--1897.
\newblock \doi{10.1109/TAC.2021.3090742}.

\bibitem[{Kossen et~al.(2021)Kossen, Farquhar, Gal, and
  Rainforth}]{kossen2021active}
Kossen, Jannik, Sebastian Farquhar, Yarin Gal, Tom Rainforth. 2021.
\newblock Active testing: Sample-efficient model evaluation.
\newblock {\it Proceedings of the 38th International Conference on Machine
  Learning\/}, {\it Proceedings of Machine Learning Research\/}, vol. 139.
  PMLR, 5753--5763.

\bibitem[{Kry{\'s}ci{\'n}ski et~al.(2020)Kry{\'s}ci{\'n}ski, McCann, Xiong, and
  Socher}]{kryscinski2020evaluating}
Kry{\'s}ci{\'n}ski, Wojciech, Bryan McCann, Caiming Xiong, Richard Socher.
  2020.
\newblock Evaluating the factual consistency of abstractive text summarization.
\newblock {\it Proceedings of the 2020 conference on empirical methods in
  natural language processing (EMNLP)\/}. 9332--9346.

\bibitem[{Ma and Cand{\`e}s(2026)}]{ma2026opal}
Ma, Virginia~L., Emmanuel~J. Cand{\`e}s. 2026.
\newblock Optimized labeling resource allocation for prediction-assisted
  inference via {OPAL}.
\newblock \doi{10.48550/arXiv.2606.03211}.

\bibitem[{Madras et~al.(2018)Madras, Pitassi, and Zemel}]{madras2018predict}
Madras, David, Toniann Pitassi, Richard Zemel. 2018.
\newblock Predict responsibly: Improving fairness and accuracy by learning to
  defer.
\newblock {\it Advances in Neural Information Processing Systems\/}, vol.~31.
  Curran Associates, Inc.

\bibitem[{Marks-Anglin et~al.(2025)Marks-Anglin, Chen, Luo, Hubbard, and
  Chen}]{marksanglin2025optimal}
Marks-Anglin, Arielle, Jianmin Chen, Chongliang Luo, Rebecca Hubbard, Yong
  Chen. 2025.
\newblock Optimal surrogate-assisted sampling for cost-efficient validation of
  electronic health record outcomes.
\newblock {\it Statistics in Medicine\/} {\bf 44}(10--12) e70095.
\newblock \doi{10.1002/sim.70095}.

\bibitem[{Mazeika et~al.(2024)Mazeika, Phan, Yin, Zou, Wang, Mu, Sakhaee, Li,
  Basart, Li, Forsyth, and Hendrycks}]{mazeika2024harmbench}
Mazeika, Mantas, Long Phan, Xuwang Yin, Andy Zou, Zifan Wang, Norman Mu, Elham
  Sakhaee, Nathaniel Li, Steven Basart, Bo~Li, David Forsyth, Dan Hendrycks.
  2024.
\newblock {HarmBench}: A standardized evaluation framework for automated red
  teaming and robust refusal.
\newblock {\it Proceedings of the 41st International Conference on Machine
  Learning\/}, {\it Proceedings of Machine Learning Research\/}, vol. 235.
  PMLR, 35181--35224.

\bibitem[{McNamee(2003)}]{mcnamee2003efficiency}
McNamee, Roseanne. 2003.
\newblock Efficiency of two-phase designs for prevalence estimation.
\newblock {\it International Journal of Epidemiology\/} {\bf 32}(6) 1072--1078.
\newblock \doi{10.1093/ije/dyg230}.

\bibitem[{Movva et~al.(2024)Movva, Koh, and Pierson}]{movva2024annotation}
Movva, Rajiv, Pang~Wei Koh, Emma Pierson. 2024.
\newblock Annotation alignment: Comparing {LLM} and human annotations of
  conversational safety.
\newblock {\it Proceedings of the 2024 Conference on Empirical Methods in
  Natural Language Processing\/}. Association for Computational Linguistics,
  Miami, Florida, USA, 9048--9062.
\newblock \doi{10.18653/v1/2024.emnlp-main.511}.

\bibitem[{Mozannar and Sontag(2020)}]{mozannar2020defer}
Mozannar, Hussein, David Sontag. 2020.
\newblock Consistent estimators for learning to defer to an expert.
\newblock {\it Proceedings of the 37th International Conference on Machine
  Learning\/}, {\it Proceedings of Machine Learning Research\/}, vol. 119.
  PMLR, 7076--7087.

\bibitem[{Naghshvar and Javidi(2013{\natexlab{a}})}]{naghshvar2013active}
Naghshvar, Mohammad, Tara Javidi. 2013{\natexlab{a}}.
\newblock Active sequential hypothesis testing.
\newblock {\it The Annals of Statistics\/} {\bf 41}(6) 2703--2738.
\newblock \doi{10.1214/13-AOS1144}.

\bibitem[{Naghshvar and
  Javidi(2013{\natexlab{b}})}]{naghshvar2013sequentiality}
Naghshvar, Mohammad, Tara Javidi. 2013{\natexlab{b}}.
\newblock Sequentiality and adaptivity gains in active hypothesis testing.
\newblock {\it IEEE Journal of Selected Topics in Signal Processing\/} {\bf
  7}(5) 768--782.
\newblock \doi{10.1109/JSTSP.2013.2261279}.

\bibitem[{Neyman and Pearson(1933)}]{neyman1933ix}
Neyman, Jerzy, Egon~Sharpe Pearson. 1933.
\newblock Ix. on the problem of the most efficient tests of statistical
  hypotheses.
\newblock {\it Philosophical Transactions of the Royal Society of London.
  Series A, Containing Papers of a Mathematical or Physical Character\/} {\bf
  231}(694-706) 289--337.

\bibitem[{Nitinawarat et~al.(2013)Nitinawarat, Atia, and
  Veeravalli}]{nitinawarat2013controlled}
Nitinawarat, Sirin, George~K. Atia, Venugopal~V. Veeravalli. 2013.
\newblock Controlled sensing for multihypothesis testing.
\newblock {\it IEEE Transactions on Automatic Control\/} {\bf 58}(10)
  2451--2464.
\newblock \doi{10.1109/TAC.2013.2261188}.

\bibitem[{Nitinawarat and Veeravalli(2015)}]{nitinawarat2015controlled}
Nitinawarat, Sirin, Venugopal~V. Veeravalli. 2015.
\newblock Controlled sensing for sequential multihypothesis testing with
  controlled markovian observations and non-uniform control cost.
\newblock {\it Sequential Analysis\/} {\bf 34}(1) 1--24.
\newblock \doi{10.1080/07474946.2014.961864}.

\bibitem[{Pepe(1992)}]{pepe1992inference}
Pepe, Margaret~Sullivan. 1992.
\newblock Inference using surrogate outcome data and a validation sample.
\newblock {\it Biometrika\/} {\bf 79}(2) 355--365.
\newblock \doi{10.1093/biomet/79.2.355}.

\bibitem[{Reilly(1996)}]{reilly1996optimal}
Reilly, Marie. 1996.
\newblock Optimal sampling strategies for two-stage studies.
\newblock {\it American Journal of Epidemiology\/} {\bf 143}(1) 92--100.
\newblock \doi{10.1093/oxfordjournals.aje.a008662}.

\bibitem[{Sethi et~al.(2026)Sethi, Caskey, Gao, Churpek, Miller, Mayampurath,
  Salisbury-Afshar, Afshar, and Dligach}]{sethi2026stigma}
Sethi, Rohan, John Caskey, Yanjun Gao, Matthew~M. Churpek, Timothy~A. Miller,
  Anoop Mayampurath, Elizabeth Salisbury-Afshar, Majid Afshar, Dmitry Dligach.
  2026.
\newblock Detecting stigmatizing language in clinical notes with large language
  models for addiction care.
\newblock {\it npj Health Systems\/} {\bf 3} 15.
\newblock \doi{10.1038/s44401-026-00069-0}.

\bibitem[{Shrout and Newman(1989)}]{shrout1989design}
Shrout, Patrick~E, Stephen~C Newman. 1989.
\newblock Design of two-phase prevalence surveys of rare disorders.
\newblock {\it Biometrics\/}  549--555.

\bibitem[{Tang et~al.(2012)Tang, Qiu, Poon, and Tang}]{tang2012test}
Tang, Man-Lai, Shi-Fang Qiu, Wai-Yin Poon, Nian-Sheng Tang. 2012.
\newblock Test procedures for disease prevalence with partially validated data.
\newblock {\it Journal of Biopharmaceutical Statistics\/} {\bf 22}(2) 368--386.
\newblock \doi{10.1080/10543406.2010.544527}.

\bibitem[{Tao et~al.(2020)Tao, Zeng, and Lin}]{tao2020optimal}
Tao, Ran, Donglin Zeng, Dan-Yu Lin. 2020.
\newblock Optimal designs of two-phase studies.
\newblock {\it Journal of the American Statistical Association\/} {\bf
  115}(532) 1946--1959.
\newblock \doi{10.1080/01621459.2019.1671200}.

\bibitem[{Tenenbein(1970)}]{tenenbein1970double}
Tenenbein, Aaron. 1970.
\newblock A double sampling scheme for estimating from binomial data with
  misclassifications.
\newblock {\it Journal of the American Statistical Association\/} {\bf 65}(331)
  1350--1361.
\newblock \doi{10.1080/01621459.1970.10481170}.

\bibitem[{Tenzer et~al.(2026)Tenzer, Tolochinsky, and
  Romano}]{tenzer2026betting}
Tenzer, Yaniv, Elad Tolochinsky, Yaniv Romano. 2026.
\newblock Semi-supervised hypothesis testing by betting on predictions.
\newblock \doi{10.48550/arXiv.2605.28533}.

\bibitem[{Tong and Zhang(2024)}]{tong2024codejudge}
Tong, Weixi, Tianyi Zhang. 2024.
\newblock {CodeJudge}: Evaluating code generation with large language models.
\newblock {\it Proceedings of the 2024 Conference on Empirical Methods in
  Natural Language Processing\/}. Association for Computational Linguistics,
  Miami, Florida, USA, 20032--20051.
\newblock \doi{10.18653/v1/2024.emnlp-main.1118}.

\bibitem[{Vershinin et~al.(2026)Vershinin, Cohen, and
  Gurewitz}]{vershinin2026active}
Vershinin, George, Asaf Cohen, Omer Gurewitz. 2026.
\newblock Active sequential hypothesis testing with non-homogeneous costs.
\newblock {\it 2026 IEEE International Conference on Acoustics, Speech and
  Signal Processing (ICASSP)\/}. IEEE.
\newblock \doi{10.1109/ICASSP55912.2026.11463839}.

\bibitem[{Wald(1945)}]{waldSPRT}
Wald, Abraham. 1945.
\newblock Sequential tests of statistical hypotheses.
\newblock {\it Annals of Mathematical Statistics\/} {\bf 16} 256--298.
\newblock \urlprefix\url{https://api.semanticscholar.org/CorpusID:222593486}.

\bibitem[{Zhang et~al.(2025)Zhang, Scroggins, Harkins, Hulchafo, Moen,
  Tadiello, Barcelona, and Topaz}]{zhang2025equitable}
Zhang, Zhihong, Jihye~Kim Scroggins, Sarah Harkins, Ismael~Ibrahim Hulchafo,
  Hans Moen, Michele Tadiello, Veronica Barcelona, Maxim Topaz. 2025.
\newblock Toward equitable documentation: Evaluating {ChatGPT}'s role in
  identifying and rephrasing stigmatizing language in electronic health
  records.
\newblock {\it Nursing Outlook\/} {\bf 73}(4) 102472.
\newblock \doi{10.1016/j.outlook.2025.102472}.

\bibitem[{Zhao et~al.(2025)Zhao, Luo, Tian, Lin, Yan, Li, and
  Ma}]{zhao2025codejudgeeval}
Zhao, Yuwei, Ziyang Luo, Yuchen Tian, Hongzhan Lin, Weixiang Yan, Annan Li,
  Jing Ma. 2025.
\newblock {CodeJudge-Eval}: Can large language models be good judges in code
  understanding?
\newblock {\it Proceedings of the 31st International Conference on
  Computational Linguistics\/}. Association for Computational Linguistics, Abu
  Dhabi, UAE, 73--95.

\bibitem[{Zrnic and Cand{\`e}s(2024)}]{zrnic2024active}
Zrnic, Tijana, Emmanuel Cand{\`e}s. 2024.
\newblock Active statistical inference.
\newblock {\it Proceedings of the 41st International Conference on Machine
  Learning\/}, {\it Proceedings of Machine Learning Research\/}, vol. 235.
  PMLR, 62993--63010.

\end{thebibliography}
\bibliographystyle{ormsv080}

%% Here starts the e-companion (EC)
%%%%%%%%%%%%%%%%%%%%%%%%%%%%%%%%%%%%%%%%%%%%%%%%%%%%%%%%%%
\ECSwitch
%% Make hyperref anchors unique in the e-companion.
\renewcommand{\theHsection}{EC.\arabic{section}}
\renewcommand{\theHsubsection}{EC.\arabic{section}.\arabic{subsection}}
\renewcommand{\theHsubsubsection}{EC.\arabic{section}.\arabic{subsection}.\arabic{subsubsection}}
\renewcommand{\theHequation}{EC.\arabic{equation}}
\renewcommand{\theHtheorem}{EC.\arabic{theorem}}
\renewcommand{\theHlemma}{EC.\arabic{lemma}}
\renewcommand{\theHcorollary}{EC.\arabic{corollary}}
\renewcommand{\theHproposition}{EC.\arabic{proposition}}
\renewcommand{\theHdefinition}{EC.\arabic{definition}}
\renewcommand{\theHassumption}{EC.\arabic{assumption}}
\renewcommand{\theHfigure}{EC.\arabic{figure}}
\renewcommand{\theHtable}{EC.\arabic{table}}

% \ECDisclaimer
%%%%%%%%%%%%%%%%%%%%%%%%%%%%%%%%%%%%%%%%%%%%%%%%%%%%%%%%%%

\newcommand{\mD}{\mathcal D}

\section{Additional Materials for Section~\ref{sec:lower-bound}}
\label{appendix-sec:lower-bound}

\subsection{Proof of Lemma~\ref{lem:data-pool}}
\begin{proof}{Proof of Lemma~\ref{lem:data-pool}}
Suppose, for a contradiction, that there exists a feasible policy $\pi\in\calF(\alpha,\beta)$ using $N<N_{\fixed,\Hum}(\alpha,\beta)$ acquired items. We will construct a randomized full-label test $\bar\phi:\{0,1\}^N\to[0,1]$ with the same type-I and type-II error probabilities as $\pi$. Here $\bar\phi(x)$ denotes the probability of rejecting $H_0$ given the full label vector $x$. This will contradict the definition of $N_{\fixed,\Hum}(\alpha,\beta)$ as the minimum sample size for a full-label randomized test satisfying the target errors.

Represent any policy randomization by a seed $U$ independent of the data and with the same law under both hypotheses. Conditional on $U=u$, the policy is deterministic. Given a full label vector $x\in\{0,1\}^N$, a potential report vector $r\in\calR^N$, and a seed $u$, simulate the policy as follows. Whenever the simulated policy queries the AI on item $i$, reveal $r_i$ to the simulation. Whenever it queries a human on item $i$, reveal $x_i$. Continue until the simulated policy stops, and let $\tilde\phi(x,r,u)\in\{0,1\}$ be the resulting decision. Thus each item is AI-queried at most once and human-queried at most once, so the simulation reveals each $r_i$ and each $x_i$ at most once.

Under the actual experiment, $\tilde\phi(X,R,U)=\delta^\pi$ almost surely, because the simulation reveals exactly the observations that the policy would receive. Hence
\begin{equation}\label{eq:psi-errors}
    \Pp^\pi_0(\tilde\phi(X,R,U)=1)\le\alpha,
    \qquad
    \Pp^\pi_1(\tilde\phi(X,R,U)=0)\le\beta.
\end{equation}

Conditional on $X=x$, the potential reports $R_1,\ldots,R_N$ have product law $\prod_i f_{x_i}(r_i)$ by Assumption~\ref{ass:iid}. This conditional law does not depend on whether $H_0$ or $H_1$ is true. The seed law also does not depend on the hypothesis. Define
\[
    \bar\phi(x):=\E[\tilde\phi(x,R,U)\mid X=x],
    \qquad x\in\{0,1\}^N,
\]
where the expectation is taken over the common conditional law of $(R,U)$ given $X=x$. Because $\tilde\phi\in\{0,1\}$, its conditional expectation satisfies $\bar\phi(x)\in[0,1]$; thus $\bar\phi$ is a valid randomized full-label test. By the tower property of conditional expectation,
\[
    \E_h[\bar\phi(X)]
    =\E_h[\E[\tilde\phi(X,R,U)\mid X]]
    =\E_h[\tilde\phi(X,R,U)],
    \qquad h\in\{0,1\}.
\]
Together with \eqref{eq:psi-errors}, this implies
\[
    \E_0[\bar\tilde\phi(X)]\le\alpha,
    \qquad
    \E_1[1-\bar\tilde\phi(X)]\le\beta.
\]
We have therefore constructed a full-label randomized test satisfying the target errors with only $N<N_{\fixed,\Hum}(\alpha,\beta)$ labels. This contradicts the minimality in Definition~\ref{def:Nfull}. Hence every feasible policy must satisfy $N\ge N_{\fixed,\Hum}(\alpha,\beta)$.
\end{proof}

\subsection{Transcript Distributions}
In this section we formally define the transcript and derive the transcript distribution, which will be used in the proof of subsequent results. 

Recall that a (full) transcript is the realization of a sequence of actions and observations up to stopping time, concatenated with the final decision $\delta$: $\calT=(A_1,O_1,A_2,O_2,\ldots,A_{\ltime-1},O_{\ltime-1},A_{\ltime},\delta)$. Also let $\tau=(a_1,o_1,a_2,o_2,\ldots,a_{\stime-1},o_{\stime-1},a_{\stime}, \tilde\delta)$ be the realization of $\calT$, with 
$\stime$, $\tilde\delta$, $a_t$, $o_t$ being the realizations of $\ltime$, $\delta$, $A_t$ and $O_t$ for each $t$, respectively.  
We also define the partial transcript \(\calT_t\) as the realization of a sequence of actions and observations at some $t < \ltime$: 
\begin{equation}\label{eq:partial-transcript-form}
    \calT_t=(A_1,O_1,A_2,O_2,\ldots,A_t,O_t).
\end{equation}
Its realization is denoted as $\tau_t=(a_1,o_1,a_2,o_2,\ldots,a_t,o_t)$. We let $\tau_0 = \emptyset$. 
In other words, $\tau_t$ is the realization of history $\mathcal{H}_t$. 

Let $\mathfrak T_N$ be the set of valid transcripts induced by an admissible policy $\pi$. Specifically, $\mathfrak T_N:= \{\tau: a_t\in\calA_t(\tau_{t-1})\text{ for all }t\in[\stime], o_t\in\calR \text{ if }a_t = \AI(i) \text{ for some }i\in[N^\pi], o_t\in\{0,1\} \text{ if }a_t = \Hum(i) \text{ for some }i\in[N^\pi], \text{ and }a_t= \text{STOP} \text{ if and only if }t = \stime\}$.  We note that $\mathfrak T_N$ has finite cardinality, as each $a_t$, $o_t$ and $\tilde\delta$ takes only finite values, and $\stime \leq 2N^\pi + 1$. We let $P^\pi_h$ be the distribution of transcript under $H_h$, and $P^\pi_h(\tau) = \Pp^\pi_h(\calT = \tau)$ is the transcript PMF (Probability Mass Function) at $\tau$. We have that for a feasible $\pi\in \calF(\alpha, \beta)$, the support of $P^\pi_h$ has: $\text{supp}(P^\pi_h)\subseteq \mathfrak T_N$ for $h\in\{0,1\}$. 

We let $\varphi_h(o_t|a_t, \tau_{t-1}) = \Pp^\pi_h(O_t = o_t|A_t = a_t,\calT_{t-1} = \tau_{t-1})$ be the conditional probability mass of observing $o_t$ at epoch $t$, given the partial transcript $\tau_{t-1}$ and the action $a_t$. Then we have that
\begin{equation}\label{appendix-equ:transcript-mass}
\begin{split}
    &\,\varphi_h(o_t|a_t, \tau_{t-1}) \\
    =& 
    \begin{cases}
        g_h(o_t), &o_t\in\calR, a_t = \AI(i), i\text{ has not been  human-queried before  }t,\\
        f_x(o_t), &o_t\in\calR, a_t = \AI(i), i\text{ has been human-queried before }t \text{ with observation }x,\\
        p^{o_t}_h(1-p_h)^{1-o_t}, &o_t\in\{0,1\}, a_t = \Hum(i), i\text{ has not been  AI-queried before  }t,\\
        \rho_h(o_t|r), &o_t\in\{0,1\}, a_t = \Hum(i), i\text{ has been AI-queried before  }t\text{ with observation }r,\\
        0, &\text{otherwise}.
    \end{cases}
\end{split}
\end{equation}

We let $\chi^\pi(\tau)$ represent the common policy-kernel factor of transcript $\tau$. Specifically, let
$\chi^{\pi}(\tau) = \prod^{\stime}_{t=1}\Pp^\pi_h(A_t = a_t|\calT_{t-1} = \tau_{t-1})\Pp^\pi_h(\delta^\pi = \tilde\delta|\calT_{\stime-1} = \tau_{\stime-1}, A_{\stime} = \text{STOP})$. Because the policy and its randomization law are hypothesis-independent, the conditional distributions of the action and of the terminal decision, given the same realized history, are identical under \(H_0\) and \(H_1\). That is, $\Pp^\pi_0(A_t = a_t|\calT_{t-1} = \tau_{t-1}) = \Pp^\pi_1(A_t = a_t|\calT_{t-1} = \tau_{t-1})$ for all $t$, and that $\chi^{\pi}(\tau)$ is the same across $H_0$ and $H_1$. We are now ready to characterize the transcript distribution.

\begin{lemma}[Transcript probability mass]\label{lem:transcript-pmf}
For every admissible policy $\pi$, every
$h\in\{0,1\}$, and every $\tau\in\mathfrak T_N$,
\begin{equation}\label{eq:transcript-pmf}
    P_h^\pi(\tau)
    =\chi^\pi(\tau)
      \prod_{t=1}^{\stime-1}
      \varphi_h(o_t\mid a_t,\tau_{t-1}).
\end{equation}
Additionally, $\text{supp}(P^\pi_1) = \text{supp}(P^\pi_0)$, and that for $\tau\in\text{supp}(P^\pi_h)$,
\begin{equation}\label{eq:transcript-llr}
    \log\frac{P_1^\pi(\tau)}{P_0^\pi(\tau)}
    =\sum_{t=1}^{\stime-1}
      \log\frac{\varphi_1(o_t\mid a_t,\tau_{t-1})}
               {\varphi_0(o_t\mid a_t,\tau_{t-1})}.
\end{equation}
\end{lemma}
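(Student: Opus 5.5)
The plan is to factor the transcript probability by the chain rule and then check that the policy part is the same under both hypotheses. Fix $\tau\in\mathfrak T_N$ with stopping index $\stime$. We write the event $\{\calT=\tau\}$ as the intersection of the successive events $\{A_t=a_t\}$ and $\{O_t=o_t\}$ for $t<\stime$, then $\{A_{\stime}=\text{STOP}\}$, then $\{\delta^\pi=\tilde\delta\}$. Repeated conditioning gives
\[
P_h^\pi(\tau)=\prod_{t=1}^{\stime}\Pp^\pi_h(A_t=a_t\mid\calT_{t-1}=\tau_{t-1})\,\prod_{t=1}^{\stime-1}\Pp^\pi_h(O_t=o_t\mid A_t=a_t,\calT_{t-1}=\tau_{t-1})\,\Pp^\pi_h(\delta^\pi=\tilde\delta\mid\calT_{\stime-1}=\tau_{\stime-1},A_{\stime}=\text{STOP}),
\]
whenever the conditioning events have positive probability. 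If some conditioning event has probability zero, both sides vanish, so we handle that case separately and trivially. The first and last groups of factors are exactly $\chi^\pi(\tau)$. They are hypothesis-independent because, by Definition~\ref{def:admissible_policy}, $A_t$ and $\delta^\pi$ are measurable functions of $(\mathcal H_{t-1},U_t)$. The seeds $U_t$ are independent of the data and have law $\mu$ under both hypotheses, so conditionally on the history the action law is the $\mu$-pushforward of $\pi_t(\tau_{t-1},\cdot)$, which does not depend on $h$.

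The key step, and the only real obstacle, is to verify that the observation kernel equals $\varphi_h(o_t\mid a_t,\tau_{t-1})$ as given in \eqref{appendix-equ:transcript-mass}. We condition on $\calT_{t-1}=\tau_{t-1}$ and $A_t=\AI(i)$ or $\Hum(i)$. By Assumption~\ref{ass:iid} and the independence of $U$, the pair $(X_i,R_i)$ for an item $i$ is independent of the other items' pairs and of the seeds. Moreover, item $i$ has not yet been queried through the same channel. So the only information in $\tau_{t-1}$ about $(X_i,R_i)$ comes from earlier queries on item $i$ itself, of which there are at most one of the other type. This gives four cases.
\begin{enumerate}
\item No prior query on $i$, action $\AI(i)$: the kernel is the marginal $g_h$.
\item No prior query on $i$, action $\Hum(i)$: the kernel is $\Bern(p_h)$.
\item Prior AI report $r$, action $\Hum(i)$: the kernel is $\rho_h(\cdot\mid r)$ by \eqref{eq:q-h-def}.
\item Prior human label $x$, action $\AI(i)$: the kernel is $f_x$.
\end{enumerate}
To make this rigorous, we write $\Pp_h(\calT_{t-1}=\tau_{t-1},A_t=a_t,O_t=o_t)$ as a sum over realizations of $(X_j,R_j)_{j}$ and $U$. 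We then use the product structure to factor out item $i$'s joint mass $p_h^{x}(1-p_h)^{1-x}f_x(r)$, and divide. The indicator that the policy path agrees with $\tau_{t-1}$ depends on item $i$ only through its already revealed coordinate, because the policy is a function of observed history and seeds.

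Substituting these kernels into the chain-rule factorization gives \eqref{eq:transcript-pmf}. For the support claim, Assumption~\ref{ass:score-model} and $p_h\in(0,1)$ make every nonzero case of $\varphi_h$ strictly positive, since $g_h>0$, $f_x>0$, $p_h\in(0,1)$ and $q_h(r)\in(0,1)$. Also, the "otherwise" case and $\chi^\pi$ do not depend on $h$. Hence $P_1^\pi(\tau)>0$ if and only if $P_0^\pi(\tau)>0$. Finally, on the common support we take the ratio of the two factorizations. The common factor $\chi^\pi(\tau)>0$ cancels, and taking logarithms yields \eqref{eq:transcript-llr}.
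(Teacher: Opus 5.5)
Your proposal is correct and follows the same route as the paper. You factor $P_h^\pi(\tau)$ by the chain rule into the hypothesis-free policy factor $\chi^\pi(\tau)$ times the observation kernels $\varphi_h$, then use strict positivity of every kernel on $\mathfrak T_N$ to get the common support and cancel $\chi^\pi(\tau)$ in the log-ratio; your item-by-item derivation of the four kernel cases and your handling of zero-probability conditioning events make explicit what the paper asserts when it defines $\varphi_h$ and $\chi^\pi$ before the lemma.
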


\begin{proof}{Proof of Lemma~\ref{lem:transcript-pmf}}
\eqref{eq:transcript-pmf} follows from the chain rule for joint probability mass functions:
\begin{align*}
    P_h^\pi(\tau) &= 
\left(\prod^{\stime-1}_{t=1}\Pp^\pi_h(O_t = o_t|A_t = a_t, \calT_{t-1} = \tau_{t-1})\right)
\left(\prod^{\stime}_{t=1}\Pp^\pi_h(A_t = a_t|\calT_{t-1} = \tau_{t-1})\right)\\
&\qquad\qquad\qquad\qquad\qquad\cdot\Pp^\pi_h(\delta^\pi = \tilde\delta|\calT_{\stime - 1} = \tau_{\stime - 1}, A_{\stime} = \text{STOP})\\
&= \chi^\pi(\tau)\prod^{\stime-1}_{t=1}\varphi_h(o_t|a_t, \tau_{t-1}), \forall \tau\in \mathfrak T_N,
\end{align*}
and $P_h^\pi(\tau)= 0$ if $\tau\notin \mathfrak T_N$.

Now we verify that for any $\tau$, $P^\pi_0(\tau) >0$ if and only if $P^\pi_1(\tau) >0$. In particular, $\chi^\pi(\tau)$ is the same across $H_0$ and $H_1$, and by definition of $\varphi_h(o_t|a_t, \tau_{t-1})$, $\varphi_0(o_t|a_t, \tau_{t-1}) > 0$ if and only if $\varphi_1(o_t|a_t, \tau_{t-1}) > 0$. Then \eqref{eq:transcript-llr} is well defined and follows straightforwardly from \eqref{eq:transcript-pmf}. 
\end{proof}

\subsection{Proof of Lemma~\ref{lem:error-to-kl}}
To prove Lemma~\ref{lem:error-to-kl}, we need auxiliary Lemmas~\ref{lem:kl-chain-rule}-\ref{lem:binary-monotonicity}.

\begin{lemma}[Theorem 2.5.3 of 
\citet{cover2006elements}]\label{lem:kl-chain-rule}
Let $(Y,Z)$ take values in a finite or countably infinite set. Under probability laws $P$ and $Q$, let $P_{Y,Z}$ and $Q_{Y,Z}$ denote the joint laws of $(Y,Z)$, let $P_Y$ and $Q_Y$ denote the marginal laws of $Y$, and let $P_{Z\mid Y=y}$ and $Q_{Z\mid Y=y}$ denote the conditional laws of $Z$ given $Y=y$, respectively. Then
\begin{equation}\label{eq:kl-chain-rule}
    D(P_{Y,Z}\Vert Q_{Y,Z})
    =D(P_Y\Vert Q_Y)
      +\sum_{y\in\mathcal Y}P_Y(y)D(P_{Z\mid Y=y}\Vert Q_{Z\mid Y=y}),
\end{equation}
with the usual extended-value convention. In particular, forgetting $Z$ cannot increase KL divergence:
\begin{equation}\label{eq:forgetting-kl}
    D(P_Y\Vert Q_Y)\le D(P_{Y,Z}\Vert Q_{Y,Z}).
\end{equation}
\end{lemma}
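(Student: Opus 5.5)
The plan is to prove \eqref{eq:kl-chain-rule} by factorizing the joint probability mass functions and splitting the log-likelihood ratio. Then \eqref{eq:forgetting-kl} follows from the nonnegativity of KL divergence (Gibbs' inequality) applied to each conditional term.

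First, I will dispose of the degenerate cases that the extended-value convention has to cover.

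\begin{itemize}
\item If $P_Y(y)>0=Q_Y(y)$ for some $y$, then $D(P_Y\Vert Q_Y)=+\infty$. Since $Q_{Y,Z}(y,z)=0$ for all $z$ while $P_{Y,Z}(y,z)>0$ for some $z$, we also have $D(P_{Y,Z}\Vert Q_{Y,Z})=+\infty$, and both sides equal $+\infty$.
\item Otherwise $P_Y\ll Q_Y$. If, for some $y$ with $P_Y(y)>0$, there is $z$ with $P_{Z\mid Y=y}(z)>0=Q_{Z\mid Y=y}(z)$, then the conditional term carries weight $P_Y(y)>0$ and equals $+\infty$. The joint divergence is again $+\infty$, since $Q_{Y,Z}(y,z)=Q_Y(y)Q_{Z\mid Y=y}(z)=0<P_{Y,Z}(y,z)$.
\end{itemize}

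So I may assume $P_{Y,Z}\ll Q_{Y,Z}$. Values of $y$ with $P_Y(y)=0$ contribute nothing to either side under the convention $0\log(0/q)=0$.

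In the main case, on the support of $P_{Y,Z}$ I write $P_{Y,Z}(y,z)=P_Y(y)P_{Z\mid Y=y}(z)$, and similarly for $Q$. Taking logs gives the pointwise identity
\[
\log\frac{P_{Y,Z}(y,z)}{Q_{Y,Z}(y,z)}=\log\frac{P_Y(y)}{Q_Y(y)}+\log\frac{P_{Z\mid Y=y}(z)}{Q_{Z\mid Y=y}(z)}.
\]
Multiplying by $P_{Y,Z}(y,z)$ and summing over $z$ for fixed $y$ turns the first term into $P_Y(y)\log(P_Y(y)/Q_Y(y))$ and the second into $P_Y(y)D(P_{Z\mid Y=y}\Vert Q_{Z\mid Y=y})$. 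Summing over $y$ then yields \eqref{eq:kl-chain-rule}.

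The step I expect to be the main obstacle is justifying the interchange and splitting of these sums in the countably infinite case, since the summands $p\log(p/q)$ can be negative and a series may fail to converge absolutely. To handle this I will use the nonnegative integrand $u(p,q):=p\log(p/q)-p+q\ge0$. This identity holds because $t\log t-t+1\ge0$. Since $\sum(-p+q)$ over each relevant family equals $0$ (total masses are $1$), each KL divergence equals the corresponding sum of $u$-terms. The pointwise identity above then becomes
\[
u\bigl(P_{Y,Z}(y,z),Q_{Y,Z}(y,z)\bigr)=\frac{Q_{Y,Z}(y,z)}{Q_Y(y)}\,u\bigl(P_Y(y),Q_Y(y)\bigr)+P_Y(y)\,u\bigl(P_{Z\mid Y=y}(z),Q_{Z\mid Y=y}(z)\bigr).
\]
This is a direct check using $Q_{Y,Z}(y,z)=Q_Y(y)Q_{Z\mid Y=y}(z)$: expand both sides into $p\log(p/q)$ and linear terms, and the linear terms match after substitution. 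Every term here is nonnegative. Tonelli's theorem therefore allows summing in any order, and summing over $z$ and then $y$ gives \eqref{eq:kl-chain-rule} with values in $[0,+\infty]$.

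Finally, each conditional divergence satisfies $D(P_{Z\mid Y=y}\Vert Q_{Z\mid Y=y})\ge0$ by Gibbs' inequality, which is the case $P_Y=Q_Y$ of the same $u$-representation. Hence the second term of \eqref{eq:kl-chain-rule} is nonnegative, and \eqref{eq:forgetting-kl} follows.
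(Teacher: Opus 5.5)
Your overall route is the standard textbook argument that the paper simply cites from Cover and Thomas without proof: factor the joint masses, split the log-likelihood ratio, and deduce \eqref{eq:forgetting-kl} from nonnegativity of the conditional divergences. Your treatment of the $+\infty$ cases is fine.

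The gap is in the step you yourself identified as the crux for countable alphabets: your pointwise identity for $u(p,q)=p\log(p/q)-p+q$ is false. Write $p=P_Y(y)$, $q=Q_Y(y)$, $\tilde p=P_{Z\mid Y=y}(z)$, $\tilde q=Q_{Z\mid Y=y}(z)$. The linear terms do cancel, but the $\log(p/q)$ terms do not. The left side contains $p\tilde p\log(p/q)$, whereas your right side contains $p\tilde q\log(p/q)$. The correct identity is
\[
u(p\tilde p,q\tilde q)=\tilde q\,u(p,q)+p\,u(\tilde p,\tilde q)+p(\tilde p-\tilde q)\log\frac{p}{q}.
\]
For example, $p=\tfrac12$, $q=\tfrac14$, $\tilde p=1$, $\tilde q=\tfrac12$ gives $\log 2-\tfrac38$ on the left but $\tfrac34\log 2-\tfrac38$ on your right. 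The correction term has no sign. Using the coefficient $\tilde p$ instead of $\tilde q$ only moves the defect to $(q-p)(\tilde q-\tilde p)$, which is again unsigned. So your justification (``every term is nonnegative, hence Tonelli permits summing in any order'') does not hold as written.

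The repair is short. Apply Tonelli only to the left-hand terms $u\bigl(P_{Y,Z}(y,z),Q_{Y,Z}(y,z)\bigr)\ge 0$; this lets you evaluate the joint divergence as $\sum_y\sum_z$. Fix $y$ with $P_Y(y)>0$, so that $Q_Y(y)>0$ in your main case. The correction term is absolutely summable in $z$, because $\sum_z|\tilde p_z-\tilde q_z|\le 2$, and it sums to zero. Hence the inner sum equals $u(P_Y(y),Q_Y(y))+P_Y(y)\,D(P_{Z\mid Y=y}\Vert Q_{Z\mid Y=y})$. For $P_Y(y)=0$ the inner sum equals $Q_Y(y)=u(0,Q_Y(y))$. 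The outer sum over $y$ is then a sum of nonnegative terms. It splits into $D(P_Y\Vert Q_Y)+\sum_y P_Y(y)D(P_{Z\mid Y=y}\Vert Q_{Z\mid Y=y})$, which is \eqref{eq:kl-chain-rule}.

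Alternatively, you can skip $u$ entirely. Since $p\log(p/q)\ge p-q$, both series $\sum_{y,z}P_{Y,Z}(y,z)\log\frac{P_Y(y)}{Q_Y(y)}$ and $\sum_{y,z}P_{Y,Z}(y,z)\log\frac{P_{Z\mid Y=y}(z)}{Q_{Z\mid Y=y}(z)}$ have summable negative parts. They can therefore be iterated and added in $(-\infty,+\infty]$.
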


\begin{lemma}[Binary coarsening]\label{lem:binary-coarsening}
Let $P$ and $Q$ be probability mass functions on the same finite or countably infinite set $S$, and let $E\subseteq S$. Then
\begin{equation}\label{eq:binary-coarsening}
    D(P\Vert Q)\ge \klbin(P(E)\Vert Q(E)).
\end{equation}
\end{lemma}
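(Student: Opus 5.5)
The natural route is the data-processing inequality for KL divergence, applied to the deterministic map $s\mapsto \ind\{s\in E\}$. I will derive it directly from the chain rule in Lemma~\ref{lem:kl-chain-rule}. Set $Y:=\ind\{S\in E\}$ and $Z:=S$, where $S$ is the identity random element on the set $S$. Under $P$ (resp.\ $Q$), $Y$ is Bernoulli with mean $P(E)$ (resp.\ $Q(E)$), so $D(P_Y\Vert Q_Y)=\klbin(P(E)\Vert Q(E))$. Here the Bernoulli KL is extended to the endpoints $\{0,1\}$ using the conventions $0\log(0/q)=0$ and $p\log(p/0)=+\infty$ stated earlier.

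The pair $(Y,Z)$ is a deterministic function of $S$, and $S$ can be recovered from it. Hence the joint laws $P_{Y,Z}$ and $Q_{Y,Z}$ are the pushforwards of $P$ and $Q$ under the injective map $s\mapsto(\ind\{s\in E\},s)$. KL divergence is invariant under injective relabeling of a countable set, because the sum defining $D$ is just reindexed. Therefore $D(P_{Y,Z}\Vert Q_{Y,Z})=D(P\Vert Q)$. The forgetting inequality \eqref{eq:forgetting-kl} then gives
\[
D(P\Vert Q)=D(P_{Y,Z}\Vert Q_{Y,Z})\ge D(P_Y\Vert Q_Y)=\klbin(P(E)\Vert Q(E)).
\]

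The only delicate point is the extended-value bookkeeping. If $D(P\Vert Q)=+\infty$, there is nothing to prove. Otherwise $P\ll Q$, so $Q(E)=0$ implies $P(E)=0$, and $Q(E)=1$ implies $P(E)=1$. In either case the right-hand side is finite (indeed $0$), so the conventions are consistent. If one prefers not to rely on the chain rule in extended form, a fallback is the log-sum inequality applied separately on $E$ and on $E^c$:
\[
\sum_{s\in E}P(s)\log\frac{P(s)}{Q(s)}\ge P(E)\log\frac{P(E)}{Q(E)},
\]
and similarly on $E^c$. Summing the two inequalities yields \eqref{eq:binary-coarsening} directly. For countably infinite $S$, the log-sum inequality follows from Jensen's inequality, or from applying the finite version to finite truncations and passing to the limit.
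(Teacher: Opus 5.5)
Your proposal is correct and follows essentially the same route as the paper: set $Y=\ind\{Z\in E\}$, identify $D(P_{Y,Z}\Vert Q_{Y,Z})$ with $D(P\Vert Q)$, and apply the chain rule with the forgetting inequality \eqref{eq:forgetting-kl}. Your injective-relabeling justification of that identity and your extended-value bookkeeping make explicit what the paper leaves implicit; the bookkeeping matters when $P(E)$ or $Q(E)$ equals $0$ or $1$, since the paper defines $\klbin$ only on $(0,1)$. The log-sum fallback is valid but not needed.
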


\begin{proof}{Proof of Lemma~\ref{lem:binary-coarsening}}
Let \(Z\) denote the original \(S\)-valued random outcome and define \(Y:=\ind\{Z\in E\}\). Since $Y$ is a deterministic function of $Z$, the joint variable $(Y,Z)$ contains exactly the same information as $Z$, and therefore $D(P_{Y,Z}\Vert Q_{Y,Z})=D(P\Vert Q)$. The marginal law of $Y$ under $P$ is Bernoulli with success probability $P(E)$, and the marginal law of $Y$ under $Q$ is Bernoulli with success probability $Q(E)$. Applying Lemma~\ref{lem:kl-chain-rule} and then forgetting $Z$ gives
\[
    D(P\Vert Q)
    =D(P_{Y,Z}\Vert Q_{Y,Z})
    \ge D(P_Y\Vert Q_Y)
    =\klbin(P(E)\Vert Q(E)),
\]
which proves \eqref{eq:binary-coarsening}.
\end{proof}

\begin{lemma}[Monotonicity of Bernoulli KL in the testing region]\label{lem:binary-monotonicity}
If $0<b<a<1$, then $\klbin(a\Vert b)$ is increasing in $a$ for fixed $b$ and decreasing in $b$ for fixed $a$.
\end{lemma}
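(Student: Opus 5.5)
The statement is Lemma~\ref{lem:binary-monotonicity}: for $0<b<a<1$, $\klbin(a\Vert b)$ is increasing in $a$ and decreasing in $b$ on this region. I will prove both claims by differentiating $\klbin(a\Vert b)$ directly and checking the sign of each partial derivative on $\{0<b<a<1\}$.

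For the dependence on $a$, fix $b\in(0,1)$ and differentiate in $a$:
\[
\frac{\partial}{\partial a}\klbin(a\Vert b)=\log\frac{a}{b}-\log\frac{1-a}{1-b}=\log\frac{a(1-b)}{b(1-a)} .
\]
If $a>b$, then $a/b>1$ and $(1-b)/(1-a)>1$. Hence the argument of the logarithm exceeds one, and the derivative is strictly positive. So $\klbin(a\Vert b)$ is strictly increasing in $a$ on $(b,1)$.

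For the dependence on $b$, fix $a\in(0,1)$ and differentiate in $b$:
\[
\frac{\partial}{\partial b}\klbin(a\Vert b)=-\frac{a}{b}+\frac{1-a}{1-b}=\frac{b-a}{b(1-b)} .
\]
This is strictly negative whenever $b<a$. So $\klbin(a\Vert b)$ is strictly decreasing in $b$ on $(0,a)$.

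Both derivatives are continuous on the open region $\{0<b<a<1\}$. The mean value theorem, applied along segments that stay inside this region, therefore turns the sign computations into the stated strict monotonicity. The segments do stay inside: increasing $a$ with $b$ fixed preserves $b<a$, and moving $b$ within $(0,a)$ preserves it too.

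The only point needing care is that monotonicity is asserted only inside the testing region $b<a$. Both partial derivatives vanish at $a=b$ and reverse sign on the other side, so the conclusion does not extend beyond this region. There is no real obstacle beyond this check.

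This is exactly the form used in the proof of Lemma~\ref{lem:error-to-kl}. There one has $P_1(E)\ge 1-\beta>\alpha\ge P_0(E)$, using $\alpha+\beta<1$. One then lowers $\klbin(P_1(E)\Vert P_0(E))$ to $\klbin(1-\beta\Vert\alpha)$: first decrease the first argument down to $1-\beta$, then increase the second argument up to $\alpha$, with both arguments kept ordered throughout.
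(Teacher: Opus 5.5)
Your proof is correct and follows the paper's own argument: both compute $\frac{\partial}{\partial a}\klbin(a\Vert b)=\log\frac{a(1-b)}{b(1-a)}>0$ and $\frac{\partial}{\partial b}\klbin(a\Vert b)=\frac{b-a}{b(1-b)}<0$ on $\{0<b<a<1\}$. Your remark that the relevant segments stay inside this region, so that the sign computations give monotonicity, makes explicit a step the paper leaves implicit.
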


\begin{proof}{Proof of Lemma~\ref{lem:binary-monotonicity}}
Expanding the logarithms in
the definition of $\klbin(a\Vert b)$,
\[
    \klbin(a\Vert b)
    =a\log a-a\log b
    +(1-a)\log(1-a)-(1-a)\log(1-b).
\]

\emph{Derivative in $a$.} Differentiating term by
term with respect to $a$, with $b$ held fixed:
\begin{align*}
    \tfrac{\partial}{\partial a}[a\log a]
    &=\log a+1
    &&\text{(product rule)},\\
    \tfrac{\partial}{\partial a}[-a\log b]
    &=-\log b,\\
    \tfrac{\partial}{\partial a}[(1-a)\log(1-a)]
    &=-\log(1-a)-1
    &&\text{(product and chain rules)},\\
    \tfrac{\partial}{\partial a}[-(1-a)\log(1-b)]
    &=\log(1-b).
\end{align*}
Summing the four lines, the constants $+1$ and
$-1$ cancel, leaving
\[
    \tfrac{\partial}{\partial a}\klbin(a\Vert b)
    =\log a-\log b-\log(1-a)+\log(1-b)
    =\log\frac{a(1-b)}{b(1-a)}.
\]
Since $a>b$, we have $a(1-b)>b(1-a)$ (equivalent to
$a>b$ after expanding), so the argument of the
logarithm exceeds one and the derivative is strictly
positive. Hence $\klbin(a\Vert b)$ is strictly
increasing in $a$ on the region $a>b$.

\emph{Derivative in $b$.} Differentiating term by
term with respect to $b$, with $a$ held fixed:
\begin{align*}
    \tfrac{\partial}{\partial b}[a\log a]
    &=0,\\
    \tfrac{\partial}{\partial b}[-a\log b]
    &=-\frac{a}{b},\\
    \tfrac{\partial}{\partial b}[(1-a)\log(1-a)]
    &=0,\\
    \tfrac{\partial}{\partial b}[-(1-a)\log(1-b)]
    &=\frac{1-a}{1-b}
    &&\text{(chain rule)}.
\end{align*}
Summing,
\[
    \tfrac{\partial}{\partial b}\klbin(a\Vert b)
    =-\frac{a}{b}+\frac{1-a}{1-b}
    =\frac{-a(1-b)+b(1-a)}{b(1-b)}
    =\frac{b-a}{b(1-b)}.
\]
Since $a>b$ and $b(1-b)>0$, the numerator is negative
and the denominator is positive, so the derivative is
strictly negative. Hence $\klbin(a\Vert b)$ is strictly
decreasing in $b$ on the region $a>b$.
\end{proof}

\begin{proof}{Proof of Lemma~\ref{lem:error-to-kl}}
Let $E=\{\tau: \tilde\delta=1\}$ be the set of rejection transcripts. By Lemma~\ref{lem:binary-coarsening},
\[
    D(P_1^\pi\Vert P_0^\pi)
    \ge \klbin(P_1^\pi(E)\Vert P_0^\pi(E)).
\]
The type-II constraint gives $P_1^\pi(E)=\Pp^{\pi}_1(\delta^\pi=1)\ge 1-\beta$, and the type-I constraint gives $P_0^\pi(E)=\Pp^{\pi}_0(\delta^\pi=1)\le\alpha$. Since $\alpha+\beta<1$, we have $1-\beta>\alpha$. By Lemma~\ref{lem:transcript-pmf}, \(P_0^\pi\) and \(P_1^\pi\) have common support; together with the error constraints, this implies \(0<P_0^\pi(E)<P_1^\pi(E)<1\). Lemma~\ref{lem:binary-monotonicity} gives
\[
    \klbin(P_1^\pi(E)\Vert P_0^\pi(E))
    \ge \klbin(1-\beta\Vert\alpha)=A.
\]
This proves the forward KL requirement.

For the reverse direction, let $F=\{\tau: \tilde\delta=0\}$ be the set of acceptance transcripts. Lemma~\ref{lem:binary-coarsening} gives
\[
    D(P_0^\pi\Vert P_1^\pi)
    \ge \klbin(P_0^\pi(F)\Vert P_1^\pi(F)).
\]
The type-I constraint gives $P_0^\pi(F)\ge 1-\alpha$, and the type-II constraint gives $P_1^\pi(F)\le\beta$. Since $1-\alpha>\beta$, Lemma~\ref{lem:binary-monotonicity} gives
\[
    \klbin(P_0^\pi(F)\Vert P_1^\pi(F))
    \ge \klbin(1-\alpha\Vert\beta)=B.
\]
\end{proof}

\subsection{Proof of Theorem~\ref{thm:kl-decomposition}}
\begin{proof}{Proof of Theorem~\ref{thm:kl-decomposition}}

We prove \eqref{eq:forward-kl-decomp}. The proof of
\eqref{eq:reverse-kl-decomp} is obtained by
interchanging the roles of $H_0$ and $H_1$.

% The proof computes the conditional expected
% log-likelihood ratio in each of the four cases of the
% one-step observation rule and then sums the
% contributions. The four cases are: an AI query not
% preceded by a human query on that item, an AI query
% after that item's human label has been revealed, a
% human query not preceded by an AI query on that item,
% and a human query after that item's AI report has been
% observed. These cases produce, respectively,
% $I_R^{(1)}$, zero, $J_X^{(1)}$, and $d^{(1)}(R_i)$.

According to Lemma~\ref{lem:transcript-pmf}, $P^\pi_0$ and $P^\pi_1$ have common support, and
\[
    \log\frac{P_1^\pi(\tau)}{P_0^\pi(\tau)}
    =\sum_{t=1}^{\stime-1}
      \log\frac{\varphi_1(o_t\mid a_t,\tau_{t-1})}
               {\varphi_0(o_t\mid a_t,\tau_{t-1})}.
\]

Define the one-step log-likelihood-ratio increment
\[
    L_t:=
    \begin{cases}
        \log\dfrac{\varphi_1(O_t\mid A_t,\calT_{t-1})}
                  {\varphi_0(O_t\mid A_t,\calT_{t-1})},
        & t< \ltime,\\[2ex]
        0, & t\geq \ltime,
    \end{cases}
\]
Then, since $\ltime\leq 2N^\pi + 1$ almost surely, we have that
\begin{equation}\label{eq:kl-as-sum-of-Lt}
    D(P_1^\pi\Vert P_0^\pi)
    =\E^\pi_1\!\left[\log\frac{P_1^\pi(\calT)}{P_0^\pi(\calT)}\right]
    = \E^\pi_1\left[\sum_{t=1}^{\ltime-1}
      \log\frac{\varphi_1(O_t\mid A_t,\calT_{t-1})}
               {\varphi_0(O_t\mid A_t,\calT_{t-1})}
               \right]
    = \sum_{t=1}^{2N^\pi}\E^\pi_1[L_t].
\end{equation}

It remains to evaluate $\E^\pi_1[L_t]$. Let 
\[
    \mathcal{F}_t = \sigma(A_1, O_1,\cdots, A_{t-1}, O_{t-1}, A_t)
\]
be the filtration generated by the history $\calH_{t-1} = \sigma(A_1, O_1,\cdots, A_{t-1}, O_{t-1})$ and action $A_t$. Then
$\{t< \ltime\}\in\mathcal F_{t}$. By law of iterated expectation,
$$
D(P_1^\pi\Vert P_0^\pi)
    =\sum_{t=1}^{2N^\pi}\E^\pi_1[L_t] = \sum_{t=1}^{2N^\pi}\E^\pi_1[\E^\pi_1[L_t|\calF_t]] = \sum_{t=1}^{2N^\pi}\E^\pi_1[\ind\{t< \ltime\}\E^\pi_1[L_t|\calF_t]],
$$
where the third equality follows from definition of $L_t$ and that $\{t< \ltime\}\in\mathcal F_{t}$. 

% Define two
% availability indicators for item $i$:
% \[
%     A_{i,t}^{\AI}=1
%     \quad\text{if item }i\text{ has already been
%     AI-queried before }t,
% \]
% \[
%     A_{i,t}^{\Hum}=1
%     \quad\text{if item }i\text{ has already been
%     human-queried before }t.
% \]
% If $A_{i,t}^{\AI}=1$, let $R_{i,t}$ denote the observed
% AI report. If $A_{i,t}^{\Hum}=1$, let $X_{i,t}$ denote
% the observed human label. 
We now compute $\E^\pi_1[L_t\mid\calF_t]$ for $t< \ltime$. 
By definition of an admissible policy, for all $t< \ltime$ we have that $\sum^t_{s=1}\ind\{A_s = \AI(i)\}\leq 1$ and $\sum^t_{s=1}\ind\{A_s = \Hum(i)\}\leq 1$ for all $i$. Thus we have four cases of the filtration $\calF_t$: 

\emph{Case 1: AI query not preceded by a human query.}
On the event
$\{t< \ltime\}\cap\{A_t=\AI(i),\,\sum^{t-1}_{s=1}\ind\{A_s = \Hum(i)\}=0\}$,
\[
    \E^\pi_1[L_t\mid\mathcal F_t]
    =\sum_{r\in\calR}g_1(r)\log\frac{g_1(r)}{g_0(r)}
    =D(G_1\Vert G_0)
    =I_R^{(1)},
\]
by the definition of KL
divergence and the definition \eqref{eq:IR1} of
$I_R^{(1)}$.

\emph{Case 2: direct human query.} On the event
$\{t< \ltime\}\cap\{A_t=\Hum(i),\,\sum^{t-1}_{s=1}\ind\{A_s = \AI(i)\}=0\}$, 
\[
    \E^\pi_1[L_t\mid\mathcal F_t]
    =p_1\log\frac{p_1}{p_0}
      +(1-p_1)\log\frac{1-p_1}{1-p_0}
    =\klbin(p_1\Vert p_0)
    =J_X^{(1)}.
\]

\emph{Case 3: human query after an AI report.} On the
event $\{t< \ltime\}\cap\{A_t=\Hum(i),\,
\sum^{t-1}_{s=1}\ind\{A_s = \AI(i)\}=1\}$. We have for each $o_t\in\{0,1\}$, 
\[
    \log\frac{\rho_1(o_t\mid r)}{\rho_0(o_t\mid r)}
    =o_t\log\frac{q_1(r)}{q_0(r)}
     +(1-o_t)\log\frac{1-q_1(r)}{1-q_0(r)},
\]
and $\E^\pi_1[O_t\mid\mathcal F_t]=q_1(R_i)$ on this event, so
\[
    \E^\pi_1[L_t\mid\mathcal F_t]
    =q_1(R_i)\log\frac{q_1(R_i)}{q_0(R_i)}
      +(1-q_1(R_i))\log\frac{1-q_1(R_i)}{1-q_0(R_i)}
    =\klbin(q_1(R_i)\Vert q_0(R_i))
    =d^{(1)}(R_i).
\]

\emph{Case 4: AI query after the item's human label has
been revealed.} On the event
$\{t< \ltime\}\cap\{A_t=\AI(i),\,\sum^{t-1}_{s=1}\ind\{A_s = \Hum(i)\}=1\}$, for $O_t= r\in\calR$ and $X_i = x$, 
$\varphi_1(O_t|A_t, \calT_{t-1}) = \varphi_0(O_t|A_t, \calT_{t-1}) = f_x(r)$ almost surely, and thus
$\E^\pi_1[L_t\mid\mathcal F_t]=0$.

Combining the above four cases, we have that 
\begin{align*}
D(P_1^\pi\Vert P_0^\pi)
   &= \sum_{t=1}^{2N^\pi}\E^\pi_1[\ind\{t< \ltime\}\E^\pi_1[L_t|\calF_t]]\\
   &= \sum_{t=1}^{2N^\pi}\E^\pi_1\Big[\sum^{N^\pi}_{i=1}\ind\{t< \ltime, A_t = \AI(i), \sum^{t-1}_{s=1}\ind\{A_s = \Hum(i)\} = 0\}
   \E^\pi_1[L_t|\calF_t]\Big] \\
   &\qquad\qquad+ \sum_{t=1}^{2N^\pi}\E^\pi_1\Big[\sum^{N^\pi}_{i=1}\ind\{t< \ltime, A_t = \Hum(i), \sum^{t-1}_{s=1}\ind\{A_s = \AI(i)\} = 0\}\E^\pi_1[L_t|\calF_t]\Big]\\
   &\qquad\qquad
   +\sum_{t=1}^{2N^\pi}\E^\pi_1\Big[\sum^{N^\pi}_{i=1}\ind\{t< \ltime, A_t = \Hum(i), \sum^{t-1}_{s=1}\ind\{A_s = \AI(i)\} = 1\}\E^\pi_1[L_t|\calF_t]\Big]\\
   &\qquad\qquad+ \sum_{t=1}^{2N^\pi}\E^\pi_1\Big[\sum^{N^\pi}_{i=1}\ind\{t< \ltime, A_t = \AI(i), \sum^{t-1}_{s=1}\ind\{A_s = \Hum(i)\} = 1\}\E^\pi_1[L_t|\calF_t]\Big]\\
   & = \sum_{t=1}^{2N^\pi}\E^\pi_1\Big[\sum^{N^\pi}_{i=1}\ind\{t< \ltime, A_t = \AI(i), \sum^{t-1}_{s=1}\ind\{A_s = \Hum(i)\} = 0\}
   I^{(1)}_R\Big] \\
   &\qquad\qquad+ \sum_{t=1}^{2N^\pi}\E^\pi_1\Big[\sum^{N^\pi}_{i=1}\ind\{t< \ltime, A_t = \Hum(i), \sum^{t-1}_{s=1}\ind\{A_s = \AI(i)\} = 0\}J^{(1)}_X\Big]\\
   &\qquad\qquad+
   \sum_{t=1}^{2N^\pi}\E^\pi_1\Big[\sum^{N^\pi}_{i=1}\ind\{t< \ltime, A_t = \Hum(i), \sum^{t-1}_{s=1}\ind\{A_s = \AI(i)\} = 1\}d^{(1)}(R_i)\Big]\\
   & = \E^\pi_1\left[
        N^{\dir, \pi}_{\AI} I_R^{(1)}
        +N^{\dir,\pi}_{\Hum} J_X^{(1)}
        +\sum_{i\in\calI_{\AI\Hum}^\pi}d^{(1)}(R_i)
      \right],
\end{align*}
where the last equality follows by the definition of $N^{\dir, \pi}_{\AI}, N^{\dir,\pi}_{\Hum}$ and $\calI^\pi_{\AI\Hum}$.
Then \eqref{eq:reverse-kl-decomp} follows by
interchanging the roles of $H_0$ and $H_1$.
\end{proof}

\subsection{Proof of Lemma~\ref{lem:capacity-frontier-properties}}
\begin{proof}{Proof of Lemma~\ref{lem:capacity-frontier-properties}}
We first prove the dual representation \eqref{eq:Psi-dual}. We start by deriving the dual program of \eqref{eq:Psi-dual}, and show that the dual of \eqref{eq:Psi-dual} is \eqref{eq:Psi-frontier-def}. Then, according to Theorem 4.1 of \citet{bertsimas1997introduction}, which states that for linear programming problem, the dual of dual is the primal, we conclude that \eqref{eq:Psi-dual} is the dual of \eqref{eq:Psi-frontier-def}.

Let
\(\lambda\ge0\) be the dual variable for the constraint
\(\sum_r g_h(r)\eta(r)\le s\), and for each $r\in\calR$, let \(\mu(r)\ge0\) be the dual
variable for the constraint \(\eta(r)\le1\). Following the recipe in Section~4.2 of \citet{bertsimas1997introduction}, the linear programming (LP) dual problem for \eqref{eq:Psi-dual} is:
\begin{equation}\label{appendix-equ:dual}
\begin{split}
    \min_{\lambda\geq 0,\mu}\quad
        &\lambda s+\sum_{r\in\calR}\mu(r)\\
    \text{s.t.}\quad
        &\lambda g_h(r)+\mu(r)\ge g_h(r)d^{(h)}(r),\qquad \forall r\in\calR,\\
        &\mu(r)\ge0,\qquad \forall r\in\calR. 
\end{split}
\end{equation}
For each fixed \(\lambda\ge0\), the constraint
\(\mu(r)\ge g_h(r)(d^{(h)}(r)-\lambda)\) together with \(\mu(r)\ge0\)
implies that 
\[
    \mu(r)=g_h(r)\pos{d^{(h)}(r)-\lambda}
\]
minimizes the objective in \eqref{appendix-equ:dual}. 
Substituting $\mu(r)=g_h(r)\pos{d^{(h)}(r)-\lambda}$ into the dual objective gives \(\lambda s+\sum_r
g_h(r)\pos{d^{(h)}(r)-\lambda}\). Additionally, since the LP problem \eqref{eq:Psi-dual} is feasible and
bounded above, strong LP duality
(Theorem~4.4 of \citet{bertsimas1997introduction}) gives
\[
    \Psi_h(s)
    =\inf_{\lambda\ge0}
        \left\{\lambda s+
        \sum_{r\in\calR}g_h(r)\pos{d^{(h)}(r)-\lambda}\right\},
\]
which is \eqref{eq:Psi-frontier-def}.

Fix \(h\in\{0,1\}\). The maximization in \eqref{eq:Psi-dual} is
a finite-dimensional linear program in the variables
\(\{\eta(r)\}_{r\in\calR}\), parameterized by \(s\).
Its feasible region is nonempty (\(\eta\equiv0\) is feasible) and
bounded (\(0\le\eta(r)\le1\)), so the maximum is attained for every
\(s\in[0,1]\). Monotonicity of $\Psi_h(s)$ in $s$ follows because increasing \(s\) enlarges
the feasible set. According to Section~5.2 of \citet{bertsimas1997introduction}, $\Psi_h(s)$ is concave and piecewise linear in \(s\), hence
continuous on \([0,1]\). 

It remains to prove \eqref{eq:Psi-one-chain-rule}. At \(s=1\), \(\eta(r)=1\) for every \(r\in\calR\) is feasible (because
\(\sum_{r\in\calR}g_h(r)=1\)) and optimal (because \(d^{(h)}(r)\geq 0\) for all $r\in\calR$) for \eqref{eq:Psi-dual}. Thus
\[
    \Psi_h(1)=\sum_{r\in\calR}g_h(r)d^{(h)}(r).
\]

For $h \in \{0,1\}$, let $P_h^{X,R}$ denote the joint pmf of
$(X_i,R_i)$ under $H_h$, with
$P_h^{X,R}(x,r)= p_h^x(1-p_h)^{1-x}f_x(r)$. For \(h=1\), we compute \(D(P_1^{X,R}\Vert P_0^{X,R})\) in two different
ways and equate the two expressions.

On one hand, we have that 
\begin{align*}
    D(P_1^{X,R}\Vert P_0^{X,R}) &= \sum_{x\in\{0,1\}}\sum_{r\in\calR}P_1^{X,R}(x,r)\log\frac{P_1^{X,R}(x,r)}{P_0^{X,R}(x,r)}\\
    &=\sum_{x\in\{0,1\}}\sum_{r\in\calR}p_1^x(1-p_1)^{1-x}f_x(r)\log\frac{p_1^x(1-p_1)^{1-x}}{p_0^x(1-p_0)^{1-x}}\\
    &=\sum_{x\in\{0,1\}}p_1^x(1-p_1)^{1-x}\log\frac{p_1^x(1-p_1)^{1-x}}{p_0^x(1-p_0)^{1-x}}\\
    &= \klbin(p_1||p_0)\\
    &= J^{(1)}_X. 
\end{align*}
In the above equations, the
second equality follows from the definition of $P_h^{X,R}(x,r)$, and the third follows since \(\sum_r P_1^{X,R}(x,r)=p_1^x(1-p_1)^{1-x}\). Moreover, Lemma \ref{lem:kl-chain-rule} gives
\[
    D(P_1^{X,R}\Vert P_0^{X,R})
    =D(G_1\Vert G_0)
    +\sum_{r\in\calR}g_1(r)D(\rho_1(\cdot\mid r)\Vert\rho_0(\cdot\mid r))
    =I_R^{(1)}+\sum_{r\in\calR}g_1(r)d^{(1)}(r),
\]
where the second equality uses \eqref{eq:IR1} and
\eqref{eq:d1-def}.

Equating the two expressions for
\(D(P_1^{X,R}\Vert P_0^{X,R})\) gives
\[
    J_X^{(1)}=I_R^{(1)}+\sum_{r\in\calR}g_1(r)d^{(1)}(r) = I_R^{(1)} + \Psi_1(1).
\]
The proof for
\(h=0\) is identical.
\end{proof}

\subsection{Proof of Proposition~\ref{prop:escalation-solution}}

\begin{proof}{Proof of Proposition~\ref{prop:escalation-solution}}
Fix \(h\in\{0,1\}\) and $s\in[0,1]$, and abbreviate
\[
    g_j:=g_h(r^{(h)}_j),\qquad
    d_j:=d^{(h)}(r^{(h)}_j),\qquad
    W_j:=W^{(h)}_j .
\]
By Assumption~\ref{ass:score-model}, \(g_j>0\) for every \(j\), and by
construction,
\[
    d_1\ge d_2\ge\cdots\ge d_{|\calR|}\ge0,
    \qquad
    W_j=\sum_{\ell=1}^j g_\ell,
    \qquad
    W_{|\calR|}=1.
\]

We first verify feasibility. If \(s=0\), the rule in
\eqref{eq:eta-frontier-closed-form} sets every component equal to zero.
If \(s\in(0,1]\), let $q:=\min\{j\in[|\calR|]:W_j\ge s\}$. 
Then the same rule sets
\[
    \eta^*_h(r^{(h)}_j;s)
    =
    \begin{cases}
        1, & j<q,\\[1ex]
        (s-W_{q-1})/g_q, & j=q,\\[1ex]
        0, & j>q.
    \end{cases}
\]
This representation also covers \(s=W_q\), in which case the middle
component equals one. Since
\(W_{q-1}<s\le W_q=W_{q-1}+g_q\), every component belongs to
\([0,1]\), and
\[
    \sum_{j=1}^{|\calR|} g_j\eta^*_h(r^{(h)}_j;s)
    =W_{q-1}+g_q\frac{s-W_{q-1}}{g_q}
    =s.
\]
Thus \(\eta^*_h:= \{\eta^*_h(r^{(h)}_j; s)\}_{j\in[|\calR|]}\) is feasible for \eqref{eq:Psi-dual}.

It remains to prove optimality. The case \(s=0\) is immediate because
strict positivity of the \(g_j\)'s forces every feasible component to be
zero. Suppose \(s\in(0,1]\), let \(q\) be as above, and consider any
feasible rule \(\eta(\cdot; s):\calR\to[0,1]\). Using
\(\sum_jg_j\eta(r^{(h)}_j; s)\le s
=\sum_jg_j\eta^*_h(r^{(h)}_j; s)\), we obtain
\begin{align*}
&\sum_{j=1}^{|\calR|} g_jd_j
   \bigl(\eta(r^{(h)}_j; s)-\eta^*_h(r^{(h)}_j;s)\bigr)\\
&\quad=
  \sum_{j=1}^{|\calR|} g_j(d_j-d_q)
   \bigl(\eta(r^{(h)}_j;s)-\eta^*_h(r^{(h)}_j;s)\bigr)
  +d_q\sum_{j=1}^{|\calR|} g_j
   \bigl(\eta(r^{(h)}_j; s)-\eta^*_h(r^{(h)}_j; s)\bigr)
  \le0.
\end{align*}
Indeed, the last sum is nonpositive and \(d_q\ge0\). For \(j<q\),
\(d_j-d_q\ge0\) while
\(\eta(r^{(h)}_j;s)-\eta^*_h(r^{(h)}_j; s)
=\eta(r^{(h)}_j;s)-1\le0\). For \(j>q\),
\(d_j-d_q\le0\) while
\(\eta(r^{(h)}_j;s)-\eta^*_h(r^{(h)}_j;s)
=\eta(r^{(h)}_j;s)\ge0\). The \(j=q\) term vanishes because
\(d_j-d_q=0\). Hence every feasible \(\eta\) has objective value no
larger than that of \(\eta^*_h\), proving that
\(\eta^*_h\) is optimal for \eqref{eq:Psi-dual}.
\end{proof}

\subsection{Proof of Lemma~\ref{lem:selected-info-bound}}

\begin{proof}{Proof of Lemma~\ref{lem:selected-info-bound}}
By definition, $\calI_{\AI\Hum}^\pi\subseteq
\calI_{\AI}^\pi$, and in particular $N^{\pi}_{\AI\Hum}\le N^{\dir, \pi}_{\AI}$ on every
sample path. If $\E^\pi_h[N^{\dir, \pi}_{\AI}]=0$, then $N^{\dir, \pi}_{\AI}=0$ almost
surely, since $N^{\dir, \pi}_{\AI}$ is a nonnegative integer with zero mean.
By $N^{\pi}_{\AI\Hum}\le N^{\dir, \pi}_{\AI}$, also $N^{\pi}_{\AI\Hum}=0$ almost surely, so both sides
of \eqref{eq:selected-info-bound} are zero. Assume $\E^\pi_h[N^{\dir, \pi}_{\AI}]>0$
for the remainder.

For each item $i\in[N^\pi]$, let
\[
    B_i:=\ind\{\text{item }i\text{ is AI-queried before any human
    query on it}\},
\]
equivalently, that item $i$'s first paid action is an AI query,
and let
\[
    E_i:=\ind\{\text{item }i\text{ is human-queried after its AI
    report is observed}\}.
\]
Then $N^{\dir, \pi}_{\AI}=\sum_{i=1}^{N^\pi}B_i$ and $N^{\pi}_{\AI\Hum}=\sum_{i=1}^{N^\pi}E_i$, so
$\E^\pi_h[N^{\dir, \pi}_{\AI}]=\sum_i\E^\pi_h[B_i]$ and $\E^\pi_h[N^{\pi}_{\AI\Hum}]=\sum_i\E^\pi_h[E_i]$. Since $\calI_{\AI\Hum}^\pi\subseteq\calI_{\AI}^\pi$, $E_i=1$ implies $B_i=1$, so
$0\le E_i\le B_i\le1$. Since
$\calI_{\AI\Hum}^\pi=\{i:E_i=1\}$,
\[
    \sum_{i\in\calI_{\AI\Hum}^\pi}d^{(h)}(R_i)
    =\sum_{i=1}^{N^\pi}E_i\,d^{(h)}(R_i).
\]

Fix $\lambda\ge0$. For
every item $i$,
\begin{equation}\label{eq:selected-pathwise}
    E_i\,d^{(h)}(R_i) \le \lambda E_i+E_i\pos{d^{(h)}(R_i)-\lambda}
    \le \lambda E_i+B_i\pos{d^{(h)}(R_i)-\lambda}.
\end{equation}
Summing
\eqref{eq:selected-pathwise} over $i$, taking expectation under
$H_h$, and using $\sum_i\E^\pi_h[E_i]=\E^\pi_h[N^{\pi}_{\AI\Hum}]$,
\begin{equation}\label{eq:selected-dual-step1}
    \E^\pi_h\left[\sum_{i=1}^{N^\pi}E_i\,d^{(h)}(R_i)\right]
    \le \lambda\E^\pi_h[N^{\pi}_{\AI\Hum}]+
       \sum_{i=1}^{N^\pi}\E^\pi_h\left[B_i\pos{d^{(h)}(R_i)-\lambda}\right].
\end{equation}

We claim $B_i$ is independent of
$R_i$ under $H_h$. The indicator $B_i$ records whether item
$i$'s first paid action is an AI query. That decision is made
from the history available just before item $i$ is first
queried, and at that moment neither $R_i$ (revealed only by an
AI query on $i$) nor $X_i$ (revealed only by a human query on
$i$) has been observed. Hence $B_i$ is a function of the
randomization seed and the other items' variables
$\{(X_j,R_j):j\ne i\}$ alone. By Assumption~\ref{ass:iid} the
family $\{(X_j,R_j):j\ne i\}$ is independent of $R_i$, and the
seed is independent of all data; therefore $B_i\perp R_i$ under
$H_h$. The product rule for independent random variables gives
the first equality below, and the marginal law $R_i\sim g_h$
under $H_h$ (by \eqref{eq:gh-def}) gives the second:
\[
    \E^\pi_h\left[B_i\pos{d^{(h)}(R_i)-\lambda}\right]
    =\E^\pi_h[B_i]\,\E^\pi_h\left[\pos{d^{(h)}(R_i)-\lambda}\right]
    =\E^\pi_h[B_i]\sum_{r\in\calR}g_h(r)\pos{d^{(h)}(r)-\lambda}.
\]
Summing over $i$ and using $\sum_i\E^\pi_h[B_i]=\E^\pi_h[N^{\dir, \pi}_{\AI}]$,
\[
    \sum_{i=1}^{N^\pi}\E^\pi_h\left[B_i\pos{d^{(h)}(R_i)-\lambda}\right]
    =\E^\pi_h[N^{\dir, \pi}_{\AI}]\sum_{r\in\calR}g_h(r)\pos{d^{(h)}(r)-\lambda}.
\]

Substituting into
\eqref{eq:selected-dual-step1} and factoring out $\E^\pi_h[N^{\dir, \pi}_{\AI}]>0$,
\[
    \E^\pi_h\left[\sum_{i=1}^{N^\pi}E_i\,d^{(h)}(R_i)\right]
    \le
    \E^\pi_h[N^{\dir, \pi}_{\AI}]\left[
    \lambda\,\frac{\E^\pi_h[N^{\pi}_{\AI\Hum}]}{\E^\pi_h[N^{\dir, \pi}_{\AI}]}
    +\sum_{r\in\calR}g_h(r)\pos{d^{(h)}(r)-\lambda}
    \right].
\]
The left side does not depend on $\lambda$, so the inequality
holds with the right side replaced by its infimum over
$\lambda\ge0$. Write $s:=\E^\pi_h[N^{\pi}_{\AI\Hum}]/\E^\pi_h[N^{\dir, \pi}_{\AI}]$; then $s\in[0,1]$
because $0\le\E^\pi_h[N^{\pi}_{\AI\Hum}]\le\E^\pi_h[N^{\dir, \pi}_{\AI}]$. By \eqref{eq:Psi-frontier-def},
\[
    \inf_{\lambda\ge0}\left\{
    \lambda s+\sum_{r\in\calR}g_h(r)\pos{d^{(h)}(r)-\lambda}
    \right\}
    =\Psi_h(s).
\]
Combining the last two displays with the identity
$\sum_{i\in\calI_{\AI\Hum}^\pi}d^{(h)}(R_i)=\sum_iE_i\,d^{(h)}(R_i)$
yields
\[
    \E^\pi_h\left[\sum_{i\in\calI_{\AI\Hum}^\pi}d^{(h)}(R_i)\right]
    \le\E^\pi_h[N^{\dir, \pi}_{\AI}]\,\Psi_h\!\left(\frac{\E^\pi_h[N^{\pi}_{\AI\Hum}]}{\E^\pi_h[N^{\dir, \pi}_{\AI}]}\right),
\]
which is \eqref{eq:selected-info-bound}.
\end{proof}

\subsection{Proof of Proposition~\ref{prop:lower-bound-convexity}}
\begin{proof}{Proof of Proposition~\ref{prop:lower-bound-convexity}}
We show that Problem~\eqref{eq:Gamma-def} is a convex
optimization problem. We first notice that the first and second constraints of \eqref{eq:Gamma-def} are linear constraints in the decision variables $(n_{\Hum}, n_{\AI}, n_\esc)$. The third constraint involves nonlinear function 
$n_{\AI}\Psi_h(n_{\esc}/n_{\AI})$; we show that it is jointly concave in $(n_{\AI}, n_\esc)$. Notice that $\Psi_h(s)$ is concave in $s$ (see Theorem 5.1 of \citet{bertsimas1997introduction}). Additionally, $\Psi_h^{\mathrm{persp}}(n_{\AI},n_{\esc}):= n_{\AI}\Psi_h(n_{\esc}/n_{\AI})$ is the perspective function of $\Psi_h$ with domain $\{(n_\AI,n_{\esc}): 0\leq n_\esc \leq n_{\AI}\}$ and therefore is also concave (Section 3.2.6 of \citet{boyd2004convex}). Finally the feasible set of \eqref{eq:Gamma-def} is a superlevel set of a concave function intersected with 
half-spaces formed by
linear inequalities. Thus the feasible set is convex. The objective is linear. Therefore \eqref{eq:Gamma-def} is a convex optimization problem. 
\end{proof}

\subsection{Proof of Theorem~\ref{thm:lower-bound}}
\begin{proof}{Proof of Theorem~\ref{thm:lower-bound}}
    Fix any feasible policy $\pi$ using $N^\pi$ items. 
We first show that $\E^\pi_1[C^\pi]\geq N^\pi\cdata + \Gamma_1(A, N^\pi)$, and that  $\E^\pi_0[C^\pi]\geq N^\pi\cdata + \Gamma_0(B, N^\pi)$ can be derived analogously. Recall 
\begin{align*}
    \E^\pi_1[C^\pi] = \cdata N^\pi + \E^\pi_1[\cAI N^{\mathrm{tot}}_{\AI}+\cH N^{\mathrm{tot}}_{\Hum}]
    &\ge \cdata N^\pi +\cAI \E^\pi_1[N^{\dir, \pi}_{\AI}]+\cH(\E^\pi_1[N^{\dir,\pi}_{\Hum}]+\E^\pi_1[N^{\pi}_{\AI\Hum}]),
\end{align*}
where the inequality follows since each item counted by $N^{\dir, \pi}_{\AI}$ is AI-queried once, so $N^{\dir, \pi}_{\AI}\le N^{\mathrm{tot}}_\AI$, and every direct human query and every AI-after-report human escalation is a human query, so $N^{\dir,\pi}_{\Hum}+N^{\pi}_{\AI\Hum}\le N^{\mathrm{tot}}_{\Hum}$. 
Thus it is sufficient to show that $\cAI\E^\pi_1[N^{\dir, \pi}_{\AI}]+\cH(\E^\pi_1[N^{\dir,\pi}_{\Hum}]+\E^\pi_1[N^{\pi}_{\AI\Hum}])\geq \Gamma_1(A, N^\pi)$. 

We show $\cAI\E^\pi_1[N^{\dir, \pi}_{\AI}]+\cH(\E^\pi_1[N^{\dir,\pi}_{\Hum}]+\E^\pi_1[N^{\pi}_{\AI\Hum}])\geq \Gamma_1(A, N^\pi)$ by noting that $(\E^\pi_1[N^{\dir,\pi}_{\Hum}],\E^\pi_1[N^{\dir, \pi}_{\AI}],\E^\pi_1[N^{\pi}_{\AI\Hum}])$ is feasible for $\Gamma_1(A,N^\pi)$. Specifically, 
by Lemma~\ref{lem:error-to-kl} and Theorem~\ref{thm:kl-decomposition},
\[
    A\le
    \E^\pi_1\left[
        N^{\dir, \pi}_{\AI} I_R^{(1)}+N^{\dir,\pi}_{\Hum} J_X^{(1)}+
        \sum_{i\in\calI_{\AI\Hum}^\pi}d^{(1)}(R_i)
    \right].
\]
Since an item cannot be both direct-human-before-AI and AI-before-human, we have $N^{\dir,\pi}_{\Hum}+N^{\dir, \pi}_{\AI}\le N^\pi$ on every sample path. Thus $\E^\pi_1[N^{\dir,\pi}_{\Hum}]+\E^\pi_1[N^{\dir, \pi}_{\AI}]\le N^\pi$. Also $0\le \E^\pi_1[N^{\pi}_{\AI\Hum}]\le \E^\pi_1[N^{\dir, \pi}_{\AI}]$. By Lemma~\ref{lem:selected-info-bound},
\[
    \E^\pi_1\left[\sum_{i\in\calI_{\AI\Hum}^\pi}d^{(1)}(R_i)\right]
    \le \E^\pi_1[N^{\dir, \pi}_{\AI}]\Psi_1(\E^\pi_1[N^{\pi}_{\AI\Hum}]/\E^\pi_1[N^{\dir, \pi}_{\AI}]),
\]
where $\E^\pi_1[N^{\dir, \pi}_{\AI}]\Psi_1(\E^\pi_1[N^{\pi}_{\AI\Hum}]/\E^\pi_1[N^{\dir, \pi}_{\AI}])=0$ when $\E^\pi_1[N^{\dir, \pi}_{\AI}]=0$, as in \eqref{eq:Gamma-def}. Therefore
\[
    \E^\pi_1[N^{\dir,\pi}_{\Hum}]J_X^{(1)}+\E^\pi_1[N^{\dir, \pi}_{\AI}]I_R^{(1)}+\E^\pi_1[N^{\dir, \pi}_{\AI}]\Psi_1(\E^\pi_1[N^{\pi}_{\AI\Hum}]/\E^\pi_1[N^{\dir, \pi}_{\AI}])\ge A.
\]
So $(\E^\pi_1[N^{\dir,\pi}_{\Hum}],\E^\pi_1[N^{\dir, \pi}_{\AI}],\E^\pi_1[N^{\pi}_{\AI\Hum}])$ is feasible for $\Gamma_1(A,N^\pi)$, and we have that 
$$
\cAI\E^\pi_1[N^{\dir,\pi}_{\AI}]+\cH(\E^\pi_1[N^{\dir,\pi}_{\Hum}]+\E^\pi_1[N^{\pi}_{\AI\Hum}])\geq \Gamma_1(A, N^\pi),
$$
and therefore
\begin{equation}\label{eq:cost-h1-lb}
    \E^\pi_1[C^\pi]\ge N^\pi\cdata+\Gamma_1(A,N^\pi).
\end{equation}

By similar arguments, we have that
\begin{equation}\label{eq:cost-h0-lb}
    \E^\pi_0[C^\pi]\ge N^\pi\cdata+\Gamma_0(B,N^\pi)
\end{equation}
Combining \eqref{eq:cost-h1-lb} and \eqref{eq:cost-h0-lb},
\[
    \max\{\E^\pi_0[C^\pi],\E^\pi_1[C^\pi]\}
    \ge
    N^\pi\cdata+\max\{\Gamma_1(A,N^\pi),\Gamma_0(B,N^\pi)\}.
\]
Since $N^\pi\ge N_{\fixed,\Hum}(\alpha,\beta)$ by Lemma~\ref{lem:data-pool}, the right side is at least its minimum over integers $N\ge N_{\fixed,\Hum}(\alpha,\beta)$, which is $\LB(\alpha,\beta)$. Taking the infimum over all feasible policies proves
\[
    C^*(\alpha,\beta)\ge\LB(\alpha,\beta).
\]
\end{proof}

\section{Additional Materials for Section~\ref{sec:policy-construction}}

\subsection{Computation of $N_{\fixed, \AI}(\alpha,\beta)$}\label{appendix-sec:computation-NAI}

We now describe how to compute $N_{\fixed, \AI}(\alpha,\beta)$, together with the cutoff
and randomization of the associated report test, mirroring the computation of
$N_{\fixed,\Hum}(\alpha,\beta)$ in Section~\ref{sec:lower-bound-samples}.  As in
Definition~\ref{def:NAI}, all expectations and probabilities below refer to
the i.i.d.\ report model: under $H_h$ the reports $R_1,\ldots,R_N$ are
i.i.d.\ with mass function $g_h$.  By the Neyman--Pearson
lemma~\citep{neyman1933ix}, for any fixed sample size $N$ the most powerful
report test at type-I level $\alpha$ is the likelihood-ratio test.  In the
full-label case the likelihood ratio is strictly increasing in the label sum,
which reduces the test to a count threshold; for a general report alphabet
$\calR$ no scalar count is sufficient, so the test thresholds the
log-likelihood-ratio statistic itself.

Fix $N\in\N$ and suppose the report vector $R=(R_1,\ldots,R_N)$ is observed.
For $t=1,\ldots,N$, the running log-likelihood ratio under $H_1$ versus
$H_0$ is, by
\eqref{eq:AI-report-increment},
\begin{equation}\label{eq:report-llr}
    \mathcal L_t^{\mathrm{seq}}(R_{1:t})
    :=\log\prod_{i=1}^t\frac{g_1(R_i)}{g_0(R_i)}
    =\sum_{i=1}^t\ell_R(R_i).
\end{equation}
Since $\calR$ is finite, the terminal statistic
$\mathcal L_N^{\mathrm{seq}}$ takes finitely many values:
writing $n_r:=|\{i\le N:R_i=r\}|$ for the report counts,
$\mathcal L_N^{\mathrm{seq}}=\sum_{r\in\calR}n_r\,\ell_R(r)$, and
$(n_r)_{r\in\calR}$ is multinomial with parameters $(N,g_h)$ under $H_h$.
The distribution of $\mathcal L_N^{\mathrm{seq}}$ under each hypothesis is therefore
computable exactly.  Let $\mathcal{S}_N$ denote the finite set of values of
$\mathcal L_N^{\mathrm{seq}}$; since $g_h(r)>0$ for every $r\in\calR$ and both
$h\in\{0,1\}$, every point of $\mathcal{S}_N$ has positive probability under both
hypotheses.

For a target type-I level $\alpha$, define the cutoff $c^{\AI}_N$ to be the
smallest point of $\mathcal{S}_N$ such that
$\Pp_0(\mathcal L_N^{\mathrm{seq}}>c^{\AI}_N)\le\alpha$, and set
\begin{equation}\label{eq:gammaAI-def}
    \gamma^{\AI}_N:=
    \frac{\alpha-\Pp_0(\mathcal L_N^{\mathrm{seq}}>c^{\AI}_N)}
         {\Pp_0(\mathcal L_N^{\mathrm{seq}}=c^{\AI}_N)},
\end{equation}
with $\gamma^{\AI}_N=0$ if
$\Pp_0(\mathcal L_N^{\mathrm{seq}}>c^{\AI}_N)=\alpha$. The
denominator is positive because $c^{\AI}_N\in\mathcal{S}_N$, and
$\gamma^{\AI}_N\in[0,1]$ by the choice of $c^{\AI}_N$.  The Neyman--Pearson
report test is the randomized threshold test
\begin{equation}\label{eq:psi-star}
    \psi_N^*(R)=
    \begin{cases}
        1, & \mathcal L_N^{\mathrm{seq}}(R_{1:N})>c^{\AI}_N,\\
        \gamma^{\AI}_N,
           & \mathcal L_N^{\mathrm{seq}}(R_{1:N})=c^{\AI}_N,\\
        0, & \mathcal L_N^{\mathrm{seq}}(R_{1:N})<c^{\AI}_N.
    \end{cases}
\end{equation}
By construction, $\psi_N^*$ meets the type-I constraint exactly:
$\E_0[\psi_N^*(R)]=\alpha$.  Its type-II error is
\begin{equation}\label{eq:typeII-AI}
    \E_1[1-\psi_N^*(R)]
    =\Pp_1(\mathcal L_N^{\mathrm{seq}}<c^{\AI}_N)
    +(1-\gamma^{\AI}_N)\,
      \Pp_1(\mathcal L_N^{\mathrm{seq}}=c^{\AI}_N).
\end{equation}
It follows, exactly as for $N_{\fixed,\Hum}(\alpha,\beta)$ in
Section~\ref{sec:lower-bound-samples}, that $N_{\fixed, \AI}(\alpha,\beta)$ can be
computed by increasing $N$ from $1$ upward and checking whether
\eqref{eq:typeII-AI} is at most $\beta$: if $\psi_N^*$ fails the type-II
target, no report test on $N$ reports can meet it; if it succeeds, it is
itself a valid test meeting both error targets.  The first $N$ at which
\eqref{eq:typeII-AI} is at most $\beta$ is therefore $N_{\fixed, \AI}(\alpha,\beta)$.

% When $G_0\ne G_1$ the search terminates, because
% $\mathcal L_N^{\mathrm{seq}}$ has
% strictly positive drift $I_R^{(1)}=D(G_1\Vert G_0)>0$ under $H_1$ and
% strictly negative drift $-I_R^{(0)}$ under $H_0$, so both error
% probabilities of the threshold test vanish as $N\to\infty$; when $G_0=G_1$
% the statistic is identically zero and the search never terminates,
% consistent with $N_{\fixed, \AI}(\alpha,\beta)=+\infty$ above.  Finally, if
% $|\calR|=2$ and $\ell_R$ is nonconstant, then
% $\mathcal L_N^{\mathrm{seq}}$ is a strictly
% monotone function of the count of the report value with the larger
% $\ell_R$, and \eqref{eq:psi-star} reduces to a binomial threshold test
% exactly parallel to Section~\ref{sec:lower-bound-samples}, with $p_h$
% replaced by the corresponding report probability under $g_h$.

\subsection{Proof of Theorem~\ref{thm:feasibility}}\label{appendix-sec:feasibility-proof}
To prove Theorem~\ref{thm:feasibility}, we need auxiliary Lemmas~\ref{lem:boundary-error} and \ref{lem:fallback-error}. We present the proof of Theorem~\ref{thm:feasibility} in the end of this subsection. We let $E_+$ be the event that Algorithm~\ref{algo:seq-policy} stops before
fallback by crossing the upper boundary $a_k$, and let $E_-$ be the event that
it stops before fallback by crossing the lower boundary $-b_k$.  

\begin{lemma}[Boundary-route error control]\label{lem:boundary-error}
$\Pp^{\pi_k}_0(E_+)\le \alpha_{1,k}$ and $\Pp^{\pi_k}_1(E_-)\le \beta_{1,k}$.
\end{lemma}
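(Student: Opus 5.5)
The plan is to use Wald's likelihood-ratio martingale argument on the main stage. I treat $\Pp^{\pi_k}_0(E_+)\le\alpha_{1,k}$ in detail; the bound $\Pp^{\pi_k}_1(E_-)\le\beta_{1,k}$ follows by exchanging the roles of $H_0$ and $H_1$.

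\textbf{Step 1: the statistic is the exact transcript log-likelihood ratio.} During the main stage the running statistic $S$ is updated after every paid query by $\ell_X$, $\ell_R$, or $\ell_H$. By \eqref{appendix-equ:transcript-mass}, each of these increments equals $\log\bigl(\varphi_1(o_t\mid a_t,\tau_{t-1})/\varphi_0(o_t\mid a_t,\tau_{t-1})\bigr)$ for the corresponding observation. Three features of SCALE keep us in the three "informative" cases: items are processed in order; an item's human label is requested only before any AI query or after its own report; and no AI query follows a human one. The randomization factors (the uniforms $U_i,V_i$ and the rule $J_i$ computed from past $S$) are hypothesis-free. So, by Lemma~\ref{lem:transcript-pmf} applied to the partial transcript, at every epoch $t$ of the main stage
\[
S_t=\log\frac{P^{\pi_k}_1(\calT_t)}{P^{\pi_k}_0(\calT_t)},
\]
where $P^{\pi_k}_h(\calT_t)$ denotes the probability of the partial history including the policy-kernel factor $\chi$, which cancels in the ratio.

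\textbf{Step 2: change of measure on $E_+$.} Let $\sigma$ be the epoch at which the upper boundary is crossed. The event $E_+$ is a union of disjoint cylinder sets, namely the partial transcripts $\tau_\sigma$ whose main-stage path first reaches $S\ge a_k$ at $\sigma$ without earlier crossing $-b_k$. On each such cylinder, $P^{\pi_k}_1(\tau_\sigma)\ge e^{a_k}P^{\pi_k}_0(\tau_\sigma)$. Summing over cylinders gives
\[
\Pp^{\pi_k}_0(E_+)\le e^{-a_k}\Pp^{\pi_k}_1(E_+)\le e^{-a_k}=\alpha_{1,k},
\]
using \eqref{eq:boundaries}. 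Symmetrically, on $E_-$ we have $P^{\pi_k}_0\ge e^{b_k}P^{\pi_k}_1$, so $\Pp^{\pi_k}_1(E_-)\le e^{-b_k}=\beta_{1,k}$. Because the transcript space $\mathfrak T_N$ is finite and both laws share the same support, no integrability or optional-stopping issue arises; this is a finite sum.

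\textbf{Main obstacle.} The delicate part is Step 1. One must check that every increment added to $S$ is exactly the conditional likelihood ratio of the new observation given the full history, including that item $i$'s report is independent of the past (so $\ell_R$ uses the marginal $g_h$) and that escalation uses the posterior $\rho_h(\cdot\mid R_i)$. One must also check that the stopping decision and the sensing-rule randomization depend only on the observed history and seeds, which is what makes the kernel factors cancel. Once this identification is made, the Wald bound in Step 2 is routine.
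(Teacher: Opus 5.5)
\paragraph{Comparison with the paper's proof.} Your argument is correct, but it gets the bound by a different route. The paper never identifies $S_t$ with the likelihood ratio of the whole history. Instead it checks, case by case (direct human, AI report, escalation), that $\E^{\pi_k}_0[e^{\xi_t}\mid\calH_{t-1},A_t]=1$, concludes that $\Lambda_t=e^{S_t}$ is a nonnegative $(\calH_t)$-martingale under $H_0$, and applies optional stopping at the bounded time $\tau_{\mathrm b}\le 2N_{\main,k}$. It then uses $\ind_{E_+}\le\alpha_{1,k}\Lambda_{\tau_{\mathrm b}}$, and for the $H_1$ bound it repeats this with $e^{-S_t}$.

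You instead use Lemma~\ref{lem:transcript-pmf}, already proved for the lower bound, to show that $e^{S_t}=P^{\pi_k}_1(\calT_t)/P^{\pi_k}_0(\calT_t)$ exactly, with the policy-kernel factor cancelling. You then carry out Wald's change of measure over the prefix-free family of first-crossing histories.

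The two proofs are equivalent in substance: $e^{S_t}$ is a $P_0$-martingale precisely because it is the density of $P_1$ with respect to $P_0$ on $\calH_t$.
\begin{itemize}
\item Your version is a finite-sum computation that needs no optional-stopping theorem. It reuses machinery the paper has already set up, and it gives the slightly sharper bound $\Pp^{\pi_k}_0(E_+)\le e^{-a_k}\,\Pp^{\pi_k}_1(E_+)$.
\item The paper's version is more local. It only needs the one-step conditional law of the next observation given the history, and the ``main obstacle'' you flag reduces to the three one-line identities it verifies. It never has to write down the joint law of an entire history.
\item The paper reuses this same martingale template in Theorem~\ref{thm:pilot-validity}, working on the pilot-augmented filtration $\calH^m_t$ while the policy runs on the plug-in statistic $\hat S$.
\end{itemize}
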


\begin{proof}{Proof of Lemma~\ref{lem:boundary-error}}
Index the likelihood-ratio updates by $t=1,2,\ldots$, where each update
corresponds to one paid observation revealed by
Algorithm~\ref{algo:seq-policy}: a direct-human label, an AI report, or a
post-report escalation label. Recall $\calH_t:=\sigma\bigl(A_1,O_1,\ldots,A_t,O_t\bigr)$ (with $\mathcal H_0$ the trivial $\sigma$-algebra $\{\emptyset,\Omega\}$) is the filtration generated by the actions and 
observations up to and including update $t$.  Let
$\xi_t$ be the exact $H_1$-versus-$H_0$ log-likelihood increment contributed
by the observation at update $t$:
\begin{equation}\label{eq:xi-increment}
    \xi_t:=
    \begin{cases}
        \ell_X(X_i),
            & A_t=\Hum(i)\text{ with item $i$ not previously AI-queried},\\
        \ell_R(R_i),
            & A_t=\AI(i),\\
        \ell_H(X_i,R_i),
            & A_t=\Hum(i)\text{ with report $R_i$ already observed},
    \end{cases}
\end{equation}
where $\ell_X,\ell_R,\ell_H$ are the increments
\eqref{eq:direct-human-increment}--\eqref{eq:escalation-increment}.  The
running statistic and its exponential are
\begin{equation}\label{eq:Lambda-def}
    S_t:=\sum_{s=1}^{t}\xi_s,
    \qquad
    \Lambda_t:=\exp(S_t)=\prod_{s=1}^{t}e^{\xi_s},
    \qquad
    \Lambda_0:=1.
\end{equation}
The primitives $p_0,p_1,f_0,f_1$ do not depend on $k$: $p_0,p_1\in(0,1)$ are
fixed constants by \eqref{eq:hypotheses}, and $f_0,f_1$ are strictly positive
on the finite set $\calR$ by Assumption~\ref{ass:score-model}.  Hence all
four primitives are bounded away from $0$ and $1$, so the increments
$\xi_t$ are uniformly bounded; since the number of updates is at most
$2N_{\main,k}$, each $\Lambda_t$ is bounded and hence integrable. Next we show that $(\Lambda_t)_{t\ge0}$ is a nonnegative $(\calH_t)$-martingale under
$H_0$ with $\E^{\pi_k}_0[\Lambda_t]=\Lambda_0=1$.

\emph{Martingale property under $H_0$.}  Fix an update time $t$ and condition
on the pair $(\calH_{t-1},A_t)$.  The action $A_t$ is a measurable
function of $\calH_{t-1}$ and of a fresh randomization seed whose law does
not depend on the hypothesis and which is independent of the not-yet-revealed
observation $O_t$; conditioning on $A_t$ therefore leaves the $H_0$-law of
$O_t$ unchanged.  We evaluate $\E^{\pi_k}_0[e^{\xi_t}\mid\calH_{t-1},A_t]$ in each
case.

If $A_t=\Hum(i)$ with item $i$ not previously queried, then under $H_0$ the
label $X_i\sim\Bern(p_0)$ is independent of $\calH_{t-1}$, so by
\eqref{eq:direct-human-increment}
\begin{equation}\label{eq:mart-direct}
    \E^{\pi_k}_0\!\left[e^{\xi_t}\mid\calH_{t-1},A_t\right]
    =\sum_{x\in\{0,1\}}p_0^{x}(1-p_0)^{1-x}\,
        \frac{p_1^{x}(1-p_1)^{1-x}}{p_0^{x}(1-p_0)^{1-x}}
    =\sum_{x\in\{0,1\}}p_1^{x}(1-p_1)^{1-x}=1.
\end{equation}
If $A_t=\AI(i)$, then under $H_0$ the report $R_i\sim G_0$ is independent of
$\calH_{t-1}$, so by \eqref{eq:AI-report-increment}
\begin{equation}\label{eq:mart-report}
    \E^{\pi_k}_0\!\left[e^{\xi_t}\mid\calH_{t-1},A_t\right]
    =\sum_{r\in\calR}g_0(r)\,\frac{g_1(r)}{g_0(r)}
    =\sum_{r\in\calR}g_1(r)=1.
\end{equation}
If $A_t=\Hum(i)$ with report $R_i=r$ already observed (so $R_i$ is
$\calH_{t-1}$-measurable), then under $H_0$ the escalated label has
conditional mass $\rho_0(\cdot\mid r)$ by \eqref{eq:rho-def}, so by
\eqref{eq:escalation-increment}
\begin{equation}\label{eq:mart-escalation}
    \E^{\pi_k}_0\!\left[e^{\xi_t}\mid\calH_{t-1},A_t\right]
    =\sum_{x\in\{0,1\}}\rho_0(x\mid r)\,
        \frac{\rho_1(x\mid r)}{\rho_0(x\mid r)}
    =\sum_{x\in\{0,1\}}\rho_1(x\mid r)=1.
\end{equation}
In every case $\E^{\pi_k}_0[e^{\xi_t}\mid\calH_{t-1},A_t]=1$.  Averaging over the
$\calH_{t-1}$-conditional law of $A_t$ gives
$\E^{\pi_k}_0[e^{\xi_t}\mid\calH_{t-1}]=1$, and since $\Lambda_{t-1}$ is
$\calH_{t-1}$-measurable,
\begin{equation}\label{eq:mart-property}
    \E^{\pi_k}_0\!\left[\Lambda_t\mid\calH_{t-1}\right]
    =\Lambda_{t-1}\,
     \E^{\pi_k}_0\!\left[e^{\xi_t}\mid\calH_{t-1}\right]
    =\Lambda_{t-1}.
\end{equation}
Hence $(\Lambda_t)_{t\ge0}$ is a nonnegative $(\calH_t)$-martingale under
$H_0$ with $\E^{\pi_k}_0[\Lambda_t]=\Lambda_0=1$.

Let $\tau_{\mathrm b}$ be the index of the last paid update in the
main stage: the update at which a boundary is first crossed, or, if no
crossing occurs, the last paid update before entering fallback.
Then $\tau_{\mathrm b}\le 2N_{\main,k}$.  The optional stopping
theorem gives
\[
    \E^{\pi_k}_0[\Lambda_{\tau_{\mathrm{b}}}]=\E^{\pi_k}_0[\Lambda_0]=1.
\]
On $E_+$, $S_{\tau_{\mathrm{b}}}\ge a_k$, hence $\Lambda_{\tau_{\mathrm{b}}}\ge e^{a_k}=1/\alpha_{1,k}$.
Thus
\[
    \Pp^{\pi_k}_0(E_+) = \E^{\pi_k}_0[\ind_{E_+}]
    \le \alpha_{1,k}\E^{\pi_k}_0[\Lambda_{\tau_{\mathrm{b}}}\ind_{E_+}]
    \le \alpha_{1,k}\E^{\pi_k}_0[\Lambda_{\tau_{\mathrm{b}}}] 
    = \alpha_{1,k}\E^{\pi_k}_0[\Lambda_0]
    \le \alpha_{1,k}.
\]

The lower-boundary statement follows from the identical argument applied
under $H_1$ to $\Lambda_t^{-1}:=\exp(-S_t)$, with the roles of $(p_0,p_1)$,
$(g_0,g_1)$, and $(\rho_0(\cdot\mid r),\rho_1(\cdot\mid r))$ interchanged in
each of the three cases above: the same computation shows
$\E^{\pi_k}_1[e^{-\xi_t}\mid\calH_{t-1},A_t]=1$ in every case, so
$(\Lambda_t^{-1})_{t\ge0}$ is a nonnegative $(\calH_t)$-martingale under
$H_1$ with $\E^{\pi_k}_1[\Lambda_t^{-1}]=\Lambda_0^{-1}=1$.  Bounded optional
stopping applied to $\tau_{\mathrm{b}}$ under $H_1$ gives
$\E^{\pi_k}_1[\Lambda_{\tau_{\mathrm{b}}}^{-1}]=1$.  On $E_-$, $S_{\tau_{\mathrm{b}}}\le-b_k$, hence
$\Lambda_{\tau_{\mathrm{b}}}^{-1}\ge e^{b_k}=1/\beta_{1,k}$, and
\[
    \Pp^{\pi_k}_1(E_-)
    \le \beta_{1,k}\E^{\pi_k}_1[\Lambda_{\tau_{\mathrm{b}}}^{-1}\ind_{E_-}]
    \le \beta_{1,k}\E^{\pi_k}_1[\Lambda_{\tau_{\mathrm{b}}}^{-1}]
    =\beta_{1,k}.
\]
\end{proof}

Let $E_{\fb}$ be the event that no boundary is crossed before the fixed pool is
exhausted. Lemma~\ref{lem:fallback-error} bounds the fallback probabilities.
\begin{lemma}[Fallback-route error control]\label{lem:fallback-error}
\[
    \Pp^{\pi_k}_0(E_{\fb}\cap\{\text{fallback rejects }H_0\})\le\alpha_{2,k},\qquad
    \Pp^{\pi_k}_1(E_{\fb}\cap\{\text{fallback accepts }H_0\})\le\beta_{2,k}.
\]
\end{lemma}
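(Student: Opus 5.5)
The plan is to reduce the fallback-route error to the error of a fixed-sample Neyman--Pearson test applied to data whose law is unaffected by the main stage. I treat the case $\mathsf F_k=\Hum$; the case $\mathsf F_k=\AI$ is identical, with reports in place of labels.

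\textbf{Step 1 (the fallback decision is a function of the fixed-set data and independent randomization).} Since $\mathsf F_k$ is a deterministic function of the primitives, the fallback index set $\calJ_{\Hum,k}=\{1,\ldots,N_{\fixed,\Hum}(\alpha_{2,k},\beta_{2,k})\}$ is fixed before any data are seen. It lies inside the acquired pool because $N_{\main,k}\ge N_{\fixed,\Hum}(\alpha_{2,k},\beta_{2,k})$. On $E_{\fb}$, the policy reveals every label in $\calJ_{\Hum,k}$; some of these may already have been revealed during the main stage. It then outputs $\phi^*(X_{\calJ})$, using an auxiliary uniform $U'$ for the randomization at the cutoff. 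Whether a label was revealed early or late does not change its value. So on $E_{\fb}$ the rejection event equals $\{U'\le\phi^*(X_{\calJ})\}$, where $X_{\calJ}:=(X_i)_{i\in\calJ_{\Hum,k}}$ are the potential labels, defined whether or not they are queried. In the AI case, the analogous statement uses the potential reports $(R_i)_{i\le N_{\fixed,\AI}}$, which are i.i.d.\ with law $G_h$ by Assumption~\ref{ass:iid}.

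\textbf{Step 2 (drop the event $E_{\fb}$).} The bound follows from monotonicity:
\[
\Pp_0^{\pi_k}\bigl(E_{\fb}\cap\{\text{reject}\}\bigr)\le \Pp_0\bigl(U'\le\phi^*(X_{\calJ})\bigr)=\E_0[\phi^*(X_{\calJ})]\le\alpha_{2,k}.
\]
The equality holds because $U'$ is independent of $X_{\calJ}$. Under $H_0$ the entries of $X_{\calJ}$ are i.i.d.\ $\Bern(p_0)$, since the potential labels are i.i.d.\ by Assumption~\ref{ass:iid}, and the final inequality is \eqref{eq:human-fallback-level}. The type-II bound follows in the same way from
\[
\Pp_1^{\pi_k}\bigl(E_{\fb}\cap\{\text{accept}\}\bigr)\le\E_1\bigl[1-\phi^*(X_{\calJ})\bigr]\le\beta_{2,k}.
\]

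\textbf{Main obstacle.} The delicate point is that $E_{\fb}$ depends on some of the same labels and reports that enter the fallback test. One might therefore worry that conditioning on $E_{\fb}$ distorts the law of $X_{\calJ}$. The argument avoids this by never conditioning: it bounds the joint probability by the unconditional one. For this to work I must justify carefully that the fallback output is the same measurable function $\phi^*$ of the potential outcomes, regardless of the history, and that its randomization seed is fresh and hypothesis-independent.
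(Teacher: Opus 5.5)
Your proposal is correct and follows the same route as the paper: in each of the two deterministic fallback modes, you drop $E_{\fb}$ by monotonicity, bound by the unconditional error of the fixed-set Neyman--Pearson test, and invoke \eqref{eq:human-fallback-level} or \eqref{eq:AI-fallback-level}. Your potential-outcome and fresh-seed formulation makes rigorous what the paper leaves implicit. In the AI case, note that the fixed report set lies inside the pool because $\mathsf F_k=\AI$ is selected only when $N_{\fixed,\AI}(\alpha_{2,k},\beta_{2,k})\le N_{\main,k}$.
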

\begin{proof}{Proof of Lemma~\ref{lem:fallback-error}}
There are two cases, according to the
deterministic value of $\mathsf F_k$.

If $\mathsf F_k=\Hum$, write $N_H:=N_{\fixed,\Hum}(\alpha_{2,k},\beta_{2,k})$.  The
fallback test is the full-label test $\phi^*_{N_H}$ on the
fixed index set $\calJ_{\Hum,k}$.  By \eqref{eq:human-fallback-level}, its
unconditional type-I error is at most $\alpha_{2,k}$. 
Then
\[
    \Pp^{\pi_k}_0(E_{\fb}\cap\{\text{fallback rejects}\})
    \leq
    \Pp^{\pi_k}_0(\{\text{fallback rejects}\})
    =\E^{\pi_k}_0[\phi^*_{N_H}]
    \le \alpha_{2,k}.
\]
The type-II statement follows the same way:
\[
    \Pp^{\pi_k}_1(E_{\fb}\cap\{\text{fallback accepts}\})
    \leq
    \Pp^{\pi_k}_1(\{\text{fallback accepts}\})
    =\E^{\pi_k}_1[1-\phi^*_{N_H}]
    \le \beta_{2,k}.
\]

If $\mathsf F_k=\AI$, write
$\calJ_{\AI,k}:=\{1,\ldots,N_{\fixed, \AI}(\alpha_{2,k},\beta_{2,k})\}$, so the fallback test is the report test
$\psi^*_{N_{\fixed, \AI}(\alpha_{2,k},\beta_{2,k})}$ applied to the fixed reports $\{R_i:i\in\calJ_{\AI,k}\}$.  Then
\eqref{eq:AI-fallback-level} and $\ind_{E_{\fb}}\le1$, gives
\[
    \Pp^{\pi_k}_0(E_{\fb}\cap\{\text{fallback rejects}\})\le\E^{\pi_k}_0[\psi^*_{N_{\fixed, \AI}(\alpha_{2,k},\beta_{2,k})}]\le\alpha_{2,k},
\]
and
\[
    \Pp^{\pi_k}_1(E_{\fb}\cap\{\text{fallback accepts}\})\le\E^{\pi_k}_1[1-\psi^*_{N_{\fixed, \AI}(\alpha_{2,k},\beta_{2,k})}]\le\beta_{2,k}.
\]
\end{proof}

\begin{proof}{Proof of Theorem~\ref{thm:feasibility}}
By construction, Algorithm~\ref{algo:seq-policy} terminates through exactly
one of three mutually exclusive routes: it crosses the upper boundary
(event $E_+$), crosses the lower boundary (event $E_-$), or reaches the end
of the fixed pool without crossing either boundary and executes the
pre-committed fallback (event $E_{\fb}$); in particular $E_+,E_-,E_{\fb}$ are
pairwise disjoint.

Consequently the event that the policy rejects $H_0$ is the disjoint union
of the upper boundary event $E_+$ and the fallback-rejection event
$E_{\fb}\cap\{\text{fallback rejects}\}\subseteq E_{\fb}$, so by countable
additivity of probability over disjoint events,
\[
    \Pp^{\pi_k}_0(\delta^{\pi_k}=1)
    =\Pp^{\pi_k}_0(E_+)
     +\Pp^{\pi_k}_0\bigl(E_{\fb}\cap\{\text{fallback rejects}\}\bigr).
\]
By Lemmas~\ref{lem:boundary-error} and~\ref{lem:fallback-error}, the two
terms on the right are at most $\alpha_{1,k}$ and $\alpha_{2,k}$
respectively, so
\[
    \Pp^{\pi_k}_0(\delta^{\pi_k}=1)
    \le \alpha_{1,k}+\alpha_{2,k}
    =\alpha_k,
\]
where the equality is the budget identity following
\eqref{eq:budget-split-alpha}.

Likewise, the event that the policy accepts $H_0$ under $H_1$ is the
disjoint union of the lower boundary event $E_-$ and the fallback-acceptance
event $E_{\fb}\cap\{\text{fallback accepts}\}$, so
\[
    \Pp^{\pi_k}_1(\delta^{\pi_k}=0)
    =\Pp^{\pi_k}_1(E_-)
     +\Pp^{\pi_k}_1\bigl(E_{\fb}\cap\{\text{fallback accepts}\}\bigr).
\]
The same two lemmas and \eqref{eq:budget-split-beta} give
\[
    \Pp^{\pi_k}_1(\delta^{\pi_k}=0)
    \le \beta_{1,k}+\beta_{2,k}
    =\beta_k.
\]
\end{proof}

\section{Additional Materials for Sections~\ref{sec:main-theorem}}\label{appendix-sec:first-order-proof}
Throughout, we let $T'_{1,k}:=\klbin(1-\beta_k||\alpha_k)$, $T'_{0,k}:=\klbin(1-\alpha_k||\beta_k)$
for the information thresholds. Recall that $T_{1,k} = a_k + \Delta_k$ and $T_{0,k} = b_k + \Delta_k$ where $a_k = \log(1/((1-f_{\fb, k})\alpha_k))$ and $b_k = \log(1/((1-f_{\fb, k})\beta_k))$.

\subsection{Proof of Theorem~\ref{thm:first-order}}\label{sec-appendix:LB-growth}
To prove Theorem~\ref{thm:first-order}(i), we first state an auxiliary Lemma~\ref{lem:binary-kl-asymptotic} together with its proof.

\begin{lemma}[Binary-KL thresholds are logarithmic]\label{lem:binary-kl-asymptotic}
Under Assumption~\ref{ass:balanced-error-regime}, as \(k\to\infty\),
\[
    T'_{1,k}=\log\frac1{\alpha_k}+o(1),
    \qquad
    T'_{0,k}=\log\frac1{\beta_k}+o(1).
\]
Consequently, since
\(a_k=\log(1/\alpha_{1,k})=\log(1/\alpha_k)+\log(1/(1-f_{\fb, k}))\)
and
\(b_k=\log(1/\beta_{1,k})=\log(1/\beta_k)+\log(1/(1-f_{\fb, k}))\)
by \eqref{eq:budget-split-alpha}--\eqref{eq:budget-split-beta},
\[
    a_k=T'_{1,k}+\log(1/(1-f_{\fb, k}))+o(1),
    \qquad
    b_k=T'_{0,k}+\log(1/(1-f_{\fb, k}))+o(1).
\]
\end{lemma}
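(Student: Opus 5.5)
The plan is to expand the binary KL divergence directly and show that every term except the leading logarithm vanishes. By definition,
\[
    T'_{1,k}=\klbin(1-\beta_k\Vert\alpha_k)
    =(1-\beta_k)\log\frac{1-\beta_k}{\alpha_k}
    +\beta_k\log\frac{\beta_k}{1-\alpha_k}.
\]
Regroup this as
\[
    T'_{1,k}=\log\frac1{\alpha_k}-\beta_k\log\frac1{\alpha_k}
    +(1-\beta_k)\log(1-\beta_k)+\beta_k\log\beta_k-\beta_k\log(1-\alpha_k).
\]
It then remains to show that each of the four correction terms is $o(1)$ as $k\to\infty$.

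The three terms $(1-\beta_k)\log(1-\beta_k)$, $\beta_k\log\beta_k$ and $\beta_k\log(1-\alpha_k)$ all tend to $0$, since $\alpha_k,\beta_k\downarrow0$ and $x\log x\to0$ as $x\downarrow0$.

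The only term that needs care is the cross term $\beta_k\log(1/\alpha_k)$, and this is the step where Assumption~\ref{ass:balanced-error-regime} is used. By that assumption there is a constant $C$ with $\log(1/\alpha_k)\le CL_k\le C'\log(1/\beta_k)$ for all large $k$. Hence
\[
    \beta_k\log\frac1{\alpha_k}\le C'\beta_k\log\frac1{\beta_k}\to0.
\]
Without this assumption the term could fail to vanish; for example, $\alpha_k=e^{-1/\beta_k}$ would make it equal to $1$.

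The statement for $T'_{0,k}=\klbin(1-\alpha_k\Vert\beta_k)$ follows from the same expansion with the roles of $\alpha_k$ and $\beta_k$ swapped. Its cross term is $\alpha_k\log(1/\beta_k)\le C''\alpha_k\log(1/\alpha_k)\to0$, again by Assumption~\ref{ass:balanced-error-regime}.

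For the consequence, substitute the budget split \eqref{eq:budget-split-alpha}--\eqref{eq:budget-split-beta}. This gives
\[
    a_k=\log\frac{1}{(1-f_{\fb,k})\alpha_k}=\log\frac1{\alpha_k}+\log\frac1{1-f_{\fb,k}},
\]
and similarly for $b_k$. Inserting the asymptotic expansions of $T'_{1,k}$ and $T'_{0,k}$ then yields both displayed identities. Here $\log(1/(1-f_{\fb,k}))$ is kept explicitly rather than absorbed into the $o(1)$ term; it is $O(1)$ because $f_{\fb,k}$ is bounded away from $1$.
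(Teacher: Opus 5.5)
Your proof is correct and follows the paper's argument. Both expand $\klbin(1-\beta_k\Vert\alpha_k)$ in the same way and use Assumption~\ref{ass:balanced-error-regime} only to make the cross term $\beta_k\log(1/\alpha_k)$ vanish; the paper bounds all corrections by a multiple of $\beta_kL_k$ with $\beta_k\le e^{-\Theta(L_k)}$, while you compare $\log(1/\alpha_k)$ with $\log(1/\beta_k)$ and use $x\log(1/x)\to0$, which is the same fact in a different form.
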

\begin{proof}{Proof of Lemma~\ref{lem:binary-kl-asymptotic}}
We prove the expansion for \(T'_{1,k}\); the proof for \(T'_{0,k}\) is symmetric.
By the definition of Bernoulli KL divergence,
\begin{align}
    T'_{1,k}
    &=(1-\beta_k)\log\frac{1-\beta_k}{\alpha_k}
      +\beta_k\log\frac{\beta_k}{1-\alpha_k} \notag\\
    &=\log\frac1{\alpha_k}
      -\beta_k\log\frac1{\alpha_k}
      +(1-\beta_k)\log(1-\beta_k)
      +\beta_k\log\beta_k
      -\beta_k\log(1-\alpha_k).
      \label{eq:Ak-expanded-detailed}
\end{align}
Every correction term is bounded in absolute value by a primitive constant
multiple of \(\beta_kL_k\): the first directly; the second because
\((1-\beta_k)\log(1-\beta_k)=O(\beta_k)\); the third because
\(|\beta_k\log\beta_k|\le\beta_kL_k\); and the fourth because
\(-\beta_k\log(1-\alpha_k)=O(\alpha_k\beta_k)\).
Under Assumption~\ref{ass:balanced-error-regime},
\[
    \min\!\left\{\log\frac1{\alpha_k},\log\frac1{\beta_k}\right\}
    =\Theta(L_k),
\]
so \(\beta_k\le e^{-\Theta(L_k)}\) and therefore
\(\beta_kL_k\to0\). It follows that
\(T'_{1,k}=\log(1/\alpha_k)+o(1)\). Interchanging \(\alpha_k\) and \(\beta_k\)
gives \(T'_{0,k}=\log(1/\beta_k)+o(1)\). The final two relations follow straightforwardly.
\end{proof}

\begin{proof}{Proof of Theorem~\ref{thm:first-order}}

Part (i). We first show that $\LB_k=O(L_k)$ and then $\LB_k=\Omega(L_k)$.

\vspace{1mm}
\emph{Step 1: $\LB_k=O(L_k)$.}
By definition of $\LB_k$, we have that 
\begin{align*}
\LB_k &= \min_{N\geq N_{\fixed,\Hum}(\alpha_k, \beta_k)}N\cdata + \max\{\Gamma_1(T'_{1,k}, N), \Gamma_0(T'_{0,k}, N)\}.   
\end{align*}
To show $\LB_k=O(L_k)$, it suffices to find an integer solution $N^1_k$ to \eqref{eq:LB-def} that satisfies $N^1_k\ge N_{\fixed,\Hum}(\alpha_k,\beta_k)$ and that $N^1_k\cdata + \max\{\Gamma_1(T'_{1,k}, N^1_k), \Gamma_0(T'_{0,k}, N^1_k)\} = O(L_k)$.

\emph{Bounding the floor.} We first show $N_{\fixed,\Hum}(\alpha_k,\beta_k)$ is
$O(L_k)$ by exhibiting a cheap full-label test. Consider the
sample-mean test that rejects $H_0$ iff $\bar X_N\ge\bar p$, where
$\bar X_N:=N^{-1}\sum_{i=1}^N X_i$ and $\bar p:=(p_0+p_1)/2$. Under
$H_h$ the labels are i.i.d.\ $\Bern(p_h)$ by Assumption~\ref{ass:iid}, with mean $p_0<\bar p$ under $H_0$ and
$p_1>\bar p$ under $H_1$. Writing
$c_{\mathrm{Hfd}}:=((p_1-p_0)/2)^2>0$,
Hoeffding's inequality gives
\[
    \Pp^{\pi_k}_0(\bar X_N\ge\bar p)\le e^{-2Nc_{\mathrm{Hfd}}},
    \qquad
    \Pp^{\pi_k}_1(\bar X_N<\bar p)\le e^{-2Nc_{\mathrm{Hfd}}}.
\]
At $N_k^0:=\lceil L_k/(2c_{\mathrm{Hfd}})\rceil$, we have
$\Pp^{\pi_k}_0(\bar X_{N^0_k}\geq \bar p)\leq \alpha_k$ and
$\Pp^{\pi_k}_1(\bar X_{N^0_k}< \bar p)\leq \beta_k$. It follows by
Definition~\ref{def:Nfull} that 
$N_{\fixed,\Hum}(\alpha_k,\beta_k)\le N_k^0=O(L_k)$.

\emph{Choosing the evaluation point.} We now choose the sample size $N^1_k$ as:
\[
    N_k^1:=\max\!\left\{N^0_k, \lceil T'_{1,k}/J_X^{(1)}\rceil, \lceil T'_{0,k}/J_X^{(0)}\rceil
    \right\}.
\]
In particular, $N_k^1\geq N^0_k$ ensures that $N^1_k\geq N_{\fixed,\Hum}(\alpha_k, \beta_k)$; $N_k^1\geq \lceil T'_{1,k}/J_X^{(1)}\rceil$ and $N_k^1\geq \lceil T'_{0,k}/J_X^{(0)}\rceil$ ensure that $N^1_k$ direct-human queries supply $T'_{1,k}$ and $T'_{0,k}$ units of information under direction $h = 1$ and $h = 0$, respectively, since each direct-human query supply $J_X^{(h)}$ information. 

\emph{Bounding the objective at $N_k^1$.}
We now show that $N^1_k\cdata + \max\{\Gamma_1(T'_{1,k}, N^1_k), \Gamma_0(T'_{0,k}, N^1_k)\} = O(L_k)$.
First notice $N_k^1=O(L_k)$: $N_k^0 = O(L_k)$, and
$T'_{1,k},T'_{0,k}=O(L_k)$ by Lemma~\ref{lem:binary-kl-asymptotic} with
$J_X^{(0)},J_X^{(1)}$ constant. 

Additionally, for each direction $h$, the
all-direct-human triple
$(n_{\Hum},n_{\AI},n_{\esc}):=(\lceil T'_{h,k}/J_X^{(h)}\rceil,0,0)$
is feasible for \eqref{eq:Gamma-def} at $T = T'_{h,k}, N = N_k^1$: its
item count $\lceil T'_{h,k}/J_X^{(h)}\rceil\le N_k^1$ satisfies the
capacity constraint by the choice of $N_k^1$; its escalation term is
zero by the $n_{\AI}=0$ convention; and its information
$\lceil T'_{h,k}/J_X^{(h)}\rceil\,J_X^{(h)}\ge T'_{h,k}$ meets the
information constraint. Its objective value is
$\cH\lceil T'_{h,k}/J_X^{(h)}\rceil=O(L_k)$, so
$\Gamma_h(T'_{h,k},N_k^1)=O(L_k)$ for each $h$. Therefore
\[
    \LB_k\le N_k^1\cdata
        +\max\{\Gamma_1(T'_{1,k},N_k^1),\Gamma_0(T'_{0,k},N_k^1)\}
        =O(L_k).
\]

\vspace{1mm}
\emph{Step 2: $\LB_k=\Omega(L_k)$.}
It suffices to show
$N_{\fixed,\Hum}(\alpha_k,\beta_k)=\Omega(L_k)$, and the argument follows by noting that any feasible solution $N$ to \eqref{eq:LB-def} must satisfy $N\geq N_{\fixed,\Hum}(\alpha_k,\beta_k)$ and therefore must have cost $\LB_k\geq N_{\fixed,\Hum}(\alpha_k,\beta_k)\cdata = \Omega(L_k)$. Let $\pi_f$ be a feasible
full-label test using $N_f:=N_{\fixed,\Hum}(\alpha_k,\beta_k)$ labels (for example we can let $\pi_f$ be the Neyman-Pearson test $\phi^*_{N_f}$; see the details in Section~\ref{sec:lower-bound-samples}); its
transcript is the label vector $(X_1,\ldots,X_{N_f})$ with no AI
queries, and under $H_h$ these labels are i.i.d.\ $\Bern(p_h)$ by
Assumption~\ref{ass:iid}. Applying the finite
chain rule (Lemma~\ref{lem:kl-chain-rule}) inductively across the
$N_f$ independent coordinates, each conditional term reduces to the
per-label divergence $\klbin(p_1\Vert p_0)$, so
\[
    D(P_1^{\pi_f}\Vert P_0^{\pi_f})
    =N_f\,\klbin(p_1\Vert p_0)
    =N_f J_X^{(1)},
\]
the last equality by the definition \eqref{eq:JX-def} of $J_X^{(1)}$.
Since $\pi_f$ is feasible, Lemma~\ref{lem:error-to-kl} gives
$D(P_1^{\pi_f}\Vert P_0^{\pi_f})\ge T'_{1,k}$, so $N_f\ge T'_{1,k}/J_X^{(1)}$.
By Lemma~\ref{lem:binary-kl-asymptotic}, $T'_{1,k}=\log(1/\alpha_k)+o(1)$,
hence $N_f=\Omega(\log(1/\alpha_k))$. The same argument in the reverse
direction, with $D(P_0^{\pi_f}\Vert P_1^{\pi_f})=N_f J_X^{(0)}\ge T'_{0,k}$
and $T'_{0,k}=\log(1/\beta_k)+o(1)$, and thus
$N_f=\Omega(\log(1/\beta_k))$. Taking the larger,
$N_f=\Omega(\max\{\log(1/\alpha_k),\log(1/\beta_k)\})=\Omega(L_k)$.
Combining Step~1 and Step~2 gives
$\LB_k=\Theta(L_k)$.

Part (ii). Fix a
sufficiently large $k$. Under $H_h$, the cost decomposes as
\[
    \E_h^{\pi_k}[C^{\pi_k}]
    =\cdata N_{\main,k}
     +\E_h^{\pi_k}[C_k^{\main}]
     +\E_h^{\pi_k}[C_k^{\fb}],
\]
where $C_k^{\main}$ is the sensing cost incurred during the sequential
main stage and $C_k^{\fb}$ is the additional cost of the pre-committed
fallback, equal to zero when the main stage stops at a boundary.

We first control how often the policy uses a rule other than the one
corresponding to the true hypothesis. The following lemma bounds the
expected number of items processed with the wrong-direction or dead-zone
rule.
\begin{lemma}[Direction tracking]\label{lem:direction-tracking}
Let $W_{1,k}$ be the number of sensed items processed while
$S_{i-1}\le z_k$ under $H_1$, before the policy stops or exhausts the pool.
Let $W_{0,k}$ be the number of sensed items processed while
$S_{i-1}\ge -z_k$ under $H_0$. Then there is a constant $C_W<\infty$ such that
\begin{equation}\label{eq:direction-tracking-bound}
    \E^{\pi_k}_1[W_{1,k}]+\E^{\pi_k}_0[W_{0,k}]
    \le C_W(z_k+1).
\end{equation}
\end{lemma}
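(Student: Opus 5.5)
The plan is an exponential-supermartingale argument on the per-item log-likelihood path, using two ingredients: a uniform positive drift and bounded increments. We treat $H_1$ and $W_{1,k}$; the $H_0$ case is symmetric, with $-S$ in place of $S$. After the policy stops or the pool is exhausted, we extend the path artificially: keep sensing fresh i.i.d.\ items with the direction-$1$ rule, and count only the indices before stopping. This gives an upper bound, so it suffices to bound $\sum_{i\ge1}\Pp^{\pi_k}_1(S_{i-1}\le z_k)$ for the extended process. Let $\zeta_i$ denote the total increment contributed by item $i$: either $\ell_X(X_i)$, or $\ell_R(R_i)$ possibly followed by $\ell_H(X_i,R_i)$. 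By Assumption~\ref{ass:score-model} and $0<p_0<p_1<1$, $|\zeta_i|\le B$ for a primitive constant $B$. Conditional on the history before item $i$, the law of $\zeta_i$ is determined by the rule $J_i\in\{0,1,*\}$ through $(\eta^{\Hum}_{J_i,k},\eta^{\esc}_{J_i,k})$.

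\textbf{Step 1: uniform drift (the main obstacle).} We need a constant $c>0$, independent of $k$, such that for every rule $j\in\{0,1,*\}$,
\[
\mu_1(\eta^{\Hum},\eta^{\esc}):=\E_1[\zeta_i\mid J_i=j]=\eta^{\Hum}J_X^{(1)}+(1-\eta^{\Hum})\Bigl(I_R^{(1)}+\textstyle\sum_r g_1(r)\eta^{\esc}(r)d^{(1)}(r)\Bigr)\ge c.
\]
This quantity is exactly the KL divergence of the one-item observation law under the rule. It is therefore nonnegative, and it vanishes only if the observation law is the same under $H_0$ and $H_1$. The difficulty is that $I_R^{(1)}$ may be zero (for instance when $G_0=G_1$), and the rules depend on $k$.

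My plan is a compactness argument. First, show that the direction-$h$ rule delivers direction-$h$ information per item bounded below uniformly in $k$. Its direction-$h$ drift $\mu_h$ equals $(\lceil n^*_{\Hum}\rceil J_X^{(h)}+\lceil n^*_{\AI}\rceil(I_R^{(h)}+\Psi_h(s_{h,k})))/(\lceil n^*_{\Hum}\rceil+\lceil n^*_{\AI}\rceil)$, using Proposition~\ref{prop:escalation-solution}. This is at least $T_{h,k}/(\overline N_{\main,k}+2)$. Because $\overline N_{\main,k}=O(L_k)$ (by the argument in Theorem~\ref{thm:first-order}(i), applied with the buffered targets) and $T_{h,k}\ge a_k\wedge b_k=\Omega(L_k)$ (Assumption~\ref{ass:balanced-error-regime}), we obtain $\mu_h\ge c_h>0$. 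The set $\mathcal K$ of rules with $\mu_0\ge c_0/2$ or $\mu_1\ge c_1/2$ is compact. On $\mathcal K$, $\mu_1$ is continuous and strictly positive, since $\mu_0>0$ forces the two laws to differ. Hence $\min_{\mathcal K}\mu_1=:c>0$. Because the drifts are linear in the rule, the dead-zone average \eqref{eq:deadzone-rule} has $\mu_h\ge c_h/2$ for the appropriate $h$, so it also lies in $\mathcal K$.

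\textbf{Step 2: exponential supermartingale.} Since $|\zeta_i|\le B$ and $\E_1[\zeta_i\mid\cdot]\ge c$, a second-order expansion gives $\E_1[e^{-\theta\zeta_i}\mid\text{past}]\le e^{-\theta c/2}$ for a small primitive $\theta>0$, uniformly over the rules. Iterating with the tower property gives $\E^{\pi_k}_1[e^{-\theta S_{i-1}}]\le e^{-\theta c(i-1)/2}$. Markov's inequality then yields
\[
\Pp^{\pi_k}_1(S_{i-1}\le z_k)\le e^{\theta z_k-\theta c(i-1)/2}.
\]

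\textbf{Step 3: summation with linear dependence on $z_k$.} Split at $i_0:=\lceil 4z_k/c\rceil+1$. For $i\le i_0$, bound each probability by one, which contributes at most $4z_k/c+2$. For $i>i_0$, we have $\theta z_k\le\theta c(i-1)/4$, so each term is at most $e^{-\theta c(i-1)/4}$. These terms sum to a primitive constant. Adding the symmetric $H_0$ bound gives \eqref{eq:direction-tracking-bound} with $C_W$ depending only on the primitives.
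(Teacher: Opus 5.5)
Your proposal is correct, but it takes a different route from the paper in both main steps.

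\textbf{Uniform drift.} The paper (Lemma~\ref{lem:variance-drift}) proves an explicit reverse comparison $\E^{\pi_k}_h[\sigma_hZ_{j,k}]\ge c_\times\E^{\pi_k}_{1-h}[\sigma_{1-h}Z_{j,k}]$. It does this by comparing each KL component pair ($J_X$, $I_R$, $d(r)$) across the two directions and by bounding the ratio $g_h/g_{1-h}$. You instead obtain a non-explicit constant from two facts: the rule space is compact, and $\mu_0>0$ forces $\mu_1>0$ because both are KL divergences of the same one-item observation law.

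\textbf{Occupation count.} The paper uses a $C^1$ Lyapunov potential $\Phi$, linear below $z_k$ and saturating exponentially above. Using a variance--drift inequality, it shows $\E[\Phi(S_i)\mid\calH_{i-1}]-\Phi(S_{i-1})\le-\tfrac{\mu_0}{2}\ind\{S_{i-1}\le z_k\}$. It then telescopes up to a bounded stopping time, using that $\Phi$ is bounded below. You instead extend the path past stopping, bound each $\Pp(S_{i-1}\le z_k)$ by an exponential-supermartingale (Chernoff) estimate, and sum.

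\textbf{Trade-offs.} Your route is more standard and yields more. It gives geometric decay of $\Pp(S_{i-1}\le z_k)$ beyond $i\approx 4z_k/c$, which by a union bound gives exponential tail control of the last visit below $z_k$, not only of the expected occupation. The paper's route needs only a first-moment drift plus a variance--drift bound, and it gives explicit constants. It is also stated in terms of abstract increment hypotheses, which the paper reuses verbatim in the pilot-calibrated setting. There the report channel varies with the pilot, so your compactness step would have to be redone jointly over rules and channels.

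\textbf{One correction.} The drift is not linear in the rule. It is bilinear in $(\eta^{\Hum},\eta^{\esc})$ through the product $(1-\eta^{\Hum})\eta^{\esc}(r)$. Moreover, the dead-zone rule \eqref{eq:deadzone-rule} averages the two parameters separately rather than mixing the two rules, so its drift is not the average of the two drifts. Bounding term by term, as the paper does, gives $\mu_h$ of the dead-zone rule at least $\tfrac14\mu_h$ of the direction-$h$ rule. So define $\mathcal K$ with thresholds $c_h/4$; the rest of your argument goes through unchanged.
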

The proof of Lemma~\ref{lem:direction-tracking} is given in
Appendix~\ref{appendix-sec:direction-tracking}.
Since the sensing cost of each item is bounded by a primitive constant,
this result limits the expected cost of wrong-direction and dead-zone
items to $O(z_k+1)$.

We next bound the main-stage cost. For items processed with the correct
direction rule, the planned sensing cost and information yield are linked
by the optimizer of $\Gamma_h$. The bounded likelihood-ratio overshoot
then bounds their expected total cost by
$\Gamma_h(T_{h,k},\overline N_{\main,k})+O(1)$.
Combining this bound with Lemma~\ref{lem:direction-tracking} gives the
following result.
\begin{lemma}[Main-stage sensing cost]\label{lem:main-cost}
There is a constant $C_M<\infty$ such that, for $h=0,1$,
\begin{equation}\label{eq:main-cost-bound}
    \E^{\pi_k}_h[C^{\main}_k]
    \le \Gamma_h(T_{h,k},\overline N_{\main,k})
       +C_M(z_k+1).
\end{equation}
\end{lemma}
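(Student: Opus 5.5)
We will prove Lemma~\ref{lem:main-cost} by splitting the sensed items of the main stage into two groups under $H_h$. The first group contains the \emph{correct-direction} items, those processed with $J_i=h$. The second contains the \emph{off-direction} items, processed with $J_i\in\{1-h,*\}$. Every item's sensing cost is at most $\cAI+2\cH$ pathwise. By Lemma~\ref{lem:direction-tracking}, the expected number of off-direction items is at most $C_W(z_k+1)$, so they contribute at most $(\cAI+2\cH)C_W(z_k+1)$ in expectation. It therefore remains to bound the expected cost of correct-direction items by $\Gamma_h(T_{h,k},\overline N_{\main,k})+O(1)+O(z_k+1)$.

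For correct-direction items the sensing rule is stationary, namely $(\eta^{\Hum}_{h,k},\eta^{\esc}_{h,k})$. Conditional on the past, the expected per-item cost $c_{h,k}$ and the expected per-item drift $\mu_{h,k}$ of $S$ under $H_h$ are both constants. By the decomposition in Theorem~\ref{thm:kl-decomposition} and the optimality of $\eta^{\esc}_{h,k}$ for \eqref{eq:Psi-dual} at $s=s_{h,k}$ (Proposition~\ref{prop:escalation-solution}), the drift satisfies
\[
\mu_{h,k}\,(\lceil n^*_{\Hum,h,k}\rceil+\lceil n^*_{\AI,h,k}\rceil)\ \ge\ n^*_{\Hum,h,k}J_X^{(h)}+n^*_{\AI,h,k}\bigl(I_R^{(h)}+\Psi_h(s_{h,k})\bigr)\ \ge\ T_{h,k}.
\]
Here the drift is taken in signed form, $+$ for $h=1$ and $-$ for $h=0$. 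Similarly,
\[
c_{h,k}\,(\lceil n^*_{\Hum,h,k}\rceil+\lceil n^*_{\AI,h,k}\rceil)\ \le\ \Gamma_h(T_{h,k},\overline N_{\main,k})+O(1).
\]
The right side follows by taking the optimizer of \eqref{eq:Gamma-optimizer}, with the ceilings adding at most $\cH+\cAI$ (escalation counts scale with $\lceil n^*_{\AI}\rceil/n^*_{\AI}$, so the error is $O(1)$ once we note $s_{h,k}$ bounded by 1). Combining the two gives the cost-per-unit-drift ratio
\[
\frac{c_{h,k}}{\mu_{h,k}}\le \frac{\Gamma_h(T_{h,k},\overline N_{\main,k})+O(1)}{T_{h,k}}.
\]

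Next we apply a Wald-identity argument to the correct-direction increments. Let $M_t=\sum_{s\le t}\ind\{J_s=h\}(\xi_s-\mu_{h,k})$, summed over items, which is a martingale. By bounded optional stopping at the main-stage end (bounded by $N_{\main,k}$), $\mu_{h,k}\E_h[K_h]=\E_h[\sum\ind\{J=h\}\xi]$, where $K_h$ is the number of correct-direction items. The signed sum of correct-direction increments equals the terminal value of $S$ minus the off-direction increments. The terminal value is at most $a_k$ (respectively $b_k$) plus the bounded overshoot $B_\ell$, since increments are uniformly bounded. The off-direction increments are bounded in expectation by $O(z_k+1)$ via Lemma~\ref{lem:direction-tracking}; this is also valid on the fallback event, because there the terminal $S$ is below the boundary. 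Hence
\[
\E_h[K_h]\le \frac{a_k+O(1)+O(z_k+1)}{\mu_{h,k}}\le \frac{T_{h,k}+O(z_k+1)}{\mu_{h,k}},
\]
using $a_k\le T_{h,k}$ (respectively $b_k$). The correct-direction cost is then
\[
c_{h,k}\E_h[K_h]\le \bigl(\Gamma_h(T_{h,k},\overline N_{\main,k})+O(1)\bigr)\bigl(1+O((z_k+1)/T_{h,k})\bigr).
\]
Since $\Gamma_h(T_{h,k},\cdot)=O(T_{h,k})$, this is at most $\Gamma_h+O(z_k+1)$.

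The main obstacle is the per-item drift/cost accounting at the level of individual items. A single item's contribution mixes a direct human query with an AI query plus escalation, and the stop can occur mid-item, after the AI increment but before escalation. We handle this by conditioning on item-level $\sigma$-fields and treating a partial item as costing at most one extra item, which is absorbed in $O(1)$. A second delicate point is ensuring $\mu_{h,k}$ is bounded below by a constant, so that the ratio bounds have uniform constants. This follows because $T_{h,k}/(\lceil n^*_\Hum\rceil+\lceil n^*_\AI\rceil)\ge T_{h,k}/(N_{\main,k})$ and $N_{\main,k}=O(L_k)=O(T_{h,k})$ by Theorem~\ref{thm:first-order}(i)'s argument applied to $F^\Delta_k$.
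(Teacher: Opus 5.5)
Your proposal is correct and follows essentially the paper's argument: you split the started items into correct-direction and off-direction ones, control the off-direction items with Lemma~\ref{lem:direction-tracking}, and bound the correct-direction cost by a Wald-type identity that combines the planned cost-per-drift ratio with a bounded terminal value of $S$. The only differences are in how the remainder terms are absorbed. The paper drops the off-direction drift using its nonnegativity from Lemma~\ref{lem:variance-drift}(iii) and absorbs the overshoot through $\Delta_k\ge B_{\mathrm{inc}}$; you instead bound the off-direction increments by $B_{\mathrm{inc}}\,\E^{\pi_k}_h[W_{h,k}]$ and absorb the resulting $O(z_k+1)$ via $\Gamma_h(T_{h,k},\overline N_{\main,k})=O(T_{h,k})$.
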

The proof of Lemma~\ref{lem:main-cost} is given in
Appendix~\ref{appendix-sec:main-cost}. 

To control the expected fallback cost, we next bound the probability
that the main stage exhausts the pool without crossing a boundary.
\begin{lemma}[Fallback probability]\label{lem:fallback-low}
Let $E_{\fb}$ be the event that the fixed pool is exhausted before either
boundary is crossed. There are constants $C_{\fb}<\infty$ and $c_{\fb}>0$
such that, for $h=0,1$ and all large $k$,
\begin{equation}\label{eq:fallback-prob-bound}
    \Pp^{\pi_k}_h(E_{\fb})
    \le C_{\fb}\frac{z_k+1}{\Delta_k}
      +C_{\fb}\exp\left\{-c_{\fb}\frac{\Delta_k^2}{L_k}\right\}.
\end{equation}
\end{lemma}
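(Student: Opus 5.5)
We need the probability that the main stage exhausts the pool $N_{\main,k}$ without crossing either boundary. We fix $h=1$; the case $h=0$ is symmetric with $-S$ and $b_k$. On $E_{\fb}$ the statistic satisfies $S_t<a_k$ for all main-stage updates, in particular at the end of the pool. The plan is to split the pool into the items processed with the correct rule $J_i=1$ and the remaining ones, i.e.\ dead-zone and wrong-direction items, whose count is $W_{1,k}$.

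\textbf{Step 1 (wrong-direction items).} By Lemma~\ref{lem:direction-tracking} and Markov's inequality,
\[
\Pp_1^{\pi_k}\bigl(W_{1,k}>\Delta_k/(4c)\bigr)\le 4cC_W(z_k+1)/\Delta_k,
\]
where $c$ is an upper bound on per-item item consumption. This produces the first term of \eqref{eq:fallback-prob-bound}.

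\textbf{Step 2 (drift of correct-direction items).} Each item processed with rule $J_i=1$ has conditional expected increment
\[
\mu_1:=\eta^{\Hum}_{1,k}J_X^{(1)}+(1-\eta^{\Hum}_{1,k})\bigl(I_R^{(1)}+\Psi_1(s_{1,k})\bigr),
\]
using Proposition~\ref{prop:escalation-solution}, which gives $\sum_r g_1(r)\eta^{\esc}_{1,k}(r)d^{(1)}(r)=\Psi_1(s_{1,k})$, together with the KL computations in Theorem~\ref{thm:kl-decomposition}. The choice \eqref{eq:phi-def} and the feasibility of the optimizer of \eqref{eq:Gamma-optimizer} give
\[
(\lceil n^*_{\Hum,1,k}\rceil+\lceil n^*_{\AI,1,k}\rceil)\,\mu_1\ge T_{1,k}=a_k+\Delta_k.
\]
The rounding only increases the human share, and this needs a short check that $\lceil\cdot\rceil$ rounding does not reduce total information below the target. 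The cushion \eqref{eq:B-fits-pool} then guarantees that the pool has room for this many items. Increments in the dead zone or wrong direction still have nonnegative conditional mean under $H_1$, since every increment is a conditional log-likelihood ratio with KL mean $\ge0$. So on $E_{\fb}\cap\{W_{1,k}\le\Delta_k/(4c)\}$, the compensator
\[
\sum_t \E^{\pi_k}_1[\xi_t\mid\calH_{t-1}]
\]
at pool exhaustion is at least $a_k+\Delta_k-O(\Delta_k/4)-O(1)$. Hence the martingale part $M=S-\text{compensator}$ must be $\le -\Delta_k/2$ at exhaustion.

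\textbf{Step 3 (concentration).} The increments $\xi_t$ are uniformly bounded, as noted in the proof of Lemma~\ref{lem:boundary-error}. The number of updates is at most $2N_{\main,k}=O(L_k)$; this bound on $N_{\main,k}$ follows from $F^\Delta_k(\overline N_{\main,k})\le F^\Delta_k(N^1_k)=O(L_k)$ as in Theorem~\ref{thm:first-order}(i), divided by $\cdata$. The predictable quadratic variation is therefore $O(L_k)$. Freedman's inequality \citep{freedman1975}, applied to the martingale stopped at the stopping time $\tau_{\mathrm b}$, then yields
\[
\Pp\bigl(M_{\tau_{\mathrm b}}\le-\Delta_k/2\bigr)\le\exp\{-c\,\Delta_k^2/L_k\},
\]
using $\Delta_k=o(L_k)$ so that the range term is dominated. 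This gives the second term.

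\textbf{Main obstacle.} The delicate step is Step 2: checking that the items processed under rule $1$ number at least the planned count on $E_{\fb}$. Because the pool is finite, every pool item is sensed on $E_{\fb}$. The wrong-direction items consume pool slots, however, so the planned $\lceil n^*_{\Hum}\rceil+\lceil n^*_{\AI}\rceil$ correct items may not all fit. The plan is to absorb the lost slots into the buffer: each lost slot costs at most $\max\{J_X^{(1)},\ldots\}=O(1)$ of drift, and there are at most $W_{1,k}\le\Delta_k/(4c)$ of them, which is precisely why Step 1 controls $W_{1,k}$ at scale $\Delta_k$.
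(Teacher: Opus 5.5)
Your argument is essentially the paper's proof: Markov's inequality plus Lemma~\ref{lem:direction-tracking} on $\{W_{1,k}>\Delta_k/(4B_{\mathrm{inc}})\}$, then a planned-block drift of at least $a_k+\Delta_k-B_{\mathrm{inc}}W_{1,k}$ (wrong-rule and dead-zone items only lose drift and never contribute negative drift), then Freedman's inequality with $O(L_k)$ predictable variance; the only difference is that the paper evaluates the martingale at the deterministic index $N^{\mathrm{plan}}_{1,k}=\lceil n^*_{\Hum,1,k}\rceil+\lceil n^*_{\AI,1,k}\rceil$ rather than at pool exhaustion. Two points to tighten: (i) the Doob decomposition must use the item-level filtration and the per-item total increment $Y_i$, because the update-level compensator $\sum_t\E^{\pi_k}_1[\xi_t\mid\calH_{t-1}]$ depends on the realized human/AI branch and escalation draws and is not pathwise at least $\mu_1$ per correct-rule item (only the item-level conditional mean is); (ii) $N^1_k$ from Theorem~\ref{thm:first-order}(i) need not meet the floor $N_{\fixed,\Hum}(\alpha_{2,k},\beta_{2,k})$ or the buffered targets $T_{h,k}$, so $F^\Delta_k(N^1_k)$ can be $+\infty$, and you should instead use an enlarged all-human comparison point or Lemma~\ref{lemma-appendix:N-main} to get $N_{\main,k}=O(L_k)$.
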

The proof of Lemma~\ref{lem:fallback-low} is given in
Appendix~\ref{appendix-sec:fallback-low}.

The pre-committed fallback requires at most $O(L_k)$ additional cost
whenever it is invoked, since $N_{\main,k}=O(L_k)$ by
Lemma~\ref{lemma-appendix:N-main} and either completion mode queries at
most $N_{\main,k}$ items. Multiplying this deterministic cost bound by
the probability in Lemma~\ref{lem:fallback-low} yields the next lemma.
\begin{lemma}[Fallback expected cost]\label{lem:fallback-cost}
There is a constant $C_F<\infty$ such that, for $h=0,1$,
\begin{equation}\label{eq:fallback-cost-bound}
    \E^{\pi_k}_h[C^{\fb}_k]
    \le C_F L_k\left[
        \frac{z_k+1}{\Delta_k}
        +\exp\left\{-c_{\fb}\frac{\Delta_k^2}{L_k}\right\}
    \right].
\end{equation}
\end{lemma}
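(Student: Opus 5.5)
The plan is to prove Lemma~\ref{lem:fallback-cost} by conditioning on the fallback event and bounding the fallback's additional cost deterministically. By construction $C^{\fb}_k=\ind\{E_{\fb}\}\,C^{\fb}_k$, since no fallback cost is incurred when the main stage stops at a boundary. On $E_{\fb}$, the pre-committed mode $\mathsf F_k$ is a deterministic function of the primitives and queries only a fixed index set. If $\mathsf F_k=\Hum$, this set is $\calJ_{\Hum,k}$ with $|\calJ_{\Hum,k}|=N_{\fixed,\Hum}(\alpha_{2,k},\beta_{2,k})\le N_{\main,k}$. If $\mathsf F_k=\AI$, it is $\{1,\ldots,N_{\fixed,\AI}(\alpha_{2,k},\beta_{2,k})\}$ with $N_{\fixed,\AI}(\alpha_{2,k},\beta_{2,k})\le N_{\main,k}$ by availability. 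Each queried item incurs at most one additional query, costing at most $\max\{\cH,\cAI\}$, and the items were already acquired, so no extra $\cdata$ is paid. Hence, pathwise,
\[
C^{\fb}_k\le \max\{\cH,\cAI\}\,N_{\main,k}\,\ind\{E_{\fb}\}.
\]

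The second step is to show $N_{\main,k}=O(L_k)$, which is the content of Lemma~\ref{lemma-appendix:N-main} referenced in the text. If I need to argue it directly, I would repeat Step~1 of the proof of Theorem~\ref{thm:first-order}(i). Replace $(T'_{1,k},T'_{0,k})$ by $(T_{1,k},T_{0,k})=(a_k+\Delta_k,b_k+\Delta_k)=O(L_k)$, using $\Delta_k=o(L_k)$ and $f_{\fb,k}$ bounded away from $1$. Replace the floor by $N_{\fixed,\Hum}(\alpha_{2,k},\beta_{2,k})$, which is $O(L_k)$ by the same Hoeffding test, since $\log(1/\alpha_{2,k})=\log(1/\alpha_k)+O(1)$ because $f_{\fb,k}$ is bounded away from $0$. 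The all-human point $N^1_k=O(L_k)$ then gives $F_k^\Delta(N^1_k)=O(L_k)$. Any minimizer $\overline N_{\main,k}$ satisfies $\cdata\overline N_{\main,k}\le F_k^\Delta(\overline N_{\main,k})\le F_k^\Delta(N^1_k)$, so $N_{\main,k}=\overline N_{\main,k}+1=O(L_k)$. I expect this uniform-in-$k$ constant to be the only real obstacle. It requires tracking that every constant depends only on the primitives and on the bounds on $f_{\fb,k}$, and that $L_k\ge 1$ for large $k$ so the $+1$ is absorbed.

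Finally, take expectations under $H_h$ and apply Lemma~\ref{lem:fallback-low}:
\[
\E^{\pi_k}_h[C^{\fb}_k]\le \max\{\cH,\cAI\}\,C_N L_k\,C_{\fb}\left[\frac{z_k+1}{\Delta_k}+\exp\left\{-c_{\fb}\frac{\Delta_k^2}{L_k}\right\}\right].
\]
This gives \eqref{eq:fallback-cost-bound} with $C_F:=\max\{\cH,\cAI\}C_NC_{\fb}$, where $C_N$ is the constant with $N_{\main,k}\le C_NL_k$ for all large $k$. Enlarging $C_F$ if necessary covers the finitely many small $k$.
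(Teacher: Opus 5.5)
Your proof is correct and follows the paper's argument. On $E_{\fb}$, the pre-committed fallback makes at most one extra query on each of at most $N_{\main,k}=O(L_k)$ already-acquired items, so its cost is deterministically $O(L_k)$, and multiplying by the probability bound of Lemma~\ref{lem:fallback-low} gives the claim (the paper bounds the human branch via $N_{\fixed,\Hum}(\alpha_{2,k},\beta_{2,k})$ rather than $N_{\main,k}$, which is immaterial). Your backup derivation of $N_{\main,k}=O(L_k)$, via the Hoeffding floor at $(\alpha_{2,k},\beta_{2,k})$ and the all-human design point, is also valid, and it is more elementary than routing through Lemma~\ref{lem:LB-perturb}, because only the crude $O(L_k)$ order is needed here.
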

The proof of Lemma~\ref{lem:fallback-cost} is given in
Appendix~\ref{appendix-sec:fallback-cost}.

It remains to compare the acquisition cost and the buffered sensing
plan with the lower bound $\LB_k$. The following perturbation result
bounds the cost of increasing the information targets and imposing the
fallback sample-size floor.
\begin{lemma}[Perturbation from the buffered design to the lower bound]\label{lem:LB-perturb}
There is a constant $C_P<\infty$ such that
\begin{equation}\label{eq:LB-perturb}
    N_{\main,k}\cdata+
    \max_h \Gamma_h(T_{h,k},\overline N_{\main,k})
    \le \LB_k+C_P\Delta_k
        +O\!\left(\log L_k+\log\left(\frac{1}{f_{\fb,k}}\right)
        +\log\left(\frac{1}{1-f_{\fb,k}}\right)\right).
\end{equation}
\end{lemma}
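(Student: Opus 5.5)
Let $N^\star$ denote a minimizer in the definition \eqref{eq:LB-def} of $\LB_k$, so $N^\star\ge N_{\fixed,\Hum}(\alpha_k,\beta_k)$ and $\LB_k=N^\star\cdata+\max_h\Gamma_h(T'_{h,k},N^\star)$, where $T'_{1,k}=A$ and $T'_{0,k}=B$ at level $(\alpha_k,\beta_k)$. The plan is to build an explicit integer $N'$ that is admissible for \eqref{eq:Nbar-main-choice}, that is, $N'\ge N_{\fixed,\Hum}(\alpha_{2,k},\beta_{2,k})$. We then bound $F_k^\Delta(N')$ by $\LB_k$ plus the stated error. Since $\overline N_{\main,k}$ minimizes $F^\Delta_k$, we have $F^\Delta_k(\overline N_{\main,k})\le F^\Delta_k(N')$. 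The one-item cushion in $N_{\main,k}=\overline N_{\main,k}+1$ only adds $\cdata=O(1)$, which gives \eqref{eq:LB-perturb}.

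\emph{Target gap.} By Lemma~\ref{lem:binary-kl-asymptotic}, $T_{h,k}-T'_{h,k}=\Delta_k+\log(1/(1-f_{\fb,k}))+o(1)=:\delta_k$ up to $o(1)$, uniformly in $h$. Put $J_{\min}:=\min_h J_X^{(h)}>0$ and $m_k:=\lceil\delta_k/J_{\min}\rceil$. For each $h$, take an optimizer $(n_{\Hum},n_{\AI},n_{\esc})$ of $\Gamma_h(T'_{h,k},N^\star)$ and replace it by $(n_{\Hum}+m_k,n_{\AI},n_{\esc})$. This triple supplies at least $T'_{h,k}+\delta_k\ge T_{h,k}$ units of information. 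It uses at most $N^\star+m_k$ items and costs at most $\cH(m_k)$ more. Hence, for every $N\ge N^\star+m_k$, monotonicity of $\Gamma_h$ in $N$ gives
\[
\Gamma_h(T_{h,k},N)\le\Gamma_h(T'_{h,k},N^\star)+\cH\bigl(\delta_k/J_{\min}+1\bigr).
\]

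\emph{Floor gap.} Define $N':=\max\{N^\star,N_{\fixed,\Hum}(\alpha_{2,k},\beta_{2,k})\}+m_k$. Then
\[
F^\Delta_k(N')\le\LB_k+(\cdata+\cH)(\delta_k/J_{\min}+1)+\cdata\,D_k,
\qquad
D_k:=\bigl[N_{\fixed,\Hum}(f_{\fb,k}\alpha_k,f_{\fb,k}\beta_k)-N_{\fixed,\Hum}(\alpha_k,\beta_k)\bigr]_+ ,
\]
because $N^\star\ge N_{\fixed,\Hum}(\alpha_k,\beta_k)$. The $\Delta_k$ part of $\delta_k$ gives the $C_P\Delta_k$ term, with $C_P:=(\cdata+\cH)/J_{\min}$. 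The $\log(1/(1-f_{\fb,k}))$ part lands in the $O(\cdot)$ term. It remains to show $D_k=O(\log L_k+\log(1/f_{\fb,k}))$.

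\emph{Main obstacle.} Bounding $D_k$ is the step I expect to be hardest. The approach is to sandwich the exact Neyman--Pearson sample size $N_{\fixed,\Hum}$ by large-deviation estimates with matching exponents.

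For the lower side, I would take the NP threshold test at $N_0:=N_{\fixed,\Hum}(\alpha_k,\beta_k)$ with errors $\le(\alpha_k,\beta_k)$. A Bahadur--Rao-type lower bound for binomial tails at a fixed threshold fraction $\theta\in(p_0,p_1)$ gives
\[
\Pp_0(\bar X\ge\theta)\ge \frac{c}{\sqrt N}\,e^{-N\klbin(\theta\Vert p_0)},
\]
and the analogous bound under $H_1$. These pin down $N_0\klbin(\theta_0\Vert p_0)\ge\log(1/\alpha_k)-O(\log L_k)$, and similarly for $\beta_k$, for the threshold fraction $\theta_0$ actually used.

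For the upper side, I would use the Chernoff bound
\[
\Pp_0(\bar X\ge\theta)\le e^{-N\klbin(\theta\Vert p_0)}.
\]
With the same $\theta_0$, this shows that $N_0+O(\log L_k+\log(1/f_{\fb,k}))$ labels suffice to reach errors $(f_{\fb,k}\alpha_k,f_{\fb,k}\beta_k)$. The reason is that both exponents $\klbin(\theta_0\Vert p_h)$ are bounded below by a primitive constant. This holds because $\theta_0$ stays in a compact subinterval of $(p_0,p_1)$, which follows from Assumption~\ref{ass:balanced-error-regime}.

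Combining the two sides yields $D_k=O(\log L_k+\log(1/f_{\fb,k}))$, which completes the argument.
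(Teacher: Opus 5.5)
Your proposal is correct. Its skeleton matches the paper's: fix an optimizer $N^\star$ of $\LB_k$, pad each $\Gamma_h(T'_{h,k},N^\star)$-optimizer with direct-human items to close the target gap $T_{h,k}-T'_{h,k}=\Delta_k+\log(1/(1-f_{\fb,k}))+o(1)$ from Lemma~\ref{lem:binary-kl-asymptotic}, enlarge the pool to the fallback floor, and compare with $F^\Delta_k(\overline N_{\main,k})$. Your bookkeeping is slightly tighter than the paper's: you pad by the larger directional gap over $\min_h J_X^{(h)}$ rather than by the sum of both directional reserves, and you charge the floor gap only $\cdata$ instead of $\cdata+\cH$. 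You should define $\delta_k:=\max_h[T_{h,k}-T'_{h,k}]_+$ exactly rather than ``up to $o(1)$''.

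The genuine difference is the floor-gap step. The paper proves Lemma~\ref{lem:safe-floor-first-order} through Lemma~\ref{lem:Nfull-V-identity}, which shows $N_{\fixed,\Hum}(e^{-u},e^{-v})=V(u,v)+O(\log L_k)$ with $V(u,v)=\inf_{\vartheta}\max\{u/D_0(\vartheta),v/D_1(\vartheta)\}$. Its lower half holds for an arbitrary test, via type-class incompatibility and a connectedness argument. The gap then follows from homogeneity and monotonicity of $V$. You instead use the Neyman--Pearson structure directly. The optimal test at $N_0$ is a threshold test, so its own error constraints plus a binomial point-mass lower bound pin both exponents at that single threshold to within $O(\log L_k)$. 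Chernoff at the same threshold then shows that $O(\log L_k+\log(1/f_{\fb,k}))$ further labels suffice. Your route is shorter and never needs $V$; the paper's route buys a first-order formula for $N_{\fixed,\Hum}$ that does not rely on the NP form.

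When you write it out, two details need care:
\begin{enumerate}
\item The NP constraints give $\Pp_0(L_{N_0}\ge c_{N_0}+1)\le\alpha_k$ and $\Pp_1(L_{N_0}\le c_{N_0}-1)\le\beta_k$. So the two exponents sit at $(c_{N_0}\pm1)/N_0$, not at a common $\theta_0$, and you must absorb an $O(1)$ shift.
\item The threshold staying in a compact subinterval of $(p_0,p_1)$ does not follow from Assumption~\ref{ass:balanced-error-regime} alone. It needs that assumption together with $N_0=O(L_k)$ and a tail lower bound. Use $\Pp_p(L_N=j)\ge(N+1)^{-1}e^{-N\klbin(j/N\Vert p)}$, which is uniform in $j$. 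This avoids the circularity of invoking Bahadur--Rao constants that already presuppose the threshold lies in a compact set.
\end{enumerate}
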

The proof of Lemma~\ref{lem:LB-perturb} is given in
Appendix~\ref{appendix-sec:LB-perturb}.

Combining the cost decomposition with
Lemmas~\ref{lem:main-cost}, \ref{lem:fallback-cost}, and
\ref{lem:LB-perturb}, we obtain
\[
\begin{aligned}
    \max_h\E_h^{\pi_k}[C^{\pi_k}]
    \le{}&\LB_k+C_P\Delta_k+C_M(z_k+1)
    +C_F L_k\left[
       \frac{z_k+1}{\Delta_k}
       +\exp\left\{-c_{\fb}\frac{\Delta_k^2}{L_k}\right\}
       \right]\\
    &+O\!\left(\log L_k+\log\frac{1}{f_{\fb,k}}
                        +\log\frac{1}{1-f_{\fb,k}}\right).
\end{aligned}
\]
Since $\Delta_k=o(L_k)$, the term $C_M(z_k+1)$ can be absorbed into a
constant multiple of $L_k(z_k+1)/\Delta_k$ for all large $k$.
Consequently, there are primitive constants $c>0$ and
$C_1,C_2,C_3,C_4<\infty$ such that
\begin{equation}\label{eq:param-ratio-bound}
\begin{aligned}
    \max_h\E_h^{\pi_k}[C^{\pi_k}]
    \le{}&\LB_k+C_1\Delta_k
      +C_2\frac{(z_k+1)L_k}{\Delta_k}
      +C_3L_k\exp\left\{-c\frac{\Delta_k^2}{L_k}\right\}\\
      &+C_4\left(\log L_k+\log\frac{1}{f_{\fb,k}}
                         +\log\frac{1}{1-f_{\fb,k}}\right).
\end{aligned}
\end{equation}
By part (i), $\LB_k=\Theta(L_k)$. Dividing
\eqref{eq:param-ratio-bound} by $\LB_k$, the conditions
\eqref{eq:delta-admissible} and \eqref{eq:z-admissible} give
\[
    \frac{\Delta_k}{L_k}\to0,
    \qquad \frac{z_k+1}{\Delta_k}\to0,
    \qquad \exp\left\{-c\frac{\Delta_k^2}{L_k}\right\}\to0.
\]
The remaining logarithmic contribution also vanishes because
$f_{\fb,k}$ is bounded away from zero and one. Hence
\[
    \frac{\max_h\E_h^{\pi_k}[C^{\pi_k}]}{\LB_k}\le1+o(1).
\]
Finally, feasibility of $\pi_k$ and Theorem~\ref{thm:lower-bound} imply $\max_h\E_h^{\pi_k}[C^{\pi_k}]
    \ge C^*(\alpha_k,\beta_k)\ge\LB_k$. The two bounds together prove part (ii).
\end{proof}

\subsection{Proof of Lemma~\ref{lem:LB-perturb}}\label{appendix-sec:LB-perturb}
We advance the proof of  Lemma~\ref{lem:LB-perturb} as the proofs of Lemmas~\ref{lem:direction-tracking}-\ref{lem:fallback-cost} rely on the conclusions from Lemma~\ref{lem:LB-perturb}, while the proof of Lemma~\ref{lem:LB-perturb} does not rely on the conclusions from Lemmas~\ref{lem:direction-tracking}-\ref{lem:fallback-cost}. 

To prove Lemma~\ref{lem:LB-perturb}, we need auxiliary Lemma~\ref{lem:safe-floor-first-order}. We relegate the proof of Lemma~\ref{lem:safe-floor-first-order} to Appendix~\ref{appendix-sec:safe-floor-first-order}.

\begin{lemma}[Safe-floor stability at first order]\label{lem:safe-floor-first-order}
(i) $N_{\fixed,\Hum}(\alpha_k, \beta_k) = O(L_k)$; 

(ii)
\begin{equation}\label{eq:safe-floor-result}
    0\le N_{\fixed,\Hum}(\alpha_{2,k},\beta_{2,k})-N_{\fixed,\Hum}(\alpha_k,\beta_k)
     = O\!\left(\log L_k+\log\left(\frac{1}{f_{\fb, k}}\right)\right).
\end{equation}
\end{lemma}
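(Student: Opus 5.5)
Part (i) is already contained in Step~1 of the proof of Theorem~\ref{thm:first-order}(i). I would simply re-cite it. The sample-mean test with threshold $\bar p=(p_0+p_1)/2$ satisfies both error constraints by Hoeffding's inequality once $N\ge L_k/(2c_{\mathrm{Hfd}})$. Definition~\ref{def:Nfull} then gives $N_{\fixed,\Hum}(\alpha_k,\beta_k)\le\lceil L_k/(2c_{\mathrm{Hfd}})\rceil=O(L_k)$.

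For part (ii), the lower inequality is monotonicity. Since $\alpha_{2,k}=f_{\fb,k}\alpha_k\le\alpha_k$ and $\beta_{2,k}\le\beta_k$, any test meeting the tighter targets also meets the looser ones. Hence $N_{\fixed,\Hum}(\alpha_{2,k},\beta_{2,k})\ge N_{\fixed,\Hum}(\alpha_k,\beta_k)$.

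For the upper bound, write $N:=N_{\fixed,\Hum}(\alpha_k,\beta_k)$ and $\epsilon:=\log(1/f_{\fb,k})$. I would start from a Neyman--Pearson test $\phi_N^*$ at levels $(\alpha_k,\beta_k)$ and boost it using $M$ extra labels to reach levels $(f_{\fb,k}\alpha_k,f_{\fb,k}\beta_k)$. Take $M$ extra i.i.d.\ labels and form the statistic $\Lambda:=\log\Lambda_N(X_{1:N})+\log\Lambda_M(X_{N+1:N+M})$, the full log-likelihood ratio on $N+M$ labels. By the Neyman--Pearson lemma it suffices to exhibit any test on $N+M$ labels meeting the new targets.

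The construction is a two-part split. First apply $\phi_N^*$ to the first $N$ labels. Then require confirmation from an independent block of $M$ labels via a sample-mean test, with error bounds $e^{-2Mc_{\mathrm{Hfd}}}$ under each hypothesis. The rule is to reject iff both parts reject. This has type-I error at most $\alpha_k e^{-2Mc_{\mathrm{Hfd}}}$, by independence of the blocks. But its type-II error is at most $\beta_k+e^{-2Mc_{\mathrm{Hfd}}}$, which is not multiplicatively small. So a naive AND/OR combination does not shrink both errors at once.

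I would therefore instead work directly with the likelihood-ratio structure. Run the SPRT-style/Chernoff argument: by Lemma~\ref{lem:error-to-kl}-type tightness, the threshold test on $N'$ labels achieves $\log(1/\text{error})\ge N' c -O(\log N')$ for an appropriate Chernoff-type exponent. Meanwhile any test needs $N\ge (\log(1/\alpha_k))/J_X^{(1)}$ and $N\ge(\log(1/\beta_k))/J_X^{(0)}$.

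The cleaner route is a local-CLT / exact large-deviation expansion for the binomial Neyman--Pearson test. For fixed tilt, $\log\Pp_0(L_{N'}\ge c)=-N'\klbin(c/N'\Vert p_0)-\tfrac12\log N'+O(1)$, and similarly under $H_1$. Fix the ratio $c/N'=\theta$ chosen for $N$. Increasing $N'$ from $N$ to $N+M$ raises both exponents by $M\klbin(\theta\Vert p_h)$, up to an additive $O(\log N)=O(\log L_k)$ term coming from the polynomial prefactors. Under Assumption~\ref{ass:balanced-error-regime}, the optimal $\theta$ for $N$ stays in a compact subinterval of $(p_0,p_1)$, so both $\klbin(\theta\Vert p_h)$ are bounded below by a primitive constant. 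Hence $M=O(\epsilon+\log L_k)$ suffices to reduce both errors by the factor $f_{\fb,k}$.

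The main obstacle is making the exact binomial tail expansion uniform in $k$. The rounding of $c$ to integers and the randomization $\gamma_N$ must be handled, and $\theta$ must be shown bounded away from $p_0,p_1$. For this I would use the standard bounds $\frac{1}{N'+1}e^{-N'\klbin}\le\Pp(\cdot)\le e^{-N'\klbin}$ (method of types). These give exactly the $O(\log L_k)$ slack and avoid any local CLT.
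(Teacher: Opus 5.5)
Your treatment of (i) and of the lower inequality in (ii) matches the paper. For the upper bound in (ii), your final route is valid but genuinely different from the paper's; the AND-combination is rightly discarded, and the SPRT and Bahadur--Rao detours are not needed.

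The paper proves a global two-sided identity. With $V(u,v):=\inf_{\vartheta\in(p_0,p_1)}\max\{u/\klbin(\vartheta\Vert p_0),\,v/\klbin(\vartheta\Vert p_1)\}$, $u_k=\log(1/\alpha_k)$ and $v_k=\log(1/\beta_k)$, it shows $V(u_k,v_k)-O(\log L_k)\le N_{\fixed,\Hum}(\alpha_k,\beta_k)\le V(u_k,v_k)+1$. The upper side is Chernoff. The lower side uses the point-mass bound $\Pp_p(\sum_i X_i=j)\ge(n+1)^{-1}e^{-n\klbin(j/n\Vert p)}$, a type-class incompatibility argument valid for arbitrary tests, and a connectedness argument producing a $\vartheta$ at which both exponents are small. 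The shift then follows from homogeneity and monotonicity of $V$: $V(u_k+c,v_k+c)-V(u_k,v_k)\le(c/\min\{u_k,v_k\})V(u_k,v_k)=O(c)$ with $c=\log(1/f_{\fb,k})$.

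You instead perturb the single test $\phi_N^*$ at $N=N_{\fixed,\Hum}(\alpha_k,\beta_k)$. The converse is used only at its own ratio $\theta=c_N/N$, and achievability at $N+M$ uses the same ratio. So $V$, the incompatibility lemma and the connectedness step all disappear, and $M$ is explicit.

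The steps you flag do close with the tool you name.
\begin{itemize}
\item Exclude $c_N\in\{0,N\}$ first; either value forces an error probability tending to one. Then $\alpha_k\ge\Pp_0(L_N=c_N+1)$ and $\beta_k\ge\Pp_1(L_N=c_N-1)$, so $\gamma_N$ never enters.
\item The point-mass bound then gives $N\klbin(\tfrac{c_N+1}{N}\Vert p_0)\ge u_k-\log(N+1)$ and $N\klbin(\tfrac{c_N-1}{N}\Vert p_1)\ge v_k-\log(N+1)$.
\item With $N=O(L_k)$ from (i) and $\min\{u_k,v_k\}=\Theta(L_k)$, these divergences are bounded below by a primitive constant.
\item Since $(c_N+1)/N\le p_0$ or $(c_N-1)/N\ge p_1$ would each force an error probability bounded away from zero, $\theta$ lies in a fixed compact subinterval of $(p_0,p_1)$. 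There a derivative bound turns the $\pm1/N$ shifts into $O(1)$.
\item At $N'=N+M$ with threshold $\lceil\theta N'\rceil$, Chernoff and the monotonicity of $\klbin(\cdot\Vert p_h)$ on each side of $p_h$ bound the two errors by $e^{-N'\klbin(\theta\Vert p_0)}$ and $e^{-N'\klbin(\theta\Vert p_1)}$, with no rounding loss.
\end{itemize}
Hence $M=\lceil(\log(1/f_{\fb,k})+\log(N+1)+O(1))/\min_h\klbin(\theta\Vert p_h)\rceil=O(\log L_k+\log(1/f_{\fb,k}))$ suffices.

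Your argument is shorter and local. The paper's gives, as a by-product, an explicit first-order formula $N_{\fixed,\Hum}(\alpha_k,\beta_k)=V(u_k,v_k)+O(\log L_k)$ for the floor itself, not just for its increment.
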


We are now ready to prove Lemma~\ref{lem:LB-perturb}.
\begin{proof}{Proof of Lemma~\ref{lem:LB-perturb}}
To show \eqref{eq:LB-perturb} holds, we find an integer $\tilde N_k$ with $\tilde N_k\geq N_{\fixed,\Hum}(\alpha_{2,k},\beta_{2,k})$ and $F^{\Delta}_k(\tilde N_k)\leq\LB_k + C_P\Delta_k + O(\log L_k+\log(1/f_{\fb, k}) + \log(1/(1-f_{\fb, k})))$, and thus by construction of $\overline N_{\main,k}$, we have that $F^{\Delta}_k(\overline N_{\main,k}) \leq F^{\Delta}_k(\tilde N_k)\leq \LB_k + C_P\Delta_k + O(\log L_k+\log(1/f_{\fb, k}) + \log(1/(1-f_{\fb, k})))$.

\vspace{1mm}
\emph{Step 1: Construction of $\tilde N_k$.}
Let $N_k^*$ be an optimizer in \eqref{eq:LB-def} for
$(\alpha_k,\beta_k)$. That is,
$$
N^*_k\in\arg\min_{N\geq N_{\fixed,\Hum}(\alpha_k, \beta_k)} N\cdata + \max\{\Gamma_1(T'_{1,k}, N), \Gamma_0(T'_{0,k}, N)\}. 
$$
Thus $\LB_k = N_k^*\cdata+\max_h\Gamma_h(T'_{h,k},N_k^*)$.
Also define the
\emph{reserve}
\begin{equation}\label{eq:Rk-def}
    R_k:=\left\lceil
        \frac{\pos{a_k+\Delta_k-T'_{1,k}}}{J_X^{(1)}}
        +\frac{\pos{b_k+\Delta_k-T'_{0,k}}}{J_X^{(0)}}
        +N_{\fixed,\Hum}(\alpha_{2,k},\beta_{2,k})
              -N_{\fixed,\Hum}(\alpha_k,\beta_k)
    \right\rceil.
\end{equation}
Let 
$$
\tilde N_k:= \lceil N^*_k + R_k\rceil.
$$

\vspace{1mm}
\emph{Step 2: show that $F^\Delta_k(\tilde N_k)\leq \LB_k + C_P\Delta_k + O(\log L_k+\log(1/f_{\fb, k}) + \log(1/(1-f_{\fb, k})))$.}
We first notice that $\tilde N_k \geq N^*_k + R_k\geq N_{\fixed,\Hum}(\alpha_k, \beta_k) + N_{\fixed,\Hum}(\alpha_{2,k},\beta_{2,k})-N_{\fixed,\Hum}(\alpha_k,\beta_k)= N_{\fixed,\Hum}(\alpha_{2,k}, \beta_{2,k})$, using $N_k^*\ge N_{\fixed,\Hum}(\alpha_k,\beta_k)$ (Step~1). We now bound $F_k^\Delta(N_k^*+R_k)$ from above.

Fix $h\in\{0,1\}$. Since $\LB_k<\infty$, we also have that $\Gamma_h(T'_{h,k},N_k^*)<\infty$ for $h\in\{0,1\}$, so there is an
optimal triple $(n_{\Hum}^*,n_{\AI}^*,n_{\esc}^*)$ for
$\Gamma_h(T'_{h,k},N_k^*)$; being feasible, $(n_{\Hum}^*,n_{\AI}^*,n_{\esc}^*)$ for
$\Gamma_h(T'_{h,k},N_k^*)$ satisfies: $n_{\Hum}^*+n_{\AI}^*\le N_k^*$, $0\leq n_{\esc}^*\leq n_{\AI}^*$ and that $n_{\Hum}^* J^{(h)}_X +n_{\AI}^* I^{(h)}_R + n_{\AI}^*\Psi_h(n_{\esc}^*/n_{\AI}^*)\geq T'_{h,k}$. We construct the perturbed triple $(\hat n_{\Hum}, \hat n_{\AI}, \hat n_{\esc}) := (n_{\Hum}^*+R_k,\,n_{\AI}^*,\,n_{\esc}^*)$, which
adds $R_k$ direct-human items. We check it is feasible for \eqref{eq:Gamma-def} with $T = T_{h,k}$ and $N = \tilde N_k$:
\begin{itemize}
\item $\hat n_{\Hum} + \hat n_{\AI} = (n_{\Hum}^*+R_k)+n_{\AI}^*\le N_k^*+R_k = \tilde N_k$,
since $n_{\Hum}^*+n_{\AI}^*\le N_k^*$.
\item $\hat n_{\AI} = n_{\AI}^*,\hat n_{\esc} = n_{\esc}^*$, so
$0\le\hat n_{\esc}\le \hat n_{\AI}$ holds since $0 \leq n^*_{\esc}\leq n^*_{\AI}$.
\item $\hat n_{\Hum} J^{(h)}_X +\hat n_{\AI} I^{(h)}_R + \hat n_{\AI}\Psi_h(\hat n_{\esc}/\hat n_{\AI}) = (n_{\Hum}^*+R_k) J^{(h)}_X +n_{\AI}^* I^{(h)}_R + n_{\AI}^*\Psi_h(n_{\esc}^*/n_{\AI}^*)\geq T'_{h,k} + R_k J^{(h)}_X$. By
\eqref{eq:Rk-def}, $R_k\geq [T_{h,k} - T'_{h,k}]_+/J_X^{(h)}$ for all $h\in\{0,1\}$, so
$R_kJ_X^{(h)}\ge\pos{T_{h,k}-T'_{h,k}}$. Thus $\hat n_{\Hum} J^{(h)}_X +\hat n_{\AI} I^{(h)}_R + \hat n_{\AI}\Psi_h(\hat n_{\esc}/\hat n_{\AI})\geq T'_{h,k}+\pos{T_{h,k}-T'_{h,k}}\ge T_{h,k}$.
\end{itemize}
Then, since $(\hat n_{\Hum}, \hat n_{\AI}, \hat n_{\esc})$ is feasible for \eqref{eq:Gamma-def} with $T = T_{h,k}$ and $N = \tilde N_k$, we have that
\begin{align*}
    \Gamma_h(T_{h,k}, \tilde N_k)
    &\leq c_{\Hum}\hat n_{\Hum} + c_{\AI}\hat n_{\AI} + c_{\Hum}\hat n_{\esc}\\
    &= c_{\Hum}(n^*_{\Hum}+R_k) + c_{\AI}n^*_{\AI} + c_{\Hum}n^*_{\esc}\\
    &\leq \Gamma_h(T'_{h,k},N_k^*)+\cH R_k.
\end{align*}
Thus
\begin{align*}
F_k^\Delta(\tilde N_k) 
=& \lceil N^*_k + R_k\rceil\cdata + \max\{\Gamma_1(T_{1,k}, \tilde N_k), \Gamma_0(T_{0,k}, \tilde N_k)\}\\
    \le&\Bigl( N_k^*\cdata+\max_h\Gamma_h(T'_{h,k},N_k^*)\Bigr)
       +R_k(\cdata+\cH) + \cdata\\
    =&\LB_k+R_k(\cdata+\cH) + \cdata,
\end{align*}
where the last equality uses $\LB_k = N_k^*\cdata+\max_h\Gamma_h(T'_{h,k},N_k^*)$
from Step~1. It remains to bound $R_k(\cdata+\cH)$. By
Lemma~\ref{lem:binary-kl-asymptotic},
$\pos{a_k-T'_{1,k}}=\log(1/(1-f_{\fb, k}))+o(1)$ and
$\pos{b_k-T'_{0,k}}=\log(1/(1-f_{\fb, k}))+o(1)$, hence
$\pos{a_k+\Delta_k-T'_{1,k}}\le\Delta_k+\log(1/(1-f_{\fb, k}))+o(1)$ and likewise
for the $b$-term; and by Lemma~\ref{lem:safe-floor-first-order},
$N_{\fixed,\Hum}(\alpha_{2,k},\beta_{2,k})-N_{\fixed,\Hum}(\alpha_k,\beta_k)
=O\!\left(\log L_k+\log(1/f_{\fb, k})\right)$.
Substituting these estimates into \eqref{eq:Rk-def} (the $+1$ from the ceiling
is $O(1)$) gives
\[
    R_k\le(1/J_X^{(1)}+1/{J_X^{(0)}})\Delta_k
        +O\!\left(\log L_k+\log(1/f_{\fb, k})+\log(1/(1-f_{\fb, k}))\right).
\]
Therefore, setting
$C_P:=(\cdata+\cH)\bigl(1/J_X^{(1)}+1/J_X^{(0)}\bigr)$,
\[
    F_k^\Delta(\tilde N_k)\le\LB_k+R_k(\cdata+\cH) + \cdata
    \le\LB_k+C_P\Delta_k
        +O\!\left(\log L_k+\log(1/f_{\fb, k}) + \log(1/(1-f_{\fb, k}))\right).
\]
Finally, since $\overline N_{\main,k}$ minimizes $F_k^\Delta$ over
$N\ge N_{\fixed,\Hum}(\alpha_{2,k},\beta_{2,k})$ and $\tilde N_k\ge N_{\fixed,\Hum}(\alpha_{2,k}, \beta_{2,k})$
lies in its feasible range,
$F^{\Delta}_k(\overline N_{\main,k})\le F^{\Delta}_k(\tilde N_k)
\le\LB_k+C_P\Delta_k+O\!\left(\log L_k+\log(1/f_{\fb, k}) + \log(1/(1-f_{\fb, k}))\right)$.
\end{proof}

Lemma~\ref{lemma-appendix:N-main} is an immediate consequence of Lemma~\ref{lem:LB-perturb}. 

\begin{lemma}\label{lemma-appendix:N-main}
    $\overline N_{\main,k}=O(L_k)$ and $N_{\main, k} = O(L_k)$.
\end{lemma}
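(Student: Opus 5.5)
Both claims follow by sandwiching $\overline N_{\main,k}$ between the cost bound of Lemma~\ref{lem:LB-perturb} and the acquisition term of the buffered design value. By construction \eqref{eq:Nbar-main-choice}, $\overline N_{\main,k}$ minimizes $F^\Delta_k$, so
\[
\overline N_{\main,k}\cdata\le F^\Delta_k(\overline N_{\main,k}),
\]
because every $\Gamma_h(\cdot,\cdot)$ is nonnegative: its objective is a nonnegative combination of nonnegative variables. In particular $F^\Delta_k(\overline N_{\main,k})$ must be finite. This holds because the all-direct-human allocation is feasible once $N$ is large, which is also implicit in the proof of Lemma~\ref{lem:LB-perturb}, where the finite point $\tilde N_k$ is exhibited.

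Next I would invoke Lemma~\ref{lem:LB-perturb}. Since $N_{\main,k}=\overline N_{\main,k}+1\ge\overline N_{\main,k}$, its left-hand side dominates $F^\Delta_k(\overline N_{\main,k})$, which gives
\[
\overline N_{\main,k}\cdata\le \LB_k+C_P\Delta_k+O\!\left(\log L_k+\log\tfrac{1}{f_{\fb,k}}+\log\tfrac{1}{1-f_{\fb,k}}\right).
\]
Each term on the right is $O(L_k)$. Indeed, $\LB_k=O(L_k)$ by Theorem~\ref{thm:first-order}(i), Step~1; $\Delta_k=o(L_k)$ by \eqref{eq:delta-admissible}; $\log L_k=o(L_k)$; and the $f_{\fb,k}$ logarithms are $O(1)$ because $f_{\fb,k}$ is bounded away from $0$ and $1$. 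Dividing by $\cdata>0$ yields $\overline N_{\main,k}=O(L_k)$, and adding one yields $N_{\main,k}=O(L_k)$.

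The only delicate point is circularity. Lemma~\ref{lem:LB-perturb} must be proved without using Lemma~\ref{lemma-appendix:N-main}, and the paper has arranged exactly this: its proof relies only on Lemma~\ref{lem:binary-kl-asymptotic} and Lemma~\ref{lem:safe-floor-first-order}, and part (i) of the theorem relies on neither. So no real obstacle arises; the argument is a two-line consequence once the nonnegativity of $\Gamma_h$ is noted.
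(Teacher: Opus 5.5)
Your proposal is correct and follows essentially the same route as the paper. It sandwiches $\overline N_{\main,k}\cdata\le F^\Delta_k(\overline N_{\main,k})$, which holds by nonnegativity of $\Gamma_h$, below the perturbation bound of Lemma~\ref{lem:LB-perturb}; that bound is $O(L_k)$ because $\LB_k=O(L_k)$, $\Delta_k=o(L_k)$, and $f_{\fb,k}$ is bounded away from $0$ and $1$. The only cosmetic difference is that you cite the stated inequality of Lemma~\ref{lem:LB-perturb}, whose left side exceeds $F^\Delta_k(\overline N_{\main,k})$ by $\cdata$, whereas the paper cites the intermediate bound $F^\Delta_k(\overline N_{\main,k})\le F^\Delta_k(\tilde N_k)$ from that lemma's proof.
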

\begin{proof}{Proof of Lemma~\ref{lemma-appendix:N-main}}
From the proof of Lemma~\ref{lem:LB-perturb} we have shown that $F^{\Delta}_k(\overline N_{\main,k})\le F^{\Delta}_k(\tilde N_k)
\le\LB_k+C_P\Delta_k+O\!\left(\log L_k+\log(1/f_{\fb, k}) + \log(1/(1-f_{\fb, k}))\right) = O(L_k)$. 
In the meanwhile, since $F_k^\Delta(\overline N_{\main,k})\ge\overline N_{\main,k}\cdata$,
we get $\overline N_{\main,k}=O(L_k)$, and
$N_{\mathrm{main},k}=\overline N_{\main,k}+1=O(L_k)$.
\end{proof}

\subsubsection{Proof of Lemma~\ref{lem:safe-floor-first-order}}\label{appendix-sec:safe-floor-first-order}
In this subsection, we prove Lemma~\ref{lem:safe-floor-first-order}. We first present an auxiliary Lemma~\ref{lem:Nfull-V-identity}, and then present the proof of Lemma~\ref{lem:safe-floor-first-order} at the end of this subsection. 

To introduce Lemma~\ref{lem:Nfull-V-identity},
we motivate the quantity $V(u, v)$ that will appear in Lemma~\ref{lem:Nfull-V-identity}. 
Consider the family of threshold tests that reject $H_0$ when 
$\bar{X}_n := n^{-1}\sum_{i=1}^n X_i \geq \vartheta$ for some
$\vartheta \in (p_0, p_1)$. For \(p\in(0,1)\), let \(\Pp_p\) denote the probability law under which \(X_1,X_2,\ldots\) are i.i.d.\ \(\Bern(p)\), and let \(\E_p\) denote expectation with respect to \(\Pp_p\).
By the Chernoff bound for Bernoulli sums,
\[
\Pp_{p_0}\!\left(\bar{X}_n \geq \vartheta\right)
\leq e^{-n D_0(\vartheta)}, 
\qquad 
\Pp_{p_1}\!\left(\bar{X}_n \leq \vartheta\right)
\leq e^{-n D_1(\vartheta)},
\]
where $D_0(\vartheta) := \text{kl}(\vartheta \,\|\, p_0)$ and
$D_1(\vartheta) := \text{kl}(\vartheta \,\|\, p_1)$. Let
\(u_k:=\log(1/\alpha_k)\), \(v_k:=\log(1/\beta_k)\). 
To satisfy the type-I error target $e^{-u_k} = \alpha_k$, it suffices that 
$n D_0(\vartheta) \geq u_k$, i.e.,
$n \geq u_k / D_0(\vartheta)$. 
Symmetrically, the type-II error target $e^{-v_k} = \beta_k$ requires 
$n \geq v_k / D_1(\vartheta)$. 
For a fixed threshold $\vartheta$, the smallest sample size at which both 
Chernoff bounds simultaneously deliver the target errors is therefore
\[
\max\!\left\{\frac{u_k}{D_0(\vartheta)},
\frac{v_k}{D_1(\vartheta)}\right\},
\]
and optimizing over $\vartheta \in (p_0, p_1)$ yields the quantity
\begin{equation}\label{eq:V-def-identity}
    V(u,v)
    :=\inf_{\vartheta\in(p_0,p_1)}
    \max\!\left\{\frac{u}{D_0(\vartheta)},
    \frac{v}{D_1(\vartheta)}\right\}.
\end{equation}
This construction immediately gives the upper bound
$N_{\fixed,\Hum}(\alpha_k, \beta_k) \leq V(u_k, v_k)+1$
by exhibiting an explicit feasible test. 
The next lemma asserts that this Chernoff-style bound is in fact 
first-order tight: no test on i.i.d.\ Bernoulli labels can achieve 
the target errors with substantially fewer samples. Recall that  \(L_k:=\max\{u_k,v_k\}\).

\begin{lemma}[First-order sample-size identity for the full-label benchmark]\label{lem:Nfull-V-identity}
% Fix \(0<p_0<p_1<1\). For \(\vartheta\in(p_0,p_1)\), define
% \(D_0(\vartheta):=\klbin(\vartheta\Vert p_0)\) and
% \(D_1(\vartheta):=\klbin(\vartheta\Vert p_1)\),
% both strictly positive. For \(u,v>0\), define
% \begin{equation}\label{eq:V-def-identity}
%     V(u,v)
%     :=\inf_{\vartheta\in(p_0,p_1)}
%     \max\!\left\{\frac{u}{D_0(\vartheta)},
%     \frac{v}{D_1(\vartheta)}\right\}.
% \end{equation}
Let \(\alpha_k,\beta_k\downarrow 0\) satisfy
Assumption~\ref{ass:balanced-error-regime}.
% and write
% \(u_k:=\log(1/\alpha_k)\), \(v_k:=\log(1/\beta_k)\),
% \(L_k:=\max\{u_k,v_k\}\), and \(m_k:=\min\{u_k,v_k\}=\Theta(L_k)\). 
Then
$V(u_k,v_k)=\Theta(L_k)$ and
\begin{equation}\label{eq:Nfull-V-asymp-balanced}
    V(u_k,v_k)-O(\log L_k)\le N_{\fixed,\Hum}(e^{-u_k},e^{-v_k})\le V(u_k,v_k)+1 .
\end{equation}
\end{lemma}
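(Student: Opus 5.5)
The plan is to handle the order of $V(u_k,v_k)$ first, then the upper bound on $N_{\fixed,\Hum}$ by an explicit Chernoff test, and finally the lower bound by a method-of-types argument applied to the Neyman--Pearson test. The lower bound is the step I expect to be hardest.

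\emph{Order of $V$.} On $(p_0,p_1)$ the function $D_0(\vartheta)=\klbin(\vartheta\Vert p_0)$ increases continuously from $0$ to $J_X^{(1)}$. The function $D_1(\vartheta)=\klbin(\vartheta\Vert p_1)$ decreases from $J_X^{(0)}$ to $0$. Hence $u/D_0$ is decreasing and $v/D_1$ is increasing in $\vartheta$, and the infimum in \eqref{eq:V-def-identity} is attained at the unique crossing point $\vartheta^*(u,v)$. This gives
\[
V\ge\max\{u/J_X^{(1)},\,v/J_X^{(0)}\}=\Omega(L_k).
\]
Evaluating at $\vartheta=(p_0+p_1)/2$ gives $V=O(L_k)$. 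Under Assumption~\ref{ass:balanced-error-regime} the ratio $u_k/v_k$ stays in a compact subset of $(0,\infty)$. So $\vartheta^*_k$ stays in a compact subinterval $[\underline\vartheta,\overline\vartheta]\subset(p_0,p_1)$ on which $D_0,D_1$ are bounded below and Lipschitz. I will use this repeatedly.

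\emph{Upper bound.} Take $n=\lceil V(u_k,v_k)\rceil\le V+1$ and the test rejecting iff $\bar X_n\ge\vartheta^*_k$. The Chernoff bounds give type-I error at most $e^{-nD_0(\vartheta^*_k)}\le e^{-u_k}$ and type-II error at most $e^{-nD_1(\vartheta^*_k)}\le e^{-v_k}$. Here $\Pp(\bar X_n<\vartheta)\le\Pp(\bar X_n\le\vartheta)$ handles the strict inequality. Definition~\ref{def:Nfull} then yields $N_{\fixed,\Hum}\le V+1$.

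\emph{Lower bound.} Let $n=N_{\fixed,\Hum}(\alpha_k,\beta_k)=O(L_k)$. By the Neyman--Pearson lemma the test $\phi_n^*$ with cutoff $c$ is feasible, and regardless of $\gamma_n$ it satisfies
\[
\alpha_k\ge\Pp_0(L_n\ge c+1),\qquad \beta_k\ge\Pp_1(L_n\le c-1).
\]
I then use the type-class bound $\Pp_p(L_n=j)\ge e^{-n\klbin(j/n\Vert p)}/(n+1)$. Setting $\vartheta:=c/n$, this gives
\begin{align*}
u_k&\le nD_0(\vartheta+1/n)+\log(n+1),\\
v_k&\le nD_1(\vartheta-1/n)+\log(n+1).
\end{align*}
The cases where $\vartheta\pm1/n$ falls outside $(p_0,p_1)$ need separate treatment. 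For example, if $c+1\le np_0$, then $\Pp_0(L_n\ge c+1)$ is bounded below by a constant, which contradicts $\alpha_k\to0$; the other side is symmetric. So for large $k$, $\vartheta$ lies in the interior. Moreover $\vartheta$ must lie within $O(1/n)$ of a compact interval around $\vartheta^*_k$, since otherwise one of the two inequalities fails at the scale $n=\Theta(L_k)$. Using the Lipschitz bound on that compact interval, $nD_h(\vartheta\pm1/n)\le nD_h(\vartheta)+O(1)$. Hence
\[
n\ge\max\Bigl\{\frac{u_k-\delta_k}{D_0(\vartheta)},\,\frac{v_k-\delta_k}{D_1(\vartheta)}\Bigr\}\ge V(u_k-\delta_k,\,v_k-\delta_k),\qquad \delta_k=O(\log L_k).
\]
Finally, since $D_0,D_1$ are bounded below near $\vartheta^*$, I claim
\[
V(u-\delta,v-\delta)\ge V(u,v)-\delta\max\{1/D_0(\vartheta),1/D_1(\vartheta)\}=V(u,v)-O(\delta).
\]
To see this, evaluate at the minimizer for the shifted pair, which also stays in the compact interval, and subtract. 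This yields $n\ge V(u_k,v_k)-O(\log L_k)$.

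The main obstacle is making the localization of $\vartheta$ rigorous: I need $c/n$ to remain in a fixed compact subinterval so that the Lipschitz and lower-bound constants are primitive. I would first try the contradiction argument above: a threshold near $p_0$ or $p_1$ forces $u_k$ or $v_k$ to be $o(L_k)$ relative to $n=\Theta(L_k)$, which is incompatible with the balanced regime.
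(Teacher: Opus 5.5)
Your proposal is correct, but the lower bound follows a different route from the paper's.

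\emph{The paper's route.} The paper works with an arbitrary feasible test. It first proves a type-class incompatibility fact by averaging the test over type classes: for every $j$, either $\Pp_{p_0}(L_n=j)\le 2e^{-u}$ or $\Pp_{p_1}(L_n=j)\le 2e^{-v}$. It then argues by contradiction. If $n<V(u_k-\delta_k,v_k-\delta_k)$, a connectedness argument on $(p_0,p_1)$ yields a $\vartheta$ with $nD_0(\vartheta)<u_k-\delta_k$ and $nD_1(\vartheta)<v_k-\delta_k$. Rounding $\vartheta$ to $j/n$ then makes both point masses too large. Because this $\vartheta$ lies in $(p_0,p_1)$ by construction, the Lipschitz step only needs the fixed interval $[p_0/2,(1+p_1)/2]$, with no localization work. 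Balance is needed only to pass from $V(u_k-\delta_k,v_k-\delta_k)$ back to $V(u_k,v_k)$, via the homogeneity bound $V(u-\delta,v-\delta)\ge(1-\delta/m_k)V(u,v)$.

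\emph{Your route.} You use the Neyman--Pearson lemma to reduce to the threshold test and read the candidate $\vartheta=c/n$ directly off its cutoff. This avoids the contradiction argument and the general incompatibility lemma, and it shows that the NP cutoff itself sits near the Chernoff-optimal threshold. The price is exactly the localization step you flag, which needs edge cases and an earlier use of balance. Your sketch of it works:
\begin{itemize}
\item If $c+1\le np_0$, then $c+1\le\lfloor np_0\rfloor$, so $\Pp_0(L_n\ge c+1)\ge 1/2$ by the median property of the binomial. The case $c-1\ge np_1$ is symmetric.
\item Otherwise $D_0(c/n+1/n)\ge (u_k-\log(n+1))/n\ge c'>0$, since $u_k=\Omega(L_k)$ and $n=O(L_k)$. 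This keeps $c/n+1/n$ a fixed distance above $p_0$, and likewise $c/n-1/n$ a fixed distance below $p_1$.
\end{itemize}

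Two small points. First, state explicitly that $n_k\to\infty$; both the edge cases and the $1/n$ shifts rely on it. This follows from Lemma~\ref{lem:error-to-kl}, which gives $n_kJ_X^{(1)}\ge\klbin(1-\beta_k\Vert\alpha_k)\to\infty$. Second, your last step can be shortened. Once $c/n$ lies in a compact subinterval where $D_0$ and $D_1$ are bounded below, you get directly
$n\ge\max\{u_k/D_0(c/n),v_k/D_1(c/n)\}-\delta_k\max\{1/D_0(c/n),1/D_1(c/n)\}\ge V(u_k,v_k)-O(\delta_k)$.
This avoids passing through $V(u_k-\delta_k,v_k-\delta_k)$ and the shifted minimizer.
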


\begin{proof}{Proof of Lemma~\ref{lem:Nfull-V-identity}}
The proof proceeds in five steps. Step~1 establishes the monotonicity of
$D_0$ and $D_1$ and shows that $V(u_k,v_k)=\Theta(L_k)$. Step~2 applies
Chernoff bounds to an optimally chosen sample-mean threshold test,
yielding $N_{\fixed,\Hum}\le V(u_k,v_k)+1$. Step~3 derives a lower bound on
Bernoulli type-class probabilities $\Pp_p(\sum_{i=1}^n X_i=j)$. Step~4 shows that, for any feasible
test, no type class can have probability exceeding the respective error
levels under both hypotheses. Step~5 combines Steps~3--4 to show that a
sample size below $V(u_k-\delta_k,v_k-\delta_k)$, where
$\delta_k=O(\log L_k)$, would violate this incompatibility. Assumption~\ref{ass:balanced-error-regime} then gives
$N_{\fixed,\Hum}\ge V(u_k,v_k)-O(\log L_k)$.

\paragraph{Step 1: properties of \(D_0,D_1,V\).}
Differentiating gives, for \(\vartheta\in(0,1)\),
\[
    D_0'(\vartheta)
    =\log\frac{\vartheta(1-p_0)}{p_0(1-\vartheta)},
    \qquad
    D_1'(\vartheta)
    =\log\frac{\vartheta(1-p_1)}{p_1(1-\vartheta)}.
\]
Hence on \((p_0,p_1)\), \(D_0\) is strictly increasing and \(D_1\)
is strictly decreasing, so
\[
    0<D_0(\vartheta)<D_0(p_1)=J^{(1)}_X,
    \qquad
    0<D_1(\vartheta)<D_1(p_0)=J^{(0)}_X,
\]
where \(J^{(1)}_X=\klbin(p_1\Vert p_0)>0\) and \(J^{(0)}_X=\klbin(p_0\Vert p_1)>0\).

Taking \(\bar\vartheta:=(p_0+p_1)/2\in(p_0,p_1)\), both
\(D_0(\bar\vartheta),D_1(\bar\vartheta)\) are positive constants, so
\begin{equation}\label{eq:V-upper-Theta-L}
    V(u,v)
    \le\max\!\left\{\frac{u}{D_0(\bar\vartheta)},
       \frac{v}{D_1(\bar\vartheta)}\right\}
    \le C_V\max\{u,v\}
\end{equation}
for a finite constant \(C_V\) depending only on \(p_0,p_1\).
Also,
\begin{equation}\label{eq:V-lower-J}
    V(u,v)\ge\max\!\left\{\frac{u}{J^{(1)}_X},\frac{v}{J^{(0)}_X}\right\},
\end{equation}
because \(D_0(\vartheta)\le J^{(1)}_X\) and
\(D_1(\vartheta)\le J^{(0)}_X\) on \((p_0,p_1)\).
Together, \eqref{eq:V-upper-Theta-L} and \eqref{eq:V-lower-J}
imply \(V(u_k,v_k)=\Theta(L_k)\) since \(L_k=\max\{u_k,v_k\}\to\infty\).

\paragraph{Step 2: Chernoff upper bound.}
We prove the bound
\begin{equation}\label{eq:Nfull-upper}
    N_{\fixed,\Hum}(e^{-u_k},e^{-v_k})\le V(u_k,v_k)+1.
\end{equation}

For
\(\vartheta\in(p_0,p_1)\), Chernoff's
bound for Bernoulli sums gives
\begin{equation}\label{eq:chernoff-twosided}
    \Pp_{p_0}\!\left(\bar X_n\ge \vartheta\right)
    \le e^{-nD_0(\vartheta)},
    \qquad
    \Pp_{p_1}\!\left(\bar X_n\le \vartheta\right)
    \le e^{-nD_1(\vartheta)}.
\end{equation}

The infimum defining \(V(u_k,v_k)\) in \eqref{eq:V-def-identity} is
attained. Indeed,
\(f(\vartheta):=\max\{u_k/D_0(\vartheta),v_k/D_1(\vartheta)\}\) is
continuous on \((p_0,p_1)\); as \(\vartheta\downarrow p_0\) we have
\(D_0(\vartheta)\downarrow0\), and as \(\vartheta\uparrow p_1\) we have
\(D_1(\vartheta)\downarrow0\), so \(f(\vartheta)\to+\infty\) at both
endpoints. Extending \(f\) by \(+\infty\) to
the endpoints makes it lower semicontinuous and coercive on the
compact interval \([p_0,p_1]\); it therefore attains its minimum,
necessarily at an interior point
\(\vartheta^\star\in(p_0,p_1)\), and that
minimum value is \(V(u_k,v_k)\) by \eqref{eq:V-def-identity}.

Define \(n_k:=\lceil V(u_k,v_k)\rceil\). Then
\(n_k\ge V(u_k,v_k)\ge u_k/D_0(\vartheta^\star)\) and
\(n_k\ge V(u_k,v_k)\ge v_k/D_1(\vartheta^\star)\), so
\(n_kD_0(\vartheta^\star)\ge u_k\) and
\(n_kD_1(\vartheta^\star)\ge v_k\).

By the definition of \(N_{\fixed,\Hum}\) (the smallest sample size at
which \emph{some} test on i.i.d.\ Bernoulli labels achieves the
target errors), to prove
\(N_{\fixed,\Hum}(\alpha_k,\beta_k)\le n_k\) it suffices to exhibit one
feasible test on \(n_k\) labels. We use the sample-mean test that
rejects \(H_0\) when \(\bar X_{n_k}\ge \vartheta^\star\). By
\eqref{eq:chernoff-twosided},
\[
    \Pp_{p_0}(\bar X_{n_k}\ge \vartheta^\star)
    \le e^{-n_kD_0(\vartheta^\star)}\le e^{-u_k}=\alpha_k,
\]
\[
    \Pp_{p_1}(\bar X_{n_k}\le \vartheta^\star)
    \le e^{-n_kD_1(\vartheta^\star)}\le e^{-v_k}=\beta_k.
\]
This test is feasible at \((\alpha_k,\beta_k)\), so
\[
    N_{\fixed,\Hum}(\alpha_k,\beta_k)\le n_k=\lceil V(u_k,v_k)\rceil
    \le V(u_k,v_k)+1 .
\]

\paragraph{Step 3: type-class point-mass lower bound.}
We prove the elementary inequality: For \(j\in\{0,\ldots,n\}\),
\begin{equation}\label{eq:point-mass}
    \Pp_p(\sum_{i=1}^n X_i=j)
    \ge\frac1{n+1}\exp\{-n\klbin(j/n\Vert p)\}.
\end{equation}
Let \(h(\vartheta):=-\vartheta\log\vartheta
-(1-\vartheta)\log(1-\vartheta)\) (binary entropy, with
\(0\log 0=0\)). It suffices to prove
\begin{equation}\label{eq:binom-coeff-lower}
    \binom{n}{j}\ge\frac1{n+1}e^{nh(j/n)}.
\end{equation}
Indeed, multiplying \eqref{eq:binom-coeff-lower} by
\(p^j(1-p)^{n-j}\) gives
\[
    \Pp_p(\sum_{i=1}^n X_i=j)
    =\binom{n}{j}p^j(1-p)^{n-j}
    \ge\frac1{n+1}\exp\bigl\{n h(j/n)+j\log p+(n-j)\log(1-p)\bigr\}.
\]
Writing \(\vartheta:=j/n\), the exponent inside the braces becomes
\(n[h(\vartheta)+\vartheta\log p+(1-\vartheta)\log(1-p)]\), and a
direct expansion of
\(\klbin(\vartheta\Vert p)
=\vartheta\log(\vartheta/p)
+(1-\vartheta)\log((1-\vartheta)/(1-p))\) gives the identity
\(h(\vartheta)+\vartheta\log p+(1-\vartheta)\log(1-p)
=-\klbin(\vartheta\Vert p)\).
Substituting this yields \eqref{eq:point-mass}.

The proof of \eqref{eq:binom-coeff-lower} uses an auxiliary
Bernoulli computation with parameter \(\vartheta:=j/n\). Let
\(Y_1,\ldots,Y_n\) be i.i.d.\ \(\Bern(\vartheta)\), and let
\(T_n:=\sum_i Y_i\), so
\(T_n\sim\operatorname{Binomial}(n,\vartheta)\).
We will show that \(\Pp(T_n=j)\ge 1/(n+1)\), and then rearrange
this to get \eqref{eq:binom-coeff-lower}.

\emph{Mode at \(j\).} The ratio of consecutive binomial masses is
\[
    \frac{\Pp(T_n=r+1)}{\Pp(T_n=r)}
    =\frac{n-r}{r+1}\cdot\frac{\vartheta}{1-\vartheta},
    \qquad r\in\{0,1,\ldots,n-1\}.
\]
Solving \((n-r)\vartheta\ge(r+1)(1-\vartheta)\) gives
\(r\le\vartheta(n+1)-1\), and with \(\vartheta=j/n\), the bound
becomes \(r\le j-1+j/n\). For integer \(r\)
and \(j\in\{1,\ldots,n-1\}\), this is equivalent to \(r\le j-1\).
Hence the masses are nondecreasing as \(r\) goes from \(0\) to
\(j\), and strictly decreasing as \(r\) goes from \(j\) to \(n\).
So \(\Pp(T_n=j)\) is the maximum of the \(n+1\) masses
\(\Pp(T_n=0),\ldots,\Pp(T_n=n)\).

\emph{Lower bound on the mode.} The \(n+1\) binomial masses are
nonnegative and sum to \(1\). The maximum of \(n+1\) nonnegative
numbers summing to \(1\) is at least \(1/(n+1)\), so
\[
    \Pp(T_n=j)
    =\binom{n}{j}\vartheta^j(1-\vartheta)^{n-j}
    \ge\frac1{n+1}.
\]

\emph{Rearrangement.} Using the identity
\(\vartheta^j(1-\vartheta)^{n-j}
=\exp\{j\log\vartheta+(n-j)\log(1-\vartheta)\}
=\exp\{-nh(\vartheta)\}\)
(valid for \(\vartheta=j/n\) by direct expansion of \(h\)), the previous
display gives
\[
    \binom{n}{j}\ge\frac{1}{n+1}\cdot\exp\{nh(\vartheta)\},
\]
which is \eqref{eq:binom-coeff-lower}.

\paragraph{Step 4: type-class incompatibility for feasible tests.}
We prove the following fact, used in Step 5 to derive the lower
bound on \(N_{\fixed,\Hum}\): if there exists a test on \(n\) Bernoulli
labels with type-I error at most \(e^{-u}\) and type-II error at
most \(e^{-v}\), then for every \(j\in\{0,\ldots,n\}\),
\begin{equation}\label{eq:type-incompat}
    \Pp_{p_0}(\sum_{i=1}^n X_i=j)\le 2e^{-u}
    \quad\text{or}\quad
    \Pp_{p_1}(\sum_{i=1}^n X_i=j)\le 2e^{-v}.
\end{equation}

Fix \(n,u,v\) and suppose there is a (possibly randomized) test
\(\phi:\{0,1\}^n\to[0,1]\), where \(\phi(x)\) denotes the
conditional probability of rejecting \(H_0\) given the observation
\(x\), with type-I error at most \(e^{-u}\) under \(\Bern(p_0)\)
and type-II error at most \(e^{-v}\) under \(\Bern(p_1)\). For each
\(j\in\{0,\ldots,n\}\), define
\[
    \bar\phi(j)
    :=\frac{1}{\binom{n}{j}}\sum_{x:\sum_i x_i=j}\phi(x),
\]
the average of \(\phi(x)\) over the type class
\(\{x\in\{0,1\}^n:\sum_i x_i=j\}\).

Conditional on \(\sum_{i=1}^n X_i=j\), the label vector \((X_1,\ldots,X_n)\) is
uniform on the type class under either Bernoulli law. Indeed,
\(\Pp_p(X_1=x_1,\ldots,X_n=x_n)=p^j(1-p)^{n-j}\) for every
\(x\in\{0,1\}^n\) with \(\sum_i x_i=j\), so all such \(x\) are
equiprobable. Since \(\bar\phi(j)\) is the average of \(\phi(x)\)
over the type class,
\[
    \E_p[\phi(X)\mid \sum_{i=1}^n X_i=j]
    =\bar\phi(j)
    \qquad\text{under either }p\in\{p_0,p_1\}.
\]
By the tower property,
\[
    \E_{p_0}[\phi(X)]
    =\E_{p_0}\!\left[\E_{p_0}[\phi(X)\mid \sum_{i=1}^n X_i]\right]
    =\E_{p_0}[\bar\phi(\sum_{i=1}^n X_i)]
    =\sum_{j=0}^n\Pp_{p_0}(\sum_{i=1}^n X_i=j)\bar\phi(j),
\]
and similarly
\[
    \E_{p_1}[1-\phi(X)]
    =\sum_{j=0}^n\Pp_{p_1}(\sum_{i=1}^n X_i=j)(1-\bar\phi(j)).
\]
Since \(\bar\phi(j)\in[0,1]\), each summand in these two sums is
nonnegative. The type-I error bound \(\E_{p_0}[\phi(X)]\le e^{-u}\)
then forces each individual summand to be at most \(e^{-u}\):
\[
    \Pp_{p_0}(\sum_{i=1}^n X_i=j)\bar\phi(j)\le e^{-u}
    \qquad\text{for every }j\in\{0,\ldots,n\}.
\]
Similarly, \(\Pp_{p_1}(\sum_{i=1}^n X_i=j)(1-\bar\phi(j))\le e^{-v}\) for every
\(j\). Now suppose, for some \(j\), both
\(\Pp_{p_0}(\sum_{i=1}^n X_i=j)>2e^{-u}\) and \(\Pp_{p_1}(\sum_{i=1}^n X_i=j)>2e^{-v}\). Then
the displayed bounds force
\[
    \bar\phi(j)<\frac{e^{-u}}{2e^{-u}}=\frac12,
    \qquad
    1-\bar\phi(j)<\frac{e^{-v}}{2e^{-v}}=\frac12,
\]
which together give \(\bar\phi(j)+(1-\bar\phi(j))<1\), a
contradiction. Hence every feasible test satisfies, for every
\(j\),
\begin{equation}
    \Pp_{p_0}(\sum_{i=1}^n X_i=j)\le 2e^{-u}
    \quad\text{or}\quad
    \Pp_{p_1}(\sum_{i=1}^n X_i=j)\le 2e^{-v}.
\end{equation}

\paragraph{Step 5: lower bound via an \(\eta\)-free shifted estimate.}
Let \(n_k:=N_{\fixed,\Hum}(e^{-u_k},e^{-v_k})\). By Step 2 and
\eqref{eq:V-upper-Theta-L}, \(n_k=O(L_k)\), so there is a
constant \(C_N\) with \(n_k\le C_N L_k\) for all large \(k\).
We prove
\begin{equation}\label{eq:Nfull-lower-balanced}
    n_k\ge V(u_k,v_k)-O(\log L_k)
    \quad\text{as }k\to\infty.
\end{equation}
Let \(m_k:=\min\{u_k,v_k\}\). Then $m_k = \Theta(L_k)$ by
Assumption~\ref{ass:balanced-error-regime}.

\vspace{1mm}
\emph{Setup.} Let
\(I:=[p_0/2,(1+p_1)/2]\subset(0,1)\). Since \(D_0,D_1\) are
continuously differentiable on the interior of \([0,1]\), their
derivatives are bounded on \(I\); let \(B_D<\infty\) satisfy
\(|D_0'(\vartheta)|\le B_D\) and
\(|D_1'(\vartheta)|\le B_D\) for all \(\vartheta\in I\).
For each \(k\) set
\[
    \delta_k:=B_D+1+\log\{2(n_k+1)\}.
\]
Since \(n_k\le C_N L_k\), \(\delta_k=O(\log L_k)\). Since
\(m_k=\Theta(L_k)\),
\(\delta_k/m_k=O((\log L_k)/L_k)\to 0\), so
\(\delta_k<m_k=\min\{u_k,v_k\}\) for all large \(k\); in
particular \(u_k-\delta_k>0\) and \(v_k-\delta_k>0\). We also note
that \(n_k\to\infty\): by Lemma~\ref{lem:error-to-kl} applied to a
feasible test on \(n_k\) labels,
\(n_k J^{(1)}_X\ge\klbin(1-\beta_k\Vert\alpha_k)\), and the right-hand side
tends to infinity as \(\alpha_k,\beta_k\to 0\); this makes the grid
spacing \(1/(2n_k)\to0\) used in the Discretizing paragraph below.

\vspace{1mm}
\emph{An \(\eta\)-free shifted lower bound.} The core estimate is
\begin{equation}\label{eq:nk-ge-Vk-minus-etaLk}
    n_k\ge V(u_k-\delta_k,v_k-\delta_k)
    \qquad\text{for all sufficiently large }k.
\end{equation}
We prove \eqref{eq:nk-ge-Vk-minus-etaLk} by contradiction: suppose it
fails along an infinite subsequence \(\{k_\ell\}_{\ell\ge1}\), so that
\begin{equation}\label{eq:n-lt-V-minus}
    n_{k_\ell}<V(u_{k_\ell}-\delta_{k_\ell},v_{k_\ell}-\delta_{k_\ell}).
\end{equation}
We will derive a contradiction by exhibiting, for each \(\ell\)
large, a type class \(j_{k_\ell}\in\{0,1,\ldots,n_{k_\ell}\}\)
for which
\(\Pp_{p_0}(\sum^{n_{k_\ell}}_{i=1} X_i=j_{k_\ell})>2e^{-u_{k_\ell}}\) and
\(\Pp_{p_1}(\sum^{n_{k_\ell}}_{i=1} X_i=j_{k_\ell})>2e^{-v_{k_\ell}}\),
violating the type-class incompatibility
\eqref{eq:type-incompat}.

\vspace{1mm}
\emph{Finding a good \(\vartheta_{k_\ell}\in(p_0,p_1)\).} We claim that
\eqref{eq:n-lt-V-minus} implies the existence of
\(\vartheta_{k_\ell}\in(p_0,p_1)\) with
\begin{equation}\label{eq:vartheta-small-divs}
    n_{k_\ell} D_0(\vartheta_{k_\ell})
       <u_{k_\ell}-\delta_{k_\ell},
    \qquad
    n_{k_\ell} D_1(\vartheta_{k_\ell})
       <v_{k_\ell}-\delta_{k_\ell}.
\end{equation}
The strategy is to define two open subsets of \((p_0,p_1)\), one
where the first inequality of \eqref{eq:vartheta-small-divs} holds and
one where the second holds, show that they cover \((p_0,p_1)\),
and use connectedness to conclude they must overlap.

By the definition of \(V\) as an infimum,
\(n_{k_\ell}<V(u_{k_\ell}-\delta_{k_\ell},v_{k_\ell}-\delta_{k_\ell})\)
means: for every \(\vartheta\in(p_0,p_1)\),
\[
    \max\!\left\{
      \frac{u_{k_\ell}-\delta_{k_\ell}}{D_0(\vartheta)},
      \frac{v_{k_\ell}-\delta_{k_\ell}}{D_1(\vartheta)}
    \right\}>n_{k_\ell}.
\]
This is equivalent to: for every \(\vartheta\in(p_0,p_1)\),
\(D_0(\vartheta)<(u_{k_\ell}-\delta_{k_\ell})/n_{k_\ell}\) or
\(D_1(\vartheta)<(v_{k_\ell}-\delta_{k_\ell})/n_{k_\ell}\). Define the
two sets
\[
    U:=\left\{\vartheta\in(p_0,p_1):
       D_0(\vartheta)<
       \frac{u_{k_\ell}-\delta_{k_\ell}}{n_{k_\ell}}\right\},
\]
\[
    V':=\left\{\vartheta\in(p_0,p_1):
       D_1(\vartheta)<
       \frac{v_{k_\ell}-\delta_{k_\ell}}{n_{k_\ell}}\right\}.
\]
The displayed equivalence above says exactly
\(U\cup V'=(p_0,p_1)\). Any point
\(\vartheta_{k_\ell}\in U\cap V'\) satisfies both inequalities in
\eqref{eq:vartheta-small-divs}; we now
show \(U\cap V'\neq\emptyset\).

The set \(U\) is open in \((p_0,p_1)\) because \(D_0\) is
continuous; it contains a right-neighborhood of \(p_0\) because
\(D_0(p_0)=0<(u_{k_\ell}-\delta_{k_\ell})/n_{k_\ell}\) and \(D_0\)
is continuous at \(p_0\), so
\(D_0(\vartheta)<(u_{k_\ell}-\delta_{k_\ell})/n_{k_\ell}\)
for \(\vartheta\) sufficiently close to \(p_0\). In particular, \(U\) is
nonempty. Similarly, \(V'\) is open and contains a
left-neighborhood of \(p_1\) (since \(D_1(p_1)=0\)), so \(V'\) is
nonempty.

If \(U\cap V'=\emptyset\), then \(U\) and \(V'\) would be two
disjoint nonempty open subsets of \((p_0,p_1)\) whose union is
\((p_0,p_1)\). This contradicts connectedness of the interval
\((p_0,p_1)\). Hence \(U\cap V'\neq\emptyset\); any
\(\vartheta_{k_\ell}\in U\cap V'\) satisfies
\eqref{eq:vartheta-small-divs}.

\vspace{1mm}
\emph{Discretizing.} The point-mass lower bound \eqref{eq:point-mass}
is stated for binomial point masses at integer values \(j\). To
apply it, we replace \(\vartheta_{k_\ell}\) by a nearby grid point of the
form \(j/n_{k_\ell}\).

Choose \(j_{k_\ell}\in\{0,1,\ldots,n_{k_\ell}\}\) with
\(|j_{k_\ell}/n_{k_\ell}-\vartheta_{k_\ell}|
\le 1/(2n_{k_\ell})\) (rounding
\(\vartheta_{k_\ell}\cdot n_{k_\ell}\) to the nearest integer), and write
\(\widetilde\vartheta_{k_\ell}:=j_{k_\ell}/n_{k_\ell}\). Since
\(\vartheta_{k_\ell}\in(p_0,p_1)\), its distance to the boundary of \(I\)
(at \(p_0/2\) and \((1+p_1)/2\)) is bounded below by the positive
constant \(\min\{p_0/2,(1-p_1)/2\}\). Since
\(1/(2n_{k_\ell})\to 0\) (because \(n_{k_\ell}\to\infty\)), for
all \(\ell\) large,
\(|\widetilde\vartheta_{k_\ell}-\vartheta_{k_\ell}|
<\min\{p_0/2,(1-p_1)/2\}\), so both
\(\vartheta_{k_\ell}\) and \(\widetilde\vartheta_{k_\ell}\) lie in \(I\).

By the mean value theorem applied to \(D_h\) on \(I\) and the
derivative bound \(|D_h'(\vartheta)|\le B_D\) for \(\vartheta\in I\),
\[
    |D_h(\widetilde\vartheta_{k_\ell})
      -D_h(\vartheta_{k_\ell})|
    \le B_D\cdot
       |\widetilde\vartheta_{k_\ell}-\vartheta_{k_\ell}|
    \le\frac{B_D}{2n_{k_\ell}},\qquad h\in\{0,1\}.
\]
Multiplying by \(n_{k_\ell}\) gives
\(n_{k_\ell}|D_h(\widetilde\vartheta_{k_\ell})
-D_h(\vartheta_{k_\ell})|\le B_D/2\), so
\(n_{k_\ell}D_h(\widetilde\vartheta_{k_\ell})
\le n_{k_\ell}D_h(\vartheta_{k_\ell})+B_D/2\).
Combining with the strict inequalities in
\eqref{eq:vartheta-small-divs} and substituting
\(\delta_{k_\ell}=B_D+1+\log\{2(n_{k_\ell}+1)\}\),
\begin{align*}
    n_{k_\ell}D_0(\widetilde\vartheta_{k_\ell})
    &<u_{k_\ell}-\delta_{k_\ell}+\frac{B_D}{2}\\
    &=u_{k_\ell}-\bigl(B_D+1+\log\{2(n_{k_\ell}+1)\}\bigr)+\frac{B_D}{2}\\
    &=u_{k_\ell}-\log\{2(n_{k_\ell}+1)\}-1-\frac{B_D}{2}\\
    &<u_{k_\ell}-\log\{2(n_{k_\ell}+1)\},
\end{align*}
where the last strict inequality uses \(-1-B_D/2<0\) (since
\(B_D\ge 0\)). Symmetrically,
\[
    n_{k_\ell}D_1(\widetilde\vartheta_{k_\ell})
    <v_{k_\ell}-\log\{2(n_{k_\ell}+1)\}.
\]

\vspace{1mm}
\emph{Contradiction.} By the point-mass bound
\eqref{eq:point-mass} applied with \(p=p_0\), \(n=n_{k_\ell}\),
\(j=j_{k_\ell}\),
\[
    \Pp_{p_0}(\sum^{n_{k_\ell}}_{i=1} X_i=j_{k_\ell})
    \ge\frac1{n_{k_\ell}+1}
       e^{-n_{k_\ell}D_0(\widetilde\vartheta_{k_\ell})}.
\]
The strict inequality
\(n_{k_\ell}D_0(\widetilde\vartheta_{k_\ell})
<u_{k_\ell}-\log\{2(n_{k_\ell}+1)\}\)
from the Discretizing paragraph gives, after negation,
\(-n_{k_\ell}D_0(\widetilde\vartheta_{k_\ell})
>-u_{k_\ell}+\log\{2(n_{k_\ell}+1)\}\).
Exponentiating,
\[
    e^{-n_{k_\ell}D_0(\widetilde\vartheta_{k_\ell})}
    >e^{-u_{k_\ell}+\log\{2(n_{k_\ell}+1)\}}
    =2(n_{k_\ell}+1)\,e^{-u_{k_\ell}}.
\]
Dividing by \(n_{k_\ell}+1\),
\[
    \frac1{n_{k_\ell}+1}
       e^{-n_{k_\ell}D_0(\widetilde\vartheta_{k_\ell})}
    >2e^{-u_{k_\ell}}.
\]
Combining with the point-mass bound,
\(\Pp_{p_0}(\sum^{n_{k_\ell}}_{i=1} X_i=j_{k_\ell})>2e^{-u_{k_\ell}}\).
Symmetrically,
\(\Pp_{p_1}(\sum^{n_{k_\ell}}_{i=1} X_i=j_{k_\ell})>2e^{-v_{k_\ell}}\).

By the definition of
\(n_{k_\ell}=N_{\fixed,\Hum}(e^{-u_{k_\ell}},e^{-v_{k_\ell}})\), some
test on \(n_{k_\ell}\) Bernoulli labels achieves errors at most
\((e^{-u_{k_\ell}},e^{-v_{k_\ell}})\). Applying the type-class
incompatibility \eqref{eq:type-incompat} to this test at
\(j=j_{k_\ell}\) yields either
\(\Pp_{p_0}(\sum^{n_{k_\ell}}_{i=1} X_i=j_{k_\ell})\le 2e^{-u_{k_\ell}}\) or
\(\Pp_{p_1}(\sum^{n_{k_\ell}}_{i=1} X_i=j_{k_\ell})\le 2e^{-v_{k_\ell}}\),
contradicting both strict inequalities we just derived. The
contradiction shows the contradiction hypothesis was false, so
\eqref{eq:nk-ge-Vk-minus-etaLk} holds.

\vspace{1mm}
\emph{From the shifted bound to \eqref{eq:Nfull-lower-balanced}.}
We have \(\delta_k<m_k\) and \(u_k,v_k\ge m_k\), so
\[
    u_k-\delta_k\ge\Bigl(1-\frac{\delta_k}{m_k}\Bigr)u_k,
    \qquad
    v_k-\delta_k\ge\Bigl(1-\frac{\delta_k}{m_k}\Bigr)v_k .
\]
By the positive homogeneity \(V(cu,cv)=cV(u,v)\) and coordinatewise
monotonicity of \(V\),
\[
    V(u_k-\delta_k,v_k-\delta_k)
    \ge\Bigl(1-\frac{\delta_k}{m_k}\Bigr)V(u_k,v_k),
\]
so, using \(V(u_k,v_k)\le C_V L_k\) from \eqref{eq:V-upper-Theta-L},
\[
    0\le V(u_k,v_k)-V(u_k-\delta_k,v_k-\delta_k)
    \le\frac{\delta_k}{m_k}V(u_k,v_k)
    \le\frac{\delta_k}{m_k}\,C_V L_k .
\]
Combining with \eqref{eq:nk-ge-Vk-minus-etaLk},
\[
    n_k\ge V(u_k-\delta_k,v_k-\delta_k)
    \ge V(u_k,v_k)-\frac{\delta_k}{m_k}\,C_V L_k .
\]
Since \(\delta_k=O(\log L_k)\) and \(m_k=\Theta(L_k)\), the last term is
\(O((\log L_k)L_k/L_k)=O(\log L_k)\). This proves
\eqref{eq:Nfull-lower-balanced}, which together with the upper bound
\eqref{eq:Nfull-upper} proves \eqref{eq:Nfull-V-asymp-balanced}.
\end{proof}

\begin{proof}{Proof of Lemma~\ref{lem:safe-floor-first-order}}

Part (i) directly follows from the proof of Theorem~\ref{thm:first-order}(i). We thus omit its proof. We prove Part (ii) now.
Let \(\alpha_k,\beta_k\downarrow 0\) with
\(u_k,v_k,L_k\) as in Lemma~\ref{lem:Nfull-V-identity}, and write
\(m_k:=\min\{u_k,v_k\}=\Theta(L_k)\).   Let \(c_k:=\log(1/f_{\fb,k})\ge0\), so \(\alpha_{2,k}=e^{-(u_k+c_k)}\)
and \(\beta_{2,k}=e^{-(v_k+c_k)}\). Adding the common shift \(c_k\) to
both coordinates preserves balance:
\(\min\{u_k+c_k,v_k+c_k\}=m_k+c_k=\Theta(L_k+c_k)\), so the strengthened
Lemma~\ref{lem:Nfull-V-identity} (the \(O(\log\cdot)\) form
\eqref{eq:Nfull-V-asymp-balanced}) applies at both levels:
\begin{equation}\label{eq:Nfull-at-original}
    N_{\fixed,\Hum}(\alpha_k,\beta_k)=V(u_k,v_k)+O(\log L_k),
\end{equation}
\begin{equation}\label{eq:Nfull-at-halved}
    N_{\fixed,\Hum}(\alpha_{2,k},\beta_{2,k})=V(u_k+c_k,v_k+c_k)+O(\log(L_k+c_k)).
\end{equation}
Subtracting,
\[
    N_{\fixed,\Hum}(\alpha_{2,k},\beta_{2,k})-N_{\fixed,\Hum}(\alpha_k,\beta_k)
    =\bigl[V(u_k+c_k,v_k+c_k)-V(u_k,v_k)\bigr]
     +O(\log(L_k+c_k)).
\]

\emph{Frontier shift.}
For every \(\vartheta\in(p_0,p_1)\), using \(u_k\ge m_k\) and
\(v_k\ge m_k\),
\[
    u_k+c_k\le\left(1+\frac{c_k}{m_k}\right)u_k,
    \qquad
    v_k+c_k\le\left(1+\frac{c_k}{m_k}\right)v_k.
\]
Dividing by \(D_0(\vartheta)>0\) and \(D_1(\vartheta)>0\), taking the
maximum, and then the infimum over \(\vartheta\in(p_0,p_1)\) (the
positive factor \(1+c_k/m_k\)
pulls out of the infimum),
\[
    V(u_k+c_k,v_k+c_k)\le\left(1+\frac{c_k}{m_k}\right)V(u_k,v_k).
\]
By coordinatewise monotonicity of \(V\) (immediate from
\eqref{eq:V-def-identity}, since \(u_k+c_k\ge u_k\) and
\(v_k+c_k\ge v_k\)), \(V(u_k+c_k,v_k+c_k)\ge V(u_k,v_k)\). Combining,
\begin{equation}\label{eq:V-perturb-final}
    0\le V(u_k+c_k,v_k+c_k)-V(u_k,v_k)\le\frac{c_k}{m_k}V(u_k,v_k).
\end{equation}

\vspace{1mm}
\emph{Conclusion.}
By Lemma~\ref{lem:Nfull-V-identity}, \(V(u_k,v_k)=\Theta(L_k)\); hence the
right-hand side of \eqref{eq:V-perturb-final} is
\(\frac{c_k}{m_k}V(u_k,v_k)=O(c_k)=O(\log(1/f_{\fb,k}))\). Combining with
the subtraction display and
\(O(\log(L_k+c_k))=O(\log L_k+\log(1/f_{\fb,k}))\),
\[
    0\le N_{\fixed,\Hum}(\alpha_{2,k},\beta_{2,k})-N_{\fixed,\Hum}(\alpha_k,\beta_k)
    \le\frac{\log(1/f_{\fb,k})}{m_k}V(u_k,v_k)+O(\log L_k)
    =O\!\left(\log L_k+\log\left(\frac{1}{f_{\fb, k}}\right)\right),
\]
which is \eqref{eq:safe-floor-result}. The lower bound holds because
\(N_{\fixed,\Hum}\) is nonincreasing in each error budget and
\(\alpha_{2,k}=f_{\fb,k}\alpha_k<\alpha_k\),
\(\beta_{2,k}=f_{\fb,k}\beta_k<\beta_k\).
\end{proof}

\subsection{Proof of Lemma~\ref{lem:direction-tracking}}\label{appendix-sec:direction-tracking}
We define some notations that will be used throughout Appendices~\ref{appendix-sec:direction-tracking} - \ref{appendix-sec:corollary-proof}. 
Let $Z_{j,k}$ denote the one-item
$H_1$-versus-$H_0$ log-likelihood increment produced by rule
$j\in\{0,1,*\}$ on a fresh item before boundary stopping. According to 
Algorithm~\ref{algo:seq-policy}, under rule $j$, the item is first sent to
a human with probability $\eta^{\Hum}_{j, k}$ and otherwise sent to the AI, after which
it is escalated to a human with probability $\eta^{\esc}_{j, k}(R)$.  Writing
$U,V\sim\mathrm{Uniform}[0,1]$ for the two independent randomization seeds, the
increment is
\begin{equation}\label{eq:Zjk-def}
    Z_{j,k}
    :=\ind\{U\le\eta^{\Hum}_{j, k}\}\,\ell_X(X)
    +\ind\{U>\eta^{\Hum}_{j, k}\}\Bigl(
        \ell_R(R)
        +\ind\{V\le\eta^{\esc}_{j, k}(R)\}\,\ell_H(X,R)
    \Bigr),
\end{equation}
where $\ell_X,\ell_R,\ell_H$ are the increments
\eqref{eq:direct-human-increment}--\eqref{eq:escalation-increment}.  
% The first
% term is the direct-human contribution, and the second is the AI-report
% contribution $\ell_R(R)$ together with the escalation contribution
% $\ell_H(X,R)$ that is added only when the item is escalated.  Under $H_h$ the
% report and label satisfy $R\sim G_h$, with $X\sim\Bern(p_h)$ on the
% direct-human branch and $X\mid\{R=r\}\sim\Bern(q_h(r))$ on the escalation
% branch, while $U,V$ have the same law under both hypotheses.  
Write
$\sigma_1:=+1$ and $\sigma_0:=-1$, so the signed drift under $H_h$ is
$\E^{\pi_k}_h[\sigma_hZ_{j,k}]$.

Throughout Appendices~\ref{appendix-sec:direction-tracking} - \ref{appendix-sec:fallback-cost}, we index the process at item-completion times.
Work with the completed-item continuation on which the fresh tuple
$(X_i,R_i,U_i,V_i)$ is generated for every acquired item and each AI-first
item is completed according to the escalation rule selected at its start; if
the actual policy has already stopped, this continuation is only
counterfactual. With some abuse of notations we let $\calH_0$ be the trivial sigma-field and let $\calH_i$ be
the sigma-field generated by the sensing decisions, randomization seeds, and
observations through the completion of item $i$.  Set $S_0:=0$ and recursively
choose $J_i$ from $S_{i-1}$ by \eqref{eq:rule-by-sign}, and define
\begin{equation*}
\begin{aligned}
    Y_i
    :={}&\ind\{U_i\le\eta^{\Hum}_{J_i,k}\}\,\ell_X(X_i)+\ind\{U_i>\eta^{\Hum}_{J_i,k}\}
      \Bigl(
        \ell_R(R_i)
        +\ind\{V_i\le\eta^{\esc}_{J_i,k}(R_i)\}\,\ell_H(X_i,R_i)
      \Bigr).
\end{aligned}
\end{equation*}
Note that the conditional law
of $Y_i$ given $\calH_{i-1}$ is exactly the unconditional law of the
single-item increment $Z_{J_i,k}$ under the (now fixed) rule $J_i$.
With some abuse of notations we let $S_i:=\sum_{\ell=1}^{i}Y_\ell$ denote the
cumulative log-likelihood statistic after completing the first i items. 
Because $J_i$ is $\calH_{i-1}$-measurable and the tuple for item $i$ is fresh,
the conditional law of $Y_i$ given $\calH_{i-1}$ is the law of the generic
increment $Z_{J_i,k}$.  In particular, $S_i$ is the cumulative log-likelihood
ratio at the end of item $i$, while $S_{i-1}$ and $J_i$ are
$\calH_{i-1}$-measurable.  Under $H_0$ we apply the same item-level
construction to the reflected increments $-Y_i$ and the reflected statistic
$-S_i$.

To prove Lemma~\ref{lem:direction-tracking}, we need auxiliary Lemma~\ref{lem:variance-drift}, and we present the proof of Lemma~\ref{lem:direction-tracking} in the end of this subsection.

\begin{lemma}[Finite-alphabet increment control and target-scale drift]
\label{lem:variance-drift}
There are primitive constants $B_{\mathrm{inc}}<\infty$, $c_\times>0$,
$\mu_0>0$, and
$C_V<\infty$ such that the following statements hold.

(i)
For every $k$, $j\in\{0,1,*\}$, and $h\in\{0,1\}$,
\begin{equation}\label{eq:increment-bound}
    |Z_{j,k}|\le B_{\mathrm{inc}}.
\end{equation}

(ii) For every $k$, $h\in\{0,1\}$, and $j\in\{0,1,*\}$,
\begin{equation}\label{eq:reverse-comparison}
    \E^{\pi_k}_h[\sigma_hZ_{j,k}]
    \ge c_\times\E^{\pi_k}_{1-h}[\sigma_{1-h}Z_{j,k}].
\end{equation}

(iii) For all sufficiently large $k$,
\begin{equation}\label{eq:mu0-def}
    \E^{\pi_k}_h[\sigma_hZ_{j,k}]\ge \mu_0
    \qquad
    (h\in\{0,1\},\ j\in\{0,1,*\}).
\end{equation}

(iv) For all sufficiently large $k$, $h\in\{0,1\}$, and
$j\in\{0,1,*\}$,
\begin{equation}\label{eq:variance-drift}
    \Var_h(Z_{j,k})\le C_V\E^{\pi_k}_h[\sigma_hZ_{j,k}].
\end{equation}
\end{lemma}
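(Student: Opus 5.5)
We would obtain all four parts from two facts: every increment in \eqref{eq:Zjk-def} is an exact conditional log-likelihood ratio over a fixed finite alphabet, and the expected signed drift of a rule has an explicit closed form. Conditioning on the seeds $U,V$, whose law does not depend on the hypothesis, and using $\E_1[\ell_X(X)]=J_X^{(1)}$, $\E_1[\ell_R(R)]=I_R^{(1)}$, $\E_1[\ell_H(X,R)\mid R=r]=d^{(1)}(r)$ and their reflected analogues under $H_0$, we expect
\[
\E^{\pi_k}_h[\sigma_hZ_{j,k}]
=\eta^{\Hum}_{j,k}J_X^{(h)}
+(1-\eta^{\Hum}_{j,k})\Bigl(I_R^{(h)}+\sum_{r\in\calR}g_h(r)\eta^{\esc}_{j,k}(r)d^{(h)}(r)\Bigr).
\]
This is the same computation as in the proof of Theorem~\ref{thm:kl-decomposition}. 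Every term on the right is nonnegative.

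\emph{Part (i).} By Assumption~\ref{ass:score-model} and $0<p_0<p_1<1$, the quantities $p_h$, $f_x(r)$, $g_h(r)$ and $q_h(r)$ lie in a compact subset of $(0,1)$ that does not depend on $k$. Hence $\ell_X$, $\ell_R$ and $\ell_H$ take finitely many finite values. Since each item contributes at most two increments, $B_{\mathrm{inc}}:=2\max\{\|\ell_X\|_\infty,\|\ell_R\|_\infty,\|\ell_H\|_\infty\}$ works uniformly in $j$, $k$ and $h$.

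\emph{Part (ii).} We compare the displayed drift term by term under $h$ and $1-h$, with the rule $j$ fixed. For the direct-human term, $J_X^{(h)}/J_X^{(1-h)}$ is a positive constant. For the AI term, $I_R^{(0)}$ and $I_R^{(1)}$ are either both zero (exactly when $G_0=G_1$) or both positive; in the latter case their ratio is a primitive constant, and in the former case both sides vanish. For the escalation term, we use that $q_0(r)<q_1(r)$ for every $r$, because $p_0<p_1$ and $f_x(r)>0$. Hence $d^{(0)}(r)$ and $d^{(1)}(r)$ are both strictly positive, and $g_h(r)/g_{1-h}(r)$ is bounded below. Taking $c_\times$ to be the minimum of these finitely many ratios gives \eqref{eq:reverse-comparison}, since the weights $\eta^{\Hum}_{j,k}$ and $\eta^{\esc}_{j,k}$ are common to both sides.

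\emph{Part (iii).} This is the main step. For the matched rule $j=h$, we use that $\eta^{\esc}_{h,k}$ attains $\Psi_h(s_{h,k})$ by Proposition~\ref{prop:escalation-solution}. Multiplying the drift by $\lceil n^*_{\Hum,h,k}\rceil+\lceil n^*_{\AI,h,k}\rceil$ then gives
\[
\lceil n^*_{\Hum,h,k}\rceil J_X^{(h)}+\lceil n^*_{\AI,h,k}\rceil\bigl(I_R^{(h)}+\Psi_h(s_{h,k})\bigr)\ge T_{h,k},
\]
because rounding up only increases nonnegative terms and the optimizer of \eqref{eq:Gamma-optimizer} meets the information constraint. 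The denominator is at most $N_{\main,k}=O(L_k)$ by \eqref{eq:B-fits-pool} and Lemma~\ref{lemma-appendix:N-main}. Since $T_{h,k}\ge\min\{a_k,b_k\}=\Theta(L_k)$ by Assumption~\ref{ass:balanced-error-regime}, the drift is bounded below by a positive constant $\mu$ for all large $k$.

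The remaining rules follow from this. For the mismatched rule $j=1-h$, part (ii) gives drift under $H_h$ at least $c_\times\mu$. For the dead-zone rule, the averaging in \eqref{eq:deadzone-rule} gives $\eta^{\Hum}_{*,k}\ge\eta^{\Hum}_{h,k}/2$, $1-\eta^{\Hum}_{*,k}\ge(1-\eta^{\Hum}_{h,k})/2$ and $\eta^{\esc}_{*,k}\ge\eta^{\esc}_{h,k}/2$. Since all terms in the drift formula are nonnegative, the dead-zone drift under $H_h$ is at least one quarter of the matched drift, hence at least $\mu/4$. We therefore set $\mu_0:=\min\{c_\times,1/4\}\,\mu$. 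The delicate point is checking that $\min\{T_{0,k},T_{1,k}\}/N_{\main,k}$ stays bounded away from zero; this rests on the $O(L_k)$ pool-size bound.

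\emph{Part (iv).} This is immediate from parts (i) and (iii): $\Var_h(Z_{j,k})\le B_{\mathrm{inc}}^2\le (B_{\mathrm{inc}}^2/\mu_0)\,\E^{\pi_k}_h[\sigma_hZ_{j,k}]$. So $C_V:=B_{\mathrm{inc}}^2/\mu_0$ works.
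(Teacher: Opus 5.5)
Your proposal is correct and follows essentially the same route as the paper. Both arguments use the explicit drift expansion, a termwise ratio comparison (with the bounded likelihood ratio of $g_h$) for the reverse comparison, and the rounded-plan bound with $N_{\main,k}=O(L_k)$ and $T_{h,k}=\Theta(L_k)$ for the correct rule; they then pass to the wrong rule via (ii) and to the dead-zone rule via the factor $1/4$, and take $C_V=B_{\mathrm{inc}}^2/\mu_0$. Your observation that $q_0(r)<q_1(r)$ forces $d^{(0)}(r),d^{(1)}(r)>0$ for every $r$ slightly sharpens the paper's ``both zero or both positive'' case split but does not change the argument.
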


\begin{proof}{Proof of Lemma~\ref{lem:variance-drift}}
(i) We notice that $\ell_X(X), \ell_R(R), \ell_H(X, R)$ all have bounded absolute values since 
the support sets are finite and common under the two hypotheses, $p_0,p_1\in(0,1)$, and $g_1$ and $g_0$ are strictly positive on $\calR$.  This proves \eqref{eq:increment-bound}.

(ii) We establish the reverse comparison \eqref{eq:reverse-comparison}, starting
with a term-by-term derivation of the one-item signed expectation.
Taking
expectations in \eqref{eq:Zjk-def} termwise gives, for $h=1$,
\[
    \E^{\pi_k}_1[Z_{j,k}]
    =\eta^{\Hum}_{j,k}\,\E^{\pi_k}_1[\ell_X(X)]
     +(1-\eta^{\Hum}_{j,k})\,\E^{\pi_k}_1[\ell_R(R)]
     +(1-\eta^{\Hum}_{j,k})\sum_{r\in\calR}g_1(r)\,\eta^{\esc}_{j,k}(r)\,
        \E^{\pi_k}_1[\ell_H(X,r)\mid R=r],
\]
and, by the identical computation with $H_0$ in place of $H_1$,
\[
    \E^{\pi_k}_0[Z_{j,k}]
    =\eta^{\Hum}_{j,k}\,\E^{\pi_k}_0[\ell_X(X)]
     +(1-\eta^{\Hum}_{j,k})\,\E^{\pi_k}_0[\ell_R(R)]
     +(1-\eta^{\Hum}_{j,k})\sum_{r\in\calR}g_0(r)\,\eta^{\esc}_{j,k}(r)\,
        \E^{\pi_k}_0[\ell_H(X,r)\mid R=r].
\]
By \eqref{eq:direct-human-increment} and \eqref{eq:JX-def},
$\E^{\pi_k}_1[\ell_X(X)]=J_X^{(1)}$ and $\E^{\pi_k}_0[\ell_X(X)]=-J_X^{(0)}$; by
\eqref{eq:AI-report-increment} and \eqref{eq:IR1}--\eqref{eq:IR0},
$\E^{\pi_k}_1[\ell_R(R)]=I_R^{(1)}$ and $\E^{\pi_k}_0[\ell_R(R)]=-I_R^{(0)}$;
and by \eqref{eq:escalation-increment} and \eqref{eq:d1-def}--\eqref{eq:d0-def},
$\E^{\pi_k}_1[\ell_H(X,r)\mid R=r]=d^{(1)}(r)$ and
$\E^{\pi_k}_0[\ell_H(X,r)\mid R=r]=-d^{(0)}(r)$.  Substituting into the
two displays above and using $\sigma_1=1$ and $\sigma_0=-1$, the one-item
signed expectation of rule $j$ under $H_h$ is
\begin{equation}\label{eq:drift-expansion}
    \E^{\pi_k}_h[\sigma_hZ_{j,k}]
    =\eta^{\Hum}_{j, k}\,J_X^{(h)}
     +(1-\eta^{\Hum}_{j, k})\,I_R^{(h)}
     +(1-\eta^{\Hum}_{j, k})\sum_{r\in\calR}g_h(r)\,\eta^{\esc}_{j, k}(r)\,d^{(h)}(r),
    \qquad h\in\{0,1\},\ j\in\{0,1,*\}.
\end{equation}
The human probability $\eta^{\Hum}_{j, k}$ and the escalation rule $\eta^{\esc}_{j, k}$ are fixed
numbers that do not depend on the hypothesis, so
$\E^{\pi_k}_h[\sigma_hZ_{j,k}]$ and
$\E^{\pi_k}_{1-h}[\sigma_{1-h}Z_{j,k}]$ differ only
through the information coefficients and the report weights $g_h(r)$.

We compare \eqref{eq:drift-expansion} for $h$ against $1-h$ termwise.  Because
the primitives $(p_0,p_1,f_0,f_1)$ are fixed, there are only finitely many
component KL pairs: the direct-label pair $(J_X^{(1)},J_X^{(0)})$, the report
pair $(I_R^{(1)},I_R^{(0)})$, and the $|\calR|$ escalation pairs
$(d^{(1)}(r),d^{(0)}(r))$.  For any two laws $P,Q$ on a common finite support,
$D(P\Vert Q)=0$ if and only if $P=Q$, in which case $D(Q\Vert P)=0$ as well;
otherwise both are strictly positive.  Hence each component pair is either
zero in both directions or strictly positive in both, and
\[
    c_{\mathrm{cmp}}:=\min\Bigl\{
        \tfrac{J_X^{(h)}}{J_X^{(1-h)}},\,
        \tfrac{I_R^{(h)}}{I_R^{(1-h)}},\,
        \tfrac{d^{(h)}(r)}{d^{(1-h)}(r)}
        \ :\ h\in\{0,1\},\ r\in\calR,\ J_X^{(1-h)}, I_R^{(1-h)}, d^{(1-h)}(r)>0
    \Bigr\}
\]
is a strictly positive constant depending only on the primitives, and every
component obeys the two-sided bound
\begin{equation}\label{eq:component-comparison}
    J_X^{(h)}\ge c_{\mathrm{cmp}}\,J_X^{(1-h)},
    \quad
    I_R^{(h)}\ge c_{\mathrm{cmp}}\,I_R^{(1-h)},
    \quad
    d^{(h)}(r)\ge c_{\mathrm{cmp}}\,d^{(1-h)}(r),\quad 
    \forall h\in\{0,1\}, r\in \calR, 
\end{equation}
where each inequality is trivial
when its right-hand side vanishes.

The escalation term additionally carries the report weight $g_h(r)$, which
differs across hypotheses.  Since $g_0$ and $g_1$ are fixed and strictly
positive on the finite set $\calR$ (by \eqref{eq:gh-def} and
Assumption~\ref{ass:score-model}), their likelihood ratio is bounded: $m_g:=\min_{r\in\calR,\ h\in\{0,1\}}\frac{g_h(r)}{g_{1-h}(r)}\in(0,\infty)$.

Combining $g_h(r)\ge m_g\,g_{1-h}(r)\ge0$ with the escalation bound in
\eqref{eq:component-comparison}, and multiplying the two nonnegative-termed
inequalities together, for every $r\in\calR$
\[
    g_h(r)\,d^{(h)}(r)
    \ge m_g\,c_{\mathrm{cmp}}\;
      g_{1-h}(r)\,d^{(1-h)}(r);
\]
multiplying by $\eta^{\esc}_{j,k}(r)\in[0,1]$, summing over $r$ (both
operations preserve the inequality, since they act identically on both
sides), and then multiplying by the nonnegative weight $(1-\eta^{\Hum}_{j,k})$
gives
\[
    (1-\eta^{\Hum}_{j,k})\sum_{r\in\calR}g_h(r)\,\eta^{\esc}_{j,k}(r)\,d^{(h)}(r)
    \ge m_gc_{\mathrm{cmp}}\,(1-\eta^{\Hum}_{j,k})\sum_{r\in\calR}g_{1-h}(r)\,\eta^{\esc}_{j,k}(r)\,d^{(1-h)}(r).
\]
Additionally, multiplying \eqref{eq:component-comparison} $\eta^{\Hum}_{j, k}$ and $1-\eta^{\Hum}_{j, k}$ on both sides gives
\[
    \eta^{\Hum}_{j, k}\,J_X^{(h)}\ge c_{\mathrm{cmp}}\,\eta^{\Hum}_{j, k}\,J_X^{(1-h)},
    \qquad
    (1-\eta^{\Hum}_{j, k})\,I_R^{(h)}\ge c_{\mathrm{cmp}}\,(1-\eta^{\Hum}_{j, k})\,I_R^{(1-h)}.
\]

Set $c_\times:=c_{\mathrm{cmp}}\min\{1,m_g\}>0$.  Then we have 
\begin{align*}
    \E^{\pi_k}_h[\sigma_hZ_{j,k}]
    &\ge c_\times\Bigl[
        \eta^{\Hum}_{j,k}J_X^{(1-h)}
        +(1-\eta^{\Hum}_{j,k})I_R^{(1-h)}\\
    &\hspace{27mm}
        +(1-\eta^{\Hum}_{j,k})
        \sum_{r\in\calR}g_{1-h}(r)\eta^{\esc}_{j,k}(r)d^{(1-h)}(r)
        \Bigr]\\
    &=c_\times\,\E^{\pi_k}_{1-h}[\sigma_{1-h}Z_{j,k}].
\end{align*}
for $h\in\{0,1\},\ j\in\{0,1,*\},$ which is \eqref{eq:reverse-comparison}.  The argument applies verbatim to the
dead-zone rule $j=*$, since \eqref{eq:drift-expansion} holds for its parameters
$(\eta^{\Hum}_{*,k},\eta^{\esc}_{*,k})$ as well; the constant $c_\times$ depends only on the
primitives and not on the rule.

(iii)
We have that $\E^{\pi_k}_h[\sigma_hZ_{j,k}]
    =\eta^{\Hum}_{j, k}J_X^{(h)}
     +(1-\eta^{\Hum}_{j, k})I_R^{(h)}
     +(1-\eta^{\Hum}_{j, k})\sum_{r\in\calR}g_h(r)\eta^{\esc}_{j, k}(r)d^{(h)}(r)$.

\emph{Correct rule $j=h$.}  By construction \eqref{eq:eta-closed-form}, the
escalation rule $\eta^{\esc}_{h,k}$ is the fractional-knapsack maximizer in the dual
\eqref{eq:Psi-dual} at budget $s_{h,k}$, so it attains the frontier exactly:
\begin{equation}\label{eq:eta-attains}
    \sum_{r\in\calR}g_h(r)\eta^{\esc}_{h,k}(r)=s_{h,k},
    \qquad
    \sum_{r\in\calR}g_h(r)\eta^{\esc}_{h,k}(r)d^{(h)}(r)=\Psi_h(s_{h,k}).
\end{equation}
Multiplying the drift expansion by $\lceil n^*_{\Hum,h,k}\rceil+\lceil n^*_{\AI,h,k}\rceil$ and using
$\eta^{\Hum}_{h,k}=\lceil n^*_{\Hum,h,k}\rceil/(\lceil n^*_{\Hum,h,k}\rceil+\lceil n^*_{\AI,h,k}\rceil)$, $1-\eta^{\Hum}_{h,k}=\lceil n^*_{\AI,h,k}\rceil/(\lceil n^*_{\Hum,h,k}\rceil+\lceil n^*_{\AI,h,k}\rceil)$ from
\eqref{eq:phi-def}, together with \eqref{eq:eta-attains},
\[
    (\lceil n^*_{\Hum,h,k}\rceil+\lceil n^*_{\AI,h,k}\rceil)
    \E^{\pi_k}_h[\sigma_hZ_{h,k}]
    =\lceil n^*_{\Hum,h,k}\rceil J_X^{(h)}+\lceil n^*_{\AI,h,k}\rceil I_R^{(h)}+\lceil n^*_{\AI,h,k}\rceil \Psi_h(s_{h,k}).
\]
Given
$\lceil n^*_{\Hum,h,k}\rceil\ge n^*_{\Hum,h,k}$ and $\lceil n^*_{\AI,h,k}\rceil\ge n^*_{\AI,h,k}$, and $\Psi_h\ge0$, so
the right-hand side is at least
\[
    n^*_{\Hum,h,k}J_X^{(h)}+n^*_{\AI,h,k}I_R^{(h)}
    +n^*_{\AI,h,k}\Psi_h\!\Bigl(\tfrac{n^*_{\esc,h,k}}{n^*_{\AI,h,k}}\Bigr)
    \ge T_{h,k},
\]
the last inequality being the information constraint of $\Gamma_h$ in
\eqref{eq:Gamma-def}, met by the optimizer \eqref{eq:Gamma-optimizer} (if
$n^*_{\AI,h,k}=0$ then $s_{h,k}=0$, $\Psi_h(0)=0$, and the escalation term
vanishes on both sides).  Hence 
\begin{equation}\label{eq:correct-rule-target}
    (\lceil n^*_{\Hum,h,k}\rceil+\lceil n^*_{\AI,h,k}\rceil)
    \E^{\pi_k}_h[\sigma_hZ_{h,k}]\ge T_{h,k}.
\end{equation}
By construction, $\lceil n^*_{\Hum,h,k}\rceil+\lceil n^*_{\AI,h,k}\rceil\leq \lceil n^*_{\Hum,h,k} + n^*_{\AI,h,k}\rceil + 1 \leq N_{\main, k} = O(L_k)$. Additionally, $T_{h,k}=\Theta(L_k)$. It thus follows there is a
primitive constant $c_{\mathrm{drift}}>0$ such that, for all sufficiently large
$k$,
\begin{equation}\label{eq:correct-drift}
    \E^{\pi_k}_h[\sigma_hZ_{h,k}]
    \ge\frac{T_{h,k}}{\lceil n^*_{\Hum,h,k}\rceil+\lceil n^*_{\AI,h,k}\rceil}
    \ge c_{\mathrm{drift}}.
\end{equation}

\emph{Wrong rule $j=1-h$.}  The reverse comparison \eqref{eq:reverse-comparison}
gives
$\E^{\pi_k}_h[\sigma_hZ_{1-h,k}]
\ge c_\times\E^{\pi_k}_{1-h}[\sigma_{1-h}Z_{1-h,k}]$, and
\eqref{eq:correct-drift} applied to direction $1-h$ gives
$\E^{\pi_k}_{1-h}[\sigma_{1-h}Z_{1-h,k}]\ge c_{\mathrm{drift}}$.  Therefore
$\E^{\pi_k}_h[\sigma_hZ_{1-h,k}]\ge c_\times c_{\mathrm{drift}}$.

\emph{Dead-zone rule $j=*$.}  Its parameters \eqref{eq:deadzone-rule} are
$\eta^{\Hum}_{*,k}=\tfrac12(\eta^{\Hum}_{0,k}+\eta^{\Hum}_{1,k})$ and
$\eta^{\esc}_{*,k}=\tfrac12(\eta^{\esc}_{0,k}+\eta^{\esc}_{1,k})$, hence
\[
    \eta^{\Hum}_{*,k}\ge\tfrac12\eta^{\Hum}_{h,k},
    \qquad
    1-\eta^{\Hum}_{*,k}\ge\tfrac12(1-\eta^{\Hum}_{h,k}),
    \qquad
    \eta^{\esc}_{*,k}(r)\ge\tfrac12\eta^{\esc}_{h,k}(r).
\]
Substituting these inequalities into the
nonnegative terms of the drift expansion for
$\E^{\pi_k}_h[\sigma_hZ_{*,k}]$,
\[
    \E^{\pi_k}_h[\sigma_hZ_{*,k}]
    \ge\tfrac12\eta^{\Hum}_{h,k}J_X^{(h)}
      +\tfrac12(1-\eta^{\Hum}_{h,k})I_R^{(h)}
      +\tfrac14(1-\eta^{\Hum}_{h,k})\!\sum_{r\in\calR}g_h(r)\eta^{\esc}_{h,k}(r)d^{(h)}(r),
\]
where the escalation factor $\tfrac14$ uses
$(1-\eta^{\Hum}_{*,k})\eta^{\esc}_{*,k}(r)\ge\tfrac14(1-\eta^{\Hum}_{h,k})\eta^{\esc}_{h,k}(r)$
(the product of the two nonnegative-termed bounds above).  Since
$\tfrac12\ge\tfrac14$, the first two coefficients may be further weakened to
$\tfrac14$, so comparing with the expansion \eqref{eq:drift-expansion} for
$\E^{\pi_k}_h[\sigma_hZ_{h,k}]$,
\[
    \E^{\pi_k}_h[\sigma_hZ_{*,k}]
    \ge\tfrac14\Bigl[\eta^{\Hum}_{h,k}J_X^{(h)}+(1-\eta^{\Hum}_{h,k})I_R^{(h)}
        +(1-\eta^{\Hum}_{h,k})\sum_{r\in\calR}g_h(r)\eta^{\esc}_{h,k}(r)d^{(h)}(r)\Bigr]
    =\tfrac14\,\E^{\pi_k}_h[\sigma_hZ_{h,k}].
\]
With \eqref{eq:correct-drift} this gives
$\E^{\pi_k}_h[\sigma_hZ_{*,k}]\ge c_{\mathrm{drift}}/4$.

Taking $\mu_0:=\min\{c_{\mathrm{drift}},\,
c_\times c_{\mathrm{drift}},\,c_{\mathrm{drift}}/4\}>0$ proves
\eqref{eq:mu0-def}.

(iv) By parts (i) and (iii), for all sufficiently large $k$,
\[
    \Var_h(Z_{j,k})
    \le \E^{\pi_k}_h[Z_{j,k}^2]
    \le B_{\mathrm{inc}}^2
    \le \frac{B_{\mathrm{inc}}^2}{\mu_0}\,
       \E^{\pi_k}_h[\sigma_hZ_{j,k}].
\]
Thus \eqref{eq:variance-drift} holds with
$C_V:=B_{\mathrm{inc}}^2/\mu_0$, which depends
only on the primitives.
\end{proof}

\begin{proof}{Proof of Lemma~\ref{lem:direction-tracking}}
Fix a sufficiently large $k$. We prove the bound for $W_{1,k}$ under $H_1$;
the bound for $W_{0,k}$ under
$H_0$ follows by applying the same argument to the reflected statistic $-S_i$,
since by Lemma~\ref{lem:variance-drift}(iii) every deployed rule also has drift at least
$\mu_0$ toward the correct lower boundary under $H_0$.

Use the item-level process $(Y_i,\calH_i,S_i)$ defined at the beginning of
this subsection.  The rule $J_i$ is
$\calH_{i-1}$-measurable, being determined by the sign test
\eqref{eq:rule-by-sign} applied to $S_{i-1}$.  Conditionally on
$\calH_{i-1}$, item $i$ is a \emph{fresh} item: its report, label, and
randomization seeds are drawn independently of the past, so the conditional law
of $Y_i$ given $\calH_{i-1}$ is exactly the unconditional law of the
single-item increment $Z_{J_i,k}$ under the (now fixed) rule $J_i$.  This gives, for every $i$, that
the following inequalities hold almost surely:
\begin{equation}\label{eq:dt-hyp}
    \E^{\pi_k}_1[Y_i\mid\calH_{i-1}]\ge\mu_0,
    \qquad
    |Y_i|\le B_{\mathrm{inc}},
    \qquad
    \Var_1(Y_i\mid\calH_{i-1})
    \le C_V\E^{\pi_k}_1[Y_i\mid\calH_{i-1}] ,
\end{equation}
where the third bound is the conditional form of \eqref{eq:variance-drift}:
$\Var_1(Y_i\mid\calH_{i-1})=\Var_1(Z_{J_i,k})
\le C_V\E^{\pi_k}_1[Z_{J_i,k}]
=C_V\E^{\pi_k}_1[Y_i\mid\calH_{i-1}]$.
Since $Y_i\le B_{\mathrm{inc}}$ gives
$\E^{\pi_k}_1[Y_i\mid\calH_{i-1}]\le B_{\mathrm{inc}}$,
\eqref{eq:dt-hyp} yields the
drift-controlled second moment
\begin{equation}\label{eq:dt-secmom}
    \E^{\pi_k}_1[Y_i^2\mid\calH_{i-1}]
    =\Var_1(Y_i\mid\calH_{i-1})
      +\bigl(\E^{\pi_k}_1[Y_i\mid\calH_{i-1}]\bigr)^2
    \le (C_V+B_{\mathrm{inc}})
       \E^{\pi_k}_1[Y_i\mid\calH_{i-1}] .
\end{equation}

Since $S_0=0\le z_k$, let
$\tau_{\mathrm{trk},k}:=
\min\{i:S_i\ge a_k\}\wedge N_{\main,k}\le N_{\main,k}$. Then
$W_{1,k}
\le\sum_{i=1}^{\tau_{\mathrm{trk},k}}\ind\{S_{i-1}\le z_k\}$,
so it suffices to bound $\sum_{i=1}^{\tau_{\mathrm{trk},k}}\ind\{S_{i-1}\le z_k\}$.  

To obtain this bound, we introduce a smooth potential $\Phi$ whose conditional
one-step change is uniformly negative whenever $S_{i-1}\le z_k$ and
nonpositive otherwise.  Summing this drift inequality up to $\tau_{\mathrm{trk},k}$ and telescoping
the stopped process will then convert the total decrease of $\Phi(S_i)$ into an
upper bound on the expected number of indices for which $S_{i-1}\le z_k$.

\emph{A smooth potential.}  Set
$\theta:=\dfrac{1}{e(C_V+B_{\mathrm{inc}})}>0$, so that
$\theta(C_V+B_{\mathrm{inc}})=1/e$ and
$\theta B_{\mathrm{inc}}\le1/e$, and define
\begin{equation}\label{eq:dt-potential}
    \Phi(s):=
    \begin{cases}
        z_k-s, & s\le z_k,\\[1mm]
        -\dfrac1\theta\bigl(1-e^{-\theta(s-z_k)}\bigr), & s\ge z_k.
    \end{cases}
\end{equation}
$\Phi$ is convex, non-increasing, and $1$-Lipschitz.  On the linear branch
$s<z_k$ it has $\Phi'(s)=-1$ and $\Phi''(s)=0$; on the curved branch $s>z_k$,
\[
    \Phi'(s)=-e^{-\theta(s-z_k)},
    \qquad
    \Phi''(s)=\theta e^{-\theta(s-z_k)}\in(0,\theta] .
\]
The two one-sided values agree at $z_k$ ($\Phi(z_k^\pm)=0$ and
$\Phi'(z_k^\pm)=-1$), so $\Phi\in C^1$, and the curvature is uniformly bounded:
\begin{equation}\label{eq:dt-curvature}
    \sup_{s\in\R}\Phi''(s)=\theta
    =\frac1{e(C_V+B_{\mathrm{inc}})},
\end{equation}
the supremum being attained as $s\downarrow z_k$.  Finally $\Phi(s)=z_k-s\to+\infty$
as $s\to-\infty$, while $\Phi(s)\to-1/\theta$ as $s\to+\infty$, so $\Phi$ is
\emph{unbounded above but bounded below}, with
\begin{equation}\label{eq:dt-phi-bounds}
    \Phi(S_0)=\Phi(0)=z_k,
    \qquad
    \inf_{s\in\R}\Phi(s)=-\tfrac1\theta=-e(C_V+B_{\mathrm{inc}}).
\end{equation}
% Only the lower bound in \eqref{eq:dt-phi-bounds} is used below.  The large values
% of $\Phi$ deep in the region (e.g. $\Phi(-b_k)=z_k+b_k$, which grows at least
% as fast as $b_k=\Theta(L_k)$, at the lower boundary) are harmless: the
% stopping-time telescoping depends only on the endpoints
% $\Phi(S_0)$ and $\Phi(S_{\tau_{\mathrm{trk},k}})$, so any deep dip of
% $S$ is offset by the
% subsequent climb back.

\emph{One-step drift inequality.}  We claim that for every $i$,
\begin{equation}\label{eq:dt-drift}
    \E^{\pi_k}_1[\Phi(S_i)\mid\calH_{i-1}]-\Phi(S_{i-1})
    \le -\tfrac{\mu_0}2\,\ind\{S_{i-1}\le z_k\}.
\end{equation}
Since $\Phi\in C^1$ with $\Phi''\le M$ everywhere, the second-order Taylor
inequality
\begin{equation}\label{eq:dt-taylor}
    \Phi(y)\le\Phi(x)+\Phi'(x)(y-x)+\tfrac12\,M_{x,y}\,(y-x)^2,
    \qquad M_{x,y}:=\sup_{[x\wedge y,\,x\vee y]}\Phi''\le \theta,
\end{equation}
holds for all $x,y$ (this extends the usual Taylor bound to our piecewise-$C^2$
$\Phi$ because $\Phi'$ is absolutely continuous, being continuous with matching
one-sided derivatives at the single kink $z_k$).  Applying it with
$x=S_{i-1}$, $y=S_i$: note that $M_{x,y}$ as written here depends on the
random $S_i$; since $|Y_i|\le B_{\mathrm{inc}}$, the interval
$[x\wedge y,x\vee y]$ is always contained in
$[S_{i-1}-B_{\mathrm{inc}},S_{i-1}+B_{\mathrm{inc}}]$, so
$M_{x,y}\leq \theta$ almost surely,
for every possible realization of $Y_i$.  With $y-x=Y_i$, taking
$\E^{\pi_k}_1[\cdot\mid\calH_{i-1}]$, and using the second-moment bound
\eqref{eq:dt-secmom},
\begin{equation}\label{eq:dt-onestep}
\begin{split}
    \E^{\pi_k}_1[\Phi(S_i)\mid\calH_{i-1}]-\Phi(S_{i-1})
    &\le \Phi'(S_{i-1})\E^{\pi_k}_1[Y_i\mid\calH_{i-1}]
       +\tfrac12 \E^{\pi_k}_1[M_{x,y}Y_i^2\mid\calH_{i-1}].
\end{split}
\end{equation}

\emph{Low steps} ($S_{i-1}\le z_k$, so $\Phi'(S_{i-1})=-1$).
Using \eqref{eq:dt-secmom}, $M_{x,y}\le\theta$ in \eqref{eq:dt-onestep} and
$\theta(C_V+B_{\mathrm{inc}})=1/e$,
\begin{align*}
    &\E^{\pi_k}_1[\Phi(S_i)\mid\calH_{i-1}]-\Phi(S_{i-1})\\
    &\le \Bigl[-1+\tfrac12\theta(C_V+B_{\mathrm{inc}})\Bigr]
       \E^{\pi_k}_1[Y_i\mid\calH_{i-1}]
        && [\eqref{eq:dt-secmom},\,\eqref{eq:dt-onestep},\ \Phi'(S_{i-1})=-1,\ M_{x,y}\le\theta\,]\\
    &= \Bigl[-1+\tfrac1{2e}\Bigr]
       \E^{\pi_k}_1[Y_i\mid\calH_{i-1}]
        && [\,\theta(C_V+B_{\mathrm{inc}})=1/e\,]\\
    &\le -\tfrac12\E^{\pi_k}_1[Y_i\mid\calH_{i-1}]
        && [\,1-\tfrac1{2e}\ge\tfrac12\,]\\
    &\le -\tfrac{\mu_0}2
        && [\,\eqref{eq:dt-hyp}\,].
\end{align*}

\emph{High steps} ($S_{i-1}>z_k$, so $\Phi'(S_{i-1})=-\alpha$ with
$\alpha:=e^{-\theta(S_{i-1}-z_k)}\in(0,1)$).  The step interval
$[S_{i-1}\wedge S_i,\,S_{i-1}\vee S_i]$ lies in
$[S_{i-1}-B_{\mathrm{inc}},\infty)$; since
$\Phi''$ is non-increasing on the curved branch and vanishes below $z_k$,
\[
    M_{x,y}\le\sup_{s\ge S_{i-1}-B_{\mathrm{inc}}}\Phi''(s)
    \le\theta e^{-\theta(S_{i-1}-B_{\mathrm{inc}}-z_k)}
    =\theta\,\alpha\,e^{\theta B_{\mathrm{inc}}}.
\]
Substituting into \eqref{eq:dt-onestep},
\begin{align*}
    &\E^{\pi_k}_1[\Phi(S_i)\mid\calH_{i-1}]-\Phi(S_{i-1})\\
    &\le \alpha\Bigl[-1+\tfrac12\theta e^{\theta B_{\mathrm{inc}}}
       (C_V+B_{\mathrm{inc}})\Bigr]
       \E^{\pi_k}_1[Y_i\mid\calH_{i-1}]
        && [\eqref{eq:dt-secmom},\,\eqref{eq:dt-onestep},\ \Phi'(S_{i-1})=-\alpha\,]\\
    &= \alpha\Bigl[-1+\tfrac{e^{\theta B_{\mathrm{inc}}}}{2e}\Bigr]
       \E^{\pi_k}_1[Y_i\mid\calH_{i-1}]
        && [\,\theta(C_V+B_{\mathrm{inc}})=1/e\,]\\
    &\le -\tfrac\alpha2\E^{\pi_k}_1[Y_i\mid\calH_{i-1}]\ \le\ 0
        && [\,e^{\theta B_{\mathrm{inc}}}\le e^{1/e}\le e\,].
\end{align*}
The extra factor $1/e$ in the definition of $\theta$ is exactly what keeps this
coefficient negative: with the alternative choice
$\theta=1/(C_V+B_{\mathrm{inc}})$ the same computation gives
$-1+\tfrac12 e^{\theta B_{\mathrm{inc}}}$, which can be positive (up to
$-1+\tfrac e2>0$ when $\theta B_{\mathrm{inc}}$ is close to $1$), so the descent property would
fail on high steps.  Note also that the linear gain
$-\alpha\E^{\pi_k}_1[Y_i\mid\calH_{i-1}]$ and the
curvature cost both carry the factor $\alpha$, so the gain dominates uniformly in
the height $S_{i-1}-z_k\ge0$: a down-crossing of $z_k$ raises $\Phi$, but the
next-step positive drift lowers $\E^{\pi_k}_1[\Phi(S_i)\mid\calH_{i-1}]$ by at least
as much, leaving no recharge term.  The two cases together establish
\eqref{eq:dt-drift}.

\emph{Stopping-time telescoping.} Because
$\tau_{\mathrm{trk},k}\le N_{\main,k}$ is a bounded stopping time,
$\ind\{i\le\tau_{\mathrm{trk},k}\}$ is
$\calH_{i-1}$-measurable. Hence the tower
property and \eqref{eq:dt-drift} give
\begin{align*}
    \E^{\pi_k}_1[\Phi(S_{\tau_{\mathrm{trk},k}})]-\Phi(S_0)
    &=\sum_{i=1}^{N_{\main,k}}\E^{\pi_k}_1\!\left[
        \ind\{i\le\tau_{\mathrm{trk},k}\}
        \bigl(\Phi(S_i)-\Phi(S_{i-1})\bigr)\right]\\
    &=\sum_{i=1}^{N_{\main,k}}\E^{\pi_k}_1\!\left[
        \ind\{i\le\tau_{\mathrm{trk},k}\}
        \left(\E^{\pi_k}_1[\Phi(S_i)\mid\calH_{i-1}]-\Phi(S_{i-1})\right)\right]\\
    &\le-\tfrac{\mu_0}2\,\E^{\pi_k}_1\Bigl[
        \sum_{i=1}^{\tau_{\mathrm{trk},k}}
        \ind\{S_{i-1}\le z_k\}\Bigr]
    \le-\tfrac{\mu_0}2\,\E^{\pi_k}_1[W_{1,k}].
\end{align*}
Additionally, we have
$\Phi(S_{\tau_{\mathrm{trk},k}})
\ge\inf\Phi=-e(C_V+B_{\mathrm{inc}})$ from
\eqref{eq:dt-phi-bounds}. Combining with $\Phi(S_0)=z_k$ gives
\[
    -e(C_V+B_{\mathrm{inc}})-z_k
    \ \le\ \E^{\pi_k}_1[\Phi(S_{\tau_{\mathrm{trk},k}})]-\Phi(S_0)
    \ \le\ -\tfrac{\mu_0}2\,\E^{\pi_k}_1[W_{1,k}].
\]
Rearranging the outer inequality,
\[
    \E^{\pi_k}_1[W_{1,k}]
    \le\frac{2\bigl(z_k+e(C_V+B_{\mathrm{inc}})\bigr)}{\mu_0}
    \le \frac{2\bigl(e(C_V+B_{\mathrm{inc}})+1\bigr)(z_k+1)}
              {\mu_0},
\]
which is the $H_1$ part of \eqref{eq:direction-tracking-bound} with
$C_W:=4\bigl(e(C_V+B_{\mathrm{inc}})+1\bigr)/\mu_0$.
The $H_0$ part is identical with $S_i$
replaced by $-S_i$, giving the same constant by the symmetric application of
Lemma~\ref{lem:variance-drift}(iii) noted at the start of the proof, so this single
$C_W$ covers both terms in \eqref{eq:direction-tracking-bound}.
\end{proof}

\subsection{Proof of Lemma~\ref{lem:main-cost}}\label{appendix-sec:main-cost}

% \begin{lemma}[Main-stage sensing cost]\label{lem:main-cost}
% There is a constant $C_M<\infty$ such that, for $h=0,1$,
% \begin{equation}\label{eq:main-cost-bound}
%     \E^{\pi_k}_h[C^{\main}_k]
%     \le \Gamma_h(T_{h,k},\overline N_{\main,k})
%        +C_M(z_k+1)
%        +C_M L_k\left[\frac{z_k+1}{\Delta_k}
%          +\exp\left\{-c_{\fb}\frac{\Delta_k^2}{L_k}\right\}\right],
% \end{equation}
% where $C^{\main}_k$ is the sensing cost incurred before fallback, not including
% the data-acquisition cost $\cdata N_{\main,k}$ and not including fallback completion
% cost. {\color{red} ST: I didn't check this, please double check}
% \end{lemma}

\begin{proof}{Proof of Lemma~\ref{lem:main-cost}}
Fix a sufficiently large \(k\).  We prove the result under \(H_1\), for
which the correct rule is the direction-\(1\) rule and the relevant
boundary is \(a_k\).  The \(H_0\) result follows by applying the same
argument to \(-S\), with \(a_k,T_{1,k}\), and the direction-\(1\) rule
replaced by \(b_k,T_{0,k}\), and the direction-\(0\) rule.

\emph{Cost decomposition.}  For
\(i=1,\ldots,N_{\main,k}\), let \(I_i\) indicate that the main stage
starts item \(i\), and define
\[
 \tau_{\main,k}:=\sum_{i=1}^{N_{\main,k}}I_i .
\]
Thus \(I_i=\ind\{i\le\tau_{\main,k}\}\) is \(\calH_{i-1}\)-measurable.  On a
boundary-crossing path, item \(\tau_{\main,k}\) is the terminal item and may be
only partially processed; on \(E_{\fb}\), all items are completed and
\(\tau_{\main,k}=N_{\main,k}\).  Recall that \((Y_i,S_i)\) is the completed-item
continuation defined at the beginning of this subsection.  Classify the
started items by the predictable sign test
\[
 \mathcal C:=\{i:I_i=1,\ S_{i-1}>z_k\},
 \qquad
 \mathcal W:=\{i:I_i=1,\ S_{i-1}\le z_k\},
\]
and write \(N_c:=|\mathcal C|\), so that
\(|\mathcal W|=W_{1,k}\).  Define the sensing cost of fully completing
item \(i\) on this continuation by
\[
\begin{aligned}
 C_i^{\mathrm{item}}
 :={}&\ind\{U_i\le\eta^{\Hum}_{J_i,k}\}\,\cH
 +\ind\{U_i>\eta^{\Hum}_{J_i,k}\}
   \Bigl(
      \cAI
      +\ind\{V_i\le\eta^{\esc}_{J_i,k}(R_i)\}\,\cH
   \Bigr).
\end{aligned}
\]
Let $C^{\mathcal C}_k:=\sum_{i\in\mathcal C}
      C_i^{\mathrm{item}}$, $C^{\mathcal W}_k:=\sum_{i\in\mathcal W}
      C_i^{\mathrm{item}}$, and we have $C^{\main}_k\le C^{\mathcal C}_k+C^{\mathcal W}_k$. 

Set \(c_{\max}:=\cH+\cAI+\cH\), a primitive upper bound on the sensing
cost of a completed item. Taking expectations gives
\begin{equation}\label{eq:mc-split}
\begin{aligned}
 \E^{\pi_k}_1[C^{\main}_k]
 &\le
 \underbrace{\E^{\pi_k}_1[C^{\mathcal W}_k]}
    _{\textnormal{Cost 1: wrong/dead-zone}}
 +\underbrace{\E^{\pi_k}_1[C^{\mathcal C}_k]}
    _{\textnormal{Cost 2: correct}}.
\end{aligned}
\end{equation}
We bound these two terms in turn.

\emph{Cost 1: wrong and dead-zone items.}  Each completed-item sensing
cost is at most \(c_{\max}\).  Hence Lemma~\ref{lem:direction-tracking}
gives
\begin{equation}\label{eq:mc-overhead1}
 \E^{\pi_k}_1[C^{\mathcal W}_k]
 \le c_{\max}\E^{\pi_k}_1[W_{1,k}]
 \le c_{\max}C_W(z_k+1).
\end{equation}

\emph{Cost 2: correct items.} Let
$N^{\mathrm{plan}}_{1,k}:=
\lceil n^*_{\Hum,1,k}\rceil+\lceil n^*_{\AI,1,k}\rceil$.
If the first \(N^{\mathrm{plan}}_{1,k}\) sensed items were all processed with the
direction-\(1\) rule, their expected sensing cost would be
\(\lceil n^*_{\Hum,1,k}\rceil\cH
+\lceil n^*_{\AI,1,k}\rceil\cAI
+\lceil n^*_{\AI,1,k}\rceil s_{1,k}\cH\).
Rounding changes this by at most a primitive constant, so the planned
block cost is at most
\(\Gamma_1(T_{1,k},\overline N_{\main,k})+O(1)\).
A direction-\(1\) item has planned per-item cost
\begin{equation}\label{eq:mc-peritem}
\begin{aligned}
 c_{\mathrm{planned}}
 := \E^{\pi_k}_1[C^{\text{item}}_i]
 = \eta^{\Hum}_{1,k}\cH
 +(1-\eta^{\Hum}_{1,k})
     \bigl(\cAI+s_{1,k}\cH\bigr)
 \le
 \frac{\Gamma_1(T_{1,k},\overline N_{\main,k})+O(1)}
      {N^{\mathrm{plan}}_{1,k}} .
\end{aligned}
\end{equation}

Since all items in \(\mathcal C\) apply the direction-\(1\)
rule, the tower property gives
\[
\begin{aligned}
 \E^{\pi_k}_1[C^{\mathcal C}_k]
 &=\sum_{i=1}^{N_{\main,k}}\E^{\pi_k}_1[
    I_i\ind\{S_{i-1}>z_k\}C_i^{\mathrm{item}}]=c_{\mathrm{planned}}
   \sum_{i=1}^{N_{\main,k}}\E^{\pi_k}_1[
    I_i\ind\{S_{i-1}>z_k\}]
 =c_{\mathrm{planned}}\E^{\pi_k}_1[N_c].
\end{aligned}
\]
Likewise,
\[
 \E^{\pi_k}_1[S_{\tau_{\main,k}}]
 =\E^{\pi_k}_1\!\left[
   \sum_{i=1}^{N_{\main,k}}I_i
   \E^{\pi_k}_1[Y_i\mid\calH_{i-1}]
 \right],
\]
and splitting the sum over \(\mathcal C\) and \(\mathcal W\) yields
\[
 \E^{\pi_k}_1[Z_{1,k}]\,\E^{\pi_k}_1[N_c]
 =\E^{\pi_k}_1[S_{\tau_{\main,k}}]
 -\E^{\pi_k}_1\!\left[
    \sum_{i\in\mathcal W}
    \E^{\pi_k}_1[Y_i\mid\calH_{i-1}]
  \right].
\]
Every deployed rule has conditional drift at least \(\mu_0>0\) by
Lemma~\ref{lem:variance-drift}(iii), so the second term on the
right is nonnegative.  Moreover,
\(S_{\tau_{\main,k}}\le a_k+B_{\mathrm{inc}}\) pathwise: before a
terminal boundary-crossing item is started, \(S_{\tau_{\main,k}-1}<a_k\), and its
completed continuation satisfies \(Y_{\tau_{\main,k}}\le B_{\mathrm{inc}}\); on pool exhaustion,
\(S_{\tau_{\main,k}}<a_k\).  Thus
\[
 \E^{\pi_k}_1[Z_{1,k}]\,\E^{\pi_k}_1[N_c]
 \le a_k+B_{\mathrm{inc}} .
\]
Combining this bound with the cost identity, \eqref{eq:mc-peritem}, and
\(N^{\mathrm{plan}}_{1,k}\E^{\pi_k}_1[Z_{1,k}]
\ge a_k+\Delta_k\) from \eqref{eq:correct-rule-target}, we obtain
\[
\begin{aligned}
 \E^{\pi_k}_1[C^{\mathcal C}_k]
 &\le
 \bigl(\Gamma_1(T_{1,k},\overline N_{\main,k})+O(1)\bigr)
 \frac{a_k+B_{\mathrm{inc}}}{a_k+\Delta_k}\\
 &\le \Gamma_1(T_{1,k},\overline N_{\main,k})+O(1),
\end{aligned}
\]
where the last inequality holds for all large \(k\), since
\(\Delta_k\ge B_{\mathrm{inc}}\).

\emph{Combine.}  Substituting the bounds from Costs 1 and 2 into the
decomposition \eqref{eq:mc-split} gives
\begin{equation}\label{eq:mc-final}
\begin{aligned}
 \E^{\pi_k}_1[C^{\main}_k]
 &\le \Gamma_1(T_{1,k},\overline N_{\main,k})
 +C_M(z_k+1).
\end{aligned}
\end{equation}
This is \eqref{eq:main-cost-bound} under \(H_1\); the reflected argument
at the beginning gives the result under \(H_0\).
\end{proof}

\subsection{Proof of Lemma~\ref{lem:fallback-low}}\label{appendix-sec:fallback-low}

\begin{proof}{Proof of Lemma~\ref{lem:fallback-low}}
Fix a sufficiently large $k$. For \(h\in\{0,1\}\), let
\[
N^{\mathrm{plan}}_{h,k}:=
\lceil n^*_{\Hum,h,k}\rceil+\lceil n^*_{\AI,h,k}\rceil
\]
be the rounded direction-\(h\) count.
We prove the claim under $H_1$; the $H_0$ proof is identical with $S$ replaced
by $-S$, the lower boundary $-b_k$, and
$N^{\mathrm{plan}}_{0,k},b_k,T_{0,k}$ in place of
$N^{\mathrm{plan}}_{1,k},a_k,T_{1,k}$. Throughout, use the item-level process
$(Y_i,\calH_i,S_i)$ defined at the beginning of the proof of
Lemma~\ref{lem:direction-tracking}.  By the
proof of Lemma~\ref{lem:direction-tracking}, for every $i$ the following
inequalities hold almost surely:
\begin{equation}\label{eq:fb-hyp}
    \E^{\pi_k}_1[Y_i\mid\calH_{i-1}]\ge\mu_0,
    \qquad
    |Y_i|\le B_{\mathrm{inc}},
    \qquad
    \Var_1(Y_i\mid\calH_{i-1})
    \le C_V\E^{\pi_k}_1[Y_i\mid\calH_{i-1}]
    \le C_V B_{\mathrm{inc}} .
\end{equation}
By
\eqref{eq:B-fits-pool}, $N^{\mathrm{plan}}_{1,k}\le N_{\main,k}$,
and by the proof of
Lemma~\ref{lem:variance-drift}(iii) (the inequality
$N^{\mathrm{plan}}_{1,k}\E^{\pi_k}_1[Z_{1,k}]\ge
T_{1,k}$ together with $N^{\mathrm{plan}}_{1,k}=O(L_k)$),
\begin{equation}\label{eq:fb-planned}
    N^{\mathrm{plan}}_{1,k}\,\E^{\pi_k}_1[Z_{1,k}]
    \ge T_{1,k}=a_k+\Delta_k,
    \qquad
    N^{\mathrm{plan}}_{1,k}\le C_1 L_k
\end{equation}
for a primitive constant $C_1$, where
$\E^{\pi_k}_1[Z_{1,k}]=\E^{\pi_k}_1[Y_i\mid\calH_{i-1}]$ on a direction-$1$
item is the per-item direction-$1$ drift.

\emph{Step 1: on $E_{\fb}$ the statistic stays below $a_k$ at item
$N^{\mathrm{plan}}_{1,k}$.}
On $E_{\fb}$ no boundary is crossed in the whole pool, so in particular the upper
boundary is not crossed by item
$N^{\mathrm{plan}}_{1,k}\le N_{\main,k}$; hence
\begin{equation}\label{eq:fb-below}
    E_{\fb}\subseteq\{S_{N^{\mathrm{plan}}_{1,k}}<a_k\}.
\end{equation}

\emph{Step 2: drift accumulated by item $N^{\mathrm{plan}}_{1,k}$ on
the small-occupation event.}
On $E_{\fb}$, the policy processes at least the first
$N^{\mathrm{plan}}_{1,k}$ items. Let $\mathcal I_k
    :=\{1\le i\le N^{\mathrm{plan}}_{1,k}:S_{i-1}\le z_k\}$ be the
set of items in this planned block that do not use the direction-$1$
rule. Since $W_{1,k}$ counts all such items processed before pool exhaustion,
$|\mathcal I_k|\le W_{1,k}$ on $E_{\fb}$. For
$i\notin\mathcal I_k$, the policy uses the direction-$1$ rule and hence
$\E^{\pi_k}_1[Y_i\mid\calH_{i-1}]=\E^{\pi_k}_1[Z_{1,k}]$; for
$i\in\mathcal I_k$, we have
$\E^{\pi_k}_1[Y_i\mid\calH_{i-1}]\ge0$.
Therefore, on $E_{\fb}$,
\begin{align*}
    \sum_{i=1}^{N^{\mathrm{plan}}_{1,k}}
       \E^{\pi_k}_1[Y_i\mid\calH_{i-1}]
    =&
    \bigl(N^{\mathrm{plan}}_{1,k}-|\mathcal I_k|\bigr)
       \E^{\pi_k}_1[Z_{1,k}]
    +\sum_{i\in\mathcal I_k}
      \E^{\pi_k}_1[Y_i\mid\calH_{i-1}]\\
    \ge&
    N^{\mathrm{plan}}_{1,k}\E^{\pi_k}_1[Z_{1,k}]
    -|\mathcal I_k|\E^{\pi_k}_1[Z_{1,k}]\\
    \ge&
    N^{\mathrm{plan}}_{1,k}\E^{\pi_k}_1[Z_{1,k}]
       -B_{\mathrm{inc}}\,W_{1,k}\\
    \geq &
    (a_k+\Delta_k)-B_{\mathrm{inc}}\,W_{1,k},
\end{align*}
where the second inequality uses $|\mathcal I_k|\le W_{1,k}$ and
$\E^{\pi_k}_1[Z_{1,k}]\le B_{\mathrm{inc}}$, and the last uses
\eqref{eq:fb-planned}. Consequently, on
$E_{\fb}$ and the small-occupation event
\begin{equation}\label{eq:small-w-event}
    \Bigl\{W_{1,k}\le \tfrac{\Delta_k}{4B_{\mathrm{inc}}}\Bigr\},
\end{equation}
the accumulated drift satisfies
\begin{equation}\label{eq:fb-drift-lb}
    \sum_{i=1}^{N^{\mathrm{plan}}_{1,k}}
       \E^{\pi_k}_1[Y_i\mid\calH_{i-1}]
    \ge a_k+\Delta_k
       -B_{\mathrm{inc}}\cdot\tfrac{\Delta_k}{4B_{\mathrm{inc}}}
    =a_k+\tfrac{3\Delta_k}{4}.
\end{equation}

\emph{Step 3: martingale tail bound \eqref{eq:mart-tail}.}
Decompose the statistic at item $N^{\mathrm{plan}}_{1,k}$ into its
predictable (drift) part and a
fluctuation part,
\[
    S_{N^{\mathrm{plan}}_{1,k}}
    =\sum_{i=1}^{N^{\mathrm{plan}}_{1,k}}
       \E^{\pi_k}_1[Y_i\mid\calH_{i-1}]
     +M_{N^{\mathrm{plan}}_{1,k}},
    \qquad
    M_n:=\sum_{i=1}^{n}
      \Bigl(Y_i-\E^{\pi_k}_1[Y_i\mid\calH_{i-1}]\Bigr) .
\]
% Because the policy acquires the whole pool of $N_{\main,k}$ items in advance, the
% increment $Y_i$ and its conditional expectation
% $\E^{\pi_k}_1[Y_i\mid\calH_{i-1}]$ are defined for every
% $i\le N_{\main,k}$; we take $M_n$ for $n\le N_{\main,k}$ and, when needed,
% its stopped version $M_{n\wedge\tau_{\mathrm{trk},k}}$.

\emph{(i) $M_n$ is a martingale.}  Each
$\E^{\pi_k}_1[Y_i\mid\calH_{i-1}]$ is $\calH_{i-1}$-measurable, and the
increments are bounded by \eqref{eq:fb-hyp}, hence integrable.  The centred
increment has zero conditional mean,
\[
    \E^{\pi_k}_1\!\left[
      Y_i-\E^{\pi_k}_1[Y_i\mid\calH_{i-1}]
      \mathrel{\Big|}\calH_{i-1}\right]=0,
\]
so $\E^{\pi_k}_1[M_n\mid\calH_{n-1}]=M_{n-1}$; thus $(M_n)_{n\ge0}$ is a
martingale with $M_0=0$ with respect to $(\calH_n)$.  Its increments are
bounded:
\begin{equation}\label{eq:fb-incr}
    \left|Y_i-\E^{\pi_k}_1[Y_i\mid\calH_{i-1}]\right|
    \le |Y_i|+\E^{\pi_k}_1[Y_i\mid\calH_{i-1}]
    \le 2B_{\mathrm{inc}} .
\end{equation}

\emph{(ii) Accumulated predictable variance.}  Since the increments of $M_n$
are martingale differences, their conditional second moment equals the
conditional variance of $Y_i$.  We write the accumulated predictable variance
as
\[
    V_n:=\sum_{i=1}^{n}\E^{\pi_k}_1\!\left[
       \left(Y_i-\E^{\pi_k}_1[Y_i\mid\calH_{i-1}]\right)^2
       \mathrel{\Big|}\calH_{i-1}\right]
        =\sum_{i=1}^{n}\Var_1(Y_i\mid\calH_{i-1}).
\]
By the variance bound in \eqref{eq:fb-hyp} and then
\eqref{eq:fb-planned},
$\Var_1(Y_i\mid\calH_{i-1})\le C_VB_{\mathrm{inc}}$ holds a.s., and thus:
\begin{equation}\label{eq:fb-pqv}
    V_{N^{\mathrm{plan}}_{1,k}}
    =\sum_{i=1}^{N^{\mathrm{plan}}_{1,k}}
       \Var_1(Y_i\mid\calH_{i-1})
    \le \sum_{i=1}^{N^{\mathrm{plan}}_{1,k}}C_V B_{\mathrm{inc}}
    = C_V B_{\mathrm{inc}}\,N^{\mathrm{plan}}_{1,k}
    \le C_1 C_V B_{\mathrm{inc}}\,L_k .
\end{equation}

\emph{(iii) Reduction to a martingale lower tail.}  On the event
$E_{\fb}\cap\left\{W_{1,k}\le
\tfrac{\Delta_k}{4B_{\mathrm{inc}}}\right\}$ we have
$S_{N^{\mathrm{plan}}_{1,k}}<a_k$ by
\eqref{eq:fb-below} and
$\sum_{i\le N^{\mathrm{plan}}_{1,k}}
  \E^{\pi_k}_1[Y_i\mid\calH_{i-1}]
\ge a_k+\tfrac{3\Delta_k}4$ by
\eqref{eq:fb-drift-lb}, hence
\[
    M_{N^{\mathrm{plan}}_{1,k}}
    =S_{N^{\mathrm{plan}}_{1,k}}
     -\sum_{i=1}^{N^{\mathrm{plan}}_{1,k}}
        \E^{\pi_k}_1[Y_i\mid\calH_{i-1}]
    < a_k-\Bigl(a_k+\tfrac{3\Delta_k}4\Bigr)=-\tfrac{3\Delta_k}4 .
\]
(On $E_{\fb}$ the pool is exhausted without stopping, so
$N^{\mathrm{plan}}_{1,k}\le N_{\main,k}$ and the martingale is defined
through item $N^{\mathrm{plan}}_{1,k}$.) Therefore
$E_{\fb}\cap\left\{W_{1,k}\le
\tfrac{\Delta_k}{4B_{\mathrm{inc}}}\right\}
\subseteq\{M_{N^{\mathrm{plan}}_{1,k}}\le-\tfrac{3\Delta_k}4\}$.

\emph{(iv) Freedman's inequality.}  We use the following form
\citep{freedman1975}: if $(M_n)$ is a martingale with $M_0=0$ whose increments
satisfy $M_i-M_{i-1}\le R$ for all $i$, and
$V_n:=\sum_{i=1}^{n}\E[(M_i-M_{i-1})^2\mid\calH_{i-1}]$ is its accumulated
predictable variance, then for all $\lambda>0$ and $v>0$,
\[
    \Pp\bigl(\exists\,n:\ M_n\ge\lambda\ \text{and}\ V_n\le v\bigr)
    \le\exp\!\left\{-\frac{\lambda^2}{2(v+R\lambda)}\right\}.
\]
Apply this to the martingale $-M_n$, whose increments
$\E^{\pi_k}_1[Y_i\mid\calH_{i-1}]-Y_i\le2B_{\mathrm{inc}}$
are bounded above by $2B_{\mathrm{inc}}$
(by \eqref{eq:fb-hyp}).
Its accumulated predictable variance also equals $V_n$; since
$V_{N^{\mathrm{plan}}_{1,k}}\le C_1 C_V B_{\mathrm{inc}}\,L_k$ holds surely by
\eqref{eq:fb-pqv} (not merely with high probability), the event
$\{V_{N^{\mathrm{plan}}_{1,k}}\le
  C_1 C_V B_{\mathrm{inc}}\,L_k\}$ has probability
$1$, so intersecting with it does not change any probability below, and the
single time point $n=N^{\mathrm{plan}}_{1,k}$ is in particular one
instance of the "$\exists
n$" event.  With $\lambda=\tfrac{3\Delta_k}4$ and $v=C_VB_{\mathrm{inc}}N_{\main,k}$,
\begin{equation}\label{eq:mart-tail}
    \Pp^{\pi_k}_1\bigl(E_{\fb}\cap
       \{W_{1,k}\le \tfrac{\Delta_k}{4B_{\mathrm{inc}}}\}\bigr)
    \le\Pp^{\pi_k}_1\Bigl(
       M_{N^{\mathrm{plan}}_{1,k}}\le-\tfrac{3\Delta_k}4\Bigr)
    \le\exp\!\left\{-\frac{(3\Delta_k/4)^2}
        {2\bigl(C_1 C_V B_{\mathrm{inc}}\,L_k
        +2B_{\mathrm{inc}}\cdot\tfrac{3\Delta_k}4\bigr)}\right\}.
\end{equation}
The denominator equals
$2C_1 C_V B_{\mathrm{inc}}L_k+3B_{\mathrm{inc}}\Delta_k
\le2C_1C_VB_{\mathrm{inc}}L_k+3B_{\mathrm{inc}}L_k$
(using $\Delta_k\le L_k$), which is at most
$C_2(C_VB_{\mathrm{inc}}+B_{\mathrm{inc}}^2)L_k$ for a
primitive constant $C_2$; the numerator is $\tfrac9{16}\Delta_k^2$.  Hence
\eqref{eq:mart-tail} has the stated form $C\exp\{-c_{\fb}\Delta_k^2/L_k\}$ with
$c_{\fb}=c/(B_{\mathrm{inc}}^2+C_VB_{\mathrm{inc}})$.  

\emph{Step 4: the complementary event.}  By Markov's inequality and the
occupation bound of Lemma~\ref{lem:direction-tracking},
\[
    \Pp^{\pi_k}_1\Bigl(W_{1,k}>
       \tfrac{\Delta_k}{4B_{\mathrm{inc}}}\Bigr)
    \le \frac{4B_{\mathrm{inc}}\,\E^{\pi_k}_1[W_{1,k}]}{\Delta_k}
    \le \frac{4B_{\mathrm{inc}}}{\Delta_k}\cdot C_W(z_k+1)
    = C\,\frac{z_k+1}{\Delta_k}.
\]

\emph{Conclusion.}  Splitting $E_{\fb}$ according to
\eqref{eq:small-w-event},
\[
    \Pp^{\pi_k}_1(E_{\fb})
    \le\Pp^{\pi_k}_1\bigl(E_{\fb}\cap
       \{W_{1,k}\le\tfrac{\Delta_k}{4B_{\mathrm{inc}}}\}\bigr)
      +\Pp^{\pi_k}_1\Bigl(W_{1,k}>
        \tfrac{\Delta_k}{4B_{\mathrm{inc}}}\Bigr),
\]
and Steps~3--4 bound the two terms by the two terms of
\eqref{eq:fallback-prob-bound}.  This proves the bound under $H_1$, and the
$H_0$ case follows symmetrically.
\end{proof}

\subsection{Proof of Lemma~\ref{lem:fallback-cost}}\label{appendix-sec:fallback-cost}
% \begin{lemma}[Fallback expected cost]\label{lem:fallback-cost}
% There is a constant $C_F<\infty$ such that, for $h=0,1$,
% \begin{equation}\label{eq:fallback-cost-bound}
%     \E^{\pi_k}_h[C^{\fb}_k]
%     \le C_F L_k\left[
%         \frac{z_k+1}{\Delta_k}
%         +\exp\left\{-c_{\fb}\frac{\Delta_k^2}{L_k}\right\}
%     \right],
% \end{equation}
% where $C^{\fb}_k$ is the fallback completion cost: $C^{\fb}_k=0$ on
% $E_{\fb}^c$ (a boundary is crossed, so no fallback is executed), and on
% $E_{\fb}$, $C^{\fb}_k$ is the realized cost of applying the pre-committed
% fallback test (as in \eqref{eq:precommit-human-cost}--\eqref{eq:precommit-AI-cost}).
% \end{lemma}
\begin{proof}{Proof of Lemma~\ref{lem:fallback-cost}}
Fix a sufficiently large $k$. The human fallback completion cost is at most $\cH N_{\fixed,\Hum}(\alpha_{2,k}, \beta_{2,k})=O(L_k)$ from
Lemma~\ref{lem:safe-floor-first-order}. If
the AI fallback is unavailable, it is never selected.  If it is selected, then
$N_{\fixed, \AI}(\alpha_{2,k},\beta_{2,k})\le N_{\main,k}=O(L_k)$ by the definition of availability in the
pre-commitment rule, so its completion cost is also $O(L_k)$.  Hence the
pre-committed fallback cost is deterministically at most $C L_k$ for a
primitive constant $C$, on both the human and AI branches.  Since
$C^{\fb}_k=0$ on $E_{\fb}^c$ and $C^{\fb}_k\le CL_k$ on $E_{\fb}$, we have the
pointwise bound $C^{\fb}_k\le CL_k\,\ind_{E_{\fb}}$; taking
$\E^{\pi_k}_h[\cdot]$ of both sides,
\[
    \E^{\pi_k}_h[C^{\fb}_k]\le CL_k\,\Pp^{\pi_k}_h(E_{\fb}),
\]
and substituting the probability estimate of Lemma~\ref{lem:fallback-low}
proves \eqref{eq:fallback-cost-bound} with $C_F:=C\,C_{\fb}$.
\end{proof}

\subsection{Proof of Corollary~\ref{cor:rates-1}}\label{appendix-sec:corollary-proof}
\begin{proof}{Proof of Corollary~\ref{cor:rates-1}}
To prove Corollary~\ref{cor:rates-1} we first show that, for every
$\Delta_k$ with $\Delta_k\to\infty$ and $\Delta_k=o(L_k)$, there are
primitive constants $C_0<\infty$, $c_0>0$, and $c_G>0$ such that,  for all large $k$, the fallback probability satisfies
\begin{equation}\label{eq:reserve-fallback-prob}
    \Pp^{\pi_k}_h(E_{\fb})
    \le C_0\exp\left\{-c_0\,\frac{(\Delta_k+c_GG_k^+)^2}{L_k}\right\},
\end{equation}
and consequently
\begin{equation}\label{eq:reserve-refined-rate}
        \frac{\max_h\E^{\pi_k}_h[C^{\pi_k}]}{\LB_k}
        \le 1+O\left(
        \frac{\Delta_k}{L_k}
        +\exp\left\{-c_0\frac{(\Delta_k+c_GG_k^+)^2}{L_k}\right\}
        +\frac{\log L_k}{L_k}
        \right).
    \end{equation}
Then, parts (i) and (ii) can be obtained by directly plugging in the values $\Delta_k$. 

For
$h\in\{0,1\}$, let $\rho_h^*$ be the value of
\begin{equation}\label{eq:rho-star-def}
\begin{aligned}
    \rho_h^*:=\min_{a,b,e\ge0}\ \ &\cH a+\cAI b+\cH e\\
    \text{s.t. }&aJ_X^{(h)}+bI_R^{(h)}+b\Psi_h(e/b)\ge1,\\
        &0\le e\le b,
\end{aligned}
\end{equation}
where $\Psi_h(e/b = 0)$ when $b = e=0$. Recall
$\mathcal M_h$ is the set of minimizers of \eqref{eq:rho-star-def}. Define the unbuffered sensing item scale
\begin{equation}\label{eq:Nsense-def}
    N_{\sense,k}:=\max\{\nu_1^* a_k,\nu_0^* b_k\}.
\end{equation}

We also use the notations assembled in
the proof of Lemma~\ref{lem:fallback-low}: the item-level increments $Y_i$
with conditional expectations
$\E^{\pi_k}_1[Y_i\mid\calH_{i-1}]$ for every $i\le N_{\main,k}$; thus
\eqref{eq:fb-hyp} holds. The rounded direction-$1$ block size
\[
N^{\mathrm{plan}}_{1,k}:=
\lceil n^*_{\Hum,1,k}\rceil+\lceil n^*_{\AI,1,k}\rceil
\]
satisfies $N^{\mathrm{plan}}_{1,k}\le N_{\main,k}$ and
$N^{\mathrm{plan}}_{1,k}\E^{\pi_k}_1[Z_{1,k}]
\ge T_{1,k}=a_k+\Delta_k$
(\eqref{eq:fb-planned} and
\eqref{eq:B-fits-pool}); the occupation count $W_{1,k}$ of
Lemma~\ref{lem:direction-tracking}; and the martingale
$M_n:=\sum_{i=1}^{n}
\bigl(Y_i-\E^{\pi_k}_1[Y_i\mid\calH_{i-1}]\bigr)$, whose increments are
bounded by $2B_{\mathrm{inc}}$ and whose
accumulated predictable variance satisfies
$V_n\le C_VB_{\mathrm{inc}}\,n
\le C_VB_{\mathrm{inc}}\,N_{\main,k}$ surely. Finally,
Lemma~\ref{lem:variance-drift}(iii) gives a
primitive constant $\mu_0>0$ such that
\begin{equation}\label{eq:reserve-mu0}
    \E^{\pi_k}_1[Y_i\mid\calH_{i-1}]\ \ge\ \mu_0
    \qquad\text{for every }i\le N_{\main,k}\text{ and all large }k.
\end{equation}

\emph{Step 1: reserve inventory.}  We claim that, beyond the planned block,
the pool contains at least
\begin{equation}\label{eq:reserve-inventory}
    R_k:=N_{\main,k}-N^{\mathrm{plan}}_{1,k}
    \ \ge\ \pos{G_k^+-\nu_1^*\Delta_k-2}
\end{equation}
items.  First, the program $\Gamma_1(T,N)$ is positively homogeneous in
$(T;n_{\Hum},n_{\AI},n_{\esc})$.  Consequently, when
$\overline N_{\main,k}\ge\nu_1^*T_{1,k}$, the minimizers in $\mathcal M_h$ scaled by $T_{1,k}$ is feasible for
$\Gamma_1(T_{1,k},\overline N_{\main,k})$, attains the unconstrained optimal
cost $\rho_1^*T_{1,k}$, and consumes $\nu_1^*T_{1,k}$ active items, so the
least-consumption selection satisfies
$n^*_{\Hum,1,k}+n^*_{\AI,1,k}\le\nu_1^*T_{1,k}$; when
$\overline N_{\main,k}<\nu_1^*T_{1,k}$, the same inequality holds trivially
because the capacity constraint forces
$n^*_{\Hum,1,k}+n^*_{\AI,1,k}\le\overline N_{\main,k}$.  Hence
\[
    N^{\mathrm{plan}}_{1,k}
    \le n^*_{\Hum,1,k}+n^*_{\AI,1,k}+2
    \le\nu_1^*(a_k+\Delta_k)+2 .
\]
Second, by \eqref{eq:Nbar-main-choice}--\eqref{eq:Nmain-choice} the pool size
dominates the fallback floor, which dominates the unsplit floor because
$N_{\fixed,\Hum}$ is nonincreasing in both budgets
(Lemma~\ref{lem:safe-floor-first-order}); by \eqref{eq:floor-gap} and
\eqref{eq:Nsense-def},
\[
    N_{\main,k}\ \ge\ N_{\fixed,\Hum}(\alpha_{2,k},\beta_{2,k})
    \ \ge\ N_{\fixed,\Hum}(\alpha_k,\beta_k)
    \ =\ N_{\sense,k}+G_k
    \ \ge\ \nu_1^*a_k+G_k .
\]
Subtracting the two displays gives $R_k\ge G_k-\nu_1^*\Delta_k-2$; since also
$R_k\ge0$ and $G_k\le G_k^+$, the claim \eqref{eq:reserve-inventory} follows.

\emph{Step 2: drift accumulated over the whole pool.} On $E_{\fb}$ no boundary is
crossed, so all $N_{\main,k}$ items are sensed and $S_{N_{\main,k}}<a_k$.
Split the predictable sum at $N^{\mathrm{plan}}_{1,k}$. Among the first
$N^{\mathrm{plan}}_{1,k}$ items,
those with $S_{i-1}>z_k$ use the direction-$1$ rule and contribute
$\E^{\pi_k}_1[Y_i\mid\calH_{i-1}]=\E^{\pi_k}_1[Z_{1,k}]$; the remaining
ones number at most $W_{1,k}$ and contribute
$\E^{\pi_k}_1[Y_i\mid\calH_{i-1}]\ge0$, a loss of at most
$\E^{\pi_k}_1[Z_{1,k}]\le B_{\mathrm{inc}}$ each.  By
\eqref{eq:fb-planned},
\[
    \sum_{i=1}^{N^{\mathrm{plan}}_{1,k}}
       \E^{\pi_k}_1[Y_i\mid\calH_{i-1}]
    \ \ge\ N^{\mathrm{plan}}_{1,k}\E^{\pi_k}_1[Z_{1,k}]
       -B_{\mathrm{inc}}\,W_{1,k}
    \ \ge\ a_k+\Delta_k-B_{\mathrm{inc}}\,W_{1,k}.
\]
Every item beyond the block contributes
$\E^{\pi_k}_1[Y_i\mid\calH_{i-1}]\ge\mu_0$ by
\eqref{eq:reserve-mu0}, regardless of which rule the sign test selects. Define the enlarged
buffer $\widetilde\Delta_k:=\Delta_k+\mu_0R_k$.  
Therefore
\begin{equation}\label{eq:reserve-drift}
    \sum_{i=1}^{N_{\main,k}}\E^{\pi_k}_1[Y_i\mid\calH_{i-1}]
    \ \ge\ a_k+\Delta_k+\mu_0R_k-B_{\mathrm{inc}}\,W_{1,k}
    \ =\ a_k+\widetilde\Delta_k-B_{\mathrm{inc}}\,W_{1,k}.
\end{equation}

\emph{Step 3: two martingale tails.}  Set
$t_k:=\widetilde\Delta_k/(4B_{\mathrm{inc}})$ and
split $E_{\fb}$ along $\{W_{1,k}\le t_k\}$.

(a) On $E_{\fb}\cap\{W_{1,k}\le t_k\}$, combining $S_{N_{\main,k}}<a_k$ with
\eqref{eq:reserve-drift},
\[
    M_{N_{\main,k}}
    =S_{N_{\main,k}}
     -\sum_{i=1}^{N_{\main,k}}\E^{\pi_k}_1[Y_i\mid\calH_{i-1}]
    \ <\ a_k-\bigl(a_k+\widetilde\Delta_k-B_{\mathrm{inc}}\,t_k\bigr)
    \ =\ -\tfrac{3}{4}\widetilde\Delta_k .
\]
Freedman's inequality \citep{freedman1975}, in the form quoted before
\eqref{eq:mart-tail} and applied to $-M_{N_{\main, k}}$ with increment
bound $2B_{\mathrm{inc}}$ and
predictable variance at most $C_VB_{\mathrm{inc}}N_{\main,k}$ surely, gives
\[
    \Pp^{\pi_k}_1\bigl(E_{\fb}\cap\{W_{1,k}\le t_k\}\bigr)
    \le\exp\left\{-\frac{(3\widetilde\Delta_k/4)^2}
        {2\bigl(C_VB_{\mathrm{inc}}\,N_{\main,k}
        +2B_{\mathrm{inc}}\cdot\tfrac34\widetilde\Delta_k\bigr)}\right\}
    \le\exp\left\{-c_a\frac{\widetilde\Delta_k^2}{L_k}\right\},
\]
where the last step uses 
$\widetilde\Delta_k\le\Delta_k+\mu_0N_{\main,k}=O(L_k)$
(recall $\mu_0\le B_{\mathrm{inc}}$),
so that the denominator is at most a primitive multiple of $L_k$.

(b) For the occupation event we replace the Markov step of
Lemma~\ref{lem:fallback-low} by a second application of the same tail bound.
Suppose $W_{1,k}>t_k$ and let $n$ be the index of the
$\lceil t_k\rceil$-th sensed item with $S_{i-1}\le z_k$; then
$n\ge\lceil t_k\rceil$ and,
\[
    M_{n-1}
    =S_{n-1}
     -\sum_{i=1}^{n-1}\E^{\pi_k}_1[Y_i\mid\calH_{i-1}]
    \ \le\ z_k-\mu_0(\lceil t_k\rceil-1)
    \ \le\ -\tfrac{\mu_0}{2}\,t_k
\]
for all large $k$, because
$t_k\ge\Delta_k/(4B_{\mathrm{inc}})\to\infty$ and $z_k=1$. Hence
writing $n':=n-1$ gives
\[
    \{W_{1,k}>t_k\}\subseteq
    \{\exists\,n'\le N_{\main,k}-1:\,-M_{n'}\ge\mu_0t_k/2\}.
\]
Since $V_{n'}\le C_VB_{\mathrm{inc}}\,N_{\main,k}$ surely for every
$n'\le N_{\main,k}$, the same maximal
Freedman inequality yields
\[
    \Pp^{\pi_k}_1\bigl(W_{1,k}>t_k\bigr)
    \le\exp\left\{-\frac{(\mu_0t_k/2)^2}
        {2\bigl(C_VB_{\mathrm{inc}}N_{\main,k}+2B_{\mathrm{inc}}\cdot\mu_0t_k/2\bigr)}\right\}
    \le\exp\left\{-c_b\frac{\widetilde\Delta_k^2}{L_k}\right\},
\]
using $t_k=\widetilde\Delta_k/(4B_{\mathrm{inc}})$ and
$C_VB_{\mathrm{inc}}N_{\main,k}\le C_1L_k$ again. Adding
(a) and (b),
$\Pp^{\pi_k}_1(E_{\fb})\le2\exp\{-c'\widetilde\Delta_k^2/L_k\}$ with
$c':=\min\{c_a,c_b\}$.

It remains to replace $\widetilde\Delta_k =\Delta_k + \mu_0 R_k$ by $\Delta_k+c_GG_k^+$.  Set
$\bar\nu:=\max\{\nu_1^*,\nu_0^*\}$ and
$c_G:=\min\{\mu_0/2,\,1/(2\bar\nu),\,1/2\}$.  If $G_k^+\ge2(\bar\nu\Delta_k+2)$,
then \eqref{eq:reserve-inventory} gives $R_k\ge G_k^+/2$, so
$\widetilde\Delta_k\ge\Delta_k+(\mu_0/2)G_k^+\ge\Delta_k+c_GG_k^+$.
Otherwise $G_k^+<2\bar\nu\Delta_k+4$, so
$\Delta_k+c_GG_k^+\le(1+2c_G\bar\nu)\Delta_k+4c_G\le2\Delta_k+2
\le2\widetilde\Delta_k+2$.  In both cases
$\widetilde\Delta_k\ge\tfrac12(\Delta_k+c_GG_k^+)-1$, and since
$\Delta_k\to\infty$, for all large $k$ we have
$\widetilde\Delta_k^2\ge\tfrac18(\Delta_k+c_GG_k^+)^2$.  This proves
\eqref{eq:reserve-fallback-prob} with $c_0:=c'/8$ and $C_0:=2$.

\emph{Step 4: assembly, and parts (i)--(ii).}  Write
$C^{\pi_k}=\cdata N_{\main,k}+C^{\main}_k+C^{\fb}_k$ as in the proof of
Theorem~\ref{thm:first-order}.  Costs 1 and 2 in the proof of
Lemma~\ref{lem:main-cost} bound the boundary-crossing runs by
$\Gamma_h(T_{h,k},\overline N_{\main,k})+O(1)
+c_{\max}C_W(z_k+1)$, where the occupation term is $O(1)$ because
$z_k=1$.  
% Cost 3 in that proof bounds the fallback-scan cost by $c_{\max}N_{\main,k}\Pp^{\pi_k}_h(E_{\fb})$. 
The proof of
Lemma~\ref{lem:fallback-cost} gives
$\E^{\pi_k}_h[C^{\fb}_k]\le CL_k\Pp^{\pi_k}_h(E_{\fb})$.  Since
$N_{\main,k}=O(L_k)$, collecting terms, taking the maximum over $h$, and
applying Lemma~\ref{lem:LB-perturb} with $f_{\fb,k}=1/2$ and
\eqref{eq:reserve-fallback-prob},
\[
    \max_h\E^{\pi_k}_h[C^{\pi_k}]
    \le\LB_k+C_P\Delta_k+O(\log L_k)
    +CL_k\exp\left\{-c_0\frac{(\Delta_k+c_GG_k^+)^2}{L_k}\right\}.
\]
Dividing by $\LB_k=\Theta(L_k)$ yields \eqref{eq:reserve-refined-rate}.

For part (i), take $\Delta_k=\kappa\sqrt{L_k\log L_k}$.  Using only
$(\Delta_k+c_GG_k^+)^2\ge\Delta_k^2=\kappa^2L_k\log L_k$, the exponential term in
\eqref{eq:reserve-refined-rate} is at most $L_k^{-c_0\kappa^2}=o(L_k^{-1/2})$,
because $c_0\kappa^2>1/2$ by the choice $\kappa>1/\sqrt{2c_0}$, while
$\Delta_k/L_k=\kappa\sqrt{\log L_k/L_k}=\widetilde O(L_k^{-1/2})$ dominates
$\log L_k/L_k$.  The ratio is therefore $1+\widetilde O(L_k^{-1/2})$, with no
condition on $G_k$.

For part (ii), suppose $G_k^+\ge C\sqrt{L_k\log L_k}$ and take $\Delta_k$
polylogarithmic with $\Delta_k\to\infty$.  Then
$(\Delta_k+c_GG_k^+)^2\ge c_G^2C^2L_k\log L_k$, so the exponential term in
\eqref{eq:reserve-refined-rate} is at most $L_k^{-c_0c_G^2C^2}$, which is
$O(L_k^{-x})$ for any preassigned fixed $x$ once the primitive constant $C$
is large enough, while $\Delta_k/L_k$ and $\log L_k/L_k$ are
$\widetilde O(1/L_k)$.  The ratio is therefore $1+\widetilde O(1/L_k)$,
sharpening part (i).
\end{proof}

\section{Additional Materials for Section~\ref{sec:unknown-ai-pilot}}
\subsection{Pilot Concentration}\label{appendix-sec:unknown-ai-pilot}

\begin{lemma}[Pilot concentration]
\label{lem:pilot-concentration}
Under Assumption~\ref{ass:pilot-sample}, $\Pp_h(\calE_{\pilot,m})\ge1-\delta_m$ for $h\in\{0,1\}$.
\end{lemma}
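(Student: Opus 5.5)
The plan is to condition on the stratum sizes $(M_0,M_1)$ and apply Hoeffding's inequality to each empirical report frequency, then absorb the smoothing bias into $r_m$. The key structural fact, from Assumption~\ref{ass:pilot-sample}, is that the stopping rule defining $M_{\tot}$ depends only on the label sequence $(X^{\pilot}_i)_{i\ge1}$. Conditional on the labels, the reports are independent, with $R^{\pilot}_i\sim f_{X^{\pilot}_i}$. Hence, given $\sigma(X^{\pilot}_i:i\ge1)$, hence given $(M_0,M_1)$, the $M_x$ reports in stratum $x$ are i.i.d.\ with law $f_x$, and they do not depend on the hypothesis $h$. It is cleanest to realize this through an independent array: draw $(\tilde R_{x,j})_{j\ge1}$ i.i.d.\ $f_x$ for each $x$, independently of the labels, and let the $j$-th label-$x$ item take report $\tilde R_{x,j}$. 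This construction has the correct joint law. Under it, $R^{\pilot}_{x,j}=\tilde R_{x,j}$ and $M_x\ge m$ almost surely.

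\emph{Step 1 (sampling error).} Fix $x$ and $r$. Conditional on $M_x=n\ge m$, Hoeffding's inequality gives
\[
\Pp\bigl(|M_x(r)/n-f_x(r)|>t_m\bigr)\le 2e^{-2nt_m^2}\le 2e^{-2mt_m^2}=\delta_m/(2|\calR|),
\]
using the definition of $t_m$. Averaging over $M_x$ removes the conditioning. A union bound over the $2|\calR|$ pairs $(x,r)$ then gives total failure probability at most $\delta_m$. This holds under both $h\in\{0,1\}$, since $p_h$ only affects the law of $M_x$, over which we average.

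\emph{Step 2 (smoothing bias).} Deterministically,
\[
\Bigl|\hat f_x(r)-\tfrac{M_x(r)}{M_x}\Bigr|=\frac{\lambda_m\,|M_x-|\calR|M_x(r)|}{M_x(M_x+|\calR|\lambda_m)}\le\frac{|\calR|\lambda_m}{M_x}\le\frac{|\calR|\lambda_m}{m},
\]
where the middle inequality uses $|M_x-|\calR|M_x(r)|\le |\calR|M_x$. By the triangle inequality, on the Step~1 good event we get $|\hat f_x(r)-f_x(r)|\le t_m+|\calR|\lambda_m/m$. Since both $\hat f_x(r)$ and $f_x(r)$ lie in $[0,1]$, we also have $|\hat f_x(r)-f_x(r)|\le1$, so the deviation is at most $r_m=\min\{1,\,t_m+|\calR|\lambda_m/m\}$.

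\emph{Main obstacle.} The only delicate point is justifying Hoeffding's inequality with the random sample size $M_x$. The coupling above settles it, because $M_x$ is independent of the array $(\tilde R_{x,j})_{j\ge1}$. A conditional argument given the whole label sequence would also work, using the same independence.
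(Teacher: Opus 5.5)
Your proof is correct and follows the paper's argument essentially step for step. You apply Hoeffding conditionally at the random stratum size $M_x\ge m$, take a union bound over the $2|\calR|$ pairs $(x,r)$, bound the smoothing displacement deterministically by $|\calR|\lambda_m/m$, and cap at $1$. The only differences are cosmetic. You justify the conditional i.i.d. structure through an explicit coupling with an independent report array, where the paper simply asserts it. You also compute the smoothing bias directly rather than through the paper's convex-combination form $\hat f_x(r)=(1-w_x)M_x(r)/M_x+w_x/|\calR|$, which is the same algebra.
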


\begin{proof}{Proof of Lemma~\ref{lem:pilot-concentration}}
The proof proceeds in three steps: a Hoeffding bound on the empirical
frequencies $M_x(r)/M_x$, a deterministic bound on
$|\hat f_x(r)-M_x(r)/M_x|$, and a combination of the two that also
yields the rate.

\emph{Step 1: concentration of the empirical frequencies.}
Fix $h\in\{0, 1\}$, $x\in\{0,1\}$ and $r\in\calR$. Under
Assumption~\ref{ass:pilot-sample}, conditional on $M_x$ the indicators
$\ind\{R^{\pilot}_{x,j}=r\}$, $1\le j\le M_x$, are i.i.d.\
$\Bern(f_x(r))$, and $M_x(r)$ is their sum. Then
$\E[M_x(r)/M_x|M_x] = f_x(r)$. Hoeffding's inequality for sums of
independent $[0,1]$-valued random variables gives, for every $t>0$,
\[
    \Pp_h\!\left(\left|\frac{M_x(r)}{M_x}-f_x(r)\right|>t\Bigg|M_x\right)
    \le 2e^{-2M_xt^2}\leq 2e^{-2mt^2} \text{ almost surely}.
\]
Then using the law of iterated expectation we have
\[
    \Pp_h\!\left(\left|\frac{M_x(r)}{M_x}-f_x(r)\right|>t\right)
    \le 2e^{-2mt^2}.
\]
Evaluating the bound at $t=t_m$, we obtain
\[
    \Pp_h\!\left(\left|\frac{M_x(r)}{M_x}-f_x(r)\right|>t_m\right)
    \le 2e^{-2mt_m^2}
    =2\exp\!\left(-\log\frac{4|\calR|}{\delta_m}\right)
    =\frac{\delta_m}{2|\calR|}.
\]
Since there are exactly $2|\calR|$ pairs $(x,r)$, a union bound gives
\[
    \Pp_h(\calE_{\mathrm{emp}})\ge 1-2|\calR|\cdot\frac{\delta_m}{2|\calR|}
    =1-\delta_m,
    \qquad\text{where}\quad
    \calE_{\mathrm{emp}}
    :=\left\{\max_{x\in\{0,1\},\,r\in\calR}
      \left|\frac{M_x(r)}{M_x}-f_x(r)\right|\le t_m\right\}.
\]

\emph{Step 2: upper bound on $|\hat f_x(r)-M_x(r)/M_x|$.}
Notice that
\[
    \hat f_x(r)
    =\frac{M_x(r)+\lambda_m}{M_x+|\calR|\lambda_m}
    =(1-w_x)\,\frac{M_x(r)}{M_x}+w_x\cdot\frac{1}{|\calR|},
    \qquad
    w_x:=\frac{|\calR|\lambda_m}{M_x+|\calR|\lambda_m}\in(0,1),
\]
so the smoothed estimate \eqref{eq:pilot-estimator} is a convex
combination of the empirical frequency and the uniform mass function.
Consequently, for every $x$ and $r$,
\[
    \left|\hat f_x(r)-\frac{M_x(r)}{M_x}\right|
    =w_x\left|\frac{1}{|\calR|}-\frac{M_x(r)}{M_x}\right|
    \le w_x
    \le\frac{|\calR|\lambda_m}{M_x}
    \le\frac{|\calR|\lambda_m}{m},
\]
where the first inequality holds because $1/|\calR|$ and $M_x(r)/M_x$
both lie in $[0,1]$, and the last two inequalities use
$M_x+|\calR|\lambda_m\ge M_x\ge m$.

\emph{Step 3: combination and rate.}
On $\calE_{\mathrm{emp}}$ the triangle inequality yields, for all $x$
and $r$,
\[
    |\hat f_x(r)-f_x(r)|
    \le\left|\hat f_x(r)-\frac{M_x(r)}{M_x}\right|
      +\left|\frac{M_x(r)}{M_x}-f_x(r)\right|
    \le t_m+\frac{|\calR|\lambda_m}{m}.
\]
Because $\hat f_x(r)$ and $f_x(r)$ are both probabilities in $[0,1]$,
we always have $|\hat f_x(r)-f_x(r)|\le1$, so on
$\calE_{\mathrm{emp}}$,
\[
    \max_{x\in\{0,1\},\,r\in\calR}|\hat f_x(r)-f_x(r)|
    \le\min\left\{1,\;t_m+\frac{|\calR|\lambda_m}{m}\right\}=r_m.
\]
Hence $\calE_{\mathrm{emp}}\subseteq\calE_{\pilot,m}$ and
$\Pp_h(\calE_{\pilot,m})\ge\Pp_h(\calE_{\mathrm{emp}})\ge1-\delta_m$.
\end{proof}

\section{Additional Materials for Section~\ref{sec:unknown-ai-guarded-policy} on Feasibility}\label{appendix-sec:pilot-feasibility-proof}
Throughout this section, channel-derived quantities without hats are
evaluated at the true channel $f=(f_0,f_1)$, whereas hatted quantities
are evaluated at the plug-in channel $\hat f=(\hat f_0,\hat f_1)$.
Conditional on $\calD_m$, let $\E^{\hat\pi_{m,k}}_h$ denote expectation
under $H_h$ for the process induced by the plug-in policy
$\hat\pi_{m,k}$ and the true report channel $f$, and let
$\E^{\hat\pi_{m,k}}_{h,\hat f}$ denote the corresponding expectation
when $f$ is replaced by $\hat f$, with the label law
$X\sim\Bern(p_h)$ unchanged.

We index the main-stage
likelihood-ratio updates by $t=1,2,\ldots$ and recall $\calH_t:=\sigma\bigl(A_1,O_1,\ldots,A_t,O_t\bigr)$ (with $\mathcal H_0$ the trivial $\sigma$-algebra $\{\emptyset,\Omega\}$) is the filtration generated by the actions and 
observations up to and including update $t$. Let
$\calH^m_t:=\sigma(\calD_m)\vee\calH_t$ be its pilot-augmented
version. Also let
$\xi_t$ be the exact log-likelihood increment \eqref{eq:xi-increment}
of the observation revealed by $\hat\pi_{m,k}$ at update $t$, and
$S_t:=\sum_{s\le t}\xi_s$. Since $f_0, f_1$ are unknown, $S_t$ is also unobservable by the policy $\hat\pi_{m,k}$. Let 
\begin{equation}\label{eq:pilot-xi-increment}
    \hat\xi_t:=
    \begin{cases}
        \ell_X(X_i),
            & A_t=\Hum(i)\text{ with item $i$ not previously AI-queried},\\
        \hat\ell_R(R_i),
            & A_t=\AI(i),\\
        \hat\ell_H(X_i,R_i),
            & A_t=\Hum(i)\text{ with report $R_i$ already observed}.
    \end{cases}
\end{equation}
Then $\hat S_t := \sum_{s\leq t}\hat\xi_s$ is the statistic used in $\hat\pi_{m,k}$. 

We first present an auxiliary Lemma~\ref{lem:pilot-perturbation-1} in Appendix~\ref{sec-appendix:likelihood-ratio-bound}, and then in Appendix~\ref{appendix-sec:pilot-validity-proof} we present the proofs of Theorem~\ref{thm:pilot-validity} and Corollary~\ref{cor:pilot-unconditional-validity}.

\subsection{Likelihood Ratio Bound}\label{sec-appendix:likelihood-ratio-bound}

Define 
\begin{equation}\label{Appendix-eq:pilot-weight-intervals}
    \underline w_m(r):=
       \frac{\underline f_{1,m}(r)}
            {\underline f_{1,m}(r)+\overline f_{0,m}(r)},
    \qquad
    \overline w_m(r):=
       \frac{\overline f_{1,m}(r)}
            {\overline f_{1,m}(r)+\underline f_{0,m}(r)},
\end{equation}
where $\underline f_{x,m}(r):=[\hat f_x(r)-r_m]_+$ and $\overline f_{x,m}(r):=\min\{1,\hat f_x(r)+r_m\}$ are the lower and upper bound on $f_x(r)$ on the pilot good event $\calE_{\pilot,m}$. Let 
\begin{equation}\label{Appendix-eq:pilot-phi}
    \phi_{\mathrm{rep}}(w):=
    \log\frac{(1-p_1)(1-w)+p_1w}
                   {(1-p_0)(1-w)+p_0w},\qquad 0\le w\le1.
\end{equation}
Then we have that $\phi_{\mathrm{rep}}(w(r)) = \ell_R(r)$ with $w(r):=\frac{f_1(r)}{f_0(r)+f_1(r)}$, and $\phi_{\mathrm{rep}}(\hat w(r)) = \hat\ell_R(r)$ with $\hat w(r):=\frac{\hat f_1(r)}{\hat f_0(r)+\hat f_1(r)}$ for all $r\in \calR$. Finally we let $\varepsilon_m^{\mathrm{LR}}$ be an upper bound on $|\hat\ell_R(r)-\ell_R(r)|$ on $\calE_{\pilot,m}$:
\begin{equation}\label{Appendix-eq:pilot-eps-LR}
    \varepsilon_m^{\mathrm{LR}}
    :=\max_{r\in\calR}\max\left\{
       \hat\ell_R(r)-\phi_{\mathrm{rep}}(\underline w_m(r)),
       \phi_{\mathrm{rep}}(\overline w_m(r))-\hat\ell_R(r)\right\}.
\end{equation}

\begin{lemma}[Likelihood ratio bound]
\label{lem:pilot-perturbation-1}
\begin{enumerate}[leftmargin=2em,label=(\roman*)]
\item For every $x\in\{0,1\}$ and $r\in\calR$,
\begin{equation}\label{eq:pilot-cancellation}
    \ell_R(r)+\ell_H(x,r)=\ell_X(x),
    \qquad
    \hat\ell_R(r)+\hat\ell_H(x,r)=\ell_X(x).
\end{equation}
In particular, 
$|\ell_X|,|\ell_R|,|\hat\ell_R|\le B_{\ell}$ and
$|\ell_H|,|\hat\ell_H|\le2B_{\ell}$.

\item 
On $\calE_{\pilot,m}$,
\begin{equation}\label{eq:pilot-report-perturbation}
    |\hat\ell_R(r)-\ell_R(r)|\le
       \varepsilon_m^{\mathrm{LR}}\quad(r\in\calR),
\end{equation}
and, on every realized transcript involving at most $N$ items, $|\hat S_t-S_t|\le N\varepsilon_m^{\mathrm{LR}}$
at every update time.

\end{enumerate}
\end{lemma}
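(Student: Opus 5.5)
For part (i) we verify the cancellation identity directly from Bayes' rule, using the definitions \eqref{eq:gh-def}--\eqref{eq:rho-def}. Fix $x$ and $r$. By definition of $\rho_h$ and $q_h$,
\[
  \rho_h(x\mid r)\,g_h(r)=p_h^x(1-p_h)^{1-x}f_x(r),
\]
because $q_h(r)g_h(r)=p_hf_1(r)$ and $(1-q_h(r))g_h(r)=(1-p_h)f_0(r)$. Taking the ratio of this identity for $h=1$ and $h=0$ makes the common factor $f_x(r)$ cancel, and logarithms then give $\ell_R(r)+\ell_H(x,r)=\ell_X(x)$. The hatted identity follows verbatim with $\hat f_x$ in place of $f_x$: it uses only the algebraic definitions \eqref{eq:pilot-hat-g-q}, $\hat f_x(r)>0$, and the fact that $p_0,p_1$ are unchanged.

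For the bounds we use $\ell_X(x)\in\{\log(p_1/p_0),\log((1-p_1)/(1-p_0))\}$, so $|\ell_X|\le B_\ell$. For $\ell_R$, write $g_h(r)=(f_0(r)+f_1(r))\bigl[(1-w)(1-p_h)+wp_h\bigr]$ with $w=w(r)\in[0,1]$. This gives $\ell_R(r)=\phi_{\mathrm{rep}}(w(r))$, and the same computation gives $\hat\ell_R(r)=\phi_{\mathrm{rep}}(\hat w(r))$. The function $\phi_{\mathrm{rep}}$ is the log of a ratio of two affine functions of $w$, and such a ratio is monotone in $w$. Hence $\phi_{\mathrm{rep}}$ is monotone on $[0,1]$ with endpoint values $\log((1-p_1)/(1-p_0))$ and $\log(p_1/p_0)$, so $|\phi_{\mathrm{rep}}|\le B_\ell$ on $[0,1]$. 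Finally the cancellation identity gives $|\ell_H|\le|\ell_X|+|\ell_R|\le 2B_\ell$, and likewise for $\hat\ell_H$.

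For part (ii), on $\calE_{\pilot,m}$ we have $\underline f_{x,m}(r)\le f_x(r)\le\overline f_{x,m}(r)$ for both $x$. The map $(a,b)\mapsto a/(a+b)$ is increasing in $a$ and decreasing in $b$, so $\underline w_m(r)\le w(r)\le\overline w_m(r)$. One degenerate case needs care: $\underline f_{1,m}=0$ makes the denominator $\overline f_{0,m}>0$, which is fine. Since $p_1>p_0$, the numerator of $\phi_{\mathrm{rep}}$ has the larger slope, so $\phi_{\mathrm{rep}}$ is nondecreasing (one checks the derivative sign). Therefore $\phi_{\mathrm{rep}}(\underline w_m(r))\le\ell_R(r)\le\phi_{\mathrm{rep}}(\overline w_m(r))$, and subtracting $\hat\ell_R(r)$ gives
\[
  -\bigl(\hat\ell_R(r)-\phi_{\mathrm{rep}}(\underline w_m(r))\bigr)\le\ell_R(r)-\hat\ell_R(r)\le\phi_{\mathrm{rep}}(\overline w_m(r))-\hat\ell_R(r),
\]
which is bounded in absolute value by $\varepsilon_m^{\mathrm{LR}}$ by definition \eqref{Appendix-eq:pilot-eps-LR}.

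For the statistic bound, $\hat S_t-S_t=\sum_{s\le t}(\hat\xi_s-\xi_s)$, and we group the updates by item. A direct-human item contributes $0$, since both statistics use $\ell_X$. An AI-only item contributes $\hat\ell_R-\ell_R$. An AI-then-escalated item, once both updates have occurred, contributes $(\hat\ell_R+\hat\ell_H)-(\ell_R+\ell_H)=\ell_X-\ell_X=0$ by part (i). The main subtlety is the intermediate time after the AI report of an item but before its escalation: there that item contributes only $\hat\ell_R-\ell_R$, which is still bounded by $\varepsilon_m^{\mathrm{LR}}$. So at any update time each of the at most $N$ items contributes at most $\varepsilon_m^{\mathrm{LR}}$ in absolute value, and $|\hat S_t-S_t|\le N\varepsilon_m^{\mathrm{LR}}$.

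The only genuine obstacle is the monotonicity argument in part (ii). It requires the interval endpoints $\underline w_m(r),\overline w_m(r)$ to be well defined and to bracket $w(r)$ even when truncation at $0$ or $1$ is active, together with the sign check on the derivative of $\phi_{\mathrm{rep}}$.
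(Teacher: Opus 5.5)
Your proposal is correct and follows essentially the same route as the paper: the Bayes-rule cancellation of $f_x(r)$ for the identity in part (i), and the representation $\ell_R=\phi_{\mathrm{rep}}(w(r))$ with $\phi_{\mathrm{rep}}$ monotone between $\ell_X(0)$ and $\ell_X(1)$ for the bounds. For part (ii) you likewise use the bracketing $\underline w_m(r)\le w(r)\le\overline w_m(r)$ and the itemwise cancellation, by which only AI-scored, not-yet-escalated items contribute to $\hat S_t-S_t$; the one cosmetic difference is that you need only $\ell_R(r)$, not also $\hat\ell_R(r)$, inside $[\phi_{\mathrm{rep}}(\underline w_m(r)),\phi_{\mathrm{rep}}(\overline w_m(r))]$, whereas the paper places both in that interval.
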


\begin{proof}{Proof of Lemma~\ref{lem:pilot-perturbation-1}}
Throughout, recall that $f_x(r)>0$ for all $x,r$
(Assumption~\ref{ass:score-model}; Section~\ref{sec:unknown-ai} relaxes
only the knowledge of $f_0,f_1$, not their positivity), and that
$\hat f_x(r)>0$ and $\sum_r\hat f_x(r)=1$ by the smoothing in
\eqref{eq:pilot-estimator}. Hence all objects below are well-defined
for both the true channel $f=(f_0,f_1)$ and the plug-in channel
$\hat f=(\hat f_0,\hat f_1)$.

\emph{Part (i).}
We prove the two identities in \eqref{eq:pilot-cancellation} in turn.

For the first identity, fix $x\in\{0,1\}$ and $r\in\calR$. By
\eqref{eq:q-h-def}, $q_h(r)=p_hf_1(r)/g_h(r)$, hence
$1-q_h(r)=(g_h(r)-p_hf_1(r))/g_h(r)=(1-p_h)f_0(r)/g_h(r)$, and the
conditional-label mass function \eqref{eq:rho-def} can be written as
\[
 \rho_h(x\mid r)
 =\frac{p_h^x(1-p_h)^{1-x}f_x(r)}{g_h(r)},
 \qquad h\in\{0,1\}.
\]
In the ratio of the two hypotheses the factor $f_x(r)$ is common to
the numerator and the denominator and cancels:
\[
 \frac{\rho_1(x\mid r)}{\rho_0(x\mid r)}
 =\frac{p_1^x(1-p_1)^{1-x}}{p_0^x(1-p_0)^{1-x}}
  \cdot\frac{g_0(r)}{g_1(r)}
 =e^{\ell_X(x)}\,\frac{g_0(r)}{g_1(r)}.
\]
Taking logarithms and adding $\ell_R(r)=\log(g_1(r)/g_0(r))$ on both
sides gives $\ell_R(r)+\ell_H(x,r)=\ell_X(x)$.

For the second identity, the plug-in objects obey the same algebra:
by \eqref{eq:pilot-hat-g-q}, $\hat q_h(r)=p_h\hat f_1(r)/\hat g_h(r)$,
hence
$1-\hat q_h(r)=(\hat g_h(r)-p_h\hat f_1(r))/\hat g_h(r)
=(1-p_h)\hat f_0(r)/\hat g_h(r)$, and
\eqref{eq:pilot-hat-g-q} gives
\[
 \hat\rho_h(x\mid r)
 =\frac{p_h^x(1-p_h)^{1-x}\hat f_x(r)}{\hat g_h(r)}.
\]
Repeating the computation above with $\hat f_x,\hat g_h,\hat\rho_h$ in
place of $f_x,g_h,\rho_h$ (the factor $\hat f_x(r)$ now cancels in the
ratio) yields $\hat\ell_R(r)+\hat\ell_H(x,r)=\ell_X(x)$. The
right-hand side is the \emph{true} $\ell_X$ because $p_0,p_1$ are
known and enter $\hat g_h,\hat q_h$ unchanged.

For the increment bounds, use the normalized weights defined above. Since
\[
 (1-p_h)(1-w(r))+p_hw(r)
 =\frac{(1-p_h)f_0(r)+p_hf_1(r)}{f_0(r)+f_1(r)}
 =\frac{g_h(r)}{f_0(r)+f_1(r)},
\]
the normalizer $f_0(r)+f_1(r)$ cancels in the ratio defining
$\phi_{\mathrm{rep}}$ in \eqref{Appendix-eq:pilot-phi}, and
$\ell_R(r)=\phi_{\mathrm{rep}}(w(r))$; the same computation with
$\hat f$ in place of $f$ gives
$\hat\ell_R(r)=\phi_{\mathrm{rep}}(\hat w(r))$. Because
$\phi_{\mathrm{rep}}$ is increasing with
$\phi_{\mathrm{rep}}(0)=\log\frac{1-p_1}{1-p_0}=\ell_X(0)$ and
$\phi_{\mathrm{rep}}(1)=\log\frac{p_1}{p_0}=\ell_X(1)$, both $\ell_R$ and
$\hat\ell_R$ take values in
$[\ell_X(0),\ell_X(1)]\subseteq[-B_{\ell},B_{\ell}]$.
Finally, \eqref{eq:pilot-cancellation} and the triangle inequality
give $|\ell_H|=|\ell_X-\ell_R|\le2B_{\ell}$ and likewise
$|\hat\ell_H|\le2B_{\ell}$.

\emph{Part (ii).}
First, direct differentiation gives
\[
 \phi_{\mathrm{rep}}'(w)=
 \frac{p_1-p_0}{\{(1-p_1)(1-w)+p_1w\}
 \{(1-p_0)(1-w)+p_0w\}}>0,
\]
so $\phi_{\mathrm{rep}}$ is increasing.
On $\calE_{\pilot,m}$ we have $|\hat f_x(r)-f_x(r)|\le r_m$, so
$f_x(r)\ge\max\{\hat f_x(r)-r_m,0\}=\underline f_{x,m}(r)$ and
$f_x(r)\le\min\{1,\hat f_x(r)+r_m\}=\overline f_{x,m}(r)$;
deterministically also
$\hat f_x(r)\in[\underline f_{x,m}(r),\overline f_{x,m}(r)]$.
The denominators in
\eqref{Appendix-eq:pilot-weight-intervals} are positive because
$\underline f_{1,m}(r)+\overline f_{0,m}(r)
\ge\overline f_{0,m}(r)\ge\hat f_0(r)>0$ and
$\overline f_{1,m}(r)+\underline f_{0,m}(r)
\ge\overline f_{1,m}(r)\ge\hat f_1(r)>0$. The map
$(a,b)\mapsto a/(a+b)$ is nondecreasing in $a\ge0$ and nonincreasing in
$b\ge0$ on $\{a+b>0\}$, so both
$w(r)$ and $\hat w(r)$ defined above lie in
$[\underline w_m(r),\overline w_m(r)]$. By the representation
established in part (i), $\ell_R(r)=\phi_{\mathrm{rep}}(w(r))$ and
$\hat\ell_R(r)=\phi_{\mathrm{rep}}(\hat w(r))$, and
$\phi_{\mathrm{rep}}$ is increasing, so both
lie in the interval
$[\phi_{\mathrm{rep}}(\underline w_m(r)),
  \phi_{\mathrm{rep}}(\overline w_m(r))]$. Two numbers in
a common interval differ by at most the distance from either of them to
the farther endpoint; hence
\[
 |\hat\ell_R(r)-\ell_R(r)|
 \le\max\left\{
     \hat\ell_R(r)-\phi_{\mathrm{rep}}(\underline w_m(r)),\;
     \phi_{\mathrm{rep}}(\overline w_m(r))-\hat\ell_R(r)\right\}
 \le\varepsilon_m^{\mathrm{LR}},
\]
proving \eqref{eq:pilot-report-perturbation}. 

Moreover, $\hat S_t-S_t
 = \sum_{i:\;\text{AI-scored, label not yet revealed}}
   \bigl(\hat\ell_R(R_i)-\ell_R(R_i)\bigr)$ according to part (i). Therefore we have $|\hat S_t-S_t|\leq N\varepsilon_m^{\mathrm{LR}}$.
\end{proof}

\subsection{Proofs of Theorem~\ref{thm:pilot-validity} and Corollary~\ref{cor:pilot-unconditional-validity}}\label{appendix-sec:pilot-validity-proof}
\begin{proof}{Proof of Theorem~\ref{thm:pilot-validity}}
Fix a realized pilot in $\calE_{\pilot,m}$ and condition on $\calD_m$
throughout; all plug-in quantities and design coefficients
($\hat f_x$, $\hat\ell_R$, $\hat\ell_H$, $\widehat N_{\main,m,k}$,
$\hat\eta^{\Hum}_{J,m,k}$, $\hat\eta^{\esc}_{J,m,k}$, and
$\omega_{m,k}$) are then deterministic.

If the guarded outer problem \eqref{eq:pilot-Nbar-choice} is infeasible, the safe-default policy queries a human on every item. Hence every observed increment equals the exact direct-label increment \(\ell_X\), so \(\hat\xi_t=\xi_t\) and, pathwise, $\hat S_t=S_t$ for every $t$.
Therefore, the boundary-route argument below applies directly with \(\omega_{m,k}=0\). It remains to consider the guarded branch, in which $\omega_{m,k}
=\widehat N_{\main,m,k}\varepsilon_m^{\mathrm{LR}}$.

\emph{Step 1: conditional martingale property of the true likelihood
process.}  The policy selects each sensing action using the plug-in
statistic $\hat S_{t-1}$, and that action depends only on the fixed pilot, the
revealed history, and a fresh hypothesis-independent randomization seed; it
does not depend on the not-yet-revealed observation $O_t$.  Moreover, the main
sample and the policy randomization are independent of $\calD_m$ by
Assumption~\ref{ass:pilot-sample}.  Thus, conditionally on
$(\calH^m_{t-1},A_t)$, the observation $O_t$ retains its true model law, and
the one-step likelihood-ratio identities
\eqref{eq:mart-direct}--\eqref{eq:mart-escalation}, followed by averaging over
$A_t$, give
\[
    \E^{\hat\pi_{m,k}}_0[e^{\xi_t}\mid\calH^m_{t-1}]=1,
    \qquad
    \E^{\hat\pi_{m,k}}_1[e^{-\xi_t}\mid\calH^m_{t-1}]=1.
\]
Consequently, $e^{S_t}$ is a nonnegative $(\calH^m_t)$-martingale under
$\Pp^{\hat\pi_{m,k}}_0(\cdot\mid\calD_m)$, and $e^{-S_t}$ is one under
$\Pp^{\hat\pi_{m,k}}_1(\cdot\mid\calD_m)$, both with initial value $1$.

\emph{Step 2: boundary routes.} Let $\tau_{\mathrm b}$ be the index of the last paid update in the
main stage: the update at which a boundary is first crossed, or, if no
crossing occurs, the last paid update before entering fallback.
Let $\hat E_+:=\bigl\{\hat S_{\tau_{\mathrm{b}}}\ge a_k+\omega_{m,k}\text{ and policy stops before fallback}\bigr\}$ and $\hat E_-:=\bigl\{\hat S_{\tau_{\mathrm{b}}}\le-(b_k+\omega_{m,k})\text{ and policy stops before fallback}\bigr\}$.

Given $\calD_m$, the main-stage horizon is
fixed and each item contributes at most two updates; hence $\tau_{\mathrm{b}}$ is an
$(\calH^m_t)$-stopping time bounded by $2\widehat N_{\main,m,k}$.  Step~1 and
bounded optional stopping therefore give
\[
 \E^{\hat\pi_{m,k}}_0[e^{S_{\tau_{\mathrm{b}}}}\mid\calD_m]=1,
 \qquad
 \E^{\hat\pi_{m,k}}_1[e^{-S_{\tau_{\mathrm{b}}}}\mid\calD_m]=1.
\]
On $\hat E_+$, Lemma~\ref{lem:pilot-perturbation-1}(ii) and the guard definition of $\omega_{m,k}$ imply $S_{\tau_{\mathrm{b}}}\ \ge\ \hat S_{\tau_{\mathrm{b}}}
   -\widehat N_{\main,m,k}\varepsilon_m^{\mathrm{LR}}
 \ =\ \hat S_{\tau_{\mathrm{b}}}-\omega_{m,k}
 \ \ge\ a_k$,
and hence
\[
 \Pp^{\hat\pi_{m,k}}_0(\hat E_+\mid\calD_m)
 \le\alpha_{1,k}\,
    \E^{\hat\pi_{m,k}}_0\bigl[e^{S_{\tau_{\mathrm{b}}}}\ind_{\hat E_+}\mid\calD_m\bigr]
 \le\alpha_{1,k}\,\E^{\hat\pi_{m,k}}_0
    \bigl[e^{S_{\tau_{\mathrm{b}}}}\mid\calD_m\bigr]
 =\alpha_{1,k}.
\]
Similarly, on $\hat E_-$,
$S_{\tau_{\mathrm{b}}}\le\hat S_{\tau_{\mathrm{b}}}+\omega_{m,k}\le-b_k$, so
$\Pp^{\hat\pi_{m,k}}_1(\hat E_-\mid\calD_m)\le\beta_{1,k}$.  

\emph{Step 3: fallback route.} On the complement
$\hat E_{\fb}:=(\hat E_+\cup\hat E_-)^c$, the policy reveals all labels in the fixed set $\hat \calJ_{\Hum,k}$ and applies the exact randomized full-label
Neyman--Pearson test at levels $(\alpha_{2,k},\beta_{2,k})$.  The set is
available because $\widehat N_{\main,m,k}\ge
N_{\fixed,\Hum}(\alpha_{2,k},\beta_{2,k})$ in both branches of the policy.  Its labels
and the test randomization are independent of $\calD_m$, so the test retains
these levels conditionally on $\calD_m$.  Finally, intersecting a test-error
event with $\hat E_{\fb}$ can only reduce its probability (equivalently,
$\ind_{\hat E_{\fb}}\le1$).  Thus Lemma~\ref{lem:fallback-error} gives
\[
 \Pp^{\hat\pi_{m,k}}_0
 \bigl(\hat E_{\fb}\cap\{\text{fallback rejects}\}
    \mid\calD_m\bigr)\le\alpha_{2,k},
 \qquad
 \Pp^{\hat\pi_{m,k}}_1
 \bigl(\hat E_{\fb}\cap\{\text{fallback accepts}\}
    \mid\calD_m\bigr)\le\beta_{2,k}.
\]

\emph{Step 4: route decomposition.} The events
$\hat E_+,\hat E_-,\hat E_{\fb}$ are disjoint and exhaustive (at the
first exit time $\hat S_{\tau_{\mathrm{b}}}$ cannot be simultaneously
$\ge a_k+\omega_{m,k}>0$ and $\le-(b_k+\omega_{m,k})<0$), and the
policy rejects through exactly one of two disjoint routes:
\[
 \{\delta^{\hat\pi_{m,k}}=1\}
 =\hat E_+\ \cup\
  \bigl(\hat E_{\fb}\cap\{\text{fallback rejects}\}\bigr),
\]
a disjoint union. Steps 2 and 3 and the budget split
$\alpha_{1,k}+\alpha_{2,k}=\alpha_k$ of \eqref{eq:budget-split-alpha}
give
\[
 \Pp^{\hat\pi_{m,k}}_0
 (\delta^{\hat\pi_{m,k}}=1\mid\calD_m)
 \le\alpha_{1,k}+\alpha_{2,k}=\alpha_k.
\]
Similarly we have $\Pp^{\hat\pi_{m,k}}_1
 (\delta^{\hat\pi_{m,k}}=0\mid\calD_m)
 \le\beta_{1,k}+\beta_{2,k}=\beta_k$.
This proves Theorem~\ref{thm:pilot-validity}.
\end{proof}

\begin{proof}{Proof of Corollary~\ref{cor:pilot-unconditional-validity}}
The event $\calE_{\pilot,m}$ is $\sigma(\calD_m)$-measurable: in
\eqref{eq:pilot-good-event}, $\hat f_x$ is computed from $\calD_m$,
while $f_x$ and $r_m$ are deterministic. By the tower property,
\[
 \Pp^{\hat\pi_{m,k}}_0(\delta^{\hat\pi_{m,k}}=1)
 =\E_0\bigl[\Pp^{\hat\pi_{m,k}}_0
    (\delta^{\hat\pi_{m,k}}=1\mid\calD_m)\bigr]
 \le\alpha_k\,\Pp_0(\calE_{\pilot,m})
  +\Pp_0(\calE_{\pilot,m}^{\,c})
 \le\alpha_k+\delta_m,
\]
where the expectation $\E_0$ is with respect to the pilot data $\calD_m$ under $H_0$, 
the first inequality applies Theorem~\ref{thm:pilot-validity}
on $\calE_{\pilot,m}$ and bounds the conditional probability by $1$
on the complement, and the second uses
$\Pp_0(\calE_{\pilot,m}^{\,c})\le \delta_m$ from
Lemma~\ref{lem:pilot-concentration}. The type-II bound follows in the
same way under $\Pp^{\hat\pi_{m,k}}_1$ applied to
$\{\delta^{\hat\pi_{m,k}}=0\}$ with $\beta_k$ in place of $\alpha_k$,
proving Corollary~\ref{cor:pilot-unconditional-validity}.
\end{proof}

\section{Additional Materials for Section~\ref{sec:unknown-ai-guarded-policy} on First-order Cost Optimality}\label{appendix-sec:pilot-first-order-proof}

Conditional on $\calD_m$, let $\E^{\hat\pi_{m,k}}_h$ denote expectation
under $H_h$ for the process induced by the plug-in policy
$\hat\pi_{m,k}$ and the true report channel $f$, and let
$\E^{\hat\pi_{m,k}}_{h,\hat f}$ denote the corresponding expectation
when $f$ is replaced by $\hat f$, with the label law
$X\sim\Bern(p_h)$ unchanged.

Throughout this section, we index the
main-stage items by $i=1,\ldots,\widehat N_{\main,m,k}$, let
$\calH^m_{i-1}$ be the pilot-augmented filtration generated by
$\calD_m$ and the first $i-1$ processed items, let
$J_i\in\{0,1,*\}$ be the regime of item $i$ (determined from
$\hat S_{i-1}$ and the dead zone $\pm z_k$ in
Algorithm~\ref{algo:pilot-policy}, hence
$\calH^m_{i-1}$-measurable). Write $U_i,V_i$ for the fresh
randomization seeds of item $i$, and define its plug-in-statistic increment and sensing cost by
\begin{equation}\label{eq:hat-Z-C-def}
\begin{aligned}
    \hat Z^{\mathrm{item}}_i(\eta)
    &:=\ind\{U_i\le\eta^{\Hum}\}\,\ell_X(X_i)
    +\ind\{U_i>\eta^{\Hum}\}\Bigl(
        \hat\ell_R(R_i)
        +\ind\{V_i\le\eta^{\esc}(R_i)\}\,
          \hat\ell_H(X_i,R_i)
    \Bigr),\\
    C^{\mathrm{item}}_i(\eta)
    &:=\ind\{U_i\le\eta^{\Hum}\}\,\cH
    +\ind\{U_i>\eta^{\Hum}\}\Bigl(
        \cAI+\ind\{V_i\le\eta^{\esc}(R_i)\}\,\cH
    \Bigr).
\end{aligned}
\end{equation}
Then, for $j\in\{0,1,*\}$, $\hat Z^{\mathrm{item}}_i(\hat\eta_{j,m,k})$ is the one-item
plug-in-statistic increment produced by the rule
$\hat\eta_{j,m,k}:=(\hat\eta^{\Hum}_{j,m,k},\hat\eta^{\esc}_{j,m,k})$ on a fresh item. Let $\hat S_i=\sum_{\ell\le i}\hat Z^{\mathrm{item}}_\ell(\hat\eta_{J_\ell,m,k})$ denote the cumulative plug-in log-likelihood statistic after completing the first \(i\) items, where $J_\ell$ is the sensing rule used by item $\ell$. For the item-level analysis below, we use a completed-item continuation. We generate \((X_i,R_i,U_i,V_i)\) for every acquired item and recursively define \(J_i\) from \(\hat S_{i-1}\) and $\hat S_i
=\hat S_{i-1}
+\hat Z_i^{\mathrm{item}}(\hat\eta_{J_i,m,k})$.
If the actual policy stops during or before item \(i\), we counterfactually complete that item and continue the recursion for the remaining items. This continuation is used only for the proof and does not represent additional queries or costs incurred by the actual policy.

In Appendix~\ref{appendix-sec:uniform-pilot-perturbation}, we present auxiliary Lemma~\ref{lem:pilot-perturbation-2} to prove Theorem~\ref{thm:pilot-first-order}. Then in Appendices~\ref{appendix-sec:pilot-first-order-1}--\ref{appendix-sec:pilot-main-cost}, we present the proof of Theorem~\ref{thm:pilot-first-order} and other lemmas.

\subsection{Uniform Pilot Perturbation}\label{appendix-sec:uniform-pilot-perturbation}

Let
\begin{equation}\label{eq:pilot-fstar}
    f_{\star}:=\min_{x\in\{0,1\},\,r\in\calR}f_x(r)
\end{equation}
denote the smallest true channel mass, which is strictly positive by
Assumption~\ref{ass:score-model}. Recall $B_{\ell}=\max\left\{
       \log\frac{p_1}{p_0},\;
       \log\frac{1-p_0}{1-p_1}\right\}$.
We also let $\varepsilon_m^{\mathrm{cost}}:=\cH|\calR|r_m$ be an upper bound (conditional on $\calE_{\pilot,m}$) on the deviation between the true-channel and plug-in-channel expected one-item sensing cost.

For a count triple $(n_{\Hum},n_{\AI},n_{\esc})$ with
$n_{\Hum},n_{\AI}\ge0$ and $0\le n_{\esc}\le n_{\AI}$, define its true
and plug-in information values by
\begin{equation}\label{eq:pilot-plan-information-values}
\begin{aligned}
    I^{(h)}_{\mathrm{plan}}
    &:=n_{\Hum}J_X^{(h)}+n_{\AI}I_R^{(h)}
      +n_{\AI}\Psi_h(n_{\esc}/n_{\AI}),\\
    \hat I^{(h)}_{\mathrm{plan}}
    &:=n_{\Hum}J_X^{(h)}+n_{\AI}\hat I_R^{(h)}
      +n_{\AI}\hat\Psi_h(n_{\esc}/n_{\AI}),
\end{aligned}
\end{equation}
where $\hat\Psi_h(n_{\esc}/n_{\AI})$ when
$n_{\AI}=n_{\esc} =0$. 

\begin{lemma}[Uniform pilot perturbation]
\label{lem:pilot-perturbation-2}
\begin{enumerate}[leftmargin=2em,label=(\roman*)]

\item 
On $\calE_{\pilot,m}$, for every $h\in\{0,1\}$ and every one-item
randomized sensing rule,
\begin{equation}\label{eq:pilot-drift-cost-perturbation}
\begin{aligned}
 &\left|
 \E_h^{\hat\pi_{m,k}}
   [\hat Z^{\mathrm{item}}_i(\eta)\mid\calD_m]
 -\E_{h,\hat f}^{\hat\pi_{m,k}}
   [\hat Z^{\mathrm{item}}_i(\eta)\mid\calD_m]
 \right|
 \le\varepsilon_m^{\mathrm{dr}},\\
 &\left|
 \E_h^{\hat\pi_{m,k}}
   [C^{\mathrm{item}}_i(\eta)\mid\calD_m]
 -\E_{h,\hat f}^{\hat\pi_{m,k}}
   [C^{\mathrm{item}}_i(\eta)\mid\calD_m]
 \right|
 \le\varepsilon_m^{\mathrm{cost}}.
\end{aligned}
\end{equation}

\item There is a primitive constant $L_{\mathrm{loc}}<\infty$ such that, on
$\calE_{\pilot,m}$ and whenever $r_m\le f_{\star}/4$, $\hat q_h$, $1-\hat q_h$ and $\hat g_h$ are lower bounded by a constant that only depends on the primitives, and moreover,
\begin{align}
  \varepsilon_m^{\mathrm{LR}}
  &+\max_h|\hat I_R^{(h)}-I_R^{(h)}|
  +\max_{h,r}|\hat d^{(h)}(r)-d^{(h)}(r)|
  +\max_h\sup_{s\in[0,1]}|\hat\Psi_h(s)-\Psi_h(s)|
  \le L_{\mathrm{loc}}r_m.                   \label{eq:pilot-local-bound}
\end{align}

\item On $\calE_{\pilot,m}$ and whenever
$r_m\le f_{\star}/4$, for every $N\ge1$ and every count triple
$(n_{\Hum},n_{\AI},n_{\esc})$ as in
\eqref{eq:pilot-plan-information-values} with
$n_{\Hum}+n_{\AI}\le N$,
\begin{equation}\label{eq:pilot-plan-information-bound}
    \max_{h\in\{0,1\}}
    \bigl|I^{(h)}_{\mathrm{plan}}-\hat I^{(h)}_{\mathrm{plan}}\bigr|
    \le L_{\mathrm{loc}}Nr_m.
\end{equation}
\end{enumerate}
\end{lemma}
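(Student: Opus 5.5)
I would prove the three parts in order, working throughout on the good pilot event $\calE_{\pilot,m}$ and conditioning on $\calD_m$. Once we condition, the rule $\eta$ and every hatted quantity are deterministic. The only thing that differs between $\E_h^{\hat\pi_{m,k}}$ and $\E_{h,\hat f}^{\hat\pi_{m,k}}$ is the report channel, since the label law $\Bern(p_h)$ is the same in both.

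\emph{Part (i).} I would condition on the label and write each one-item expectation as
\[
\sum_{x}p_h^x(1-p_h)^{1-x}\sum_{r}\phi_x(r),
\qquad
\phi_x(r):=(1-\eta^{\Hum})\bigl(\hat\ell_R(r)+\eta^{\esc}(r)\hat\ell_H(x,r)\bigr),
\]
with $f_x(r)$ or $\hat f_x(r)$ as the weight on $r$. The direct-human branch $\eta^{\Hum}\ell_X(x)$ does not involve the report channel, so it cancels in the difference. By Lemma~\ref{lem:pilot-perturbation-1}(i), $|\phi_x(r)|\le|\hat\ell_R|+|\hat\ell_H|\le3B_\ell$. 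The difference of the two expectations is therefore at most
\[
\sum_x p_h^x(1-p_h)^{1-x}\sum_r|f_x(r)-\hat f_x(r)|\,3B_\ell\le 3|\calR|B_\ell r_m=\varepsilon_m^{\mathrm{dr}}.
\]
The cost statement follows the same way. Only the escalation charge $(1-\eta^{\Hum})\eta^{\esc}(r)\cH\le\cH$ depends on $r$, because $\cAI$ multiplies $\sum_r f_x(r)=1$. This gives the bound $\cH|\calR|r_m=\varepsilon_m^{\mathrm{cost}}$.

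\emph{Part (ii).} If $r_m\le f_\star/4$, then $\hat f_x(r)\ge f_\star-r_m\ge\tfrac34f_\star$, and likewise $\underline f_{x,m}(r)\ge f_\star/2$. Hence $\hat g_h\ge\tfrac34 f_\star$. Also $\hat q_h$ and $1-\hat q_h$ are bounded below by $\tfrac34 f_\star\min\{p_h,1-p_h\}$, because $\hat f_x\le1$ implies $\hat g_h\le1$. The same lower bounds hold for the true quantities. On the resulting compact box, every map $(f_0,f_1)\mapsto g_h,q_h,\ell_R,I_R^{(h)},d^{(h)}(r)$ is $C^1$ with derivatives bounded by primitive constants, so a mean-value argument gives a Lipschitz bound of order $r_m$ for each of them. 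For $\varepsilon_m^{\mathrm{LR}}$, I would note that $\underline w_m,\overline w_m$ and $\hat w$ all lie within $O(r_m/f_\star)$ of each other, and that $\phi_{\mathrm{rep}}$ is Lipschitz on $[0,1]$. For $\Psi_h$, I would use the dual form \eqref{eq:Psi-frontier-def}. Both $\Psi_h$ and $\hat\Psi_h$ are infima over $\lambda\in[0,\max d]$ of functions affine in $(g,d)$ with bounded coefficients. The supremum over $\lambda$ and $s$ of the pointwise difference is then at most $\sum_r|g_h-\hat g_h|\max d+\sum_r\hat g_h|d-\hat d|=O(r_m)$, uniformly in $s$, and this bound carries over to the infima. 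The constant $L_{\mathrm{loc}}$ is the sum of these Lipschitz constants.

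\emph{Part (iii).} This follows directly from part (ii):
\[
|I^{(h)}_{\mathrm{plan}}-\hat I^{(h)}_{\mathrm{plan}}|\le n_{\AI}\bigl(|I_R^{(h)}-\hat I_R^{(h)}|+\sup_s|\Psi_h-\hat\Psi_h|\bigr)\le NL_{\mathrm{loc}}r_m,
\]
and the case $n_{\AI}=0$ is trivial. I expect the main obstacle to be the uniform-in-$s$ bound on $\hat\Psi_h$ in part (ii), because the fractional-knapsack ordering may change under the perturbation. Working through the dual form avoids that issue entirely.
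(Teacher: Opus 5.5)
Your proposal is correct and follows essentially the same route as the paper. The shared steps are: the $\ell_1$ comparison of the joint laws together with $|\hat\ell_R|+|\hat\ell_H|\le 3B_\ell$ in part (i); lower bounds on the perturbed channel masses followed by Lipschitz estimates in part (ii), including the $\underline w_m,\hat w,\overline w_m$ sandwich for $\varepsilon_m^{\mathrm{LR}}$ and the $s$-uniform pointwise comparison of the dual objectives $\lambda s+\sum_r g_h(r)\pos{d^{(h)}(r)-\lambda}$ for $\Psi_h$; and the direct perspective bound in part (iii). The differences are cosmetic: you use a generic mean-value argument on the convex box $[f_\star/2,1]^{2|\calR|}$ where the paper computes the Lipschitz constants for $q_h$ and $I_R^{(h)}$ explicitly, and you should state, as the paper does, that the LP duality of Lemma~\ref{lem:capacity-frontier-properties} carries over to $\hat\Psi_h$ because $\hat g_h>0$ and $\hat d^{(h)}\ge 0$.
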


\begin{proof}{Proof of Lemma~\ref{lem:pilot-perturbation-2}}
Throughout, recall that $f_x(r)>0$ for all $x,r$
(Assumption~\ref{ass:score-model}), and that
$\hat f_x(r)>0$ and $\sum_r\hat f_x(r)=1$ by the smoothing in
\eqref{eq:pilot-estimator}. Hence all objects below are well-defined
for both the true channel $f=(f_0,f_1)$ and the plug-in channel
$\hat f=(\hat f_0,\hat f_1)$.

\emph{Part (i).}
Fix $h\in\{0,1\}$, fix a one-item randomized sensing rule
$(\eta^{\Hum},\eta^{\esc})$, and work on $\calE_{\pilot,m}$; the
bounds obtained below are uniform in $h$ and in the rule. Write
$P_h(x,r):=p_h^x(1-p_h)^{1-x}f_x(r)$ and
$\hat P_h(x,r):=p_h^x(1-p_h)^{1-x}\hat f_x(r)$ for the joint mass
functions of $(X,R)$ under the true and plug-in report channels,
respectively. For every $r$,
\[
 |\hat g_h(r)-g_h(r)|
 \le p_h|\hat f_1(r)-f_1(r)|+(1-p_h)|\hat f_0(r)-f_0(r)|
 \le r_m,
\]
so $\|g_h-\hat g_h\|_1\le|\calR| r_m$. The same computation for the
joint laws $P_h$ and $\hat P_h$ gives
\[
 \sum_{x,r}|P_h-\hat P_h|(x,r)
 =\sum_{r}\bigl[p_h|\hat f_1-f_1|(r)+(1-p_h)|\hat f_0-f_0|(r)\bigr]
 \le|\calR| r_m.
\]

\emph{Drift.} Write
$F(x,r):=\hat\ell_R(r)+\eta^{\esc}(r)\hat\ell_H(x,r)$. The
direct-human contributions to the two expectations are identical, so
\[
 \E_h^{\hat\pi_{m,k}}
   [\hat Z^{\mathrm{item}}_i(\eta)\mid\calD_m]
 -\E_{h,\hat f}^{\hat\pi_{m,k}}
   [\hat Z^{\mathrm{item}}_i(\eta)\mid\calD_m]
 =(1-\eta^{\Hum})\sum_{x,r}
   \bigl(P_h-\hat P_h\bigr)(x,r)\,F(x,r).
\]
Lemma~\ref{lem:pilot-perturbation-1}(i) gives $|F|\le B_{\ell}+2B_{\ell}=3B_{\ell}$, so
\[
 \left|
 \E_h^{\hat\pi_{m,k}}
   [\hat Z^{\mathrm{item}}_i(\eta)\mid\calD_m]
 -\E_{h,\hat f}^{\hat\pi_{m,k}}
   [\hat Z^{\mathrm{item}}_i(\eta)\mid\calD_m]
 \right|
 \le(1-\eta^{\Hum})\cdot3B_{\ell}\,
    \sum_{x,r}|P_h-\hat P_h|(x,r)
 \le3|\calR| B_{\ell}r_m
 =\varepsilon_m^{\mathrm{dr}}.
\]

\emph{Cost.} 
The terms $\eta^{\Hum}\cH+(1-\eta^{\Hum})\cAI$ are common to the two
expected costs, so they cancel in the difference, leaving
\begin{align*}
 &\left|
 \E_h^{\hat\pi_{m,k}}
   [C^{\mathrm{item}}_i(\eta)\mid\calD_m]
 -\E_{h,\hat f}^{\hat\pi_{m,k}}
   [C^{\mathrm{item}}_i(\eta)\mid\calD_m]
 \right|\\
 &\quad=(1-\eta^{\Hum})\,\cH\left|\sum_{r}
     \bigl(g_h(r)-\hat g_h(r)\bigr)\eta^{\esc}(r)\right|
 \le\cH\|g_h-\hat g_h\|_1
 \le\cH|\calR| r_m
 =\varepsilon_m^{\mathrm{cost}}.
\end{align*}
This and the drift bound prove
\eqref{eq:pilot-drift-cost-perturbation}.

\emph{Part (ii).}
Work on $\calE_{\pilot,m}$ and assume $r_m\le f_{\star}/4$. For all $x,r$,
$\hat f_x(r)\ge f_x(r)-r_m\ge f_{\star}-f_{\star}/4=3f_{\star}/4$ and
$\underline f_{x,m}(r)\ge\hat f_x(r)-r_m\ge f_{\star}/2$, while
$\hat f_x(r)\le\overline f_{x,m}(r)\le1$. Thus all of
$f_x(r),\hat f_x(r),\underline f_{x,m}(r),\overline f_{x,m}(r)$
lie in $[f_{\star}/2,1]$.

\emph{(a) Show that $\max_{h,r}|\hat d^{(h)}(r)-d^{(h)}(r)|
 +\max_h|\hat I_R^{(h)}-I_R^{(h)}|
 \le C_1r_m = O(r_m)$.} Since $g_h(r)$ and $\hat g_h(r)$ are
convex combinations of $f_0(r),f_1(r)$ and of
$\hat f_0(r),\hat f_1(r)$ respectively, 
\[
 1\ge g_h(r)\ge\min\{f_0(r),f_1(r)\}\ge f_{\star},
 \qquad
 1\ge\hat g_h(r)\ge\min\{\hat f_0(r),\hat f_1(r)\}\ge\frac{3f_{\star}}4,
\]
and the posteriors satisfy
$q_h(r)=p_hf_1(r)/g_h(r)\ge p_0f_{\star}$,
$1-q_h(r)=(1-p_h)f_0(r)/g_h(r)\ge(1-p_1)f_{\star}$, and likewise
$\hat q_h(r)\ge p_0\cdot3f_{\star}/4$ and
$1-\hat q_h(r)\ge(1-p_1)\cdot3f_{\star}/4$. Hence all of
$q_h(r),\hat q_h(r)$ lie in $[c_{\mathrm{post}},1-c_{\mathrm{post}}]$, where
$c_{\mathrm{post}}:=\tfrac12\min\{p_0,1-p_1\}f_{\star}$. Next, writing
\[
 \frac{\hat f_1(r)}{\hat g_h(r)}-\frac{f_1(r)}{g_h(r)}
 =\frac{(\hat f_1(r)-f_1(r))\,g_h(r)
   +f_1(r)\,(g_h(r)-\hat g_h(r))}{g_h(r)\hat g_h(r)}
\]
and using $|\hat f_x(r)-f_x(r)|\le r_m$ together with
$|\hat g_h(r)-g_h(r)|\le r_m$ (proof of part (i)),
\[
 |\hat q_h(r)-q_h(r)|
 \le p_h\,\frac{r_m\cdot1+1\cdot r_m}{f_{\star}\cdot(3f_{\star}/4)}
 =\frac{8\,p_h\,r_m}{3f_{\star}^2}
 \le\frac{8r_m}{3f_{\star}^2}.
\]
The map $\klbin(\cdot\Vert\cdot)$ is continuously differentiable,
hence Lipschitz, on the compact square
$[c_{\mathrm{post}},1-c_{\mathrm{post}}]^2$, with constant
depending only on $c_{\mathrm{post}}$; since both $(q_h(r),q_{1-h}(r))$ and
$(\hat q_h(r),\hat q_{1-h}(r))$ lie in this square,
\[
 |\hat d^{(h)}(r)-d^{(h)}(r)|
 \le C\bigl(|\hat q_h(r)-q_h(r)|+|\hat q_{1-h}(r)-q_{1-h}(r)|\bigr)
 =O(r_m).
\]
Similarly, $\log$ is Lipschitz on $[3f_{\star}/4,1]$ with constant
$4/(3f_{\star})$, and $|\log(\hat g_h(r)/\hat g_{1-h}(r))|
\le\log(4/(3f_{\star}))$, so
\begin{align*}
 |\hat I_R^{(h)}-I_R^{(h)}|
 &\le\sum_r|\hat g_h(r)-g_h(r)|\,
    \left|\log\frac{\hat g_h(r)}{\hat g_{1-h}(r)}\right|
  +\sum_rg_h(r)\,\bigl|\log\hat g_h(r)-\log g_h(r)\bigr|\\
 &\qquad+\sum_rg_h(r)\,
    \bigl|\log\hat g_{1-h}(r)-\log g_{1-h}(r)\bigr|
 \;\le\;|\calR| r_m\log\frac{4}{3f_{\star}}+\frac{8r_m}{3f_{\star}}.
\end{align*}
Collecting terms yields a primitive constant $C_1$, depending only on
$p_0,p_1,f_{\star},|\calR|$, with
\[
 \max_{h,r}|\hat d^{(h)}(r)-d^{(h)}(r)|
 +\max_h|\hat I_R^{(h)}-I_R^{(h)}|
 \le C_1r_m.
\]

\emph{(b) Show that $\varepsilon_m^{\mathrm{LR}} = O(r_m)$.}
Fix $r\in\calR$. Recall by construction \eqref{Appendix-eq:pilot-phi} of $\phi_{\mathrm{rep}}$,
$\hat\ell_R(r)=\phi_{\mathrm{rep}}(\hat w(r))$ with
$\hat w(r) = \frac{\hat f_1(r)}{\hat f_0(r)+\hat f_1(r)}\in[\underline w_m(r),\overline w_m(r)]$. 
$\phi_{\mathrm{rep}}$ is increasing, so
\[
 \phi_{\mathrm{rep}}(\underline w_m(r))\le\hat\ell_R(r)
 \le\phi_{\mathrm{rep}}(\overline w_m(r)).
\]
Consequently both terms inside the inner maximum in the definition
\eqref{Appendix-eq:pilot-eps-LR} of $\varepsilon_m^{\mathrm{LR}}$ are at most
$\phi_{\mathrm{rep}}(\overline w_m(r))
 -\phi_{\mathrm{rep}}(\underline w_m(r))$, whence
\[
 \varepsilon_m^{\mathrm{LR}}
 \le\max_{r\in\calR}\bigl\{
    \phi_{\mathrm{rep}}(\overline w_m(r))
    -\phi_{\mathrm{rep}}(\underline w_m(r))\bigr\}.
\]
We bound the right-hand side in two steps: a uniform bound on
$\phi_{\mathrm{rep}}'$, and a bound on the width
$\overline w_m(r)-\underline w_m(r)$.

First, we have that
\[
 \sup_{w\in[0,1]}\phi_{\mathrm{rep}}'(w)\le
 C_{\phi}:=\frac{p_1-p_0}
   {\min\{p_0,1-p_0\}\,\min\{p_1,1-p_1\}},
\]
and the mean-value theorem gives
$\phi_{\mathrm{rep}}(\overline w_m(r))
-\phi_{\mathrm{rep}}(\underline w_m(r))
\le C_{\phi}\bigl(\overline w_m(r)-\underline w_m(r)\bigr)$.

Second, write $W(a,b):=a/(a+b)$, so that, by
\eqref{Appendix-eq:pilot-weight-intervals},
\[
 \underline w_m(r)
 =W(\underline f_{1,m}(r),\overline f_{0,m}(r)),
 \qquad
 \overline w_m(r)
 =W(\overline f_{1,m}(r),\underline f_{0,m}(r)),
 \qquad
 \hat w(r)=W(\hat f_1(r),\hat f_0(r)),
\]
where all three argument pairs lie in the box
$[f_{\star}/2,1]^2$, on which the partial derivatives
\[
 \frac{\partial W}{\partial a}=\frac{b}{(a+b)^2},
 \qquad
 \frac{\partial W}{\partial b}=-\frac{a}{(a+b)^2}
\]
are bounded in absolute value by $1/(a+b)\le1/f_{\star}$. Moreover,
$r_m\le f_{\star}/4<\hat f_x(r)$ makes the positive-part clip in
$\underline f_{x,m}(r)=[\hat f_x(r)-r_m]_+$ inactive, so
\[
 |\hat f_1(r)-\underline f_{1,m}(r)|=r_m,
 \qquad
 |\overline f_{0,m}(r)-\hat f_0(r)|\le r_m,
\]
and likewise
$|\overline f_{1,m}(r)-\hat f_1(r)|\le r_m$ and
$|\hat f_0(r)-\underline f_{0,m}(r)|=r_m$.
The box is convex, so the segments joining
$(\hat f_1(r),\hat f_0(r))$ to
$(\underline f_{1,m}(r),\overline f_{0,m}(r))$ and to
$(\overline f_{1,m}(r),\underline f_{0,m}(r))$ stay inside it, and the
mean-value theorem yields
\[
 \hat w(r)-\underline w_m(r)
 \le\frac{
   |\hat f_1(r)-\underline f_{1,m}(r)|
   +|\overline f_{0,m}(r)-\hat f_0(r)|}{f_{\star}}
 \le\frac{2r_m}{f_{\star}},
 \qquad
 \overline w_m(r)-\hat w(r)
 \le\frac{2r_m}{f_{\star}},
\]
so $\overline w_m(r)-\underline w_m(r)\le4r_m/f_{\star}$. Combining
the two steps,
\[
 \varepsilon_m^{\mathrm{LR}}
 \le C_{\phi}\cdot\frac{4r_m}{f_{\star}}
 =\frac{4C_{\phi}r_m}{f_{\star}}.
\]

\emph{(c) Show that $\max_h\sup_{s\in[0,1]}|\hat\Psi_h(s)-\Psi_h(s)| = O(r_m)$.} The true frontier $\Psi_h$ is defined by the
inf-representation \eqref{eq:Psi-frontier-def}, and the
dual-representation part of the proof of
Lemma~\ref{lem:capacity-frontier-properties} (a finite linear
programming duality argument) uses only that the report marginal is a
strictly positive mass function and that the follow-up gains are
nonnegative and finite, which $\hat g_h>0$ and
$0\le\hat d^{(h)}<\infty$ satisfy. Hence the plug-in frontier
\eqref{eq:pilot-hat-Psi} admits the same representation. Writing
\[
 \Phi_h(\lambda,s):=\lambda s+
   \sum_{r\in\calR}g_h(r)\pos{d^{(h)}(r)-\lambda},
 \qquad
 \hat\Phi_h(\lambda,s):=\lambda s+
   \sum_{r\in\calR}\hat g_h(r)\pos{\hat d^{(h)}(r)-\lambda},
\]
we therefore have $\Psi_h(s)=\inf_{\lambda\ge0}\Phi_h(\lambda,s)$ and
$\hat\Psi_h(s)=\inf_{\lambda\ge0}\hat\Phi_h(\lambda,s)$.

First we restrict both infima to a common compact interval. Set
$d_{\max}:=\max_{h,r}\max\{d^{(h)}(r),\hat d^{(h)}(r)\}$; since by
step (a) all posteriors $q_h(r),\hat q_h(r)$ lie in
$[c_{\mathrm{post}},1-c_{\mathrm{post}}]$, and
$\klbin(\cdot\Vert\cdot)$ is bounded on the compact square
$[c_{\mathrm{post}},1-c_{\mathrm{post}}]^2$ by a constant depending only on
$c_{\mathrm{post}}$, we have
$d_{\max}\le C_2$ for a primitive constant $C_2$ on the present
event. For $\lambda\ge d_{\max}$ every positive part vanishes, so
$\Phi_h(\lambda,s)=\lambda s\ge d_{\max}s=\Phi_h(d_{\max},s)$, and
likewise for $\hat\Phi_h$; the infima over $\lambda\ge d_{\max}$ are
thus attained at $\lambda=d_{\max}$, and
\[
 \Psi_h(s)=\inf_{0\le\lambda\le d_{\max}}\Phi_h(\lambda,s),
 \qquad
 \hat\Psi_h(s)=\inf_{0\le\lambda\le d_{\max}}\hat\Phi_h(\lambda,s).
\]

Next we show that the two objectives are uniformly close on this
common domain. Fix $\lambda\in[0,d_{\max}]$ and $s\in[0,1]$. The
$\lambda s$ terms coincide, and adding and subtracting
$\sum_r\hat g_h(r)\pos{d^{(h)}(r)-\lambda}$ splits the remainder into
two sums:
\[
 \Phi_h(\lambda,s)-\hat\Phi_h(\lambda,s)
 =\sum_r\bigl(g_h(r)-\hat g_h(r)\bigr)\pos{d^{(h)}(r)-\lambda}
 +\sum_r\hat g_h(r)
   \bigl(\pos{d^{(h)}(r)-\lambda}-\pos{\hat d^{(h)}(r)-\lambda}\bigr).
\]
In the first sum, $0\le\pos{d^{(h)}(r)-\lambda}\le d^{(h)}(r)\le
d_{\max}\le C_2$ for every $r$, so its absolute value is at most
$C_2\|g_h-\hat g_h\|_1\le C_2|\calR| r_m$, using
$\|g_h-\hat g_h\|_1\le|\calR| r_m$ from the proof of part (i). In
the second sum, the Lipschitz property $|\pos{a}-\pos{b}|\le|a-b|$
gives
$|\pos{d^{(h)}(r)-\lambda}-\pos{\hat d^{(h)}(r)-\lambda}|
\le|\hat d^{(h)}(r)-d^{(h)}(r)|$, so, since $\sum_r\hat g_h(r)=1$,
its absolute value is at most
$\max_r|\hat d^{(h)}(r)-d^{(h)}(r)|\le C_1r_m$ by step (a). Hence,
setting $\delta_\Psi:=(C_2|\calR|+C_1)r_m$,
\[
 \bigl|\Phi_h(\lambda,s)-\hat\Phi_h(\lambda,s)\bigr|\le\delta_\Psi
 \qquad\text{for all }\lambda\in[0,d_{\max}],\
 s\in[0,1],\ h\in\{0,1\}.
\]

Finally we pass from the objectives to their infima. Fix $s\in[0,1]$.
For every $\lambda\in[0,d_{\max}]$ the previous display gives
$\Phi_h(\lambda,s)\le\hat\Phi_h(\lambda,s)+\delta_\Psi$; taking the
infimum over $\lambda\in[0,d_{\max}]$ on both sides and using the
truncated representations above,
\[
 \Psi_h(s)
 =\inf_{0\le\lambda\le d_{\max}}\Phi_h(\lambda,s)
 \le\inf_{0\le\lambda\le d_{\max}}
   \bigl\{\hat\Phi_h(\lambda,s)+\delta_\Psi\bigr\}
 =\hat\Psi_h(s)+\delta_\Psi.
\]
Exchanging the roles of $\Phi_h$ and $\hat\Phi_h$ gives
$\hat\Psi_h(s)\le\Psi_h(s)+\delta_\Psi$, whence
$|\hat\Psi_h(s)-\Psi_h(s)|\le\delta_\Psi$. Since $\delta_\Psi$
depends on neither $s$ nor $h$,
\[
 \max_h\sup_{s\in[0,1]}|\hat\Psi_h(s)-\Psi_h(s)|
 \le\delta_\Psi=\bigl(C_2|\calR|+C_1\bigr)r_m.
\]
Summing the bounds of
(a), (b), and (c) proves \eqref{eq:pilot-local-bound} with
$L_{\mathrm{loc}}:=4C_{\phi}/f_{\star}+2C_1+C_2|\calR|$, a
primitive constant depending only on $p_0,p_1,f_{\star},|\calR|$.

\emph{Part (iii).}
If $n_{\AI}=0$, both perspective terms vanish and
$I^{(h)}_{\mathrm{plan}}=\hat I^{(h)}_{\mathrm{plan}}$, so the bound
is trivial. Otherwise, the $n_{\Hum}J_X^{(h)}$ terms in
\eqref{eq:pilot-plan-information-values} coincide because $p_0,p_1$
are known, so, for each $h$,
\[
 \bigl|I^{(h)}_{\mathrm{plan}}-\hat I^{(h)}_{\mathrm{plan}}\bigr|
 \le n_{\AI}\left(|\hat I_R^{(h)}-I_R^{(h)}|
   +\sup_{s\in[0,1]}|\hat\Psi_h(s)-\Psi_h(s)|\right)
 \le NL_{\mathrm{loc}}r_m,
\]
using \eqref{eq:pilot-local-bound} and $n_{\AI}\le N$. This proves
\eqref{eq:pilot-plan-information-bound}.
\end{proof}

\subsection{Proof of Theorem~\ref{thm:pilot-first-order}}\label{appendix-sec:pilot-first-order-1}
\begin{proof}{Proof of Theorem~\ref{thm:pilot-first-order}}
To prove Theorem~\ref{thm:pilot-first-order}, we show that on $\calE_{\pilot,m_{\pilot,k}}$ and for all sufficiently large $k$, there exist primitive constants $c>0$ and  $C_1, \dots, C_5<\infty$ such that
\begin{align}
&\frac{\max_h
    \E_h^{\hat\pi_{m_{\pilot,k},k}}
       [C^{\hat\pi_{m_{\pilot,k},k}}\mid\calD_{m_{\pilot,k}}]}
   {\LB_k}\notag\\
&\quad\le
  1+C_1\frac{\Delta_k}{L_k}
   +C_2r_{m_{\pilot,k}}
   +C_3\frac{z_k+1}{\Delta^{\eff}_{m_{\pilot,k},k}}
   +C_4\exp\left\{-c
       \frac{(\Delta^{\eff}_{m_{\pilot,k},k})^2}{L_k}\right\}
   +C_5\frac{\log L_k}{L_k},                      \label{eq:pilot-ratio-bound}
\end{align}
where 
\begin{equation}\label{eq:pilot-effective-buffer}
    \Delta^{\eff}_{m,k}:=\Delta_k-\omega_{m,k}.
\end{equation}

In the proof of Theorem~\ref{thm:pilot-first-order}, we always assume that the conditions of Theorem~\ref{thm:pilot-first-order} hold. That is, Assumption~\ref{ass:balanced-error-regime} holds, $f_{\fb,k}$ is bounded away
from zero and one, $(\Delta_k,z_k)$ satisfy
\eqref{eq:delta-admissible} and \eqref{eq:z-admissible}, and 
$m_{\pilot,k}\to\infty$ and $L_kr_{m_{\pilot,k}}=o(\Delta_k)$. For brevity, we state the auxiliary lemmas without explicitly rewriting those assumptions.

The proof of Theorem~\ref{thm:pilot-first-order} has an analogous roadmap to the proof of Theorem~\ref{thm:first-order}. The conditional expected cost of policy $\hat\pi_{m,k}$ conditional on $\calD_m$, $\E^{\hat\pi_{m,k}}_h[C^{\hat\pi_{m,k}}|\calD_m]$, has three parts: the data acquisition cost $\cdata \widehat N_{\main, m, k}$, the conditional expected main-stage sensing cost $\E^{\hat\pi_{m,k}}_h[\widehat C^{\main}|\calD_m]$ before fallback, and the conditional expected fallback test cost $\E^{\hat\pi_{m,k}}_h[\widehat C^{\fb}|\calD_m]$: $\E^{\hat\pi_{m,k}}_h[C^{\hat\pi_{m,k}}|\calD_m] = \cdata \widehat N_{\main, m, k} + \E^{\hat\pi_{m,k}}_h[\widehat C^{\main}|\calD_m] + \E^{\hat\pi_{m,k}}_h[\widehat C^{\fb}|\calD_m]$.

Lemma~\ref{lem:pilot-main-cost} below bounds the main-stage cost. Proof of Lemma~\ref{lem:pilot-main-cost} is in Appendix~\ref{appendix-sec:pilot-main-cost}.
\begin{lemma}[Main-stage cost]\label{lem:pilot-main-cost}
There exists a constant $C<\infty$ such that for every
$h\in\{0,1\}$ and for all sufficiently large $m,k$, on $\calE_{\pilot,m}$, 
\begin{equation*}
   \begin{aligned}
 \E^{\hat\pi_{m,k}}_h[\widehat C^{\main}\mid\calD_m]
 \le \hat\Gamma_h(
\widehat T^{\pilot}_{h,m,k}(\widehat{\overline N}_{\main,m,k}),
\widehat{\overline N}_{\main,m,k})
+CL_kr_m+C(z_k+1).
% +CL_k\left[
%    \frac{z_k+1}{\Delta^{\eff}_{m,k}}
%    +\exp\left\{-c
%     \frac{(\Delta^{\eff}_{m,k})^2}{L_k}\right\}
%  \right],
\end{aligned} 
\end{equation*}
\end{lemma}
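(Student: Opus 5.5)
The plan is to transfer the argument behind Lemma~\ref{lem:main-cost} to the plug-in setting. I would work on $\calE_{\pilot,m}$ with $\calD_m$ fixed, so that all design coefficients are deterministic, and treat $H_1$ in detail; the $H_0$ case follows by reflecting $\hat S\mapsto-\hat S$ and swapping $a_k,T_{1,k}$ with $b_k,T_{0,k}$. I would use the completed-item continuation $(\hat S_i,J_i)$ and split started items into correct-direction items $\mathcal C=\{i:I_i=1,\ \hat S_{i-1}>z_k\}$ and the rest $\mathcal W$. This gives $\widehat C^{\main}\le C^{\mathcal C}+C^{\mathcal W}$, and the two terms are bounded separately.

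\emph{Drift and occupation bounds.} The first step is to re-derive the analogue of Lemma~\ref{lem:variance-drift} for the increments $\hat Z^{\mathrm{item}}_i(\hat\eta_{j,m,k})$ under the \emph{true} channel.
\begin{itemize}
\item Boundedness: $|\hat Z|\le 3B_\ell$ by Lemma~\ref{lem:pilot-perturbation-1}(i).
\item Correct-direction drift: $\E_1[\hat Z(\hat\eta_{1,m,k})]$ equals its $\hat f$-drift up to $\varepsilon^{\mathrm{dr}}_m$, by Lemma~\ref{lem:pilot-perturbation-2}(i). The $\hat f$-drift is the plug-in analogue of \eqref{eq:drift-expansion}, so it satisfies $\widehat N^{\mathrm{plan}}_{1}\E_{1,\hat f}[\hat Z]\ge \widehat T^{\pilot}_{1,m,k}=T_{1,k}+(\widehat{\overline N}+1)\varepsilon^{\mathrm{dr}}_m$. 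Since $\widehat N^{\mathrm{plan}}_1\le\widehat{\overline N}+1$, the guard absorbs the perturbation, giving $\widehat N^{\mathrm{plan}}_1\E_1[\hat Z]\ge T_{1,k}$.
\item Uniform positive drift: the wrong-direction and dead-zone rules need drift at least $\mu_0/2>0$. I would obtain this from the reverse-comparison and averaging arguments, applied to the true-channel drift expansion \eqref{eq:drift-expansion}, with the plug-in rules perturbed by $O(r_m)$ via Lemma~\ref{lem:pilot-perturbation-2}(ii); for large $m$ this perturbation is $o(1)$.
\end{itemize}
With these in hand, the potential-function argument of Lemma~\ref{lem:direction-tracking} applies verbatim and gives $\E_1[|\mathcal W|]\le C_W(z_k+1)$. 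Hence $\E_1[C^{\mathcal W}]\le c_{\max}C_W(z_k+1)$.

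\emph{Correct-direction cost.} As in Cost~2 of Lemma~\ref{lem:main-cost}, the optional-stopping identity gives
\[
\E_1[\hat Z(\hat\eta_{1,m,k})]\,\E_1[N_c]\le \E_1[\hat S_{\tau}]\le a_k+\omega_{m,k}+3B_\ell,
\]
since the wrong-direction contributions to the drift are nonnegative. It follows that
\[
\E_1[N_c]\le \widehat N^{\mathrm{plan}}_1\,\frac{a_k+\omega_{m,k}+3B_\ell}{T_{1,k}}.
\]
The true per-item cost is within $\varepsilon^{\mathrm{cost}}_m$ of the plug-in planned cost (Lemma~\ref{lem:pilot-perturbation-2}(i)). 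The plug-in planned cost times $\widehat N^{\mathrm{plan}}_1$ is at most $\hat\Gamma_1(\widehat T^{\pilot}_{1},\widehat{\overline N})+O(1)$, by the ceiling rounding as in \eqref{eq:mc-peritem}. Combining these,
\[
\E_1[C^{\mathcal C}]\le\Bigl(\hat\Gamma_1+O(1)+\widehat N^{\mathrm{plan}}_1\varepsilon^{\mathrm{cost}}_m\Bigr)\frac{a_k+\omega_{m,k}+3B_\ell}{a_k+\Delta_k}.
\]

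\emph{Closing the bound and the main obstacle.} It remains to absorb the slack into $CL_kr_m+C(z_k+1)$.
\begin{itemize}
\item $\widehat N^{\mathrm{plan}}_1\varepsilon^{\mathrm{cost}}_m=O(L_kr_m)$ once $\widehat N_{\main,m,k}=O(L_k)$.
\item $\omega_{m,k}=\widehat N_{\main,m,k}\varepsilon^{\mathrm{LR}}_m=O(L_kr_m)$ by Lemma~\ref{lem:pilot-perturbation-2}(ii).
\item Under $L_kr_m=o(\Delta_k)$, this gives $\omega_{m,k}+3B_\ell\le\Delta_k$ for large $k$, so the ratio in the last display is at most $1$.
\end{itemize}

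The main obstacle is showing $\widehat N_{\main,m,k}=O(L_k)$, and that the guarded program is feasible, on $\calE_{\pilot,m}$. I would first take the true-channel design $\tilde N_k$ from Lemma~\ref{lem:LB-perturb} and add $O(\Delta_k)$ extra direct-human items. These extra items cover the guard term $(N+1)\varepsilon^{\mathrm{dr}}_m=O(L_kr_m)=o(\Delta_k)$ and the plan-information perturbation from Lemma~\ref{lem:pilot-perturbation-2}(iii), which keeps this $N$ admissible. Minimality of $\widehat{\overline N}_{\main,m,k}$ then gives $\widehat F^{\pilot}_{m,k}(\widehat{\overline N}_{\main,m,k})=O(L_k)$, which bounds $\widehat N_{\main,m,k}$ by $O(L_k)$. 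Finally, rule~$*$ and the safe-default branch need only the crude bound $c_{\max}$ per item.
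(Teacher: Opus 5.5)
Your proposal is correct and follows the paper's proof of this lemma. The structure is the same throughout:
\begin{itemize}
\item started items are split into $\mathcal C$ and $\mathcal W$;
\item Cost~1 is handled by the direction-tracking potential argument;
\item Cost~2 combines the predictable-drift identity $\E_1[\hat Z]\,\E_1[N_c]\le a_k+\omega_{m,k}+O(B_\ell)$ with the guarded block inequality $\widehat N^{\mathrm{plan}}_1\E_1[\hat Z]\ge T_{1,k}$, the rounded-block bound and the $\varepsilon^{\mathrm{cost}}_m$ transfer;
\item $\widehat N_{\main,m,k}=O(L_k)$ comes from repairing a true-channel design with extra direct-human items.
\end{itemize}
The paper does the last step in Lemma~\ref{lem:pilot-outer-stability} with the sharper count of $O(L_kr_m+1)$ extra items. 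It needs that sharpness there for the first-order comparison, but it is not needed for the present lemma.

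The one sub-step that genuinely differs is the positive drift of the wrong-direction and dead-zone rules. The paper proves a reverse comparison directly for the plug-in divergences, using Pinsker plus a $\chi^2$ bound and the lower bounds on $\hat g_h,\hat q_h$ from Lemma~\ref{lem:pilot-perturbation-2}(ii), and then transfers to the true channel by $\varepsilon^{\mathrm{dr}}_m$. You instead reuse the known-channel reverse comparison and averaging, which hold for any fixed rule. Your route is valid and slightly more economical. If $Z_i(\eta)$ denotes the item increment built from the exact $\ell_R,\ell_H$ under the same rule, the cancellation identity of Lemma~\ref{lem:pilot-perturbation-1}(i) gives $\hat Z^{\mathrm{item}}_i(\eta)-Z_i(\eta)=\ind\{U_i>\eta^{\Hum}\}\ind\{V_i>\eta^{\esc}(R_i)\}\bigl(\hat\ell_R(R_i)-\ell_R(R_i)\bigr)$. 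Hence the two increments differ by at most $\varepsilon^{\mathrm{LR}}_m\le L_{\mathrm{loc}}r_m$ pointwise, uniformly in the rule.

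Two small corrections:
\begin{itemize}
\item The drift constant must be rebuilt from the plug-in block, i.e.\ from $T_{h,k}/\widehat N_{\main,m,k}$. It cannot be inherited as the known-channel $\mu_0$, because the plug-in rules $\hat\eta_{j,m,k}$ need not be close to $(\eta^{\Hum}_{j,k},\eta^{\esc}_{j,k})$. Lemma~\ref{lem:pilot-perturbation-2}(ii) controls increments and information quantities, not optimizers.
\item The safe-default branch needs no crude bound. Your feasibility argument shows it does not occur on $\calE_{\pilot,m}$ for large $m,k$, and a $c_{\max}$-per-item bound there would only give $O(L_k)$, not the claimed inequality.
\end{itemize}
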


Let $\hat E_{\fb}$ be the guarded fallback event,
as in the proof of Theorem~\ref{thm:pilot-validity}.
Lemma~\ref{lem:pilot-fallback-prob} below bounds the fallback probability $\Pp_h^{\hat\pi_{m,k}}(\hat E_{\fb}\mid\calD_m)$. Given that $\widehat N_{\main, m, k} = O(L_k)$ (see Lemma~\ref{appendix-lem:pilot-hat-N-main} in Appendix~\ref{appendix-sec:pilot-outer-stability}), the expected fallback cost is therefore $O(L_k[
    (z_k+1)/\Delta^{\eff}_{m,k}
    +\exp\{-c
       (\Delta^{\eff}_{m,k})^2/L_k\}])$. Proof of Lemma~\ref{lem:pilot-fallback-prob} is in Appendix~\ref{appendix-sec:pilot-fallback-prob}.
\begin{lemma}[Fallback probability]
\label{lem:pilot-fallback-prob}
Assume
$\Delta^{\eff}_{m,k}\ge4B_{\ell}$.
There are primitive
constants $C,c>0$ such that for every $h\in\{0,1\}$ and for sufficiently large $m, k$, on $\calE_{\pilot,m}$, 
\begin{equation}\label{eq:pilot-fallback-probability}
    \Pp_h^{\hat\pi_{m,k}}(\hat E_{\fb}\mid\calD_m)
    \le C\frac{z_k+1}{\Delta^{\eff}_{m,k}}
       +C\exp\left\{-c
          \frac{(\Delta^{\eff}_{m,k})^2}{L_k}\right\}.
\end{equation}
\end{lemma}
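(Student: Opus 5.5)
I will prove Lemma~\ref{lem:pilot-fallback-prob} by the same two-part route used for Lemma~\ref{lem:fallback-low}. First I bound the occupation of the wrong and dead-zone rules. Then I run a Freedman tail bound for the fluctuation of the statistic over the planned block. The new ingredients are two perturbations: the policy drives $\hat S$ with plug-in increments, and it stops at the widened boundaries $a_k+\omega_{m,k}$ and $-(b_k+\omega_{m,k})$. I treat the $H_1$ case; the $H_0$ case follows by reflecting to $-\hat S$.

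\textbf{Step 1: plug-in drift.} I use the completed-item continuation $\hat S_i$ and the filtration $\calH^m_{i-1}$. Conditional on $\calD_m$, all rules $\hat\eta_{j,m,k}$ are deterministic. Under the plug-in channel $\hat f$, the identity \eqref{eq:drift-expansion} holds with hatted quantities, and the escalation rule attains $\hat\Psi_h(\hat s_{h,m,k})$. Hence the planned block $\widehat N^{\mathrm{plan}}_{1}:=\lceil\hat n^*_{\Hum,1,m,k}\rceil+\lceil\hat n^*_{\AI,1,m,k}\rceil\le\widehat N_{\main,m,k}$ satisfies
\[
\widehat N^{\mathrm{plan}}_{1}\,\E^{\hat\pi_{m,k}}_{1,\hat f}[\hat Z^{\mathrm{item}}(\hat\eta_{1,m,k})\mid\calD_m]\ge T_{1,k}+(\widehat{\overline N}_{\main,m,k}+1)\varepsilon^{\mathrm{dr}}_m .
\]
Lemma~\ref{lem:pilot-perturbation-2}(i) moves each per-item drift from $\hat f$ to the true channel $f$ at a loss of at most $\varepsilon^{\mathrm{dr}}_m$. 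Since the block has at most $\widehat{\overline N}_{\main,m,k}+1$ items, this loss is absorbed exactly by the guard, so the true-channel drift over the block is at least $T_{1,k}=a_k+\Delta_k$.

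I also need a uniform lower bound $\hat\mu_0>0$ on the true-channel conditional drift of every rule $j\in\{0,1,*\}$, together with $|\hat Z|\le 3B_\ell$ from Lemma~\ref{lem:pilot-perturbation-1}(i). I will rerun the proof of Lemma~\ref{lem:variance-drift}(ii)--(iii) with plug-in constants. The comparison constants $c_{\mathrm{cmp}}$ and $m_g$ stay bounded because Lemma~\ref{lem:pilot-perturbation-2}(ii) gives $\hat g_h$ and $\hat q_h$ bounded away from $0$ and $1$ once $r_m\le f_\star/4$. The correct-rule drift is $\gtrsim T_{1,k}/\widehat N_{\main,m,k}$, which is bounded below because $\widehat N_{\main,m,k}=O(L_k)$ by the outer-stability lemma. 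The perturbation $\varepsilon^{\mathrm{dr}}_m=O(r_m)\to0$ then does not destroy the lower bound $\hat\mu_0$.

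\textbf{Step 2: occupation.} I repeat the potential-function argument of Lemma~\ref{lem:direction-tracking} for $\hat S$, using $\hat\mu_0$, the bound $3B_\ell$ on increments and the resulting variance–drift constant. This gives $\E[\hat W_{1,k}\mid\calD_m]\le C(z_k+1)$, where $\hat W_{1,k}$ counts items processed while $\hat S_{i-1}\le z_k$.

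\textbf{Step 3: tail bound.} On $\hat E_{\fb}$ the upper guarded boundary is never crossed, so $\hat S_{\widehat N^{\mathrm{plan}}_1}<a_k+\omega_{m,k}$. On the event $\{\hat W_{1,k}\le\Delta^{\eff}_{m,k}/(4\cdot 3B_\ell)\}$, the predictable drift over the block is at least $a_k+\Delta_k-\Delta^{\eff}_{m,k}/4$. The martingale part therefore satisfies
\[
M\le a_k+\omega_{m,k}-a_k-\Delta_k+\tfrac14\Delta^{\eff}_{m,k}=-\tfrac34\Delta^{\eff}_{m,k}.
\]
The assumption $\Delta^{\eff}_{m,k}\ge 4B_\ell$ ensures the thresholds are nontrivial, playing the role of $\Delta_k\ge B_{\mathrm{inc}}$ in the known-$f$ proof. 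Freedman's inequality, with increments bounded by $6B_\ell$ and predictable variance $O(\widehat N_{\main,m,k})=O(L_k)$, gives a tail of order $\exp\{-c(\Delta^{\eff}_{m,k})^2/L_k\}$. Markov's inequality with Step 2 bounds the complementary occupation event by $C(z_k+1)/\Delta^{\eff}_{m,k}$. Adding the two pieces gives \eqref{eq:pilot-fallback-probability}.

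\textbf{Main obstacle.} The hardest step is Step 1: showing that the true-channel drift of the plug-in rules is uniformly at least $\hat\mu_0$, with $\hat\mu_0$ a primitive constant independent of the pilot. The reverse-comparison constants must be controlled uniformly on $\calE_{\pilot,m}$. I would first try to derive them from the lower bounds on $\hat g_h$ and $\hat q_h$ in Lemma~\ref{lem:pilot-perturbation-2}(ii), combined with continuity of KL on a compact domain.
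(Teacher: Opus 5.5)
Your proposal is essentially the paper's proof. The paper first proves an auxiliary lemma (Lemma~\ref{lem:pilot-increment-conditions}) giving three facts: the true-channel drift over the planned block is at least $T_{h,k}$, by the same guard-absorption argument you give; every rule $j\in\{0,1,*\}$ has a uniform drift floor $c_\mu$, obtained by rerunning the reverse comparison with plug-in quantities and then subtracting $\varepsilon^{\mathrm{dr}}_m$; and the variance--drift bound holds. It then reruns Lemmas~\ref{lem:direction-tracking} and~\ref{lem:fallback-low} with boundary $a_k+\omega_{m,k}$ and margin $\Delta^{\eff}_{m,k}$ in place of $a_k$ and $\Delta_k$, exactly as you do.

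For the step you flag as the obstacle, continuity of KL on a compact domain does not by itself give a uniform ratio bound near the diagonal. For example, when $\hat g_0\approx\hat g_1$, both $\hat I_R^{(h)}$ are near zero. The paper handles this with Pinsker together with the $\chi^2$ upper bound, which gives $D(P\Vert Q)\ge\frac{q_\star}{2}D(Q\Vert P)$ whenever all coordinates of $P,Q$ are at least $q_\star$. The coordinate lower bounds from Lemma~\ref{lem:pilot-perturbation-2}(ii) supply exactly this.
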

Combining Lemmas~\ref{lem:pilot-main-cost} and \ref{lem:pilot-fallback-prob} gives
\begin{align}
 \E_h^{\hat\pi_{m,k}}[C^{\hat\pi_{m,k}}\mid\calD_m]
 &\le \widehat N_{\main,m,k}\cdata
 +\hat\Gamma_h\!\left(
    \widehat T^{\pilot}_{h,m,k}
      (\widehat{\overline N}_{\main,m,k}),
    \widehat{\overline N}_{\main,m,k}\right)
 +CL_kr_m+C(z_k+1)\notag\\
 &\quad+CL_k\left[
    \frac{z_k+1}{\Delta^{\eff}_{m,k}}
    +\exp\left\{-c
       \frac{(\Delta^{\eff}_{m,k})^2}{L_k}\right\}
 \right].                                           \label{eq:pilot-total-cost}
\end{align}
The final lemma connects $\E^{\hat\pi_{m,k}}_h[C^{\hat\pi_{m,k}}]$ to $\LB_k$:
\begin{lemma}[Pilot outer stability]
\label{lem:pilot-outer-stability}
On $\calE_{\pilot,m}$, for all sufficiently large $m,k$, the guarded outer
problem \eqref{eq:pilot-Nbar-choice} is feasible and
\begin{align}
&\widehat N_{\main,m,k}\cdata
 +\max_h\hat\Gamma_h\!\left(
    \widehat T^{\pilot}_{h,m,k}
      (\widehat{\overline N}_{\main,m,k}),
    \widehat{\overline N}_{\main,m,k}\right)\notag\\
&\qquad\le
 \min_{N\in\mathbb{Z}_+, N\ge N_{\fixed,\Hum}(\alpha_{2,k},\beta_{2,k})}
 \left\{N\cdata+\max_h\Gamma_h(T_{h,k},N)\right\}
 +O(L_kr_m+1).                    \label{eq:pilot-outer-stability}
\end{align}
\end{lemma}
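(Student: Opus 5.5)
Let $N^\star_k$ denote a minimizer of the unguarded buffered design $F^\Delta_k(N)=N\cdata+\max_h\Gamma_h(T_{h,k},N)$ over integers $N\ge N_{\fixed,\Hum}(\alpha_{2,k},\beta_{2,k})$. Such a minimizer exists, and by Lemma~\ref{lemma-appendix:N-main} we have $N^\star_k=O(L_k)$. The plan is to show that a slightly enlarged pool $\tilde N:=N^\star_k+R$, with a reserve $R=O(L_kr_m+1)$, is admissible for the guarded outer problem \eqref{eq:pilot-Nbar-choice}. We will also show that at $\tilde N$ we have $\widehat F^{\pilot}_{m,k}(\tilde N)\le F^\Delta_k(N^\star_k)+O(L_kr_m+1)$. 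Feasibility then follows at once. Minimality of $\widehat{\overline N}_{\main,m,k}$ gives $\widehat F^{\pilot}_{m,k}(\widehat{\overline N}_{\main,m,k})\le\widehat F^{\pilot}_{m,k}(\tilde N)$. Adding the one-item cushion costs $\cdata$, which gives \eqref{eq:pilot-outer-stability}. Throughout we work on $\calE_{\pilot,m}$ with $m$ large enough that $r_m\le f_\star/4$, so that Lemma~\ref{lem:pilot-perturbation-2}(ii)--(iii) apply.

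The construction mirrors the perturbation argument in the proof of Lemma~\ref{lem:LB-perturb}. For each $h$, take an optimal triple $(n_{\Hum},n_{\AI},n_{\esc})$ for $\Gamma_h(T_{h,k},N^\star_k)$. Its true information value satisfies $I^{(h)}_{\mathrm{plan}}\ge T_{h,k}$. By Lemma~\ref{lem:pilot-perturbation-2}(iii), its plug-in value satisfies $\hat I^{(h)}_{\mathrm{plan}}\ge T_{h,k}-L_{\mathrm{loc}}N^\star_kr_m$. The guarded target at a pool of size $\tilde N$ is $T_{h,k}+(\tilde N+1)\varepsilon^{\mathrm{dr}}_m$. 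We therefore add $R$ direct-human items, each contributing exactly $J_X^{(h)}$ to the plug-in information, since $J_X^{(h)}$ is unchanged. We need
\[
RJ_X^{(h)}\ge L_{\mathrm{loc}}N^\star_kr_m+(N^\star_k+R+1)\varepsilon^{\mathrm{dr}}_m .
\]
Because $\varepsilon^{\mathrm{dr}}_m=3|\calR|B_\ell r_m\to0$, for $m$ large we have $\varepsilon^{\mathrm{dr}}_m\le\tfrac12\min_hJ_X^{(h)}$. The $R$ terms can then be absorbed, and the choice
\[
R:=\Bigl\lceil 2\bigl(L_{\mathrm{loc}}N^\star_kr_m+(N^\star_k+1)\varepsilon^{\mathrm{dr}}_m\bigr)/\min_hJ_X^{(h)}\Bigr\rceil=O(L_kr_m+1)
\]
works for both $h$. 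The capacity constraint $n_{\Hum}+R+n_{\AI}\le\tilde N$ and the constraint $n_{\esc}\le n_{\AI}$ hold trivially. The floor $\tilde N\ge N_{\fixed,\Hum}(\alpha_{2,k},\beta_{2,k})$ is inherited from $N^\star_k$. So $\tilde N$ is admissible, and $\hat\Gamma_h(\widehat T^{\pilot}_{h,m,k}(\tilde N),\tilde N)\le\Gamma_h(T_{h,k},N^\star_k)+\cH R$. This holds because the objective of \eqref{eq:pilot-hat-Gamma} is channel-free: costs depend only on the counts, not on $\hat f$. Taking the maximum over $h$ and adding $\tilde N\cdata$ gives $\widehat F^{\pilot}_{m,k}(\tilde N)\le F^\Delta_k(N^\star_k)+(\cdata+\cH)R$.

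I expect the main obstacle to be the self-referential guard. The target $\widehat T^{\pilot}_{h,m,k}(N)$ grows with the pool $N$ being chosen, so adding items raises the bar they must clear. Absorbing this requires $\varepsilon^{\mathrm{dr}}_m<J_X^{(h)}$, which holds only for large $m$; this is exactly why the statement is restricted to sufficiently large $m,k$. A secondary point is ensuring that $L_{\mathrm{loc}}$ and the bound $N^\star_k=O(L_k)$ are uniform, so that the constant in $O(L_kr_m+1)$ is primitive. This follows from Lemma~\ref{lemma-appendix:N-main} and the primitive constant in Lemma~\ref{lem:pilot-perturbation-2}.
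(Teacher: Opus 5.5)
Your proposal is correct and follows essentially the same route as the paper. You start from a $\Gamma_h$-optimizer at the unguarded minimizer $\overline N_{\main,k}$ and append $O(L_kr_m+1)$ direct-human items. These absorb both the plug-in information loss from Lemma~\ref{lem:pilot-perturbation-2}(iii) and the self-referential drift guard, and you conclude via minimality of $\widehat{\overline N}_{\main,m,k}$ plus the one-item cushion. The only cosmetic difference is that you size the reserve with denominator $\tfrac12\min_hJ_X^{(h)}$, whereas the paper uses $J_{\min}-\varepsilon_m^{\mathrm{dr}}$.
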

Proof of Lemma~\ref{lem:pilot-outer-stability} is in Appendix~\ref{appendix-sec:pilot-outer-stability}.
Finally, Lemma~\ref{appendix-lem:pilot-hat-N-main} in Appendix~\ref{appendix-sec:pilot-outer-stability} gives
$\omega_{m_{\pilot,k},k}=O(L_kr_{m_{\pilot,k}})=o(\Delta_k)$, so
$\Delta^{\eff}_{m_{\pilot,k},k}=\Delta_k\{1-o(1)\}$.
Combine \eqref{eq:pilot-outer-stability} and
\eqref{eq:pilot-total-cost}, and then applying Lemma~\ref{lem:LB-perturb} gives the desired argument.
\end{proof}

\subsection{Proof of Lemma~\ref{lem:pilot-outer-stability}}\label{appendix-sec:pilot-outer-stability}

\begin{proof}{Proof of Lemma~\ref{lem:pilot-outer-stability}}
Since $r_m\to0$ as $m\to\infty$, we have $r_m\le f_{\star}/4$ for
all sufficiently large $m$, so Lemma~\ref{lem:pilot-perturbation-2}(ii)
applies; moreover $\varepsilon_m^{\mathrm{dr}}=3|\calR|B_{\ell}r_m\to0$
and, by the first term of \eqref{eq:pilot-local-bound},
$\varepsilon_m^{\mathrm{LR}}\le L_{\mathrm{loc}}r_m$. Asymptotic
statements are understood along any pilot-size sequence
$m=m_k\to\infty$ satisfying \eqref{eq:pilot-main-rate-condition}, and
``for all sufficiently large $m,k$'' refers to such sequences.

In order to prove Lemma~\ref{lem:pilot-outer-stability}, we construct $N'\in \mathbb{Z}_+$ such that 
$$
\hat F^{\pilot}_{m,k}(N')\leq \min_{N\in\mathbb Z_+, N\ge N_{\fixed,\Hum}(\alpha_{2,k},\beta_{2,k})}
 \left\{N\cdata+\max_h\Gamma_h(T_{h,k},N)\right\}
 +O(L_kr_m+1).
$$
Then, the argument follows given that $\hat F^{\pilot}_{m,k}(\widehat{\overline N}_{\main,m,k})\leq \hat F^{\pilot}_{m,k}(N')$. Recall from \eqref{eq:Nbar-main-choice} that $\overline N_{\main,k}\in
    \argmin_{N\ge N_{\fixed,\Hum}(\alpha_{2,k},\beta_{2,k})}F_k^\Delta(N)$ and that according to Lemma~\ref{lemma-appendix:N-main}, $\overline N_{\main,k} = O(L_k)$.

% \emph{Step 1: the true outer minimum and its optimizer are $O(L_k)$.}
% The minimum on the right-hand side of \eqref{eq:pilot-outer-stability}
% is $\min_{N\ge N_{\fixed,\Hum}(\alpha_{2,k},\beta_{2,k})}F^\Delta_k(N)$ with
% $F^\Delta_k$ the buffered design value \eqref{eq:F-Delta-def}, taken
% over integer $N$; let
% $N_k^*:=\overline N_{\main,k}\in\argmin F^\Delta_k$ as in
% \eqref{eq:Nbar-main-choice}. By Lemma~\ref{lem:LB-perturb},
% \[
%  F^\Delta_k(N_k^*)
%  \le N_{\main,k}\cdata+\max_h\Gamma_h(T_{h,k},N_k^*)
%  \le\LB_k+C_P\Delta_k
%    +O\!\left(\log L_k+\log\frac1{f_{\fb,k}}
%      +\log\frac1{1-f_{\fb,k}}\right).
% \]
% Since $\LB_k=\Theta(L_k)$ by Theorem~\ref{thm:first-order}(i),
% $\Delta_k=o(L_k)$ by admissibility, and the logarithmic
% terms are $o(L_k)$ because $f_{\fb,k}$ is bounded away from zero and
% one, the true outer minimum satisfies $F^\Delta_k(N_k^*)=O(L_k)$;
% and since $F^\Delta_k(N_k^*)\ge N_k^*\cdata$, also $N_k^*=O(L_k)$.

For each $h$, let
$(n^*_{\Hum,h},n^*_{\AI,h},n^*_{\esc,h})$ be a $\Gamma_h$-optimizer at
$(T_{h,k},\overline N_{\main,k})$; its true information value
$I^{(h)}_{\mathrm{plan}}$ is at least $T_{h,k}$ by feasibility. By
\eqref{eq:pilot-plan-information-bound} with $N=\overline N_{\main,k}$, the plug-in
value of $(n^*_{\Hum,h},n^*_{\AI,h},n^*_{\esc,h})$ satisfies
$\hat I^{(h)}_{\mathrm{plan}}\ge I^{(h)}_{\mathrm{plan}} -L_{\mathrm{loc}}\overline N_{\main,k}r_m\geq  T_{h,k}-L_{\mathrm{loc}}\overline N_{\main,k}r_m$.
Append
\[
 R_k:=\left\lceil
 \frac{L_{\mathrm{loc}}\overline N_{\main,k}r_m+(\overline N_{\main,k}+1)
       \varepsilon_m^{\mathrm{dr}}}
      {J_{\min}-\varepsilon_m^{\mathrm{dr}}}\right\rceil,
 \qquad J_{\min}:=\min_hJ_X^{(h)},
\]
direct-human items; the denominator is positive for all large $m$
because $\varepsilon_m^{\mathrm{dr}}\to0$ while $J_{\min}>0$ is a
primitive constant. Let $N':=\overline N_{\main, k} + R_k$. 
We now verify that $(n^*_{\Hum,h}+R_k,\,n^*_{\AI,h},\,n^*_{\esc,h})$ is feasible to \eqref{eq:pilot-hat-Gamma} at $T = \widehat T^{\pilot}_{h,m,k}
      (N')$ and $N = N'$.  
First, by construction $n^*_{\esc,h}\leq n^*_{\AI,h}$ and 
$$
n^*_{\Hum,h}+R_k + n^*_{\AI,h}\leq \overline N_{\main, k} + R_k = N'.
$$
Additionally, the plug-in information value provided by the triple under $H_h$ satisfies
\begin{equation*}
    \begin{split}
       &(n^*_{\Hum,h}+R_k)J^{(h)}_X + n^*_{\AI,h}\hat I^{(h)}_R + n^*_{\AI,h}\hat \Psi_h(n^*_{\esc,h}/n^*_{\AI,h})\\
       \geq& n^*_{\Hum,h}J^{(h)}_X+R_kJ_{\min} + n^*_{\AI,h}\hat I^{(h)}_R + n^*_{\AI,h}\hat\Psi_h(n^*_{\esc,h}/n^*_{\AI,h})\\
       \geq& \hat I^{(h)}_{\mathrm{plan}}+R_kJ_{\min}\\
 \ge& T_{h,k}-L_{\mathrm{loc}}\overline N_{\main,k}r_m
   +R_k\bigl(J_{\min}-\varepsilon_m^{\mathrm{dr}}\bigr)
   +R_k\varepsilon_m^{\mathrm{dr}}\\
 \ge&T_{h,k}+(N'+1)\varepsilon_m^{\mathrm{dr}}\\
 =&\widehat T^{\pilot}_{h,m,k}(N').
    \end{split}
\end{equation*}

Since
$N'\ge \overline N_{\main,k}\ge N_{\fixed,\Hum}(\alpha_{2,k},\beta_{2,k})$, $N'$ is feasible to \eqref{eq:pilot-Nbar-choice}. Moreover,
since the objectives of $\Gamma_h$ and $\hat\Gamma_h$ share the
channel-free cost coefficients $(\cH,\cAI,\cH)$, the repaired triple
costs $\Gamma_h(T_{h,k},\overline N_{\main,k})+R_k\cH$ in the plug-in program, so
\[
 \widehat F^{\pilot}_{m,k}(N')
 \le N'\cdata+\max_h\Gamma_h(T_{h,k},\overline N_{\main,k})+R_k\cH
 =F^\Delta_k(\overline N_{\main,k})+R_k(\cdata+\cH).
\]
The left-hand side of \eqref{eq:pilot-outer-stability} equals
$\widehat F^{\pilot}_{m,k}(\widehat{\overline N}_{\main,m,k})+\cdata
\le\widehat F^{\pilot}_{m,k}(N')+\cdata$ by
\eqref{eq:pilot-Nbar-choice}. Since $\overline N_{\main,k}=O(L_k)$ and
$\varepsilon_m^{\mathrm{dr}}=O(r_m)$, the numerator of $R_k$ is
$O(L_kr_m)$, so $R_k=O(L_kr_m+1)$, and
\eqref{eq:pilot-outer-stability} follows.
\end{proof}

Lemma~\ref{appendix-lem:pilot-hat-N-main} is an immediate consequence of Lemma~\ref{lem:pilot-outer-stability}. 
\begin{lemma}
\label{appendix-lem:pilot-hat-N-main}
On $\calE_{\pilot,m}$ and for all sufficiently large $m,k$, $\widehat N_{\main,m,k}=O(L_k)$ and $\omega_{m,k}=O(L_kr_m)=o(\Delta_k)$.
\end{lemma}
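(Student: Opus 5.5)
The statement follows by chaining the outer-stability bound of Lemma~\ref{lem:pilot-outer-stability} with the perturbation bound of Lemma~\ref{lem:LB-perturb}. We work on $\calE_{\pilot,m}$, for $m,k$ large enough that $r_m\le f_\star/4$ and the guarded outer problem is feasible, which Lemma~\ref{lem:pilot-outer-stability} guarantees. In this branch $\widehat N_{\main,m,k}=\widehat{\overline N}_{\main,m,k}+1$ and $\omega_{m,k}=\widehat N_{\main,m,k}\varepsilon_m^{\mathrm{LR}}$, so both claims reduce to bounding $\widehat N_{\main,m,k}$.

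\emph{First claim.} Since every $\hat\Gamma_h\ge0$, the left side of \eqref{eq:pilot-outer-stability} is at least $\widehat N_{\main,m,k}\cdata$. Hence
\[
\widehat N_{\main,m,k}\cdata\le \min_{N\ge N_{\fixed,\Hum}(\alpha_{2,k},\beta_{2,k})}F^\Delta_k(N)+O(L_kr_m+1).
\]
The minimum on the right equals $F^\Delta_k(\overline N_{\main,k})$. By Lemma~\ref{lem:LB-perturb} it is at most $\LB_k+C_P\Delta_k+O(\log L_k)$, using that $f_{\fb,k}$ is bounded away from $0$ and $1$. Theorem~\ref{thm:first-order}(i) gives $\LB_k=\Theta(L_k)$, and $\Delta_k=o(L_k)$, so this minimum is $O(L_k)$; Lemma~\ref{lemma-appendix:N-main} gives the same conclusion. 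Since $r_m\le1$, the remainder $O(L_kr_m+1)$ is also $O(L_k)$. Dividing by the fixed constant $\cdata>0$ yields $\widehat N_{\main,m,k}=O(L_k)$.

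\emph{Second claim.} Lemma~\ref{lem:pilot-perturbation-2}(ii) gives $\varepsilon_m^{\mathrm{LR}}\le L_{\mathrm{loc}}r_m$ whenever $r_m\le f_\star/4$. This holds for large $m$ because $r_m=O(\sqrt{\log m/m})\to0$. Therefore
\[
\omega_{m,k}\le \widehat N_{\main,m,k}L_{\mathrm{loc}}r_m=O(L_kr_m).
\]
Along the pilot sequence, condition \eqref{eq:pilot-main-rate-condition}, namely $L_kr_{m_{\pilot,k}}=o(\Delta_k)$, converts this into $\omega_{m,k}=o(\Delta_k)$.

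The only step needing care is that the $O(\cdot)$ constants hidden in Lemma~\ref{lem:pilot-outer-stability} are primitive, that is, free of the pilot realization. This holds because its proof uses only $\overline N_{\main,k}=O(L_k)$, $L_{\mathrm{loc}}$, $J_{\min}$, and $\varepsilon_m^{\mathrm{dr}}\to0$, none of which depends on $\calD_m$. Beyond this, the argument is routine bookkeeping.
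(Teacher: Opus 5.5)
Your proposal is correct and follows essentially the same route as the paper. The paper also bounds $\widehat N_{\main,m,k}\cdata$ by the left side of \eqref{eq:pilot-outer-stability} and uses $F^\Delta_k(\overline N_{\main,k})=O(L_k)$. It then concludes with $\omega_{m,k}\le \widehat N_{\main,m,k}L_{\mathrm{loc}}r_m=O(L_kr_m)=o(\Delta_k)$ via the rate condition.

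The only difference is cosmetic. You absorb the $O(L_kr_m+1)$ remainder using $r_m\le 1$, whereas the paper uses $L_kr_m=o(L_k)$. As a result, your first claim does not rely on the rate condition.
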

\begin{proof}{Proof of Lemma~\ref{appendix-lem:pilot-hat-N-main}}
According to Lemma~\ref{lem:pilot-outer-stability}, 
$\widehat N_{\main,m,k}\cdata\leq F^\Delta_k(\overline N_{\main,k})+O(L_kr_m+1)=O(L_k)$ by Lemma~\ref{lemma-appendix:N-main} and
$L_kr_m=o(\Delta_k)=o(L_k)$. Hence
$\widehat N_{\main,m,k}=O(L_k)$, and, by the first term of
\eqref{eq:pilot-local-bound} with $L_{\mathrm{loc}}$ the constant of
Lemma~\ref{lem:pilot-perturbation-2}(ii),
$\omega_{m,k}
 =\widehat N_{\main,m,k}\varepsilon_m^{\mathrm{LR}}
 \le\widehat N_{\main,m,k}L_{\mathrm{loc}}r_m
 =O(L_kr_m)
 =o(\Delta_k)$. 
\end{proof}

\subsection{Proof of Lemma~\ref{lem:pilot-fallback-prob}}\label{appendix-sec:pilot-fallback-prob}

We first provide an auxiliary Lemma~\ref{lem:pilot-increment-conditions} together with its proof, and then prove Lemma~\ref{lem:pilot-fallback-prob} at the end of this section.

Write $\sigma_1:=1$ and
$\sigma_0:=-1$.  The signed one-item means under the true and plug-in
channels are, respectively, $\E^{\hat\pi_{m,k}}_h
   [\sigma_h\hat Z^{\mathrm{item}}_i(\hat\eta_{j,m,k})\mid\calD_m]$ and $\E^{\hat\pi_{m,k}}_{h,\hat f}
   [\sigma_h\hat Z^{\mathrm{item}}_i(\hat\eta_{j,m,k})\mid\calD_m]$.

According to \eqref{eq:pilot-drift-cost-perturbation} applied with
$\eta=\hat\eta_{j,m,k}$, on $\calE_{\pilot,m}$,
we have
\begin{equation}\label{eq:pilot-signed-mean-perturbation}
 \left|\E^{\hat\pi_{m,k}}_h
   [\sigma_h\hat Z^{\mathrm{item}}_i(\hat\eta_{j,m,k})\mid\calD_m]
 -\E^{\hat\pi_{m,k}}_{h,\hat f}
   [\sigma_h\hat Z^{\mathrm{item}}_i(\hat\eta_{j,m,k})\mid\calD_m]\right|
 \le\varepsilon_m^{\mathrm{dr}}.
\end{equation}

\begin{lemma}[Uniform pilot increment conditions]
\label{lem:pilot-increment-conditions}
Let $\widehat N^{\mathrm{plan}}_{h,m,k}:= \lceil\hat n^*_{\Hum,h,m,k}\rceil
      +\lceil\hat n^*_{\AI,h,m,k}\rceil$. For $h\in\{0, 1\}$, the correct-direction rounded block
also satisfies
\begin{equation}\label{eq:pilot-planned-true-drift}
    \widehat N^{\mathrm{plan}}_{h,m,k}\,
    \E^{\hat\pi_{m,k}}_h
      [\sigma_h\hat Z^{\mathrm{item}}_i(\hat\eta_{h,m,k})\mid\calD_m]\ge T_{h,k}.
\end{equation}

Moreover, there are
primitive constants $c_\mu>0$ and $C_V<\infty$ such that, on
$\calE_{\pilot,m}$ and for every $j\in\{0,1,*\}$ and all sufficiently large $m,k$, the following inequalities hold almost surely,
\begin{equation}\label{eq:pilot-increment-conditions-short}
\begin{split}
    |\hat Z^{\mathrm{item}}_i(\hat\eta_{j,m,k})|\le 2B_{\ell},\qquad
 \E^{\hat\pi_{m,k}}_h
      [\sigma_h\hat Z^{\mathrm{item}}_i(\hat\eta_{j,m,k})\mid\calD_m]
      \ge c_\mu,\\
 \Var_h(\hat Z^{\mathrm{item}}_i(\hat\eta_{j,m,k})\mid\calD_m)
      \le C_V\E^{\hat\pi_{m,k}}_h
      [\sigma_h\hat Z^{\mathrm{item}}_i(\hat\eta_{j,m,k})\mid\calD_m].
\end{split}
\end{equation}
\end{lemma}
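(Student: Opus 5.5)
The argument mirrors the proof of Lemma~\ref{lem:variance-drift}, run first under the plug-in channel $\hat f$ and then transferred to the true channel through \eqref{eq:pilot-signed-mean-perturbation}.

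\emph{Planned-block inequality.} Under $\hat f$, the increments $\ell_X,\hat\ell_R,\hat\ell_H$ are exact log-likelihood ratios. The same termwise computation as \eqref{eq:drift-expansion} therefore gives
\[
\E^{\hat\pi_{m,k}}_{h,\hat f}\bigl[\sigma_h\hat Z^{\mathrm{item}}_i(\hat\eta_{j,m,k})\mid\calD_m\bigr]
=\hat\eta^{\Hum}_{j,m,k}J_X^{(h)}+(1-\hat\eta^{\Hum}_{j,m,k})\Bigl[\hat I_R^{(h)}+\sum_r\hat g_h(r)\hat\eta^{\esc}_{j,m,k}(r)\hat d^{(h)}(r)\Bigr].
\]
For $j=h$, the escalation rule attains $\hat\Psi_h(\hat s_{h,m,k})$, by Proposition~\ref{prop:escalation-solution} applied to the plug-in frontier. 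Multiplying by $\widehat N^{\mathrm{plan}}_{h,m,k}$ and using the ceilings exactly as in \eqref{eq:correct-rule-target}, I get
\[
\widehat N^{\mathrm{plan}}_{h,m,k}\,\E_{h,\hat f}[\cdot]\ge \hat I^{(h)}_{\mathrm{plan}}\ge \widehat T^{\pilot}_{h,m,k}(\widehat{\overline N}_{\main,m,k})=T_{h,k}+(\widehat{\overline N}_{\main,m,k}+1)\varepsilon_m^{\mathrm{dr}}.
\]
The last inequality is the guarded information constraint met by the optimizer. I then pass to the true channel. By \eqref{eq:pilot-signed-mean-perturbation},
\[
\widehat N^{\mathrm{plan}}_{h,m,k}\E_h[\cdot]\ge \widehat N^{\mathrm{plan}}_{h,m,k}\E_{h,\hat f}[\cdot]-\widehat N^{\mathrm{plan}}_{h,m,k}\varepsilon_m^{\mathrm{dr}}.
\]
The cushion gives $\widehat N^{\mathrm{plan}}_{h,m,k}\le \widehat{\overline N}_{\main,m,k}+1$, in the manner of \eqref{eq:B-fits-pool}. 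So the guard term $(\widehat{\overline N}_{\main,m,k}+1)\varepsilon_m^{\mathrm{dr}}$ absorbs the perturbation, which yields \eqref{eq:pilot-planned-true-drift}. This is precisely why the drift guard was designed this way.

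\emph{Increment bound.} Lemma~\ref{lem:pilot-perturbation-1}(i) gives $|\ell_X|,|\hat\ell_R|\le B_\ell$. On the escalation branch the total is $\hat\ell_R+\hat\ell_H=\ell_X$ by \eqref{eq:pilot-cancellation}, so every completed-item increment has absolute value at most $B_\ell\le 2B_\ell$.

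\emph{Uniform drift.} For $j=h$, I divide \eqref{eq:pilot-planned-true-drift} by $\widehat N^{\mathrm{plan}}_{h,m,k}\le\widehat N_{\main,m,k}=O(L_k)$, using Lemma~\ref{appendix-lem:pilot-hat-N-main}. Since $T_{h,k}=\Theta(L_k)$, this gives a primitive lower bound $c_{\mathrm{drift}}$.

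For $j=1-h$, I redo the reverse comparison of Lemma~\ref{lem:variance-drift}(ii) under $\hat f$. The constants $c_{\mathrm{cmp}}$ and $m_g$ must now be made uniform over plug-in channels. This is the main obstacle: a zero component pair under $f$ (e.g.\ $d^{(1-h)}(r)=0$) can become small but positive under $\hat f$, so the ratio definition of $c_{\mathrm{cmp}}$ is not stable. I would first try to bound the ratios $\klbin(a\Vert b)/\klbin(b\Vert a)$ and $D(\hat G_h\Vert\hat G_{1-h})/D(\hat G_{1-h}\Vert\hat G_h)$ directly. On $\calE_{\pilot,m}$ with $r_m\le f_\star/4$, all masses and posteriors lie in a fixed compact interior box, by Lemma~\ref{lem:pilot-perturbation-2}(ii). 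KL divergences there are comparable to the squared distance between the arguments up to primitive constants, so the ratio of the two directions is bounded below uniformly. Likewise $\hat g_h/\hat g_{1-h}\ge m_g'>0$. This yields
\[
\E_{h,\hat f}[\sigma_h\hat Z(\hat\eta_{1-h})]\ge c'_\times\,\E_{1-h,\hat f}[\sigma_{1-h}\hat Z(\hat\eta_{1-h})]\ge c'_\times(c_{\mathrm{drift}}-\varepsilon_m^{\mathrm{dr}}).
\]
Transferring once more costs another $\varepsilon_m^{\mathrm{dr}}\to0$. The dead-zone rule $j=*$ is handled by the same $\tfrac14$ averaging argument as before. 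Taking $c_\mu$ as half the minimum of these bounds covers all large $m,k$.

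\emph{Variance.} Finally, $\Var_h\le 4B_\ell^2\le (4B_\ell^2/c_\mu)\,\E_h[\sigma_h\hat Z]$, so $C_V:=4B_\ell^2/c_\mu$.
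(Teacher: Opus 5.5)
Your proposal is correct and follows essentially the same route as the paper. It uses the plug-in block inequality absorbed by the $(\widehat{\overline N}_{\main,m,k}+1)\varepsilon_m^{\mathrm{dr}}$ guard, a uniform reverse comparison under $\hat f$ via KL-versus-squared-distance on the lower-bounded box (the paper uses Pinsker plus the $\chi^2$ bound to get $c_{\mathrm{cmp}}=q_\star/2$), and a transfer to the true channel at cost $\varepsilon_m^{\mathrm{dr}}$. Your use of the cancellation identity to get $|\hat Z^{\mathrm{item}}_i|\le B_\ell$ sharpens the paper's $2B_\ell$; just make sure the $\tfrac14$ dead-zone averaging is carried out under $\hat f$, where every term of the drift expansion is nonnegative, before transferring.
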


\begin{proof}{Proof of Lemma~\ref{lem:pilot-increment-conditions}}
Work on $\calE_{\pilot,m}$ for all sufficiently large $m,k$,
such that the guarded outer
problem \eqref{eq:pilot-Nbar-choice} is feasible, 
$\widehat N_{\main,m,k}=O(L_k)$, 
$r_m\le f_{\star}/4$ and
$\varepsilon_m^{\mathrm{dr}}=3|\calR|B_{\ell} r_m\to0$. We have that
\[
 \E^{\hat\pi_{m,k}}_{h,\hat f}
   [\sigma_h\hat Z^{\mathrm{item}}_i(\hat\eta_{j,m,k})\mid\calD_m]
 =\hat\eta^{\Hum}_{j,m,k}J_X^{(h)}
 +(1-\hat\eta^{\Hum}_{j,m,k})
   \Bigl(\hat I_R^{(h)}
     +\sum_{r\in\calR}\hat g_h(r)\,
        \hat\eta^{\esc}_{j,m,k}(r)\,\hat d^{(h)}(r)\Bigr).
\]
\emph{Step 1: show that
$\widehat N^{\mathrm{plan}}_{h,m,k}\E^{\hat\pi_{m,k}}_h
[\sigma_h\hat Z^{\mathrm{item}}_i(\hat\eta_{h,m,k})\mid\calD_m]\ge T_{h,k}$.}
Repeating the derivation of
\eqref{eq:correct-rule-target} for the plug-in program gives
\[
 \widehat N^{\mathrm{plan}}_{h,m,k}\,
 \E^{\hat\pi_{m,k}}_{h,\hat f}
   [\sigma_h\hat Z^{\mathrm{item}}_i(\hat\eta_{h,m,k})\mid\calD_m]
 \ge\widehat T^{\pilot}_{h,m,k}(\widehat{\overline N}_{\main,m,k})
 =T_{h,k}+\widehat N_{\main,m,k}\varepsilon_m^{\mathrm{dr}},
\]
using $\widehat N_{\main,m,k}=\widehat{\overline N}_{\main,m,k}+1$ in
\eqref{eq:pilot-guarded-targets}. Combining the above display with
\eqref{eq:pilot-signed-mean-perturbation} and
$\widehat N^{\mathrm{plan}}_{h,m,k}\le\widehat N_{\main,m,k}$ (which follows since $\widehat N^{\mathrm{plan}}_{h,m,k}
 \le\lceil\hat n^*_{\Hum,h,m,k}+\hat n^*_{\AI,h,m,k}\rceil+1
 \le\lceil\widehat{\overline N}_{\main,m,k}\rceil+1
 =\widehat N_{\main,m,k}$) gives
\begin{align*}
 \widehat N^{\mathrm{plan}}_{h,m,k}\,
 \E^{\hat\pi_{m,k}}_h
   [\sigma_h\hat Z^{\mathrm{item}}_i(\hat\eta_{h,m,k})\mid\calD_m]
 &\ge\widehat N^{\mathrm{plan}}_{h,m,k}\,
 \E^{\hat\pi_{m,k}}_{h,\hat f}
   [\sigma_h\hat Z^{\mathrm{item}}_i(\hat\eta_{h,m,k})\mid\calD_m]
   -\widehat N^{\mathrm{plan}}_{h,m,k}\,\varepsilon_m^{\mathrm{dr}}\\
 &\ge T_{h,k}
   +\bigl(\widehat N_{\main,m,k}-\widehat N^{\mathrm{plan}}_{h,m,k}\bigr)
     \varepsilon_m^{\mathrm{dr}}\\
 &\ge T_{h,k},
\end{align*}
which is \eqref{eq:pilot-planned-true-drift}.

\emph{Step 2: show that
$\E^{\hat\pi_{m,k}}_h
[\sigma_h\hat Z^{\mathrm{item}}_i(\hat\eta_{h,m,k})\mid\calD_m]
\ge c_{\mathrm{plan}}$ for a primitive
$c_{\mathrm{plan}}>0$ for the \textit{correct} rule.}
According to Step 1, $\E^{\hat\pi_{m,k}}_h
   [\sigma_h\hat Z^{\mathrm{item}}_i(\hat\eta_{h,m,k})\mid\calD_m]\geq T_{h,k}/\widehat N^{\mathrm{plan}}_{h,m,k}\geq T_{h,k}/\widehat N_{\main,m,k}$ (note that $\widehat N^{\mathrm{plan}}_{h,m,k} \geq 1$ because
$\widehat T^{\pilot}_{h,m,k}(\widehat{\overline N}_{\main,m,k})>0$ forces a nonzero optimizer $(\hat n^*_{\Hum,h,m,k},\hat n^*_{\AI,h,m,k},
      \hat n^*_{\esc,h,m,k})\neq (0,0,0)$. Using that 
$T_{h,k}\ge\min\{a_k,b_k\}
\ge\min\{\log(1/\alpha_{1,k}),\log(1/\beta_{1,k})\}\ge c_TL_k$ for a
primitive $c_T>0$ and all large $k$ and that $\widehat N_{\main,m,k} = O(L_k)$, the correct rule
satisfies
$\E^{\hat\pi_{m,k}}_h
[\sigma_h\hat Z^{\mathrm{item}}_i(\hat\eta_{h,m,k})\mid\calD_m]\ge
T_{h,k}/\widehat N_{\main,m,k}\ge c_{\mathrm{plan}}$ for a primitive
$c_{\mathrm{plan}}>0$ with the \textit{correct} sensing rule $\hat\eta_{h,m,k}$ under $H_h$. 

\emph{Step 3: show that
$\E^{\hat\pi_{m,k}}_h
[\sigma_h\hat Z^{\mathrm{item}}_i(\hat\eta_{j,m,k})\mid\calD_m]\ge c_\mu$
for a primitive constant $c_\mu$ for every $h\in\{0,1\}$ and $j\in\{0,1,*\}$.}

We first show that
$\E^{\hat\pi_{m,k}}_{h,\hat f}
[\sigma_h\hat Z^{\mathrm{item}}_i(\hat\eta_{j,m,k})\mid\calD_m]\ge
c_\times^0\E^{\hat\pi_{m,k}}_{1-h,\hat f}
[\sigma_{1-h}\hat Z^{\mathrm{item}}_i(\hat\eta_{j,m,k})\mid\calD_m]$ for a primitive constant
$c^0_\times$.
Analogous to the proof of Lemma~\ref{lem:variance-drift}(ii), it
suffices to show that there exists a constant $c_{\mathrm{cmp}}>0$ such that for
both $h$ and every $r$,
\[
 J_X^{(h)}\ge c_{\mathrm{cmp}}J_X^{(1-h)},
 \qquad
 \hat I_R^{(h)}\ge c_{\mathrm{cmp}}\hat I_R^{(1-h)},
 \qquad
 \hat d^{(h)}(r)\ge c_{\mathrm{cmp}}\hat d^{(1-h)}(r)
\]
and 
\[
 m_g:=\min_{h,r}\frac{\hat g_h(r)}{\hat g_{1-h}(r)}>0.
\]
To verify these bounds uniformly over the realized pilot, work on
$\calE_{\pilot,m}$ and take $m$ sufficiently large that
$r_m\le f_{\star}/4$.  By Lemma~\ref{lem:pilot-perturbation-2}(ii), all
coordinates of the plug-in report laws $\hat g_h$ and the plug-in
conditional-label laws $\hat\rho_h(\cdot\mid r)$ are bounded below by
some primitive constant, let it be $q_\star$.  The same is true of the fixed
direct-label laws because $0<p_0<p_1<1$.

If two probability vectors $P,Q$ on a common finite alphabet have all
coordinates at least $q_\star$, then Pinsker's inequality and the
chi-square upper bound give
\[
 D(P\Vert Q)
 \ge \frac12\lVert P-Q\rVert_1^2
 \ge \frac12\lVert P-Q\rVert_2^2,
 \qquad
 D(Q\Vert P)
 \le \sum_x\frac{(Q(x)-P(x))^2}{P(x)}
 \le q_\star^{-1}\lVert P-Q\rVert_2^2.
\]
Consequently,
\[
 D(P\Vert Q)\ge\frac{q_\star}{2}D(Q\Vert P).
\]
This inequality also holds when $P=Q$, since then both divergences are
zero.  Applying it to the direct-label pair, the plug-in report pair,
and each plug-in conditional-label pair proves the three component
comparisons above with $c_{\mathrm{cmp}}:=q_\star/2$.  Moreover,
$\hat g_h(r)\ge q_\star$ and $\hat g_{1-h}(r)\le1$ imply
$m_g\ge q_\star>0$.  Thus for $j\in\{0,1,*\}$, we have $\E^{\hat\pi_{m,k}}_{h,\hat f}
 [\sigma_h\hat Z^{\mathrm{item}}_i(\hat\eta_{j,m,k})\mid\calD_m]
 \ge c_\times^0\E^{\hat\pi_{m,k}}_{1-h,\hat f}
 [\sigma_{1-h}\hat Z^{\mathrm{item}}_i(\hat\eta_{j,m,k})\mid\calD_m]$
where $c^0_\times:=c_{\mathrm{cmp}}\min\{1,m_g\}>0$ is a primitive constant that does not depend on $m$ or $k$.

Now,
analogous to the proof of Lemma~\ref{lem:variance-drift}(iii), we have
that $\E^{\hat\pi_{m,k}}_{h,\hat f}
[\sigma_h\hat Z^{\mathrm{item}}_i(\hat\eta_{h,m,k})\mid\calD_m]\ge
T_{h,k}/\widehat N_{\main,m,k}\ge c_{\mathrm{plan}}$ for a primitive
$c_{\mathrm{plan}}>0$ for $h\in\{0,1\}$.
Hence the preceding
comparison gives
$\E^{\hat\pi_{m,k}}_{h,\hat f}
[\sigma_h\hat Z^{\mathrm{item}}_i(\hat\eta_{1-h,m,k})\mid\calD_m]
\ge c^0_\times c_{\mathrm{plan}}$ for the \textit{wrong} rule, while
the dead-zone averaging rule gives
$\E^{\hat\pi_{m,k}}_{h,\hat f}
[\sigma_h\hat Z^{\mathrm{item}}_i(\hat\eta_{*,m,k})\mid\calD_m]\ge c_{\mathrm{plan}}/4$. Let
$c_{\mathrm{plug}}:=\min\{c_{\mathrm{plan}},
c_\times^0c_{\mathrm{plan}},c_{\mathrm{plan}}/4\}>0$.
Then
$\E^{\hat\pi_{m,k}}_{h,\hat f}
[\sigma_h\hat Z^{\mathrm{item}}_i(\hat\eta_{j,m,k})\mid\calD_m]\ge c_{\mathrm{plug}}$ for every
$h$ and $j\in\{0,1,*\}$. By
\eqref{eq:pilot-signed-mean-perturbation}, uniformly over these six
choices,
\[
 \E^{\hat\pi_{m,k}}_h
 [\sigma_h\hat Z^{\mathrm{item}}_i(\hat\eta_{j,m,k})\mid\calD_m]
 \ge\E^{\hat\pi_{m,k}}_{h,\hat f}
 [\sigma_h\hat Z^{\mathrm{item}}_i(\hat\eta_{j,m,k})\mid\calD_m]
 -\varepsilon_m^{\mathrm{dr}}.
\]
Since $\varepsilon_m^{\mathrm{dr}}\to0$, it is at most
$c_{\mathrm{plug}}/2$ for all sufficiently large $m$.  Thus
\[
 \E^{\hat\pi_{m,k}}_h
 [\sigma_h\hat Z^{\mathrm{item}}_i(\hat\eta_{j,m,k})\mid\calD_m]
 \ge c_{\mathrm{plug}}/2=:c_\mu>0,
\]
which proves the mean bound in
\eqref{eq:pilot-increment-conditions-short}.

\emph{Step 4: show that $|\hat Z^{\mathrm{item}}_i(\hat\eta_{j,m,k})|\le 2B_{\ell}$ and
$\Var_h(\hat Z^{\mathrm{item}}_i(\hat\eta_{j,m,k})\mid\calD_m)
      \le C_V\E^{\hat\pi_{m,k}}_h
      [\sigma_h\hat Z^{\mathrm{item}}_i(\hat\eta_{j,m,k})\mid\calD_m]$.}
The inequality $|\hat Z^{\mathrm{item}}_i(\hat\eta_{j,m,k})|\le 2B_{\ell}$ follows straightforwardly from the definition of $\hat Z^{\mathrm{item}}_i(\hat\eta_{j,m,k})$ and  Lemma~\ref{lem:pilot-perturbation-1}(i). Additionally
\[
 \Var_h(\hat Z^{\mathrm{item}}_i(\hat\eta_{j,m,k})\mid\calD_m)
 \le\E^{\hat\pi_{m,k}}_h[\hat Z^{\mathrm{item}}_i(\hat\eta_{j,m,k})^2\mid\calD_m]
 \le4B_{\ell}^2
 \le\frac{4B_{\ell}^2}{c_\mu}
       \E^{\hat\pi_{m,k}}_h
       [\sigma_h\hat Z^{\mathrm{item}}_i(\hat\eta_{j,m,k})\mid\calD_m].
\]
Taking $C_V:=4B_{\ell}^2/c_\mu$ proves the remaining assertion of
\eqref{eq:pilot-increment-conditions-short}.
\end{proof}

\begin{proof}{Proof of Lemma~\ref{lem:pilot-fallback-prob}}
We prove the claims under $H_1$.  The $H_0$ proof is identical after
applying the argument to the reflected process
$(-\hat S_i,-\hat Z^{\mathrm{item}}_i(\hat\eta_{j,m,k}))$, interchanging the indices $0$ and $1$, and replacing
$a_k$ by $b_k$; the upper guarded boundary then becomes the reflection
of the lower guarded boundary, and the direction-$1$ rule becomes the
direction-$0$ rule.  Work under $\Pp^{\hat\pi_{m,k}}_1(\cdot\mid\calD_m)$ for a realized
pilot in $\calE_{\pilot,m}$, for all sufficiently large $m,k$ as in
Lemma~\ref{lem:pilot-increment-conditions}. 

Conditionally on $\calD_m$, item $i$ is
fresh: its label and report are drawn independently of the previous
items by Assumption~\ref{ass:iid}, its randomization seeds are fresh
by construction, and all of these are independent of $\calD_m$ by
Assumption~\ref{ass:pilot-sample}. Since $J_i$ is
$\calH^m_{i-1}$-measurable, 
\[
 \E^{\hat\pi_{m,k}}_1[\hat Z^{\mathrm{item}}_i(\hat\eta_{J_i,m,k})\mid\calH^m_{i-1}]
 =\E^{\hat\pi_{m,k}}_1[\hat Z^{\mathrm{item}}_i(\hat\eta_{J_i,m,k})\mid\calD_m, J_i],
 \]
 and
 \[
 \Var_1(\hat Z^{\mathrm{item}}_i(\hat\eta_{J_i,m,k})\mid\calH^m_{i-1})
 =\Var_1(\hat Z^{\mathrm{item}}_i(\hat\eta_{J_i, m,k})\mid\calD_m, J_i).
\]
Since the inequalities in \eqref{eq:pilot-increment-conditions-short} hold for all $j\in\{0, 1, *\}$, the following holds almost surely:
\begin{equation*}
    \begin{split}
        \E^{\hat\pi_{m,k}}_1[\hat Z^{\mathrm{item}}_i(\hat\eta_{J_i,m,k})\mid\calH^m_{i-1}]\ge c_\mu,
 \qquad
 |\hat Z^{\mathrm{item}}_i(\hat\eta_{J_i,m,k})|\le 2B_{\ell},
\\
 \Var_1(\hat Z^{\mathrm{item}}_i(\hat\eta_{J_i,m,k})\mid\calH^m_{i-1})
 \le C_V\E^{\hat\pi_{m,k}}_1[\hat Z^{\mathrm{item}}_i(\hat\eta_{J_i,m,k})\mid\calH^m_{i-1}].
    \end{split}
\end{equation*}

\emph{Step 1: direction tracking.} The preceding display is exactly the
bounded-increment, positive-drift, and
variance--drift input \eqref{eq:dt-hyp} of
Lemma~\ref{lem:direction-tracking}, with $c_\mu$ playing the role of
the drift lower bound $\mu_0$ there, now verified for
$(\hat S_i,\hat Z^{\mathrm{item}}_i(\hat\eta_{J_i,m,k}))$ under the conditional measure.  The proof of
that lemma uses only these abstract hypotheses and the predictability
of the regime selection, which holds here because $J_i$ is determined
from $\hat S_{i-1}$.  It therefore shows that the number $W_{1,m,k}$
of items processed against the true direction under $H_1$ (the
analogue of $W_{1,k}$ there) satisfies
\[
 \E^{\hat\pi_{m,k}}_1[W_{1,m,k}\mid\calD_m]
 \le C(z_k+1),
\]
where the primitive constant absorbs the factor $1/c_\mu$.

\emph{Step 2: fallback probability.} The increment conditions above
also give the abstract input \eqref{eq:fb-hyp} of
Lemma~\ref{lem:fallback-low}.  Its
planned-block input \eqref{eq:fb-planned} is supplied by
Lemma~\ref{lem:pilot-increment-conditions}: by
\eqref{eq:pilot-planned-true-drift},
\[
 \widehat N^{\mathrm{plan}}_{1,m,k}\,
 \E^{\hat\pi_{m,k}}_1[\hat Z^{\mathrm{item}}_i(\hat\eta_{1,m,k})\mid\calD_m]
 \ \ge\ T_{1,k}=a_k+\Delta_k,
\]
while $\widehat N^{\mathrm{plan}}_{1,m,k}\le\widehat N_{\main,m,k}$ and
$\widehat N_{\main,m,k}=O(L_k)$ by Lemma~\ref{appendix-lem:pilot-hat-N-main}. The
guarded policy stops at the boundary $a_k+\omega_{m,k}$ (resp.\
$-(b_k+\omega_{m,k})$), so the
margin between the planned block drift and the stopping boundary is
\[
 (a_k+\Delta_k)-(a_k+\omega_{m,k})
 =\Delta_k-\omega_{m,k}
 =\Delta^{\eff}_{m,k}>0,
\]
which replaces $\Delta_k$ throughout: the Freedman margin becomes
$3\Delta^{\eff}_{m,k}/4$ and the occupation threshold
$\Delta^{\eff}_{m,k}/(8B_{\ell})$, exactly as in the proof of
Lemma~\ref{lem:fallback-low}. Running the proof of
Lemma~\ref{lem:fallback-low} with
the boundary $a_k+\omega_{m,k}$ and margin $\Delta^{\eff}_{m,k}$ in
place of $a_k$ and $\Delta_k$ --- the accumulated predictable
variance at the end of the block is at most
$2C_VB_\ell\,\widehat N^{\mathrm{plan}}_{1,m,k}=O(L_k)$ as in its display
\eqref{eq:fb-pqv}, so the Freedman exponent becomes
$(\Delta^{\eff}_{m,k})^2/L_k$ --- yields
\eqref{eq:pilot-fallback-probability}.
\end{proof}

\subsection{Proof of Lemma~\ref{lem:pilot-main-cost}}\label{appendix-sec:pilot-main-cost}

\begin{proof}{Proof of Lemma~\ref{lem:pilot-main-cost}}
We prove the claims under $H_1$.  The $H_0$ proof is identical after
applying the argument to the reflected process
$(-\hat S_i,-\hat Z^{\mathrm{item}}_i(\hat\eta_{J_i,m,k}))$, interchanging the indices $0$ and $1$, and replacing
$a_k$ by $b_k$; the upper guarded boundary then becomes the reflection
of the lower guarded boundary, and the direction-$1$ rule becomes the
direction-$0$ rule. 

Recall
\(\widehat C^{\main}\) is the sensing cost incurred during the main
stage, before the fallback completion is executed.  We follow the proof
of Lemma~\ref{lem:main-cost}.  Let \(I_i\) indicate that item \(i\) is
started and write
\(\hat\tau_{\main,m,k}:=\sum_{i=1}^{\widehat N_{\main,m,k}}I_i\), so
\(I_i=\ind\{i\le\hat\tau_{\main,m,k}\}\) is \(\calH^m_{i-1}\)-measurable.  Define
\(\hat{\mathcal C}:=\{i:I_i=1,\hat S_{i-1}>z_k\}\),
\(\hat{\mathcal W}:=\{i:I_i=1,\hat S_{i-1}\le z_k\}\), and
\(\hat N_c:=|\hat{\mathcal C}|\); then
\(|\hat{\mathcal W}|=W_{1,m,k}\).

For the completed-item continuation, 
recall that $C_i^{\mathrm{item}}(\hat\eta_{J_i, m, k})$ corresponds to the cost when item $i$ uses sensing rule $\hat \eta_{J_i, m, k}$. Set
\(c_{\max}:=\cH+\cAI+\cH\).  Analogous to the cost decomposition
\eqref{eq:mc-split},
\begin{equation*}
    \begin{split}
         \E^{\hat\pi_{m,k}}_1[\widehat C^{\main}\mid\calD_m]
 \le
\E^{\hat\pi_{m,k}}_1\bigg[\sum_{i\in\hat{\mathcal W}} C_i^{\mathrm{item}}(\hat\eta_{J_i, m, k})\mid\calD_m\bigg]
+\E^{\hat\pi_{m,k}}_1\bigg[\sum_{i\in\hat{\mathcal C}} C_i^{\mathrm{item}}(\hat\eta_{J_i, m, k})\mid\calD_m\bigg].
 % \\
 % +c_{\max}\widehat N_{\main,m,k}
 %   \Pp^{\hat\pi_{m,k}}_1(\hat E_{\fb}\mid\calD_m).
    \end{split}
\end{equation*}

\emph{Cost 1: wrong and dead-zone items.}  Analogous to Cost 1 in the
proof of Lemma~\ref{lem:main-cost},
\[
 \E^{\hat\pi_{m,k}}_1[\sum_{i\in\hat{\mathcal W}} C_i^{\mathrm{item}}(\hat\eta_{J_i, m, k})\mid\calD_m]
 \le c_{\max}
   \E^{\hat\pi_{m,k}}_1[W_{1,m,k}\mid\calD_m]
 \le C(z_k+1).
\]

\emph{Cost 2: correct items.}  
Under the correct sensing rule $\hat\eta_{1,m,k}$, the cost of item $i$ is $C_i^{\mathrm{item}}(\hat\eta_{1, m, k})=\ind\{U_i\le\hat\eta^{\Hum}_{1,m,k}\}\cH
 +\ind\{U_i>\hat\eta^{\Hum}_{1,m,k}\}
 \Bigl(\cAI+
 \ind\{V_i\le\hat\eta^{\esc}_{1,m,k}(R_i)\}\cH\Bigr)$.
Conditional on the realized pilot, its plug-in-channel and true-channel
expected costs are, respectively,
\begin{equation}\label{eq:pilot-direction-one-expected-costs}
\begin{aligned}
 \widehat c^{\mathrm{item}}_{1,m,k}
 :=\E^{\hat\pi_{m,k}}_{1,\hat f}
       [C_i^{\mathrm{item}}(\hat\eta_{1, m, k})\mid\calD_m]=\hat\eta^{\Hum}_{1,m,k}\cH
 +(1-\hat\eta^{\Hum}_{1,m,k})
 \left(\cAI+\cH\sum_{r\in\calR}
       \hat g_1(r)\hat\eta^{\esc}_{1,m,k}(r)\right),\\
 c^{\mathrm{item}}_{1,m,k}
 :=\E^{\hat\pi_{m,k}}_1
       [C_i^{\mathrm{item}}(\hat\eta_{1, m, k})\mid\calD_m]=\hat\eta^{\Hum}_{1,m,k}\cH
 +(1-\hat\eta^{\Hum}_{1,m,k})
 \left(\cAI+\cH\sum_{r\in\calR}
       g_1(r)\hat\eta^{\esc}_{1,m,k}(r)\right).
\end{aligned}
\end{equation}
The rounded-block calculation gives
\[
 \widehat c^{\mathrm{item}}_{1,m,k}
 \le
 \frac{\hat\Gamma_1(
    \widehat T^{\pilot}_{1,m,k}
      (\widehat{\overline N}_{\main,m,k}),
    \widehat{\overline N}_{\main,m,k})+O(1)}
      {\widehat N^{\mathrm{plan}}_{1,m,k}}.
\]
Moreover, \eqref{eq:pilot-direction-one-expected-costs} makes the
channel perturbation explicit:
\begin{align*}
 \left|c^{\mathrm{item}}_{1,m,k}
       -\widehat c^{\mathrm{item}}_{1,m,k}\right|
 &=(1-\hat\eta^{\Hum}_{1,m,k})\cH
   \left|\sum_{r\in\calR}
      \bigl(g_1(r)-\hat g_1(r)\bigr)
      \hat\eta^{\esc}_{1,m,k}(r)\right|\\
 &\le\varepsilon_m^{\mathrm{cost}}
   =\cH|\calR|r_m.
\end{align*}
Thus
\(c^{\mathrm{item}}_{1,m,k}
\le\widehat c^{\mathrm{item}}_{1,m,k}
+\varepsilon_m^{\mathrm{cost}}\).  Since membership in
\(\hat{\mathcal C}\) is predictable, by the tower property,
\begin{equation*}
\begin{split}
&\E^{\hat\pi_{m,k}}_1[\sum_{i\in\hat{\mathcal C}} C_i^{\mathrm{item}}(\hat\eta_{J_i, m, k})\mid\calD_m]\\
 =&c^{\mathrm{item}}_{1,m,k}\,
   \E^{\hat\pi_{m,k}}_1[\hat N_c\mid\calD_m]\\
   \leq &
 \frac{\hat\Gamma_1(
    \widehat T^{\pilot}_{1,m,k}
      (\widehat{\overline N}_{\main,m,k}),
    \widehat{\overline N}_{\main,m,k})+O(1)}{\widehat N^{\mathrm{plan}}_{1,m,k}}\,
 \E^{\hat\pi_{m,k}}_1[\hat N_c\mid\calD_m]
 +\varepsilon_m^{\mathrm{cost}}\widehat N_{\main,m,k},
\end{split}
\end{equation*}
where the last term is
\(O(L_kr_m)\) because \(\widehat N_{\main,m,k}=O(L_k)\).

The predictable drift calculation in Cost 2 of the proof of
Lemma~\ref{lem:main-cost} applies with \(S_i\) replaced by \(\hat S_i\).
The complementary items have nonnegative conditional drift, and the
completed continuation satisfies
\(\hat S_{\hat\tau_{\main,m,k}}\le a_k+\omega_{m,k}+2B_{\ell}\). Hence
\[
 \E^{\hat\pi_{m,k}}_1[\hat Z^{\mathrm{item}}_i(\hat\eta_{1,m,k})\mid\calD_m]\,
 \E^{\hat\pi_{m,k}}_1[\hat N_c\mid\calD_m]
 \le a_k+\omega_{m,k}+2B_{\ell}.
\]
Combining the last two displays with
\(\widehat N^{\mathrm{plan}}_{1,m,k}
\E^{\hat\pi_{m,k}}_1[\hat Z^{\mathrm{item}}_i(\hat\eta_{1,m,k})\mid\calD_m]
\ge a_k+\Delta_k\) from
\eqref{eq:pilot-planned-true-drift}, analogously to the end of Cost 2
in that proof,
\[
\begin{aligned}
 \E^{\hat\pi_{m,k}}_1[\sum_{i\in\hat{\mathcal C}} C_i^{\mathrm{item}}(\hat\eta_{J_i, m, k})\mid\calD_m]
 &\le(\hat\Gamma_1(
\widehat T^{\pilot}_{1,m,k}(\widehat{\overline N}_{\main,m,k}),
\widehat{\overline N}_{\main,m,k})+O(1))
   \frac{a_k+\omega_{m,k}+2B_{\ell}}{a_k+\Delta_k}
   +O(L_kr_m)\\
 &\le\hat\Gamma_1(
\widehat T^{\pilot}_{1,m,k}(\widehat{\overline N}_{\main,m,k}),
\widehat{\overline N}_{\main,m,k})+O(1)+O(L_kr_m),
\end{aligned}
\]
because
\(\Delta^{\eff}_{m,k}=\Delta_k-\omega_{m,k}\ge4B_{\ell}\).

% \emph{Cost 3: main-stage scan on the fallback event.}  Analogous to
% Cost 3 in the proof of Lemma~\ref{lem:main-cost}, 
% \(\widehat N_{\main,m,k}=O(L_k)\) gives
% \[
% \begin{aligned}
%  c_{\max}\widehat N_{\main,m,k}
%  \Pp^{\hat\pi_{m,k}}_1(\hat E_{\fb}\mid\calD_m)
%  \le CL_k\left[
%    \frac{z_k+1}{\Delta^{\eff}_{m,k}}
%    +\exp\left\{-c
%       \frac{(\Delta^{\eff}_{m,k})^2}{L_k}\right\}
%  \right].
% \end{aligned}
% \]
Combining Costs 1--2,
\begin{equation*}
    \begin{split}
        \E^{\hat\pi_{m,k}}_1[\widehat C^{\main}\mid\calD_m]
 \le \hat\Gamma_1(
\widehat T^{\pilot}_{1,m,k}(\widehat{\overline N}_{\main,m,k}),
\widehat{\overline N}_{\main,m,k})
+CL_kr_m+C(z_k+1),
% +CL_k\left[
%    \frac{z_k+1}{\Delta^{\eff}_{m,k}}
%    +\exp\left\{-c
%     \frac{(\Delta^{\eff}_{m,k})^2}{L_k}\right\}
 % \right],
    \end{split}
\end{equation*}
where \(O(1)\) is absorbed into \(C(z_k+1)\).
\end{proof}

\section{Additional Materials for Section~\ref{sec:unknown-ai-rates}}\label{appendix-sec:unknown-ai-rates}

\subsection{Proof of Corollary~\ref{cor:pilot-charged}}
\begin{proof}{Proof of Corollary~\ref{cor:pilot-charged}}
To prove Corollary~\ref{cor:pilot-charged}, we show that there exist primitive constants $c > 0$ and
$C_1,\cdots, C_5, C_6, C_7 < \infty$ such that for all sufficiently large $k$, 
\begin{equation}\label{eq:pilot-charged-total-cost}
    \begin{split}
    &\,\frac{\max_h \E_h^{\hat\pi_{m_{\pilot, k},k}}
  [C_{m_{\pilot, k}}^{\pilot} + C^{\hat\pi_{m_{\pilot, k},k}}]}{\LB_k}
  \leq
        1+C_1\frac{\Delta_k}{L_k}
   +C_2r_{m_{\pilot,k}}
   +C_3\frac{z_k+1}{\Delta_k}\\
   &\qquad\qquad+C_4\exp\left\{-c
       \frac{(\Delta_k)^2}{L_k}\right\}
   +C_5\frac{\log L_k}{L_k} + C_6\frac{m_{\pilot, k}}{L_k} + C_7m^{-2}_{\pilot, k}.
    \end{split}
\end{equation}
Then the argument follows by plugging in the values of $\Delta_k$. 

\emph{Step 1: the expected pilot charge.}
To prove \eqref{eq:pilot-charged-total-cost}, we first show that
$\E_h[M_{\tot}]=\Theta(m_{\pilot,k})$. By construction,
$M_{\tot}=\min\{n\geq 1: \min_{x\in\{0,1\}}\sum^n_{i=1}\ind\{X^{\pilot}_i=x\}\geq m\}$.
Thus $M_{\tot}\geq 2m=2m_{\pilot,k}$ almost surely. To calculate the
expectation, let \(B_h\sim\operatorname{Bin}(2m_{\pilot,k},p_h)\) be the
number of label-one observations among the first \(2m_{\pilot,k}\) pilot
items.  If \(B_h<m_{\pilot,k}\), then the label-zero quota has already been
reached and \(m_{\pilot,k}-B_h\) additional label-one observations are
required, with conditional expected waiting time
\((m_{\pilot,k}-B_h)/p_h\).  Similarly, if \(B_h>m_{\pilot,k}\), then
\(B_h-m_{\pilot,k}\) additional label-zero observations are required, with
conditional expected waiting time \((B_h-m_{\pilot,k})/(1-p_h)\).  Hence
\begin{equation}
\label{eq:pilot-duration-exact-mean}
\E_h[M_{\tot}]
=
2m_{\pilot,k}
+
\frac{\E_h[(m_{\pilot,k}-B_h)_+]}{p_h}
+
\frac{\E_h[(B_h-m_{\pilot,k})_+]}{1-p_h}.
\end{equation}

Suppose first that \(p_h<1/2\).  Since
\(\E_h[(B_h-m_{\pilot,k})_+]-\E_h[(m_{\pilot,k}-B_h)_+]
=\E_h[B_h-m_{\pilot,k}]=m_{\pilot,k}(2p_h-1)\),
equation~\eqref{eq:pilot-duration-exact-mean} can be rewritten as
\[
\E_h[M_{\tot}]
=
\frac{m_{\pilot,k}}{p_h}
+
\frac{\E_h[(B_h-m_{\pilot,k})_+]}{p_h(1-p_h)}.
\]
A binomial Chernoff bound gives
\(\Pp_h(B_h\ge m_{\pilot,k})\le e^{-c_h m_{\pilot,k}}\) for some
\(c_h>0\).  Since \((B_h-m_{\pilot,k})_+\le m_{\pilot,k}\), it follows that
\[
\E_h[M_{\tot}]
=
\frac{m_{\pilot,k}}{p_h}
+
O\!\left(m_{\pilot,k} e^{-c_h m_{\pilot,k}}\right).
\]
The case \(p_h>1/2\) is symmetric.
Thus, whenever \(p_h\ne1/2\),
\begin{equation}
\label{eq:pilot-duration-asymptotic-h}
\E_h[M_{\tot}]
=
\frac{m_{\pilot,k}}{\min\{p_h,1-p_h\}}
+
O\!\left(
m_{\pilot,k}e^{-c_hm_{\pilot,k}}
\right).
\end{equation}
If \(p_h=1/2\), a direct calculation using the symmetry of
\(B_h\sim\operatorname{Bin}(2m_{\pilot,k},1/2)\) gives
\[
\E_h[M_{\tot}]
=
2m_{\pilot,k}
+2m_{\pilot,k}\frac{\binom{2m_{\pilot,k}}{m_{\pilot,k}}}{4^{m_{\pilot,k}}}
=
2m_{\pilot,k}+2\sqrt{\frac{m_{\pilot,k}}{\pi}}+O(m_{\pilot,k}^{-1/2}).
\]
Because \(p_0<p_1\), we have
\(\min_{h\in\{0,1\}}\min\{p_h,1-p_h\}=\min\{p_0,1-p_1\}=:p_\star\), and the
preceding two displays combine into
\begin{equation}
\label{eq:pilot-duration-worst-expectation}
\max_{h\in\{0,1\}}\E_h[M_{\tot}]
=\frac{m_{\pilot,k}}{p_\star}\{1+o(1)\},
\qquad
2m_{\pilot,k}\le\max_{h\in\{0,1\}}\E_h[M_{\tot}]=O(m_{\pilot,k}).
\end{equation}
Since every pilot item is acquired and labeled by both the AI and the
human, \(C^{\pilot}_{m_{\pilot,k}}=(\cdata+\cAI+\cH)M_{\tot}\), and therefore
\begin{equation}
\label{eq:pilot-expected-charge}
\max_{h\in\{0,1\}}\E_h[C^{\pilot}_{m_{\pilot,k}}]
=\frac{\cdata+\cAI+\cH}{p_\star}\,m_{\pilot,k}\{1+o(1)\}
=\Theta(m_{\pilot,k}).
\end{equation}

\emph{Step 2: the unconditional main-stage cost.}
 Let \(\mathcal E_k:=\mathcal E_{\pilot,m_{\pilot,k}}\).  By the pilot
concentration result, \(\Pp_h(\mathcal E_k^c)\le m_{\pilot,k}^{-2}\) for
\(h\in\{0,1\}\). On \(\mathcal E_k\), when \eqref{eq:pilot-main-rate-condition} holds, we have that $\Delta^{\eff}_{m,k} = \Delta_k - \omega_{m,k} = \Delta_k(1-o(1))$ (see Lemma~\ref{appendix-lem:pilot-hat-N-main}). Additionally, Theorem~\ref{thm:pilot-first-order} bounds the expected cost
conditional on the pilot data \(\calD_{m_{\pilot,k}}\); averaging that
bound over pilot realizations in \(\mathcal E_k\) gives, for primitive
constants \(C<\infty\) and \(c>0\),
\begin{equation}
\label{eq:pilot-good-event-cost}
\E_h^{\hat\pi_{m_{\pilot,k},k}}
\bigl[C^{\hat\pi_{m_{\pilot,k},k}}\mid\mathcal E_k\bigr]
\le
\LB_k\left(1+C\,\Xi_k\right),
\end{equation}
where
\begin{equation*}
\Xi_k:=
\frac{\Delta_k}{L_k}
+r_{m_{\pilot,k}}
+\frac{z_k+1}{\Delta_k}
+\exp\left\{-c\frac{(\Delta_k)^2}{L_k}\right\}
+\frac{\log L_k}{L_k}.
\end{equation*}
On the complement \(\mathcal E_k^c\), in the worst case we may have to query both the AI and human on all $\widehat N_{\main,m_{\pilot,k},k}$ items. We now show that 
\(\widehat N_{\main,m_{\pilot,k},k}=O(L_k)\) holds uniformly over all
pilot realizations. Let $J_{\min}:=\min_hJ_X^{(h)}>0$ and recall that
$\varepsilon_m^{\mathrm{dr}}=3|\calR|B_\ell r_m$ is deterministic with
$\varepsilon_m^{\mathrm{dr}}\to0$, so
$\varepsilon_m^{\mathrm{dr}}<J_{\min}/2$ for all large $k$.  Consider the
human-only candidate
\[
N_{H,k}:=\max\left\{N_{\fixed,\Hum}(\alpha_{2,k},\beta_{2,k}),
\left\lceil\max_{h\in\{0,1\}}
\frac{T_{h,k}+\varepsilon_m^{\mathrm{dr}}}
     {J_X^{(h)}-\varepsilon_m^{\mathrm{dr}}}\right\rceil\right\}.
\]
For every $h$,
$N_{H,k}J_X^{(h)}\ge T_{h,k}+(N_{H,k}+1)\varepsilon_m^{\mathrm{dr}}$, and
this constraint involves only the known information rate $J_X^{(h)}$ and
the deterministic guard $\varepsilon_m^{\mathrm{dr}}$, so the all-human
plan $(n_{\Hum},n_{\AI},n_{\esc})=(N_{H,k},0,0)$ is feasible for the
guarded plug-in program at $N=N_{H,k}$ regardless of the realized pilot;
in particular the guarded outer problem \eqref{eq:pilot-Nbar-choice} is
feasible for all large $k$.  By optimality of
$\widehat{\overline N}_{\main,m,k}$,
\[
\cdata\widehat{\overline N}_{\main,m,k}
\le\widehat F^{\pilot}_{m,k}\bigl(\widehat{\overline N}_{\main,m,k}\bigr)
\le\widehat F^{\pilot}_{m,k}(N_{H,k})
\le(\cdata+\cH)N_{H,k}.
\]
Since $T_{h,k}=O(L_k)$ and $N_{\fixed,\Hum}(\alpha_{2,k},\beta_{2,k})=O(L_k)$, we
have $N_{H,k}=O(L_k)$ and hence $\widehat N_{\main,m,k}=O(L_k)$; if the
guarded outer problem is infeasible, the safe default
gives
$\widehat N_{\main,m,k}=N_{\fixed,\Hum}(\alpha_{2,k},\beta_{2,k})=O(L_k)$ directly.

It thus follows from the previous argument that on $\calE^c_k$ the worst-case cost is at most $O(L_k)$. Combining the two events,
\begin{equation}
\label{eq:pilot-unconditional-main-cost}
\begin{split}
\E_h^{\hat\pi_{m_{\pilot,k},k}}
\bigl[C^{\hat\pi_{m_{\pilot,k},k}}\bigr]
&=\Pp_h(\mathcal E_k)\,
\E_h^{\hat\pi_{m_{\pilot,k},k}}
\bigl[C^{\hat\pi_{m_{\pilot,k},k}}\mid\mathcal E_k\bigr]
+\Pp_h(\mathcal E_k^c)\,
\E_h^{\hat\pi_{m_{\pilot,k},k}}
\bigl[C^{\hat\pi_{m_{\pilot,k},k}}\mid\mathcal E_k^c\bigr]\\
&\le
\LB_k\left(1+C\,\Xi_k\right)
+O\!\left(L_km_{\pilot,k}^{-2}\right).
\end{split}
\end{equation}

Adding \eqref{eq:pilot-unconditional-main-cost} to \eqref{eq:pilot-expected-charge}, and dividing by
\(\LB_k=\Theta(L_k)\) (Theorem~\ref{thm:first-order}(i)) yields \eqref{eq:pilot-charged-total-cost}.

\emph{Step 3: Plug in $\Delta_k=L_k^{2/3}$.}
For the choice in Corollary~\ref{cor:pilot-rates}, $\Delta_k=L_k^{2/3}$ and $m_{\pilot,k}
=
\left\lceil
L_k^{2/3}(\log L_k)^2
\right\rceil$.
In this case,
\[
r_{m_{\pilot,k}}
=
O\!\left(
L_k^{-1/3}(\log L_k)^{-1/2}
\right),
\qquad
L_kr_{m_{\pilot,k}}=o(\Delta_k),
\]
and, for \(z_k=1\), 
substituting into \eqref{eq:pilot-charged-total-cost} therefore
gives
\[
\frac{
\max_h\E_h[
C_{m_{\pilot,k}}^{\pilot}+C^{\hat\pi_{m_{\pilot,k},k}}
]}
{\LB_k}
\le
1+
O\!\left(
L_k^{-1/3}(\log L_k)^2
\right)
=
1+\widetilde O(L_k^{-1/3}).
\]
\end{proof}

\end{document}